\DeclareMathOperator{\Tr}{tr}
\definecolor{darkblue}{rgb}{0.0, 0.0, 0.55}
\newcommand{\cL}{\mathcal{L}}
\newcommand{\RR}{\mathbb{R}}
\begin{document}

\title{Good regularity creates large learning rate implicit biases: edge of stability, balancing, and catapult}

\author{\name Yuqing Wang \email ywang3398@gatech.edu\\ 
       \addr School of Mathematics
       \AND
       \name Zhenghao Xu \email zhenghaoxu@gatech.edu\\
       \addr H. Milton Stewart School of Industrial \& Systems Engineering
        \AND
        \name Tuo Zhao \email tourzhao@gatech.edu \\
       \addr H. Milton Stewart School of Industrial \& Systems Engineering
        \AND
        Molei Tao \email mtao@gatech.edu \\
        \addr School of Mathematics\\
        Georgia Institute of Technology\\
        Atlanta, GA 30332}

\editor{My editor}

\newcommand{\fix}{\marginpar{FIX}}
\newcommand{\new}{\marginpar{NEW}}

\def\month{MM}  
\def\year{YYYY} 
\def\openreview{\url{https://openreview.net/forum?id=XXXX}} 


\maketitle

\begin{abstract}
    Large learning rates, when applied to gradient descent for nonconvex optimization, yield various implicit biases including the edge of stability \citep{cohen2021gradient}, balancing \citep{wang2022large}, and catapult \citep{lewkowycz2020large}. 
    These phenomena cannot be well explained by classical optimization theory. Though significant theoretical progress has been made in understanding these implicit biases, it remains unclear for which objective functions would they be more likely.
    This paper provides an initial step in answering this question and also shows that these implicit biases are in fact various tips of the same iceberg. To establish these results, we develop a global convergence theory under large learning rates, for a family of nonconvex functions without globally Lipschitz continuous gradient, which was typically assumed in existing convergence analysis. Specifically, these phenomena are more likely to occur when the optimization objective function has good regularity. This regularity, together with gradient descent using a large learning rate that favors flatter regions, results in these nontrivial dynamical behaviors.      
    Another corollary is the first non-asymptotic convergence rate bound for large-learning-rate gradient descent optimization of nonconvex functions.
    Although our theory only applies to specific functions so far, the possibility of extrapolating it to neural networks is also experimentally validated, for which different choices of loss, activation functions, and other techniques such as batch normalization can all affect regularity significantly and lead to very different training dynamics.
\end{abstract}
\begin{keywords}
  large learning rate, implicit bias, convergence of gradient descent, nonconvex optimization, non-globally-Lipschitz gradient
\end{keywords}
\section{Introduction}

Large learning rates are often employed in deep learning practices, which is believed to improve training efficiency and generalization \citep{seong2018towards,lewkowycz2020large,yue2023salr,smith2018disciplined}. However, a comprehensive theoretical foundation supporting such a choice has been largely absent, as a large body of the theory literature primarily focused on small or infinitesimal learning rates (gradient flow), leaving the empirical successes achieved with large learning rates unexplained. Only recently have researchers redirected their efforts toward investigating the benefits of larger learning rates.

A prevalent conjecture posits that large learning rates result in a preference toward `flat minima' which in turn leads to better generalization. This notion has served as a catalyst for many novel insights examining the `sharpness' of the solution under large learning rates, i.e., the largest eigenvalue of the Hessian matrix associated with the objective function. A notable example can be found in \citet{cohen2021gradient}, where the phenomenon of the \emph{Edge of Stability} (EoS) is examined in detail (see Section~\ref{sec:eos}). Specifically, they demonstrate that during the training of neural networks using Gradient Descent (GD), a stage of progressive sharpening transpires prior to the stabilization of sharpness near $2/h$,
where $h$ is the Learning Rate. Furthermore, another significant example comes from the study by \citet{wang2022large}, unveiling a \emph{balancing} phenomenon (see Section~\ref{sec:balancing}). This phenomenon pertains to the convergence of weights in two-layer linear diagonal networks, showcasing a tendency for the weights to approach each other as compared to their initial values. This is in stark contrast to the behavior observed under small learning rates, where weight discrepancies remain relatively constant \citep{NEURIPS_Du2018algorithmic,ye2021global,ma2021beyond}. Consequently, large learning rates bias GD towards flat minima, even when the optimization is initiated arbitrarily close to a sharp minimum (see Corollary 3.3 in \citet{wang2022large} and Section~\ref{sec:balancing}). It is worth noting that the initial phase of the balancing effect gives rise to \emph{loss catapulting} \citep{lewkowycz2020large}, another intriguing empirical observation—during this phase, the loss experiences an initial increase (GD's escaping from the neighborhood of a sharp minimum) before subsequently decreasing. Collectively, these phenomena will be referred to as {\it large learning rate phenomena} thereafter.

Remarkable theoretical studies have been conducted to analyze EoS and balancing concerning specific objective functions \citep[for example,][]{chen2023edge,macdonald2023progressive,ahn2022learning,damian2022self,zhu2023understanding,wang2022analyzing,kreisler2023gradient,arora2022understanding,ahn2022understanding,lyu2022understanding,song2023trajectory}. Nevertheless, the following questions remain not yet fully resolved:
\begin{center}
    {\it When and why do EoS and other large learning rate phenomena occur?}
\end{center}
These questions serve as the primary motivation for our paper. Specifically, we generalize the objectives examined in recent theoretical work, for example, \citet{wang2022large,zhu2023understanding,ahn2022learning}, and construct a family of functions that can be compared as fairly as possible, despite their different regularities. This allows us to investigate the comparative question of which function makes EoS more likely. Beyond that, we also use these functions to demonstrate that many large learning rate phenomena are implicit biases rooted in a unifying mechanism.
While still being simplified models, they are designed to encapsulate key features of practical neural networks, including the composition of loss and activation functions, and are made amenable to quantitative analysis.
Within this family, we then prove that the combined influence of good regularity and the preferences of large-learning-rate GD toward flatter minima leads to balancing, and makes EoS more likely. These rigorous results lead us to conjecture that regularity will similarly impact large learning rate phenomena for more general functions, and this conjecture is tested positively on empirical experiments with neural networks.

Before further explanation of our results, let us begin with some visualization of our claims. Figure~\ref{fig:intuition_phenomenoa_appears} illustrates multiple instances of the aforementioned large learning rate phenomena, including the balancing phenomenon (Figure~\ref{fig:intuition_phenomenoa_appears}(b)), the catapult phenomenon in the loss decay (Figure~\ref{fig:intuition_phenomenoa_appears}(a)), and the edge of stability (EoS) shown in Figure~\ref{fig:intuition_phenomenoa_appears}(c) and (d). It is important to note that large learning rates do not always lead to these phenomena. Figure~\ref{fig:intuition_phenomenoa_disappears} depicts a scenario where we use an objective function with the same global minima but worse regularity compared to Figure~\ref{fig:intuition_phenomenoa_appears}. In this case, neither EoS nor balancing (including catapult) occurs, despite the use of large learning rates. The landscapes of these two functions are visualized in Figure~\ref{fig:heatmap_good_bad}.

\begin{figure}[htb!]
    \centering
    \includegraphics[width=\textwidth]{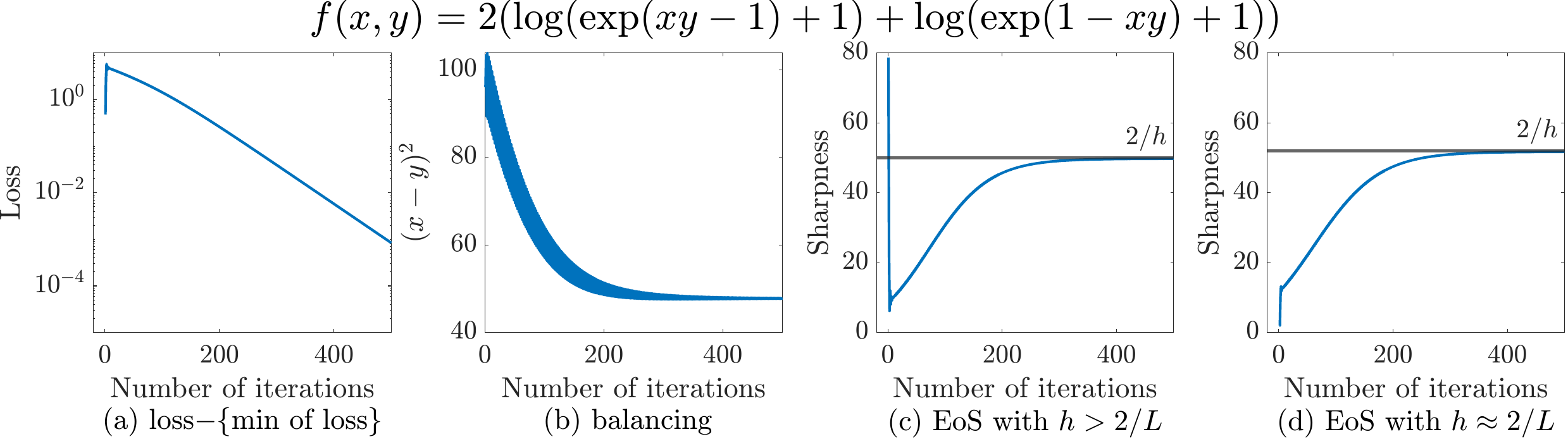}
\caption{\it Large learning rate phenomena. The learning rate is chosen to be $h=4/(x_0^2+y_0^2)$ for all the figures. (a)(b)(c) share the same initial condition $x=0.2,y=10$, while for (d), $x=2,y=10$. (a) shows the decay of loss, i.e., $f-\min f$, and exhibits a catapult phenomenon at the beginning of the iterations. (b) shows the balancing phenomenon. (c) and (d) illustrate EoS under different initial conditions: if GD starts at large sharpness (c), the learning rate $h>2/L$, where $L$ is the local Lipschitz constant of the gradient in a bounded region where the GD trajectory belongs to, and there is a sharp decay at the beginning of the iterations before progressive sharpening; if it starts at small sharpness, the learning rate $h\approx 2/L$, and there is no de-sharpening stage (see more explanation in Section~\ref{sec:eos}).}    \label{fig:intuition_phenomenoa_appears}
\end{figure}

\begin{figure}
    \centering
    \includegraphics[width=0.75\textwidth]{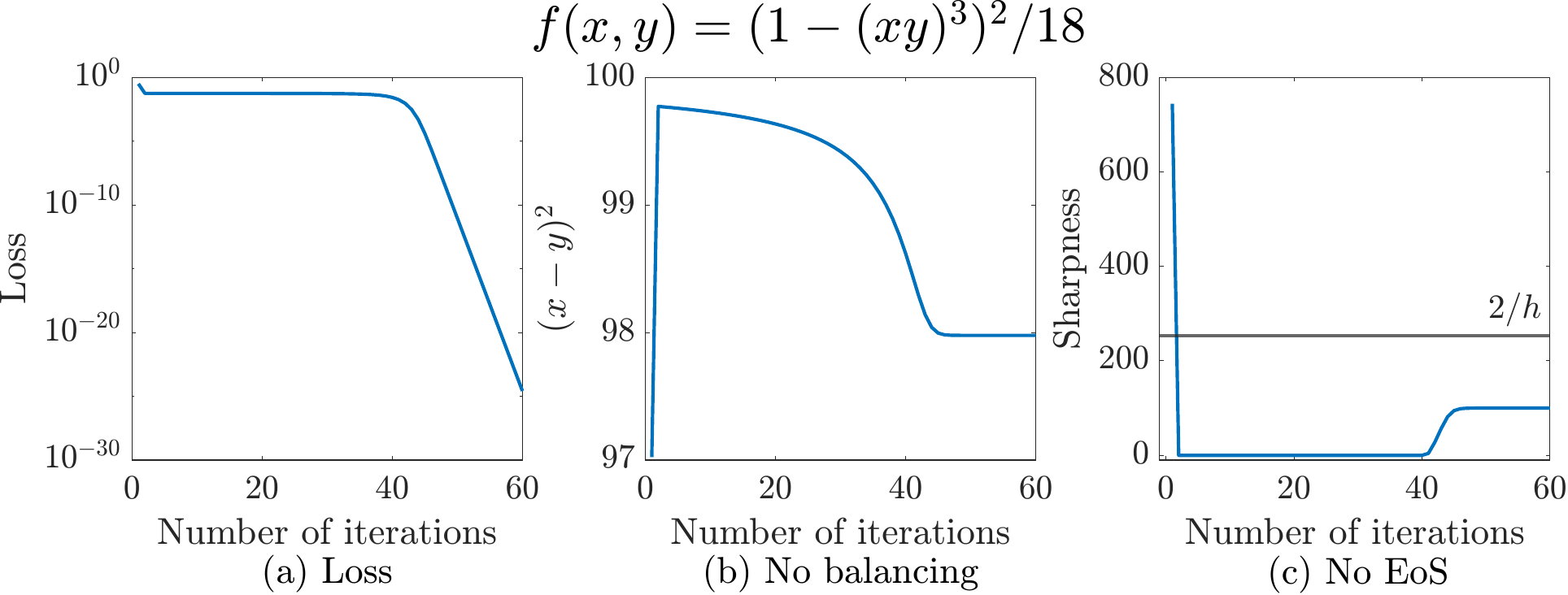}
    \caption{\it Disappearance of large learning rate phenomena.  
    All the three figures share the same initial condition $x_0=0.15,y_0=10$ with the learning rate to be $h=4/(x_0^2+y_0^2)/(x_0y_0)^{2b-2}$. It can be seen from (c) that the learning rate is indeed large, $>2/L$, where $L$ is the local Lipschitz constant of this trajectory. However, none of the large learning rate phenomena appear, i.e., no EoS, no balancing, and no catapult of loss.
    }
    \label{fig:intuition_phenomenoa_disappears}
\end{figure}
\begin{figure}[ht]
    \centering
    \subfigure[values of a good regularity function]{\includegraphics[width=0.234\textwidth]{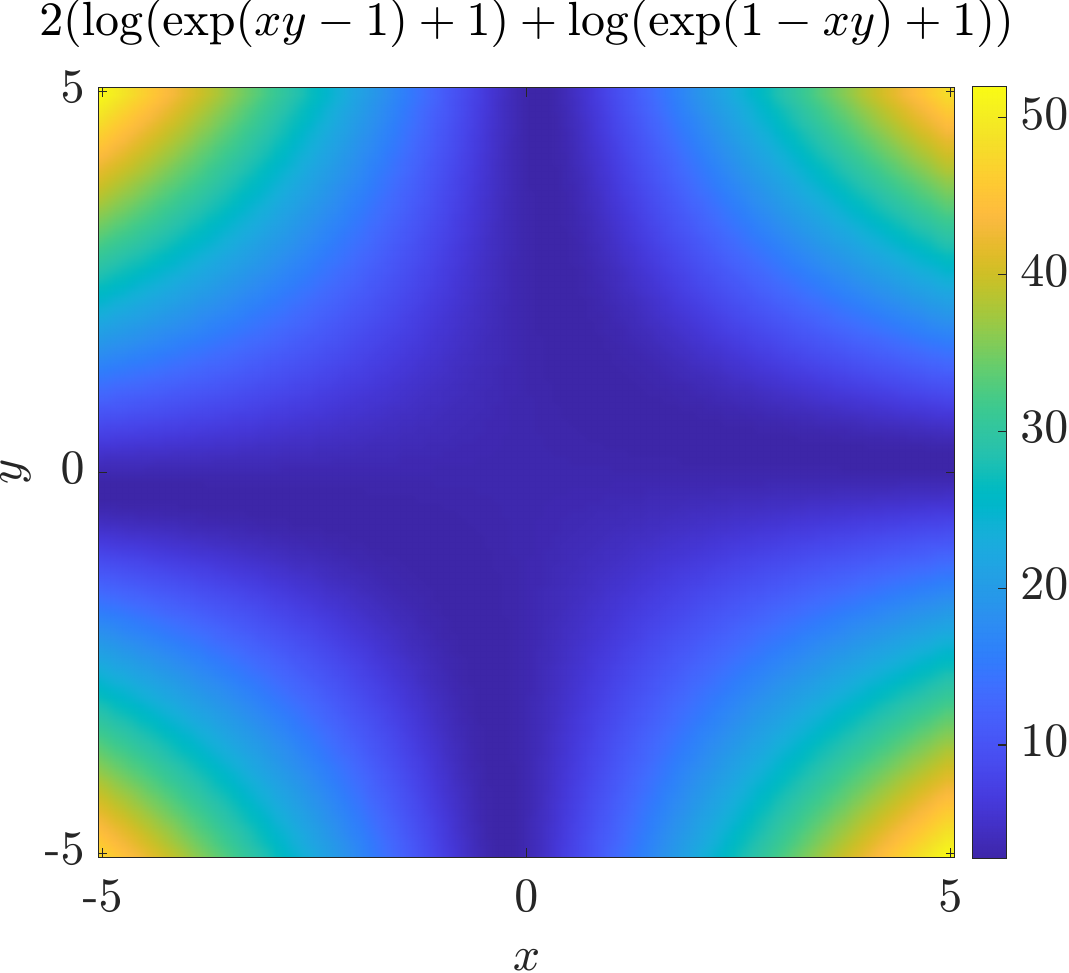}}
    \subfigure[values of a bad regularity function]{\includegraphics[width=0.24\textwidth]{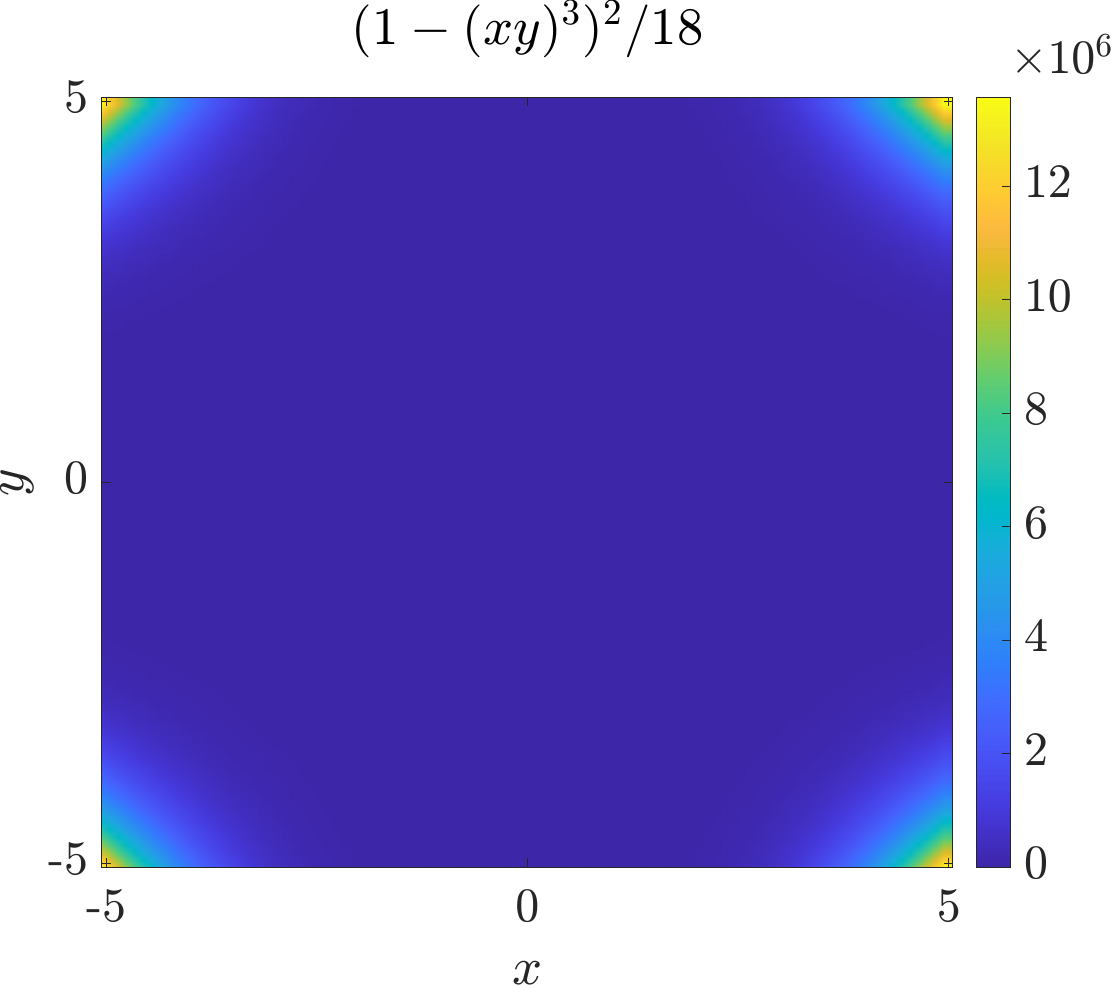}}
    \subfigure[sharpnesses of the good regularity function]{\includegraphics[width=0.23\textwidth]{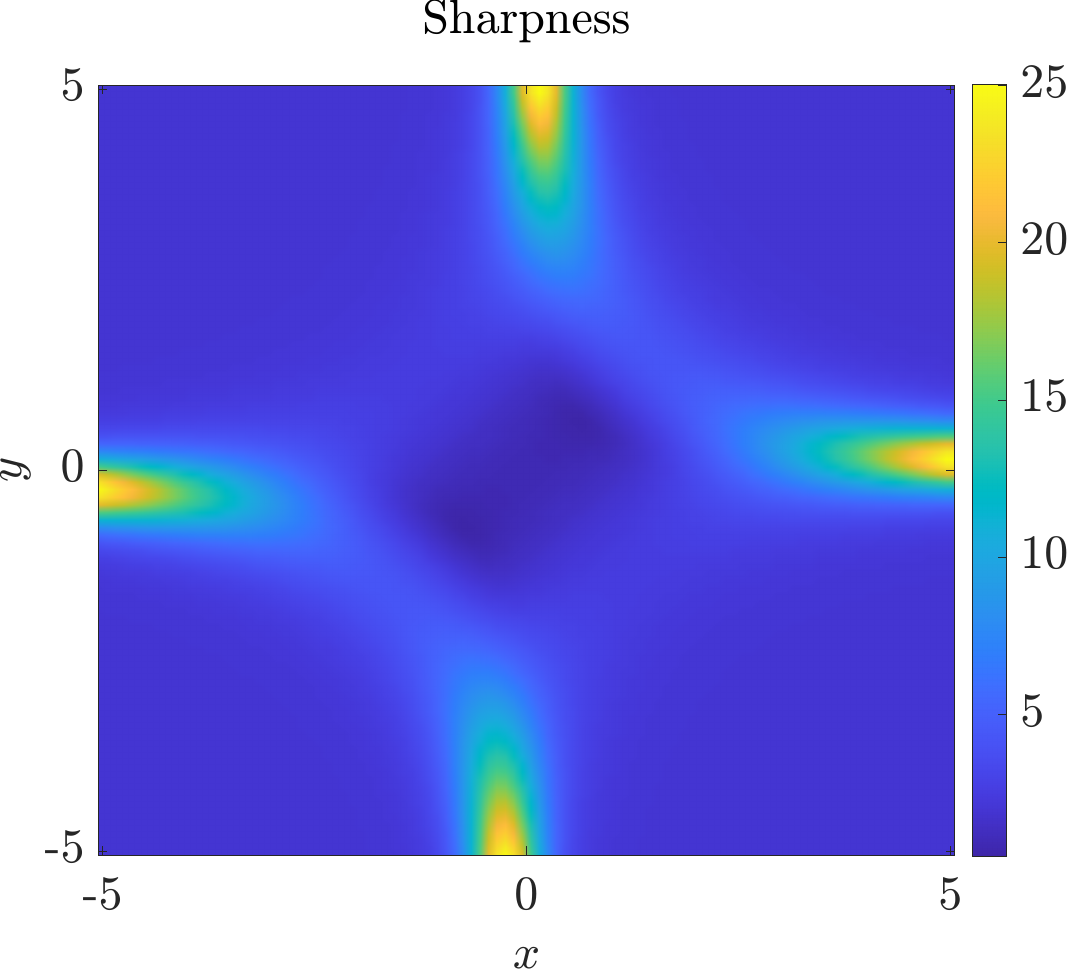}}
    \subfigure[sharpnesses of the bad regularity function]{\includegraphics[width=0.24\textwidth]{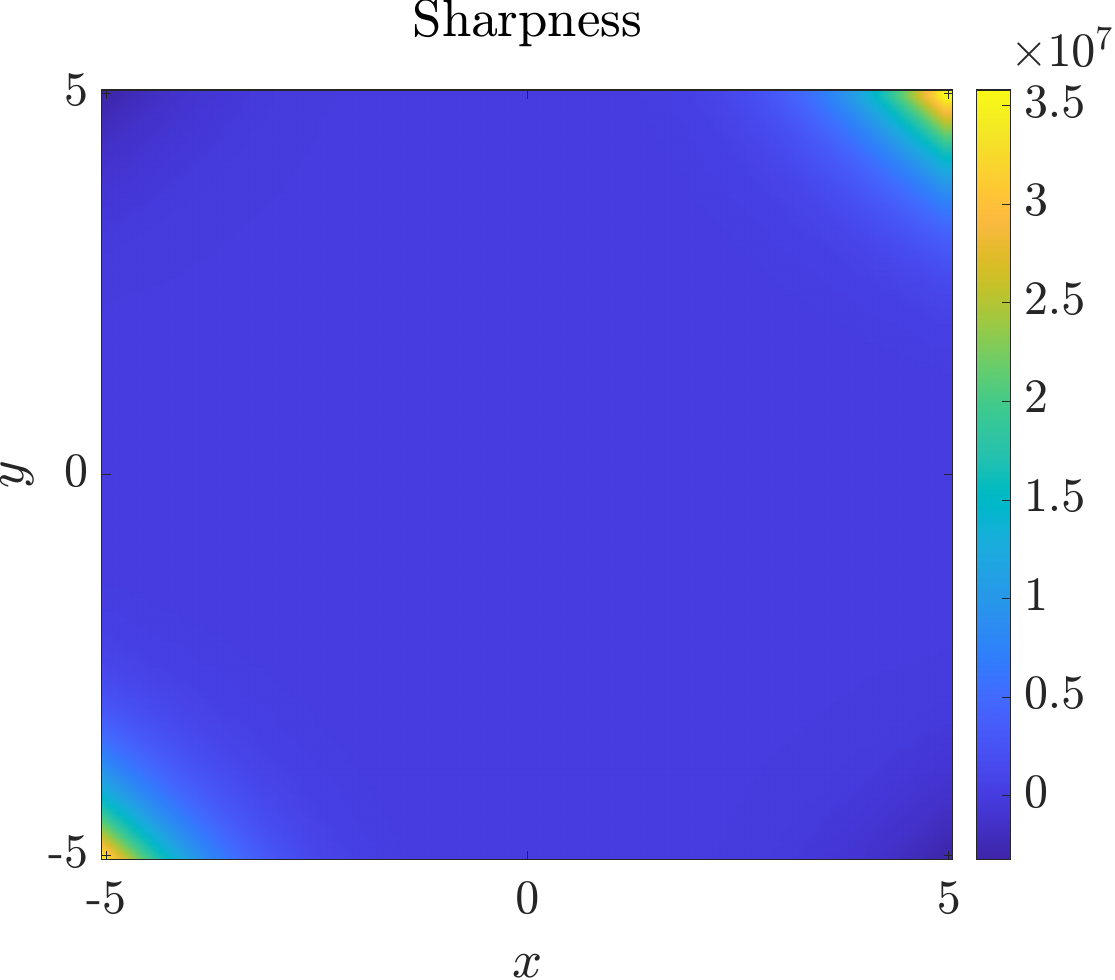}}
    \caption{The landscapes and sharpnesses of good and bad regularity functions; different colors indicate different values. (a)(b) are landscapes of good and bad regularity functions. For (a), the value of the good regularity function changes slowly, resulting in a small range on the color bar. In contrast, the value of a bad regularity function (b) changes significantly in the region away from the minima, while the changes near the minima can be ignored. As a result, the color bar exhibits a very large range, making it difficult to distinguish most of the regions on this heat map. (c)(d) are sharpnesses of the two functions. For (c), the sharpness of the good regularity function concentrates on the minima $xy=1$ while for other regions, the sharpness is very small. In contrast, for (d), the sharpness varies a lot outside the neighborhood of the minima.}
    \label{fig:heatmap_good_bad}
\end{figure}

Now we are ready to provide slightly more detailed, but still high-level answers to the two key questions:
\begin{itemize}
    \item \textbf{About `when'.} Within our designed family of functions, we rigorously demonstrate that large learning rate phenomena depend on the regularity of the objective function. Roughly speaking, when the objective has good regularity, i.e., small degrees of regularity (See Definition~\ref{def:degree_of_regularity} in Section \ref{sec:preliminary}), EoS and balancing are more prone to appear. However, when functions have bad regularity (large degrees of regularity), these two phenomena are more likely to vanish. 
    More precisely, whether there exists EoS and/or balancing is not a yes/no question. Instead, it is a quantitative one, focusing on how large is the set of initial conditions that exhibit such large learning rate phenomena.  
    As the degree of regularity increases, the set of the initial conditions that does not lead to these phenomena will expand, whose boundary moves in accordance with $2^{\frac{1}{b-1}}$, where $2b$ is the degree of regularity of the objective function.
    
    \item \textbf{About `why'.} Our theoretical analysis reveals that a crucial characteristic of these phenomena is the ability of large learning rates to steer GD toward flatter regions. In other words, the sharpness of the optimization landscape along GD iterations is influenced and controlled by how large the learning rate is.  In fact, upon the convergence of GD, which is nontrivial due to the large learning rate but proved as a groundwork for our theory, the limiting maximum Hessian eigenvalue is at most $\tilde{\mathcal{O}}(h)$ below $2/h$ for functions of good regularity, i.e., 
    the stability limit of sharpness $\leq 2/h$ is actually attained. But for bad regularity functions, this limit will be bounded by $1/h$ instead of the standard stability limit $2/h$. We name this nontrivial phenomenon as \emph{one-sided stability}. 
    Moreover, when the objective function has good regularity, GD is guaranteed to
    enter a region with sharpness less than the stability limit (the entry marks the onset of progressive sharpening), and then `crawl' up in sharpness till around $2/h$ (i.e., edge of stability).
\end{itemize}

Furthermore, we demonstrate the practical implications of our theory in deep learning through neural network models. Different choices of loss functions and/or activation functions will lead to different regularities of the training objective functions. This will result in very different training dynamics with or without EoS and balancing, which may also lead to different generalization performances.
Additionally, we consider a practical training technique---batch normalization \citep{ioffe2015batch} and show its capability of enhancing model regularity. When applied to bad regularity cases, batch normalization leads to the reemergence of the large learning rate phenomena. Collectively, these experiments consistently reinforce the significance of regularity in accordance with our theoretical framework.

We also remark on the nontriviality of our convergence analysis under large learning rates for a family of non-convex functions without globally Lipschitz continuous gradients. Especially, to the best of our knowledge, our work contains the first non-asymptotic result to quantify the rate of convergence of GD under large learning rates for nonconvex and non-Lipschitz-gradient functions (see Theorem~\ref{thm:bad_regularity_convergence}).

\vskip3pt
\noindent$\bigstar$ \textbf{Main contributions} of our paper are summarized in the following, except the technical contributions, which will be deferred to Section~\ref{sec:discussions}.
\begin{itemize}
    \item We demonstrate conditions for the occurrence of Edge of Stability and balancing within a class of functions.
    \item For bad regularity functions, we demonstrate the disappearance of EoS and a new, alternative phenomenon called \emph{one-sided stability}; we also quantify the disappearance of balancing. This new phenomenon is different from the results of stability theory. 
    \item We unify these phenomena and show they are different aspects of the same dynamical mechanism associated with the large learning rate.
    \item We provide the first rigorous bound of the global convergence rate of large-learning-rate GD for a family of non-convex functions without globally Lipschitz gradients.
\end{itemize}

This paper is organized as follows: Section~\ref{subsec:related_works} contains the related works, basic notations, and what is a large learning rate regime. Section~\ref{sec:preliminary} includes the notion degree of regularity, and the family of objective functions considered in Section~\ref{subsec:regularity_Functions}, as well as a discussion of these concepts for neural networks with and without batch normalization in Section~\ref{subsec:regularity_neural_network}. Section~\ref{sec:eos} and~\ref{sec:balancing} explain, respectively, the reasons for the occurrence of EoS and balancing, together with intuitions of GD's mechanisms influenced by both the regularity and the large learning rate in Section~\ref{sec:eos}. Section~\ref{sec:convergence} is about the asymptotic and non-asymptotic convergence results of GD under large learning rates for the family of non-convex functions. Section~\ref{sec:experiments} contains the validation of our theory in neural network applications. Section~\ref{sec:discussions} contains additional discussions such as the limitations and the innovations of our theory.

\section{Background}
\label{subsec:related_works}

We first briefly review related works and then introduce notations and some preliminaries on gradient descent.

\subsection{Related works on large learning rate phenomena in training/optimization}
$\bigstar$ {\it Balancing.} Prior to the theory of the balancing phenomenon in \cite{wang2022large}, there were already a considerable amount of interesting theoretical results studying the balanced manifold under gradient flow. For example, in deep fully connected feedforward networks, the difference between the squares of Frobenius norms of weight matrices at different layers is preserved if they are `trained' by gradient flow, which is the zero learning rate limit of deterministic gradient descent (i.e., no minibatch, no momentum) \citep{NEURIPS_Du2018algorithmic,arora2018optimization}. In order to gain further theoretical understanding, the community then simplified the model to the matrix factorization problem, i.e., $\min_{X,Y\in\mathbb{R}^{n\times m}}\|A-XY^\top\|^2$. \citet{NEURIPS_Du2018algorithmic} 
proved the convergence of GD under diminishing learning rate, which is an approximation of gradient flow,  and thus $\|X\|_{\rm F}^2-\|Y\|_{\rm F}^2$ is almost preserved throughout the GD iteration. 
\citet{ye2021global} later extended the convergence result to the constant small learning rate. 

In contrast to the regime of these infinitesimal or small learning rates, \citet{wang2022large} proved that large learning rate ($>2/L$ and can be approximately $4/L$, where $L$ is the local Lipschitz constant of the gradient) can disrupt such preservation and steer the GD trajectory towards a more balanced minimum. They coined this phenomenon as `\emph{balancing}'. Later on, \cite{chen2023edge} extended the scope of the large learning rate, and proved, by assuming all trajectories are positive, the convergence to an orbit of period 2 for scalar case, where almost perfect balancing can be achieved. They also explored periodic orbits in various settings, including a two-layer single-neuron network and the general matrix factorization problem.
Balancing will also be a central focus of this work; however, we will not view it merely as a large learning rate effect anymore; instead, we will elucidate that another factor is also important, namely the regularity of the objective function. This way, balancing can be understood for a larger class of objective functions.

\vskip2pt
\noindent$\bigstar$ {\it Edge of Stability.}  In a seminal paper where stability analysis of (S)GD was conducted, \citet{NEURIPS2018_6651526b} also observed that GD (SGD with full batch gradient) in certain experiments approximately achieves its largest stable sharpness, i.e., $2/h$. A comprehensive empirical characterization of this phenomenon, termed Edge of Stability (EoS), was first made by \citet{cohen2021gradient} with full batch gradient descent. Later, EoS was also observed in, for example, adaptive gradient methods with sufficiently large batch size \citep{cohen2022adaptive}, and reinforcement learning \citep{iordan2023investigating}.

The intricacy and importance of EoS led to, in addition to empirical results, a rapid accumulation of theoretical investigations, e.g., \citep{chen2023edge,macdonald2023progressive,ahn2022learning,damian2022self,zhu2023understanding,wang2022analyzing,kreisler2023gradient,arora2022understanding,ahn2022understanding,lyu2022understanding,wu2023implicit,song2023trajectory}. 
\citet{zhu2023understanding} for example theoretically prove the $\approx 2/h$ limiting sharpness based on the convergence of GD for a specific problem. More precisely, they considered an objective function, which in the settings of this paper corresponds to a relatively large regularity ($b=2$ and $\mathrm{dor}=4$; see~\eqref{eqn:functions} and Def.~\ref{def:degree_of_regularity}), and
initialized very close to the set of minimizers. 
Their setup successfully reproduced the later stage of EoS, i.e., the sharpness converging to $2/h$, but not necessarily the earlier stage, progressive sharpening. 
Note that such initialization is the `good' region of the bad (large) regularity function, which can still lead to  EoS (see more details in Section~\ref{sec:discussions}) and our theory does not contradict their result.
\citet{wang2022analyzing} adopted a two-layer linear network model, and used the norm of the output layers as an indicator for sharpness. They analyzed a four-phase\footnote{Their four phases correspond to more detailed divisions of the two stages of progressive sharpening and stabilization at $2/h$, and thus are different from our three stages of the EoS phenomenon (no de-sharpening, i.e., stage 1) and the two-phase convergence under large learning rates (no phase 1). See Section~\ref{sec:eos}.} 
behavior of the output norm, which corresponds to a detailed division of progressive sharpening and limiting stabilization of EoS, including the increase/decrease of sharpness above $2/h$. 
\citet{damian2022self} studied a similar four-phase behavior in EoS by leveraging third-order derivative information of the loss function.
\citet{ahn2022learning} analyzed a class of loss functions, corresponding to our setting with $\mathrm{dor}=1$, and provided a detailed estimate of the order of difference ($\mathcal{O}(h^{1/(\beta-1)})$ with $\beta> 1$) between limiting sharpness and $2/h$ for their loss functions. In this paper, we focus more on the reason why EoS occurs and design the objective functions such that this difference is upper bounded by $\mathcal{O}(h)$ when EoS appears, and $\mathcal{O}(h^{-1})$ otherwise. A follow-up work is \citet{song2023trajectory}, which studied two-layer linear network and nonlinear network using $\ell^2$ loss and activation with $\mathrm{dor}=0$ (output data is 0) from the bifurcation perspective, i.e., GD converging to a periodic orbit, and proved the limiting sharpness ($\approx 2/h$) as well as a later part of progressive sharpening. \citet{kreisler2023gradient} considered a local convergence of deep linear networks (scalar products) and proposed an approach to analyze GD via evolving gradient flow at each step. \citet{lyu2022understanding} studied scale-invariant loss ($\mathrm{dor}=0$), analyzed a local convergence of GD to the $\approx 2/h$ limit of its sharpness under normalization layers and weight decay, and showed that GD approximates a sharpness-reduction flow. 
While these results are undoubtedly remarkable, the effects of the large learning rate, especially in the early stage of iterations, and how it globally creates the three stages altogether, are not fully demonstrated. This will be one of the main focuses of this paper and a major factor contributing to the occurrence of EoS. 

\vskip2pt
\noindent$\bigstar$ {\it More results related to large learning rate.} Apart from balancing and EoS, there is another phenomenon known as catapult \citep{lewkowycz2020large} that will also be considered here. There, GD with $>2/L$ learning rate perceives an early phase catapulting of the loss. In addition to these three phenomena, the impact of a large learning rate manifests in numerous other studies as well. Examples of empirical works include:  
\citet{andriushchenko2023sgd} observed that a large learning rate will bias SGD towards sparse solutions; \citet{pmlr-v139-jastrzebski21a} showed that in the early phase, SGD with a large learning rate penalizes the trace of the Fisher Information Matrix, which helps generalization.

Large learning rates can lead to complicated training dynamics, very different from that of gradient flow which only characterizes the infinitesimal learning rate regime. The effects of small-to-intermediate learning rates can be partially characterized via the tool of modified equation \citep{li2019stochastic}. However, as the learning rate continues to grow, the modified equation will stop to provide a good approximation \citep{kong2020stochasticity}, and interesting implicit biases such as those discussed here can manifest. When the learning rate becomes even bigger, the optimization dynamics can begin to invalidate the naive understanding that GD iterations either converge to a (fixed) point or diverge, even if the learning rate is fixed as a constant. The iterations may converge to higher-order dynamical structures beyond a fixed point. For example, \citet{kong2020stochasticity} studied the case where the objective function is $L=\mathcal{O}(1/\epsilon)$-smooth but the learning rate is $o(1) \gg 2/L$, for which the convergence to a chaotic attractor is proved, via a route of period-doubling bifurcations, i.e., GD first convergent to 1-periodic orbit (i.e., fixed point), then 2-periodic orbit, 4-, 8-, and all the way to $\infty$-period; based on this chaotic convergence pattern they constructed a quantitative theory of local minima escape without stochastic gradients. \citet{chen2023edge} proved the convergence to a 2-periodic orbit with a larger learning rate for specific 1D and 2D functions under specific assumptions. \citet{song2023trajectory} also studied the convergence to a 2-periodic orbit.

Apart from constant large learning rate, another line of research studies the acceleration of convergence to the minimum by using cyclic learning rates, which combine small learning rate leading to slow convergence with very large learning rate leading to divergence \citep[e.g.,][]{young1953richardson,altschuler2018greed,oymak2021provable,grimmer2023provably,altschuler2023accelerationI,altschuler2023accelerationII}. 

\subsection{Related works on flatness and generalization} 
Training with large learning rate GD often empirically leads to better generalization. A popular belief is that this is a consequence of GD's preference for flat minima under large learning rates \citep[e.g.,][]{seong2018towards,lewkowycz2020large,yue2023salr,smith2018disciplined}. The fact that large learning rates lead to flat minima has been theoretically confirmed and quantified, for example, by \citet{NEURIPS2018_6651526b,wang2022large}.
Theoretical explorations of flat minimum leading to better generalization have also been fruitful, including those based on PAC-Bayes bounds \citep{langford2001not,dziugaite2017computing}, information theory \citep{hinton1993keeping,hochreiter1997flat,negrea2019information}, and Rademacher complexity \citep[e.g.,][]{pmlr-v162-nacson22a,gatmiry2023inductive}, 
although
alternative/more precise statements also exist \citep[e.g.,][]{dinh2017sharp,andriushchenko2023modern,wen2023sharpness}. Also worth mentioning is the recently proposed but very popular approach of sharpness-aware minimization \citep{foret2021sharpnessaware}, which explicitly seeks smaller sharpness for better generalization.

\subsection{Notations} 
\label{subsec:notation}

Gradient descent (GD) iteration for solving a general nonconvex optimization problem
$\min_u f(u)$
is given by
$
    u_{k+1}=u_k-h\nabla f(u_k),
$
where $h>0$ is the learning rate (a.k.a. step size).

For problems considered in this article (see \eqref{eqn:functions} for more details), $u$ contains two components, $x$ and $y$, and thus we will also use $(x_k,y_k)$ to denote the $k$th iterate of GD. Let $S_k$ denote the largest eigenvalue magnitude of Hessian $\nabla^2f$ evaluated at the $k$th iteration $(x_k,y_k)$. It serves as a quantification of the local geometry of the landscape and will be referred to as the sharpness (value). We use  $(x_\infty,y_\infty)$ and $S_\infty$ to denote the limiting point of GD iteration and the limiting sharpness respectively.

We denote $L$ to be the Lipschitz constant of the gradient in the bounded region where the GD trajectory lies. We use $L_{\rm glo}$ to denote the global Lipzchitz constant of the gradient. We use $L_0$ and $L_\infty$ to represent the local Lipschitz constant of the gradient in the neighborhood of the initial point $(x_0,y_0)$ and the limiting point $(x_\infty,y_\infty)$ respectively. 

 We follow the theoretical computer science convention and use $\mathcal{O}(\cdot)$ to indicate the order of a quantity and 
 $\tilde{\mathcal{O}}(\cdot)$ for its order omitting logarithmic dependence. 
 Although we consider large learning rate $h$ throughout this paper, note that the term `large' is a relative concept based on local geometry, and more precisely, $h$ depends on $x_0,y_0$ (see detailed bounds of $h$ in Section~\ref{sec:convergence}). Therefore the term $\mathcal{O}(h)$ is taken in the sense of $x_0^2+y_0^2\to\infty$. We use $\lesssim$ such that $g_1\lesssim g_2$ means $g_1\le g_2+c\frac{\log(x_0^2+y_0^2)}{x_0^2+y_0^2}$ for some universal constant $c>0$. 

Finally, given two matrices $A\text{ and }B$, $A\preceq B$ ($A\prec B$) means $B-A$ is positive semi-definite (positive definite). $\|\cdot\|_{\rm F}$ denotes matrix Frobenius norm.

\subsection{Learning rate in gradient descent}
\label{subsec:learning_rate}

What learning rate should be used in gradient descent? 2nd-order derivative information of $f$, if available, is always helpful. More precisely, if the gradient $\nabla f$ is globally $L_{\rm glo}$-Lipschitz, then $L_{\rm glo}$ plays an important role in determining/interpreting the learning rate to use. In fact, a well-known nonconvex optimization result states the following (see, e.g., more detailed discussion in \cite{wang2022large} Appendix F)

\begin{theorem}
\label{thm:2/h_converge_stationary_point}
If $f:\RR^N\to\RR$ is twice differentiable and $\nabla f$ is $L_{\rm glo}-$Lipschitz, i.e., $-L_{\rm glo}I\preceq\nabla^2f(u)\preceq L_{\rm glo}I$ for all $u\in\RR^N$,
then with $h<\frac{2}{L_{\rm glo}}$, GD converges to a stationary point, i.e., a point where $\nabla f$ vanishes.
\end{theorem}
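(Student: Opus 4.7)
The plan is to follow the classical descent-lemma approach for nonconvex smooth optimization, exploiting that $h < 2/L_{\rm glo}$ strictly to obtain a quantitative per-step decrease.

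First I would invoke the descent lemma, which is a direct consequence of the assumption $\nabla^2 f \preceq L_{\rm glo} I$: for any $u,v \in \RR^N$,
\begin{equation*}
f(v) \le f(u) + \ip{\nabla f(u)}{v - u} + \tfrac{L_{\rm glo}}{2}\|v - u\|^2 .
\end{equation*}
This is proved by integrating $\nabla f$ along the segment from $u$ to $v$ and applying the upper bound on the Hessian (no need for a lower bound here; the lower bound is actually not required for the theorem).

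Next I would specialize to $v = u_{k+1} = u_k - h \nabla f(u_k)$. Substituting and simplifying gives
\begin{equation*}
f(u_{k+1}) \le f(u_k) - h\Bigl(1 - \tfrac{h L_{\rm glo}}{2}\Bigr)\|\nabla f(u_k)\|^2 .
\end{equation*}
Because $h < 2/L_{\rm glo}$, the constant $\alpha := h(1 - h L_{\rm glo}/2)$ is strictly positive, so $\{f(u_k)\}$ is monotonically non-increasing. If $f$ is bounded below (which is the natural assumption under which the conclusion makes sense; otherwise the iterates can escape and no limit point exists), then $\{f(u_k)\}$ converges, and telescoping the inequality yields
\begin{equation*}
\alpha \sum_{k=0}^{\infty} \|\nabla f(u_k)\|^2 \le f(u_0) - \inf f < \infty,
\end{equation*}
which forces $\|\nabla f(u_k)\| \to 0$. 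Hence any limit point of $\{u_k\}$ is a stationary point, which is the intended reading of the theorem.

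The only real subtlety is the phrase \emph{converges to a stationary point}. The argument above gives $\|\nabla f(u_k)\| \to 0$ and accumulation at stationary points, but not convergence of the iterate sequence itself to a single point. Upgrading to pointwise convergence of $u_k$ is the main obstacle and is in general false without additional structure (e.g., isolated stationary points, a Kurdyka--{\L}ojasiewicz inequality, or bounded iterates together with suitable curvature conditions). In this excerpt, though, the theorem is stated in the standard nonconvex-optimization sense, so I would present the descent-lemma argument as above and note that ``convergence to a stationary point'' should be understood as convergence of the gradient norm to zero (equivalently, every limit point being stationary), which is the content one can extract from only the $L_{\rm glo}$-smoothness assumption.
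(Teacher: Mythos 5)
Your proof is the standard descent-lemma argument, which is exactly the route the paper has in mind (it states this as a classical result and defers the details to Appendix F of \citet{wang2022large} rather than proving it inline). Your caveats are also well placed: the theorem as stated tacitly assumes $f$ is bounded below, and ``converges to a stationary point'' is indeed to be read as $\|\nabla f(u_k)\|\to 0$ (equivalently, every limit point is stationary), since iterate convergence requires additional structure.
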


This theorem
is based on the global Lipschitzness of the gradient. However, objective functions in machine learning often lack \emph{globally} Lipschitz continuous gradients. One strategy is to assume that the trajectory (i.e., $u_0,u_1,\cdots$) is located in a bounded region, in which the \emph{local} Lipschitz constant (defined as the maximum eigenvalue magnitude of the Hessian $\nabla^2 f$ over the region) is upper bounded by some constant $L$. Altogether, 
it is most common to consider the following learning rate regime:
\begin{align*}
   \text{Regular learning rate regime: } 0<h<\frac{2}{L},
\end{align*}
where the convergence of GD can be well understood.
There are finer divisions of this regime with various behavior of GD, e.g., the infinitesimal learning rate regime which approximates the gradient flow, and a relatively large regime $\frac{1}{L}\le h< \frac{2}{L}$, where GD may oscillate near the minima until convergence.

Nevertheless, Theorem~\ref{thm:2/h_converge_stationary_point} only demonstrates a sufficient condition for convergence under nonconvex settings. A learning rate larger than $2/L$ may also lead to the convergence of GD.
This case will be the main focus of this paper and will be referred to as:
\begin{align*}
  \text{Large learning rate regime: }  \frac{2}{L}\lesssim h\lesssim \frac{C}{L}\text{ for some }C>2.
\end{align*}

There is in general no guarantee of convergence in this regime. In certain situations, using a very large learning rate may still result in convergence, but not necessarily to a stationary point (e.g., \citep{kong2020stochasticity,chen2023edge}); however, this article considers cases where GD does converge to a minimizer (i.e., a stationary point), and we will show that
there is actually such a regime, for which  $C\approx 4$. This regime is where many of the interesting large learning rate phenomena occur.

\section{Degrees of Regularity}
\label{sec:preliminary}

Before we proceed with our main results, we first introduce a new family of functions and how we measure their regularities.

\subsection{A family of functions with different degrees of regularity}
\label{subsec:regularity_Functions}

 Apart from the large learning rate, the regularity of the objective function also impacts large learning rate phenomena. Proposing and demonstrating this is one of the main contributions of this article. To this end, we first define the following notion in order to quantify the effect of regularity later.

\begin{definition}
    [Degree of regularity]
    \label{def:degree_of_regularity}

    Given function $F(s):\mathbb{R}\to\mathbb{R}$, its degree of regularity is
    \begin{align}   \label{eqn:degree_of_regularity} 
    \mathrm{dor}(F)=\inf \big\{n: \left|F(s)\right|\le C_1 |s|^n, \text{ for } |s|\ge C_0 \text{ with some constant }C_0,C_1>0\big\}.
    \end{align}
\end{definition}


\begin{remark}
 One might wonder about the rationale behind our terminology. The intuition is that if a function $F$ has a high dor, its $(k+1)$th derivative can possibly be much larger than its $k$th derivative at large enough $s$ (e.g., think about a polynomial). This blowing-up phenomenon through differentiations can persist for $0\leq k \ll \mathrm{dor}(F)$, i.e., for many orders of derivatives. Therefore, the function has poor regularity. Similarly, low dor is a qualitative\footnote{Note: not quantitative, as our explanations were just intuitions. Rigorous minds can just consider `dor' as a name.} indicator of good regularity.
 \end{remark}


A preview of our results is that when $\mathrm{dor}(F)$ is small, it corresponds to good regularity functions in our function class, and we will show the large learning rate implicit biases of EoS and balancing are more likely to appear. In contrast, large $\mathrm{dor}(F)$ suggests bad regularity and may lead to the disappearance of these phenomena. See details in Section~\ref{sec:eos} and~\ref{sec:balancing}. The following remark clarifies the relation between dor, regularity, and large learning rate phenomena.

\begin{remark}
We clarify that the above definition may not be the most general condition since we don't yet have a theory that can analyze any function. Degree of regularity (dor) will only serve as an initial attempt to understand how our specific family of functions used in \eqref{eqn:functions} corresponds to implicit biases of large learning rate. It may appear to be more like a growth condition used in various fields, such as Banach algebra \citep{stampfli1968growth}, optimization \citep{drusvyatskiy2018error}, and partial differential equations \citep{marcellini2021growth}. It also resembles the idea of generalized smoothness condition in non-convex optimization; see for example \citet{li2023convex,zhang2019gradient}. However, this is not the first time when growth condition is related to the regularity of functions; see, for example, \citet{giaquinta1987growth,giaquinta1988quasiconvexity}\footnote{Our definition of degree of regularity is still different from the relation between growth condition and regularity in these papers. Nevertheless, this is our initial attempt and is sufficient for our family of functions.}. Between growth condition and regularity, 
we conjecture that regularity (in the sense of Sobolev) is a more intrinsic factor of the implicit biases considered here.

Note dor is a global property and it is our intention to have a global condition for regularity. This is because we conjecture that local behaviors of a function, such regularities near the minimizer or at infinity, are insufficient to guarantee large learning rate phenomena considered in this article. For example,
Figure~\ref{fig:regularity} shows a perturbed version of a good regularity function $f(x,y)$ with $a=1$ in~\eqref{eqn:functions}, denoted as $f_{\rm pert}(x,y)$. As is shown in the figure, when far away from or very close to the minima, it overlaps with the good regularity function, sharing exactly the same growth condition both globally towards infinity and locally at the minimizer. Nevertheless, the Sobolev norm increases after the perturbation, i.e.,  $\|f(x,y)\|_{W^{k,p}}<\|f_{\rm pert}(x,y)\|_{W^{k,p}}$ for Sobolev space $W^{k,p}$ with $k\in\mathbb{N}$ and $1\le p<\infty$. This means the regularity of the function becomes worse. As numerically evidenced, 
EoS can still disappear. 

In addition, the notion of EoS considered in this article refers to the entire process, including not only the final stage of sharpness stabilization around $2/h$, but also earlier preparational stages. We will use this terminology, unless EoS is otherwise specified to be the final stage itself. That is another reason why the global behavior of the objective function matters as GD can navigate across a large region throughout the full process.

\begin{figure}[ht]
\centering    \subfigure[$F(s)$ vs $F_{\rm pert}(s)$]{\includegraphics[width=0.26\textwidth]{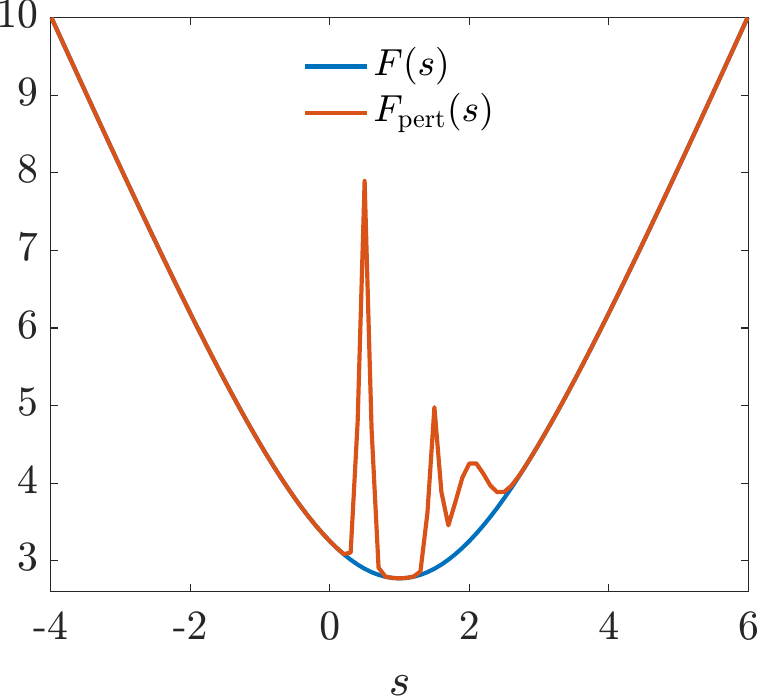}}
\subfigure[loss-$\{\min$ of loss$\}$]{\includegraphics[width=0.3\textwidth]{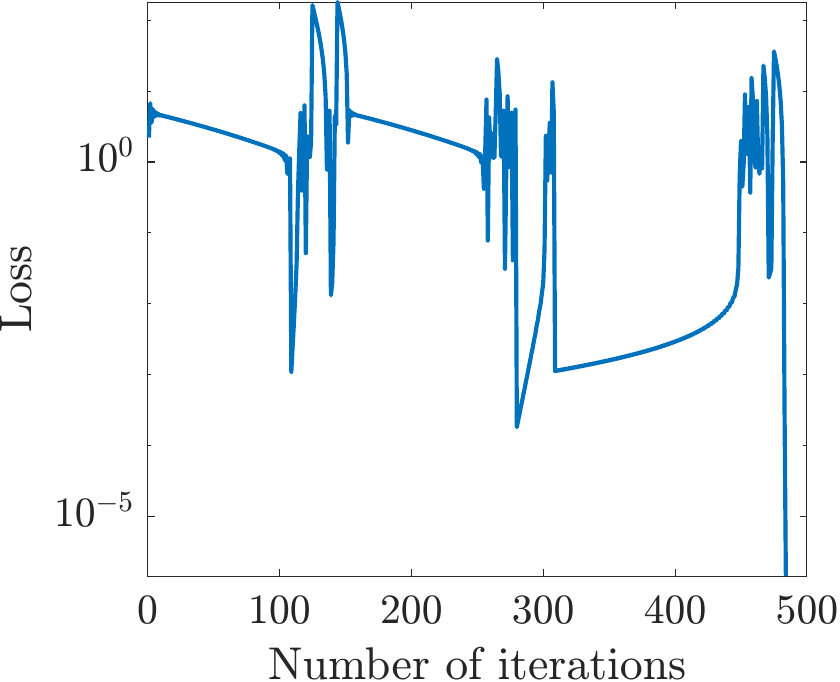}}
    \subfigure[no EoS]{\includegraphics[width=0.3\textwidth]{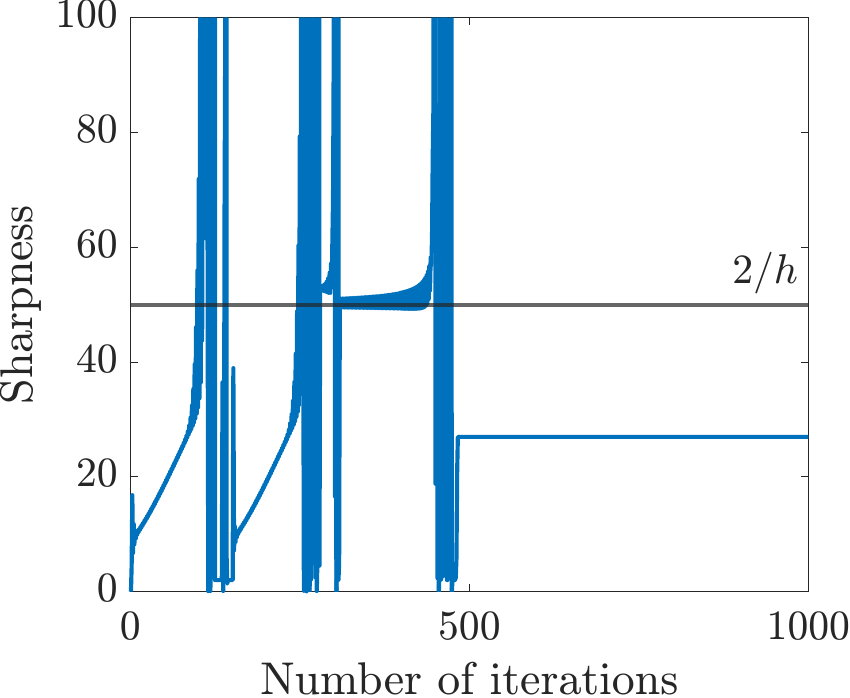}}
    \caption{Perturbed version of  $f(x,y)=2{(\log (\exp (x y-1)+1)+\log (\exp (1-x y)+1))}$, the good regularity function. Let $s=xy$. In (a), the blue curve is $F(s)=2{(\log (\exp (s-1)+1)+\log (\exp (1-s)+1))}$; the red curve is $F_{\rm pert}(s)=e^{-10 (s-2)^{2}}+2 e^{-100 (s-1.5)^{2}}+5 e^{-100 (s-0.5)^{2}}+F(s)$. In (b)(c), we apply GD on $f_{\rm pert}(x,y)=F_{\rm pert}(xy)$. The initial condition is $x_0=10,y_0=0.15$; the learning rate is $h=\frac{4}{x_0^2+y_0^2}$.}
    \label{fig:regularity}
\end{figure}

\end{remark}

In order to analyze the effect of regularity, we would like to design a family of $\mathcal{C}^2$ functions with various degrees of regularity. \cite{wang2022large} studied a matrix factorization problem with the objective $\|A-XY^\top \|_{\rm F}^2$. When $A$ is a scalar, it becomes $(A-x^\top y)^2$ where $x,y$ are vectors of the same dimension;  if we let $s=x^\top y$, then $f(x,y)=F(s)=(A-s)^2$ and $\mathrm{dor}(F)=2$. 
\cite{zhu2023understanding} 
analyzed a minimalist\footnote{But it doesn't mean EoS cannot appear for simpler functions.} example for EoS, which is $f(x,y)=F(xy)=(1-(xy)^2)^2$ with $\mathrm{dor}(F)=4$ (see Section~\ref{sec:discussions}). \cite{ahn2022learning} examine a class of Lipschitz functions on $xy$, i.e., $\mathrm{dor}=1$, including $F(xy)=\log(1+\exp(-xy))+\log(1+\exp(xy))$ and $\sqrt{1+(xy)^2}$. Inspired by these designs, we define a more general class of functions, with $\mathrm{dor}$ ranging from 0 to $+\infty$ as follows  
\begin{align}
\label{eqn:functions}
    f(x,y)=F(xy)=
    \begin{cases}
    C_a {(\log (e^{x y-1}+1)+\log (e^{1-x y}+1))^a}& \\
    \qquad\text{with constant } C_a=\frac{1}{{a 2^{a-2} \log ^{a-1}(2)}}
    &\text{ , for }0<a\le 1, a\in\mathbb{R}\\
    C_b(1-(xy)^{b})^2& \\
    \qquad\text{with constant } C_b=\frac{1}{2b^2} 
    &\text{ , for }b=2n-1, n\in\mathbb{Z}.
    \end{cases}
\end{align}
Note the 1st and 2nd lines of \eqref{eqn:functions} respectively correspond to $\mathrm{dor}\leq 1$ and $\mathrm{dor}>1$. At first glance, the splitting of the definition seems unnatural, but in fact note that the 1st line can be seen as a smoothened version of the 2nd line: when $b<1$, the gradient of $(1-(s)^b)^2$ at the minimum point $s=0$ is singular; by smoothing this function using the idea inspired by \cite{ahn2022learning} into that in the 1st line, its gradient becomes $\mathcal{C}^1$ for all the points. 

In addition, the design is such that the locations of all minimizers are fixed, specifically at $xy=1$, for all members of the function class.
This way, it is fairer and more consistent to compare GD's convergence for this class.
Note this choice of global minima introduces a more intricate landscape compared to the design in \citep{ahn2022learning} where the minima are $\{(x,y)\,|\,xy=0\}$. The reason for complicating the functions in order to move their minimizers lies in the variation of topological structure, where $xy=1$ exhibits two distinct branches while it degenerates into a single connected set in the case of $xy=0$. Moreover, our results for global minima $xy=1$ can be easily generalized to any non-zero value, not necessarily 1. An additional remark is that the global minima $xy=1$ are also different from $xy=\pm1$ used in~\cite{zhu2023understanding}, whose objective function shows similar landscape but additional symmetry, and that's why in this work we restrict $b$ to odd values, ruling out the even. However, we conjecture that our analysis has the potential to be extended to the case with minima $xy=\pm1$.

Finally, note the constant coefficients are chosen such that the largest eigenvalue of Hessian (i.e., sharpness) at any minimizer is always $x^2+y^2$, regardless of the value of $a$ or $b$, which contributes to a fairer comparison as well.

We summarize the properties of this family of functions in the following.

\begin{proposition}
\label{prop:functions}
This family of functions~\eqref{eqn:functions} have the following properties:
    \begin{itemize}
    \item $f\in\mathcal{C}^2(\mathbb{R}^2)$;
        \item All the minima of these functions are global minima, located at $xy=1$; 
        \item The sharpness at any minimizer $(x,y)$ is $x^2+y^2$, for all these functions;
        \item The degree of regularity satisfies $\mathrm{dor}(F)=\begin{cases}
                a,& \text{for } 0<a\le 1,\\
                2b,&\text{for }b=2n-1. 
            \end{cases}$
    \end{itemize}
\end{proposition}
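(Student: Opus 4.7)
The plan is to verify each of the four bulleted properties separately, reducing everything to one-dimensional analysis of $F$ via the composition $f(x,y) = F(xy)$ and the chain rule. For \textbf{smoothness}, the case $b=2n-1$ is immediate since $F(s)=C_b(1-s^b)^2$ is polynomial. For $0<a\le 1$, write $F=C_a g^a$ with $g(s)=\log(e^{s-1}+1)+\log(e^{1-s}+1)$. Since $g(s)\ge 2\log 2>0$ everywhere and $g\in \mathcal{C}^\infty$, the power $g^a$ is smooth on its domain and hence $F\in\mathcal{C}^\infty(\mathbb{R})$, so $f$ is certainly $\mathcal{C}^2$.

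For the \textbf{characterization of minima}, observe $\nabla f(x,y)=F'(xy)(y,x)$, so critical points are either the origin or satisfy $F'(xy)=0$. In case 2, $F'(s)=-2bC_b s^{b-1}(1-s^b)$ vanishes precisely at $s=0$ and $s=1$; the values $f=C_b>0$ on $\{xy=0\}$ and at the origin versus $f=0$ on $\{xy=1\}$ rule these out as global minima, and they are shown to not be local minima by restricting to the slice $x=x_0$ and noting the minimum along $y$ sits at $y=1/x_0$. In case 1, $g'(s)=\sigma(s-1)-\sigma(1-s)$ (with $\sigma$ the sigmoid) vanishes only at $s=1$, and the strict convexity of $g$ (i.e., $g''>0$) makes $s=1$ the unique minimizer of $g$; hence $F'(s)=C_a a g(s)^{a-1}g'(s)$ vanishes only at $s=1$, and the origin is a saddle because its Hessian is off-diagonal with $F'(0)\ne 0$.

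For the \textbf{sharpness at a minimizer}, the chain rule gives
\begin{equation*}
\nabla^2 f(x,y)=F''(xy)\begin{pmatrix} y \\ x \end{pmatrix}\begin{pmatrix} y & x \end{pmatrix}+F'(xy)\begin{pmatrix} 0 & 1 \\ 1 & 0 \end{pmatrix}.
\end{equation*}
At any point with $xy=1$ the second term vanishes, leaving a rank-one matrix whose unique nonzero eigenvalue is $F''(1)(x^2+y^2)$. Thus it suffices to check $F''(1)=1$ in each case. Case 2: direct differentiation yields $F''(1)=2b^2 C_b=1$. Case 1: $g'(1)=0$ gives $F''(1)=C_a a\,g(1)^{a-1}g''(1)$, and $g(1)=2\log 2$ together with $g''(1)=2\sigma'(0)=1/2$ simplifies, after substituting $C_a=1/(a\,2^{a-2}\log^{a-1}(2))$, to $F''(1)=1$. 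For the \textbf{degree of regularity}, use the growth asymptotics of $F$. Case 2: $F(s)=C_b(1-s^b)^2\sim C_b s^{2b}$ as $|s|\to\infty$, so $\mathrm{dor}(F)=2b$. Case 1: from $\max(t,0)\le\log(e^t+1)\le |t|+\log 2$ one obtains $|s-1|\le g(s)\le 2|s-1|+2\log 2$, hence $F(s)\asymp |s|^a$ at infinity, so $\mathrm{dor}(F)=a$.

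The main obstacle, such as it is, will be the case-$1$ computation of $F''(1)$: it requires identifying $g(1)=2\log 2$ and $g''(1)=\tfrac{1}{2}$ (from the derivative of two sigmoids evaluated at $0$) and then tracking the cancellation of the factor $2^{a-1}\log^{a-1}(2)$ against the normalization in $C_a$. Once this computation is recorded, the other claims follow in parallel from the two observations that (i) the only critical manifold is $\{xy=1\}$ and (ii) Hessian and growth reduce to $F''(1)$ and the leading growth of $F$ at infinity.
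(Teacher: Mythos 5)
The paper states Proposition~\ref{prop:functions} without proof, so there is no official argument to compare against; your verification is the natural one and checks out in all essentials: the decomposition $\nabla^2 f(x,y)=F''(xy)\,(y,x)^\top(y,x)+F'(xy)\left(\begin{smallmatrix}0&1\\1&0\end{smallmatrix}\right)$, the computation $F''(1)=2b^2C_b=1$ in case 2 and $F''(1)=\tfrac12 C_a a(2\log 2)^{a-1}=1$ in case 1, and the growth bounds $|s-1|\le g(s)\le 2|s-1|+2\log 2$ giving $\mathrm{dor}(F)=a$ are all correct.

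One step in the case-2 treatment of the spurious critical set is stated too loosely. Knowing that the minimum of $y\mapsto F(x_0y)$ along the slice $x=x_0$ sits at $y=1/x_0$ does not by itself exclude a \emph{local} minimum at $y=0$; what you actually need is that $F'(s)=-2bC_b\,s^{b-1}(1-s^b)<0$ for $0<|s|<1$ (here $b-1$ is even, so $s^{b-1}\ge 0$ on both sides), hence $F$ is strictly decreasing through $s=0$ and no point of $\{xy=0\}\setminus\{(0,0)\}$ is a local minimum. Moreover the slice argument says nothing about the origin itself, where the slice $x=0$ is constant and, unlike case 1, the Hessian vanishes entirely (since $F'(0)=0$ for $b\ge 3$), so the off-diagonal saddle argument is unavailable; instead observe $f(t,t)=C_b(1-t^{2b})^2<C_b=f(0,0)$ for small $t\ne 0$, while $f(t,-t)>C_b$, so the origin is a (degenerate) saddle. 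With these two patches the argument is complete.
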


\subsection{(Degree of) regularity of neural network training objective}
\label{subsec:regularity_neural_network}

\noindent$\bigstar$ \textbf{Toy model.}
Although still far from being realistic, the above functions~\eqref{eqn:functions} are designed to emulate the setup of a neural network, particularly combining both the loss function and the activation function. To supplement our theory, Section~\ref{sec:experiments} will provide empirical experiments on practical neural networks. But for now, let us first discuss some analytical connections. For example, consider the neural network model defined in the following problem
\begin{align}
    \label{eqn:neural_network_model}\min_{W_1,\cdots,W_\ell} f(W_1,\cdots,W_\ell)=\sum_i\mathcal{L}(g(W_1,\cdots,W_\ell;u_i^{\rm input}),u_i^{\rm output}) , 
\end{align}
where $g=W_\ell\sigma_{\ell-1}(W_{\ell-1}\sigma_{\ell-2}(\cdots\sigma_{1}(W_1 u_i^{\rm input})))$; $\mathcal{L}(\cdot,\cdot)$ is the loss function,  $\sigma_i(\cdot)$ is the activation function, $(u_i^{\rm input},u_i^{\rm output})$'s are data pairs, and $W_i$'s are weight matrices. 
To better understand the regularity in neural network model~\eqref{eqn:neural_network_model}, we consider the following toy model of a 3-layer neural network trained on one data point $\{u^{\rm input}=1,u^{\rm output}=1\}$: the first layer with $n$ linear neurons, the second layer with just 1 (nonlinear)  neuron, and last layer fixed (weight assumed WLOG to be 1). Then the training objective function is 
\begin{align}
   \label{eqn:nn_objective_toy_3_layer}
   f(W_1,W_2)=\mathcal{L}(\mathcal{\sigma}(W_2 W_1),1),\text{ where }W_1,W_2^\top\in\mathbb{R}^{n}. 
\end{align}
This function could again be rewritten as $F(W_1W_2)=f(W_1,W_2)$ and we have
\begin{align}
\label{eqn:dor_3layer_toy}
	\mathrm{dor}(F)=\mathrm{dor}(\mathcal{L}(\cdot,1))\mathrm{dor}(\sigma).
\end{align}

This means the regularity of this objective function depends on two parts: one is the neural network model $g$ which is further based on the regularity of the activation function, and the other is the loss function $\mathcal{L}$.

\vskip2pt
\noindent$\bigstar$ \textbf{Basic components of neural network.} Now let us exemplify some loss functions and activation functions with various regularities.

\begin{example}
\label{ex:regularity_loss_activation}
    Degrees of regularity of some loss functions
    \begin{itemize}
        \item $\ell^2$ loss: $\mathcal{L}(s,\cdot)=\frac{1}{2}(\cdot-s)^2$, $\mathrm{dor}(\mathcal{L})=2$;
        \item Huber loss: $\mathcal{L}(s,\cdot)=\begin{cases}
            \frac{1}{2}(\cdot-s)^2,& |\cdot-s|\le \delta\\
            \delta (|\cdot-s|-\frac{1}{2}\delta),& otherwise
        \end{cases}$, $\mathrm{dor}(\mathcal{L})=1$;
    \end{itemize}
    Degrees of regularity of some activation functions
    \begin{itemize}
    \item Hyperbolic tangent $\tanh$: $\sigma(s)=\frac{e^s-e^{-s}}{e^s+e^{-s}}$, $\mathrm{dor}(\sigma)=0$.
        \item (Leaky) ReLU function:
         $\sigma(s)=\begin{cases}
            \alpha s,& s<0\\
            s,& s\ge 0
        \end{cases}$ for some $0\leq\alpha\leq 1$, $\mathrm{dor}(\sigma)=1$.
        \item $\text{ReLU}\,^k$ function\footnote{Applications and analysis of this activation can be found in e.g. \citet{yang2023optimal,luo2020two,gao2023gradient}.}:
         $\sigma(s) = \max(0, s^k)$, $\mathrm{dor}(\sigma)=k$.
    \end{itemize}
\end{example}

According to the above discussion, the degree of regularity of the neural network objective varies as either the loss or the activation changes. Moreover, large learning rate phenomena may not occur for certain choices of loss and activation (i.e., when $\mathrm{dor}$ is large) if no extra techniques are applied to help training. Empirical results on more practical network models are consistent with our theory as shown in Section~\ref{sec:experiments}, where loss and activation functions with different regularities are tested.

Next, we will show that a certain training method---batch normalization---can eliminate the effect of choices of loss and activation that otherwise lead to bad regularity.

\vskip2pt
\noindent$\bigstar$ \textbf{Batch normalization.}
Batch normalization \citep{ioffe2015batch} is an efficient approach to enhance the robustness of the networks and improve both optimization and generalization. Existing works propose that batch normalization can smoothen the landscape, i.e., decrease the sharpness, with a larger learning rate \citep{lyu2022understanding,santurkar2018does,bjorck2018understanding,ghorbani2019investigation,karakida2019normalization}. We reinforce this perspective and suggest that batch normalization helps reduce the degree of regularity of the model. Consider the general objective function~\eqref{eqn:neural_network_model}. If we add batch normalization to each layer and implement (full batch) GD, i.e.
\begin{align}
\label{eqn:batch_normalization_neural_networks}
g_{j+1}=\sigma_j(\mathrm{BN}(W_jg_j)), g_1=(u_1^{\rm input},\cdots,u_N^{\rm input}),\text{ where }N\text{ is the number of data,}
\end{align}
\noindent then we can consider $\tilde{\sigma}(\cdot)=\sigma(\mathrm{BN}(\cdot))$ to be batch normalized activation function for some activation $\sigma$. 
Given $N$ inputs $\{v_i\}_{i=1}^N$ with $v_i=(v_i^{(1)},\cdots,v_i^{(n)})^\top\in\mathbb{R}^n$, 
we have
\begin{align*}
    \tilde{\sigma}(v_i^{(k)})=\sigma\left(\frac{v_i^{(k)}-\mathrm{Mean}(v^{(k)})}{\sqrt{\mathrm{Var}(v^{(k)})+\epsilon}}\right),
\end{align*}
where $\mathrm{Mean}(v^{(k)})=\frac{1}{N}\sum_{i=1}^N v_i^{(k)},\ \mathrm{Var}(v^{(k)})=\frac{1}{N}\sum_{i=1}^N (v_i^{(k)}-\mathrm{Mean}(v^{(k)}))^2,$ and $\epsilon>0$ is a very small hyperparameter. Then with high probability, $\tilde{\sigma}(v)$ is bounded.
By Definition~\ref{def:degree_of_regularity}, we have $\mathrm{dor}(\tilde{\sigma})\approx 0,$
which is independent of $\mathrm{dor}(\sigma)$. If we consider the above 3-layer toy example, then
\[
	\mathrm{dor}(F)=\mathrm{dor}(\mathcal{L}(\cdot,1))\mathrm{dor}(\tilde{\sigma})\approx 0.
\]
This shows that batch normalization can effectively drive the training objective function from bad (large) regularity to good (small) regularity. We also experimentally validate our theory on how batch normalization affects regularity and ramifications in Section~\ref{sec:experiments}, where large learning rate phenomena reappear after batch normalization is added to a model with bad regularity.

Despite all the discussions above, we emphasize again that, the theories constructed in this article are still not enough for analyzing general neural networks, although they can already work for a larger class of objective functions than those considered in the literature.

Now we are ready to quantify large learning rate phenomena under different regularities. See Section \ref{sec:eos} and \ref{sec:balancing}, and note results there will be based on the convergence theorems in Section~\ref{sec:convergence}.

\section{Edge of stability}
\label{sec:eos}

Section~\ref{subsec:related_works} mentioned that EoS is a large learning rate phenomenon, and we now provide more detailed explanations. Its original description \citep{cohen2021gradient} contained two stages, namely progressive sharpening, and limiting sharpness stabilization around $2/h$. \citet{ahn2022learning} later observed a third stage before progressive sharpening (which will be referred to as pre-EoS) where the sharpness will first decrease before increasing, and more empirical investigations of sharpness reduction in early training were then conducted by \citet{kalra2023phase}. A description of the full process is the following:
\begin{itemize}
    \item 
\textbf{Pre-EoS (de-sharpening).} This stage characterizes the situation where at the very beginning of the iterations, the sharpness decreases sharply before the occurrence of the well-known EoS (see Figure~\ref{fig:intuition_phenomenoa_appears}(c) and \citet{ahn2022learning}). It does not necessarily come with all the EoS phenomenon (see Figure~\ref{fig:intuition_phenomenoa_appears}(c) and (d)) and only appears when the initial sharpness is very large. Nevertheless, it helps demonstrate the behavior of GD under large learning rates.

\item 
\textbf{Progressive sharpening.} This stage is governed by increasing sharpness. Due to pre-EoS, GD is guaranteed to start in a relatively flat region (small sharpness) when this stage begins, even if it was initialized in a sharp region. 
The minimizer that GD will eventually converge to has larger sharpness than the majority of sharpness values along GD trajectory in this stage, and as GD progresses, the sharpness value `crawls' up. Such behaviors stem from the good regularity of the objective function and do not necessarily appear in functions with bad regularity.

\item
\textbf{Limiting sharpness near $2/h$.} Stability theory of the GD dynamics (see Appendix~\ref{app:stability_analysis}) guarantees that the limiting sharpness has to be not exceeding $2/h$, but not necessarily close to $2/h$. It is the good regularity of objective function that will drive the final sharpness towards $2/h$.
\end{itemize}

In this section, we establish a rigorous theoretical framework that elucidates the above three stages, and quantify our claim that EoS is more likely to occur when the objective function exhibits good regularity.
In the context of the progressive sharpening in EoS, we will primarily focus on the concept of `sharpening' rather than `progressive' due to the subjective nature of this word: this stage can actually exhibit diverse sharpening speeds (see Section \ref{sec:experiments}).

In order to better understand the insight of EoS, we commence by exploring the two-phase convergence of GD under large learning rates, as initially proved in \cite{wang2022large} for matrix factorization problem: 
\begin{itemize}
    \item {\it Phase 1}, GD tries to escape from the attraction of sharp minima and enters a flat region,  where the local Lipschitz constant of the gradient is upper bounded by $\approx 2/h$.
    
    \item {\it Phase 2}, GD converges inside the flat region.
\end{itemize}

  This two-phase 
  convergence pattern is also provably true for our family of functions under large learning rates. 
  Detailed convergence analyses are deferred to  Section~\ref{sec:convergence}. 

\vskip2pt
\noindent $\bigstar$ \textbf{Intuition for de-sharpening and progressive sharpening.} 
To illustrate the intuition, let $s=xy$ and consider $F(s)$ in~\eqref{eqn:functions} (note $x,y$ follow GD trajectories but $s$ does not). Consider the sharpness of $F(s)$ (i.e., $F''(s)$) in two example cases, one with good regularity and the other with bad regularity.  As shown in Figure~\ref{fig:sharpness_goodregu_badregu}, the sharpness of good regularity function concentrates near the minimum ($s=xy=1$) and vanishes when far away from it (Figure~\ref{subfig:good_regu}), while the situation with bad regularity function is the opposite (Figure~\ref{subfig:bad_regu}). Therefore, in \emph{Phase 1} as GD escapes to a flatter region, GD travels away from the minimum for the good regularity function, while for the bad regularity function, GD moves closer to it. This corresponds to the \textbf{de-sharpening} stage that can occur in both cases. In \emph{Phase 2}, GD starts to converge but results in different behaviors. In the good regularity case (Figure~\ref{subfig:good_regu}), GD faces a longer path, requiring its sharpness to `climb up the hill' to converge. This process allows sufficient time for sharpness to increase (\textbf{progressive sharpening}) and eventually reach its peak, namely \textbf{limiting sharpness near $2/h$}. In contrast, in the bad regularity case, GD is already closer to the minimum at the end of \emph{Phase 1} when compared to its initial state. 
Consequently, the sharpness will not undergo significant changes before GD converges. As a result, the typical phenomena observed in good regularity scenarios do not occur.

Note, however, that the actual landscape in which GD navigates is $f(x,y)=F(xy)$ instead of $F(s)$, and the behavior of GD is thus much more complicated. For example, 
since $s=xy$,
each point in Figure~\ref{fig:sharpness_goodregu_badregu} actually represents a continuous set of points. The function $f$ has a rescaling symmetry of $(x,y)$,
known as homogenity \citep{wang2022large}, meaning that $(x,y)\mapsto (cx,y/c)$ will not change $f$'s value for any $c\neq 0$. This transformation can drastically change the eigenvalues of Hessian $\nabla^2f$, especially when $|c|$ is large. Therefore, the fact that $f(x,y)=F(xy)$ creates a combination of flat and sharp regions. 
In particular, the global minima of $F(xy)$ form an unbounded set, as opposed to one single point in $F(s)$. Even if we just consider these global minima alone, their sharpnesses, which are given by $x^2+y^2$, already assume any values in $[2,\infty)$, creating sharp and flat regions per se. The complexity of the actual landscape is the groundwork of diverse phenomena like EoS and makes the following results for understanding them rather nontrivial.

\begin{figure}[ht]
    \centering    \subfigure[good regularity]{{\includegraphics[width=0.38\textwidth]{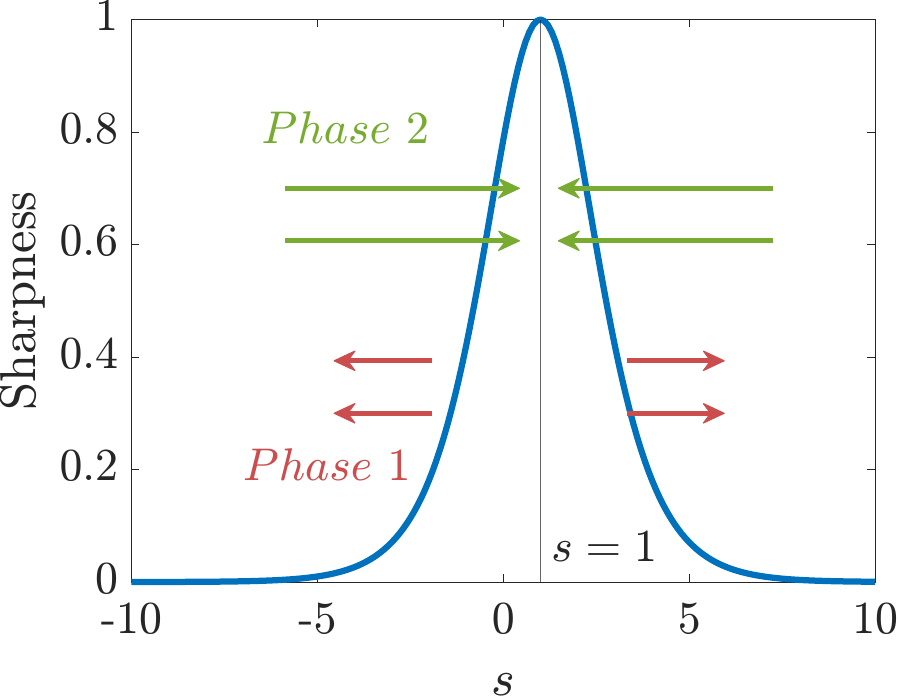} }\label{subfig:good_regu}}%
    \qquad   \subfigure[bad regularity]{{\includegraphics[width=0.38\textwidth]{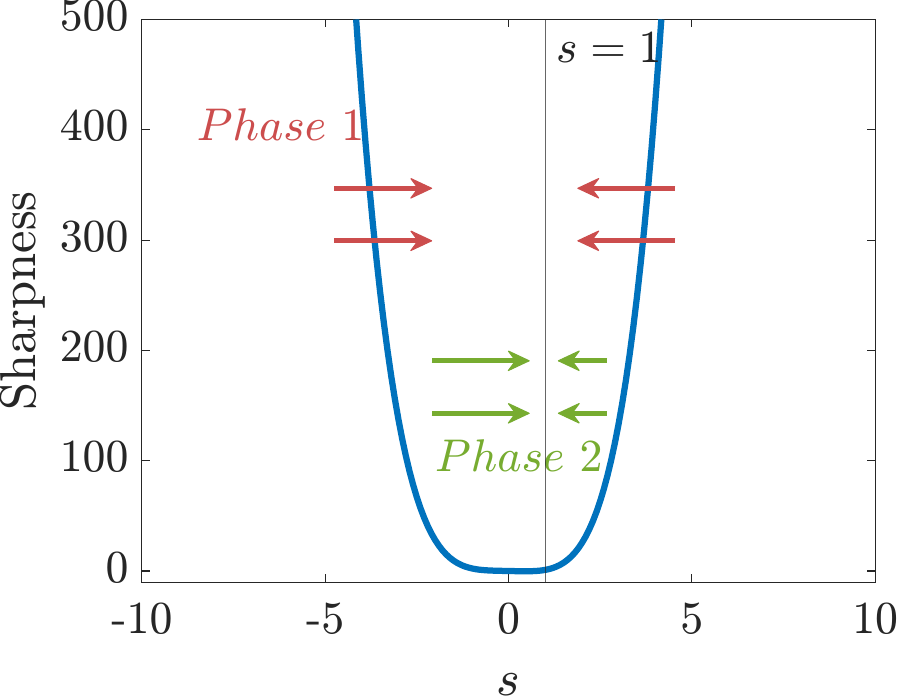} }\label{subfig:bad_regu}}%
    \caption{
    The sharpness of different $F(s)$ with good (small) and bad (large) regularities. The blue curves are the sharpness, i.e., $F''(s)$; the red arrows denote the direction of GD in {\it Phase 1}; the green arrows denote the direction of GD in {\it Phase 2}.}
    \label{fig:sharpness_goodregu_badregu}%
\end{figure}

For functions with good regularity (i.e., small degree), the occurrence of EoS is linked to a large range of initial conditions as is shown in the following theorem.

\begin{theorem}[EoS]
\label{thm:eos_good_regularity}
   Consider $0<a\le 1$ in~\eqref{eqn:functions}. Assume the initial condition satisfies $(x_0,y_0)\in \{(x,y):1<xy<M_1(a), x^2+y^2\gtrsim 4c_1^{-4/3}\}\backslash \mathcal{B}_a\text{, for some }c_1=c_1(a)>0\text{ and }M_1(a)>3,$
   where $\mathcal{B}_a$ is a Lebesgue measure-0 set. Let the learning rate be $h=\frac{C}{x_0^2+y_0^2}\text{ for }2<M_2(a)\le C\le 4,\text{ where }M_2(a)\lesssim2.6.$ See definitions of $M_1,M_2,c_1$ in Theorem~\ref{thm:good_regularity_convergence}.
   The sharpness at $k$th iteration $S_k$ 
   satisfies the following properties: 
    \begin{itemize}
        \item Entering flat region (end of pre-EoS, and preparation for progressive sharpening): There exists $N\in\mathbb{N}$, s.t.,
        \begin{align*}
        S_N\lesssim \frac{1}{4}(6-C)S_\infty,
        \end{align*} 
        and note $\frac{1}{4}(6-C)S_\infty < S_\infty$ for all $C$ defined above.
        For example, if $C=4$, then $
         S_N\lesssim \frac{1}{2}S_\infty.$
    \item Limiting sharpness: GD converges to a global minimum  $(x_\infty,y_\infty)$, and its sharpness $S_\infty$ satisfies 
    \begin{align*}
        \frac{2}{h}-\tilde{\mathcal{O}}(h)\le S_\infty=x_\infty^2+y_\infty^2\le\frac{2}{h}.
\end{align*}
        
    \end{itemize}

\end{theorem}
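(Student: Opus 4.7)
My plan is to exploit a scalar reduction of the $(x_k, y_k)$ GD iterates. Writing $s_k := x_k y_k$ and $r_k := x_k^2 + y_k^2$, a direct calculation from $\nabla f(x,y) = F'(xy)(y,x)$ gives the coupled recursions $s_{k+1} = s_k(1 + h^2 F'(s_k)^2) - h F'(s_k) r_k$ and $r_{k+1} = r_k(1 + h^2 F'(s_k)^2) - 4 h F'(s_k) s_k$. Since the sharpness at any minimizer on the curve $xy = 1$ equals $x^2 + y^2$ and the coefficient $C_a$ in~\eqref{eqn:functions} is chosen so that $F''(1) = 1$, the limiting sharpness $S_\infty$ equals $r_\infty$. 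The global-convergence result of Theorem~\ref{thm:good_regularity_convergence} will be invoked directly: it guarantees $(x_k, y_k) \to (x_\infty, y_\infty)$ with $x_\infty y_\infty = 1$, so the remaining task is purely to quantify $r_k$ along the trajectory and at the limit.

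For the flat-region-entry claim, I would observe that for $s_0 \in (1, M_1(a))$ the good regularity $\mathrm{dor}(F) = a \le 1$ implies $F'$ is globally bounded, with $F'(s)$ sharing the sign of $s - 1$. Rewriting the $r$-recursion as $r_{k+1} - r_k = h F'(s_k)\,[h F'(s_k) r_k - 4 s_k]$ exposes the escape mechanism: the cutoff $M_1(a)$ on $s_0$ is exactly what forces the second factor to be negative at $k=0$, so $r_1 < r_0$. A short induction, together with the large-learning-rate condition $h r_0 = C > 2$, shows this decrease continues for a controlled number of steps; the precise count $N$ and the constant $(6-C)/4$ come from tracking how much $r_k$ drops per step relative to its target $r_\infty \le 2/h = 2 r_0/C$. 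The auxiliary hypothesis $x_0^2 + y_0^2 \gtrsim 4 c_1^{-4/3}$ guarantees the higher-order correction $h^2 F'(s_k)^2 r_k$ in the recursion is dominated, being $\tilde{\mathcal{O}}(h)$ relative to the linear drift.

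The limiting-sharpness claim decomposes into the classical upper bound $S_\infty \le 2/h$ and the new quantitative lower bound $S_\infty \ge 2/h - \tilde{\mathcal{O}}(h)$. Linearizing the $(s, r)$-map around the fixed point $(1, r_\infty)$ yields eigenvalue $1 - h r_\infty$ in the $s$-direction and a neutral $r$-direction (eigenvalue $1$), so any $r_\infty > 2/h$ would make the linearization unstable, contradicting convergence (this is the application of Appendix~\ref{app:stability_analysis}). For the lower bound, note that $r_k$ drifts toward $r_\infty$ only through the $s$-coupling at rate proportional to $-4h\,\delta s_k$, where $\delta s_k := s_k - 1$. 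If $h r_\infty$ were $2 - \epsilon$ with $\epsilon \gg h^2$, then $\delta s_k$ would decay at rate $1 - \epsilon$ and the total $r$-drift $\sum_k 4 h |\delta s_k|$ would be only $\mathcal{O}(h/\epsilon)$, too small to account for the macroscopic gap $r_N - r_\infty$ produced by Phase~1. Balancing the two forces yields $\epsilon = \mathcal{O}(h^2)$, i.e., $S_\infty \ge 2/h - \mathcal{O}(h)$, up to logarithmic corrections from higher-order terms.

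The main obstacle I anticipate is the tight two-way coupling between $s_k$ and $r_k$ in the slow Phase~2: unlike the scalar analyses of \citet{wang2022large} or \citet{zhu2023understanding}, the driving term $F'(s_k)$ depends on $s_k$, whose evolution in turn depends on $r_k$, so the two recursions cannot be decoupled. Controlling this coupling sharply enough to yield an $\tilde{\mathcal{O}}(h)$ (rather than $\mathcal{O}(1)$) gap to $2/h$ at convergence will likely require constructing a Lyapunov-type functional on $(s_k, r_k)$ that absorbs the curvature of $F$ near $s = 1$. A secondary difficulty is ruling out pathological orbits (e.g., those whose $s_k$ lands exactly on $1$ in finitely many steps, prematurely killing the drift, or those that fall into period-$2$ orbits); the excluded measure-zero set $\mathcal{B}_a$ and the upper bounds $M_1(a), M_2(a)$ are precisely what make such orbits non-generic, and handling them rigorously will require careful bookkeeping.
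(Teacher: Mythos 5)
Your setup (tracking $s_k=x_ky_k$ and the norm $x_k^2+y_k^2$, invoking Theorem~\ref{thm:good_regularity_convergence} for convergence, and getting the upper bound $S_\infty\le 2/h$ from the stability analysis) matches the paper, and your lower-bound heuristic for $S_\infty$ is in the right spirit. But your argument for the flat-region-entry claim has a fatal gap: you identify the sharpness $S_N$ with the norm $x_N^2+y_N^2$ and propose to derive the factor $\tfrac14(6-C)$ from the \emph{decrease of the norm}. The identity $S=x^2+y^2$ holds only \emph{on} the minimum manifold $xy=1$; at a general iterate the sharpness is (to leading order) $F''(x_Ny_N)\,(x_N^2+y_N^2)$, and along the entire trajectory the norm never drops below roughly $2/h-\mathcal{O}(h)\approx S_\infty$ — indeed that lower bound \emph{is} the limiting-sharpness statement you are trying to prove. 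So no amount of norm tracking can produce an iterate with $S_N\lesssim\tfrac14(6-C)S_\infty<S_\infty$. The paper's mechanism is the opposite of yours: in the early phase the norm barely moves (it decreases by only $\tilde{\mathcal{O}}(1)$ before the contraction ratio first satisfies $|r_k|<1$), and the sharpness drop comes from $\delta_k=|x_ky_k-1|$ \emph{growing} until $q(\delta_k)=F'(x_ky_k)/(x_ky_k-1)\lesssim 2/C$, which via $q(\delta)\ge 1-c_1\delta^2$ forces $\delta_N\gtrsim\sqrt{(C-2)/(Cc_1)}$; at such a point the Hessian factor $G_1(\delta_N)\le q_1(\delta_N,a)q(\delta_N)$ is small precisely because the good-regularity $F$ has small curvature away from its minimum, yielding $S_N\lesssim G_1(\delta_N)\,u_N^\top u_N\lesssim\tfrac14(6-C)S_\infty$. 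Relatedly, $M_1(a)=q^{-1}(q(1)/2)$ is there to make the initial contraction ratio negative (so GD oscillates across $xy=1$ from the start), not to make the norm decrease at step one.

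On the limiting sharpness, your "balance the total drift against the gap" argument captures the skeleton of the paper's Phase-2 analysis but glosses over two points that the paper spends most of its effort on. First, the per-step change of the norm is $-4h\,[\ell_kx_ky_k+\ell_{k+1}x_{k+1}y_{k+1}]+\cdots$, and because the iterates alternate sides of $xy=1$ these consecutive terms nearly cancel: the effective two-step drift is $\mathcal{O}\bigl(h(1-|r_k|)\delta_k\bigr)$, not $\mathcal{O}(h|\delta_k|)$. Without this cancellation your sum $\sum_k 4h|\delta s_k|=\mathcal{O}(h/\epsilon)$ with $\epsilon=\mathcal{O}(h^2)$ would be $\mathcal{O}(1/h)$, i.e., the norm would overshoot far below $2/h$ rather than stop $\tilde{\mathcal{O}}(h)$ short of it. Second, the decay rate of $\delta s_k$ is not uniformly $1-\epsilon$; the paper must separately show (via the map $g(\delta)=\delta(C_kq(\delta)-1)$ and an iterated complexity estimate) that $|r_k|$ climbs to $1-\mathcal{O}(h^2)$ within $\mathcal{O}(h^{-1})$ steps while the norm is still in the window $[2/h-\mathcal{O}(h),\,2/h]$, before the final geometric-sum bound applies. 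You correctly flag the coupling as the main obstacle, but as written the proposal does not contain the ingredients (sign cancellation and the staged rate analysis) needed to close it, and Part I needs a different mechanism altogether.
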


    \begin{remark}
    \label{rmk:aofdlsihfgopuihqoiu21}
    The above theorem embodies a detailed description of the whole EoS process. More precisely, 
there is \textbf{progressive sharpening}: since there is at least one point along the trajectory with small sharpness (smaller than limiting sharpness), the sharpness will eventually increase to the limiting sharpness. Note despite the growth of sharpness, GD still converges inside the flat region (i.e., in \emph{Phase 2}) since the limiting sharpness is upper bounded by $2/h$. In the end, the \textbf{sharpness stabilizes near $2/h$} within a distance of $\tilde{\mathcal{O}}(h)$.
The \textbf{pre-EoS (de-sharpening)} can also occur if the initial condition $(x_0,y_0)$ is close to the minima. In this case, the initial sharpness $S_0$ is approximately $x_0^2+y_0^2$ and we have
\begin{align*}
    S_N\lesssim \frac{1}{4}(6-C)S_\infty\approx \frac{1}{4}(6-C)\frac{2}{h}= \left(\frac{3}{C}-\frac{1}{2}\right)(x_0^2+y_0^2)\approx \left(\frac{3}{C}-\frac{1}{2}\right) S_0 < S_0,\text{ for }2<M_2\le C\le 4
\end{align*}
which means the sharpness will first decrease before GD enters the progressive sharpening stage. This also implies that the limiting sharpness is smaller than the initial sharpness, which corresponds to the balancing phenomenon (see Section~\ref{sec:balancing}).
    \end{remark}
    
\begin{figure}[ht]
    \centering
    \includegraphics[width=\textwidth]{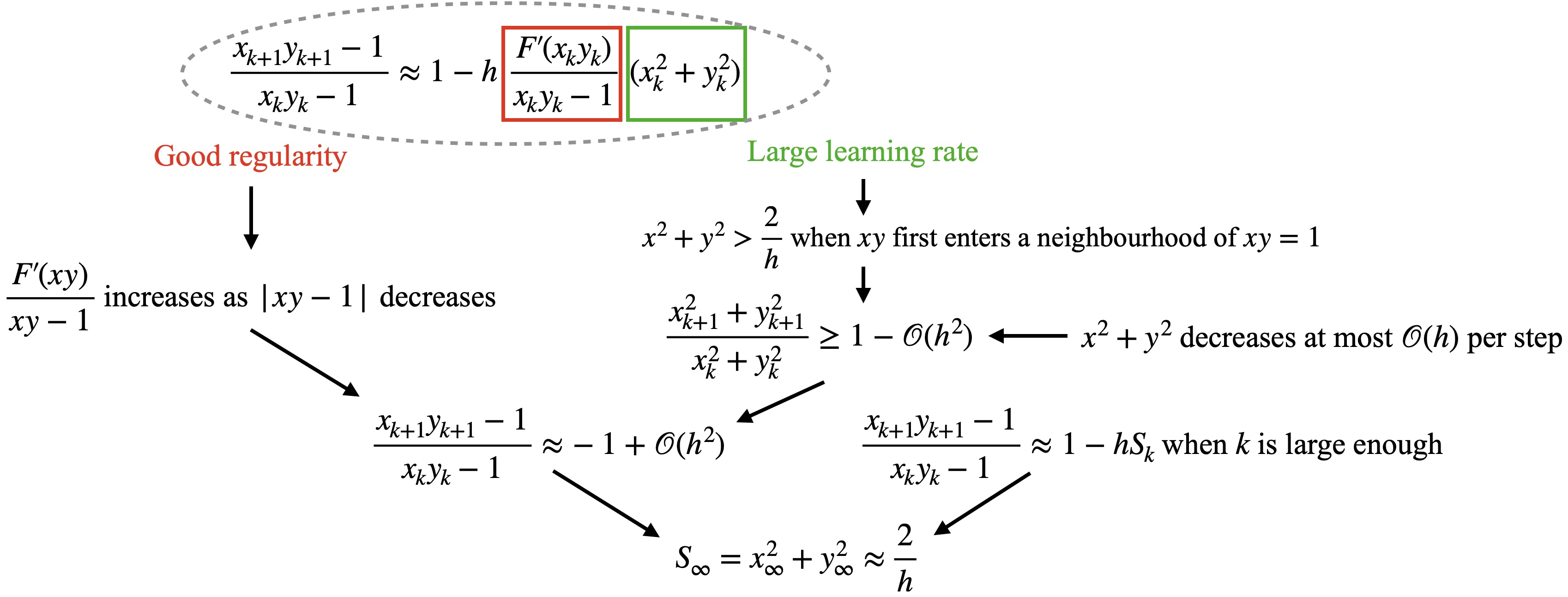}
    \caption{Proof idea of $S_\infty\approx\frac{2}{h}$ for good regularity functions. The formula inside the dashed ellipse is what we mainly consider. The term inside the red box is primarily affected by good regularity (also in red), while the one inside the green box is influenced by large learning rate (in green).}
    \label{fig:proof_sketch_good_regu}
\end{figure}
Figure~\ref{fig:proof_sketch_good_regu} shows a diagram of the proof idea for the limiting sharpness $\approx 2/h$. The complete proof is in Appendix~\ref{app:proof_good_regularity}. Roughly speaking, good regularity and large learning rate control the two terms $\frac{F'(xy)}{xy-1}$ and $x^2+y^2$ respectively, which result in a slow and oscillating convergence pattern of $xy-1$. This convergence rate can be approximated by a function of sharpness $(1-hS_k)$ at each step and thus eventually leads to the $\approx 2/h$ limiting sharpness.

Next, we will show that when the regularity is bad (large degree), 
there exists a large set of initial conditions that do not lead to EoS even though we use a large learning rate. Moreover, such a set expands as regularity gets worse (i.e., the degree of regularity becomes larger). 
\begin{theorem}[no EoS; one-sided stability]
\label{thm:non_eos_bad_regularity}
Consider ${b}=2n+1\text{ for }n\in\mathbb{Z}$ in~\eqref{eqn:functions}. 
Assume the initial condition satisfies $(x_0,y_0)\in \{(x,y):xy>2^{\frac{1}{b-1}}, x^2+y^2\ge 4 \}\backslash \mathcal{B}_b$, where $\mathcal{B}_b$ is a Lebesgue measure-0 set. Let the learning rate be $\frac{2}{(x_0^2+y_0^2+4)(x_0 y_0)^{2b-2}} \le h\le \frac{M_3(b,x_0,y_0)}{(x_0^2+y_0^2+4)(x_0 y_0)^{2b-2}}$, where $3<M_3(b,x_0,y_0)\le 4$, and the precise definition of $M_3$ is given in  Theorem~\ref{thm:bad_regularity_convergence}.
Then GD converges to a global minimum  $(x_\infty,y_\infty)$, and its sharpness $S_\infty$ satisfies
\begin{align*}
    S_\infty=x_\infty^2+y_\infty^2\le \frac{1}{h}.
\end{align*}
\end{theorem}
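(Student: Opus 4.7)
The plan is to first invoke the global convergence result of Theorem~\ref{thm:bad_regularity_convergence} to conclude that GD converges to some global minimum $(x_\infty, y_\infty)$ with $x_\infty y_\infty = 1$, so by Proposition~\ref{prop:functions} the limiting sharpness equals $S_\infty = x_\infty^2 + y_\infty^2$. The substantive task is then to bound this quantity by $1/h$. The overall approach is to show that $x_k^2 + y_k^2$ is essentially non-increasing along the trajectory (i.e., $x_\infty^2 + y_\infty^2 \leq x_0^2 + y_0^2$), and then to compare against $1/h$ using the learning-rate specification.

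Writing the GD update for $f(x,y) = C_b(1-(xy)^b)^2$ in the form $x_{k+1} = x_k + \rho_k y_k$ and $y_{k+1} = y_k + \rho_k x_k$ with $\rho_k = \frac{h}{b}(1-(x_k y_k)^b)(x_k y_k)^{b-1}$, a direct computation gives the key recurrence
\begin{equation*}
x_{k+1}^2 + y_{k+1}^2 = (1 + \rho_k^2)(x_k^2 + y_k^2) + 4\rho_k\, x_k y_k .
\end{equation*}
Since $x_0 y_0 > 2^{1/(b-1)} > 1$, we have $\rho_0 < 0$, so the linear term $4\rho_k x_k y_k$ is negative initially. The plan is to show inductively that $x_k y_k$ stays positive and that the negative linear contribution dominates the positive quadratic correction $\rho_k^2(x_k^2 + y_k^2)$, so that $x_k^2 + y_k^2$ never exceeds $x_0^2 + y_0^2$. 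A crucial structural feature that I would exploit is the factor $(x_k y_k)^{b-1}$ appearing in $\rho_k$: once $x_k y_k$ approaches $1$, $|\rho_k|$ shrinks rapidly due to the $b$-th-power nonlinearity, which prevents any climbing up of $x_k^2 + y_k^2$ in Phase 2 (this is precisely the contrast with good regularity, where a weaker scaling allows sharpness to crawl up to $2/h$). Granted the bound $x_\infty^2 + y_\infty^2 \leq x_0^2 + y_0^2$, the conclusion is immediate: since $h \leq M_3/((x_0^2+y_0^2+4)(x_0 y_0)^{2b-2})$ with $M_3 \leq 4$ and $(x_0 y_0)^{2b-2} > 4$ under our initial-condition assumption, we obtain
\begin{equation*}
\frac{1}{h} \;\geq\; \frac{(x_0^2 + y_0^2 + 4)(x_0 y_0)^{2b-2}}{M_3} \;>\; x_0^2 + y_0^2 + 4 \;>\; x_\infty^2 + y_\infty^2 \;=\; S_\infty .
\end{equation*}

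The main obstacle is establishing the near-monotonicity of $x_k^2 + y_k^2$ under the large-learning-rate dynamics. Because the step is aggressive, $x_k y_k$ may overshoot below $1$, flipping the sign of $\rho_k$ and turning the linear term positive at some steps; the delicate part is to show that such excursions never undo the cumulative shrinkage accrued when $x_k y_k > 1$. I would handle this by jointly tracking the pair $(x_k y_k, x_k^2 + y_k^2)$ through the two phases identified earlier, leveraging the restrictive scaling $h = O(1/(x_0 y_0)^{2b-2})$ to keep $|\rho_k|$ small enough that any rebound in $x_k^2 + y_k^2$ is at most $O(\rho_k^2)$ per step and summably small. Once GD enters the flat region of Phase 2, a standard contraction argument in the effective one-dimensional coordinate $s = xy$ closes the iteration without any further growth of $x^2 + y^2$.
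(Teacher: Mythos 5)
Your proposal is essentially correct but takes a genuinely different route from the paper. The paper's own proof is a two-line corollary of its convergence analysis: Theorem~\ref{thm:bad_regularity_convergence} establishes that the contraction factor $r_k$ of $x_ky_k-1$ satisfies $r_k>0$ for all $k\ge 1$ (the iterates approach the minimum from one side only), and since $q_k\to 1$ and $h\ell_k x_ky_k\to 0$ as $x_ky_k\to 1$, taking the limit gives $\lim_k r_k = 1-h\,u_\infty^\top u_\infty\ge 0$, i.e.\ $S_\infty\le 1/h$ directly. This reads the bound off the sign of the one-sided dynamics and is what actually motivates the name ``one-sided stability.'' You instead bound $S_\infty$ by (approximately) the initial value $x_0^2+y_0^2$ and then compare with $1/h$ using the learning-rate scaling together with the initial-condition constraint $(x_0y_0)^{2b-2}>4\ge M_3$; your final chain of inequalities is arithmetically correct. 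What your route buys is an explicit link between limiting and initial sharpness (which is really the content of the paper's separate non-balancing theorem); what it costs is that the bound becomes contingent on the specific threshold $x_0y_0>2^{1/(b-1)}$, whereas the paper's argument extracts $1/h$ intrinsically from the sign of $r_k$.

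One correction: your key lemma as stated, $x_k^2+y_k^2\le x_0^2+y_0^2$ for all $k$, is too strong. In the case $r_0<0$ the first step overshoots to $x_1y_1<1$, and for $0<x_ky_k<1$ one has $\ell_k<0$, so $u_{k+1}^\top u_{k+1}=(1+h^2\ell_k^2)u_k^\top u_k-4h\ell_k x_ky_k>u_k^\top u_k$: the quantity \emph{increases} monotonically throughout the undershoot phase, and the cumulative increase need not be offset by the single decrease at step $0$. What is true (and is exactly what the paper's convergence proof establishes, via $\sum_k 2h|\ell_k|x_ky_k<1$) is that $u_n^\top u_n\le u_0^\top u_0+C_1$ with $C_1\le 7/2<4$. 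Since your final chain only uses $x_\infty^2+y_\infty^2<x_0^2+y_0^2+4$, this weaker bound suffices and your argument closes, but you should state the lemma in this form rather than as exact non-increase.
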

This theorem shows that for functions with bad regularities, the limiting sharpness is bounded by $1/h$, which is well below the stability limit of $2/h$. Therefore, not only is there no EoS, but also a different phenomenon occurs, which we refer to as \textbf{one-sided stability}. Note the lower bound of  $x_0y_0$, i.e., $2^{\frac{1}{b-1}}$, decreases as $b$ increases. This shows the inflation of the initial condition set that leads to the non-EoS phenomenon as the degree of regularity increases. See more explanation in `Example implications of the general theory' at the end of this section.

Here is the intuition of why bad regularity gives 
$1/h$ which is very different from the $2/h$ stability limit. We denote $\tilde{L}$ to be the Lipschitz constant of the gradient in the bounded region of GD trajectory in {\it Phase 2}, the final converging phase, and $L$ to be the Lipschitz constant of the gradient in the bounded region of whole GD trajectory. The key idea is that the sharpness of bad regularity functions varies far more than that of good regularity functions; this leads to 
$h\approx 2/\tilde{L}$ for good regularity but $h<1/\tilde{L}$ for bad regularity. 
Before delving into further details, let us first clarify the difference between $h<1/\tilde{L}$ and $1/\tilde{L}<h<2/\tilde{L}$. As is shown in Figure~\ref{fig:different_GD_1L_2L}, when $h<1/\tilde{L}$, GD evolves along one side of the minimum point; when $1/\tilde{L}<h<2/\tilde{L}$, GD jumps between the two sides.
\begin{figure}[ht]
    \centering
    \includegraphics[width=0.6\textwidth]{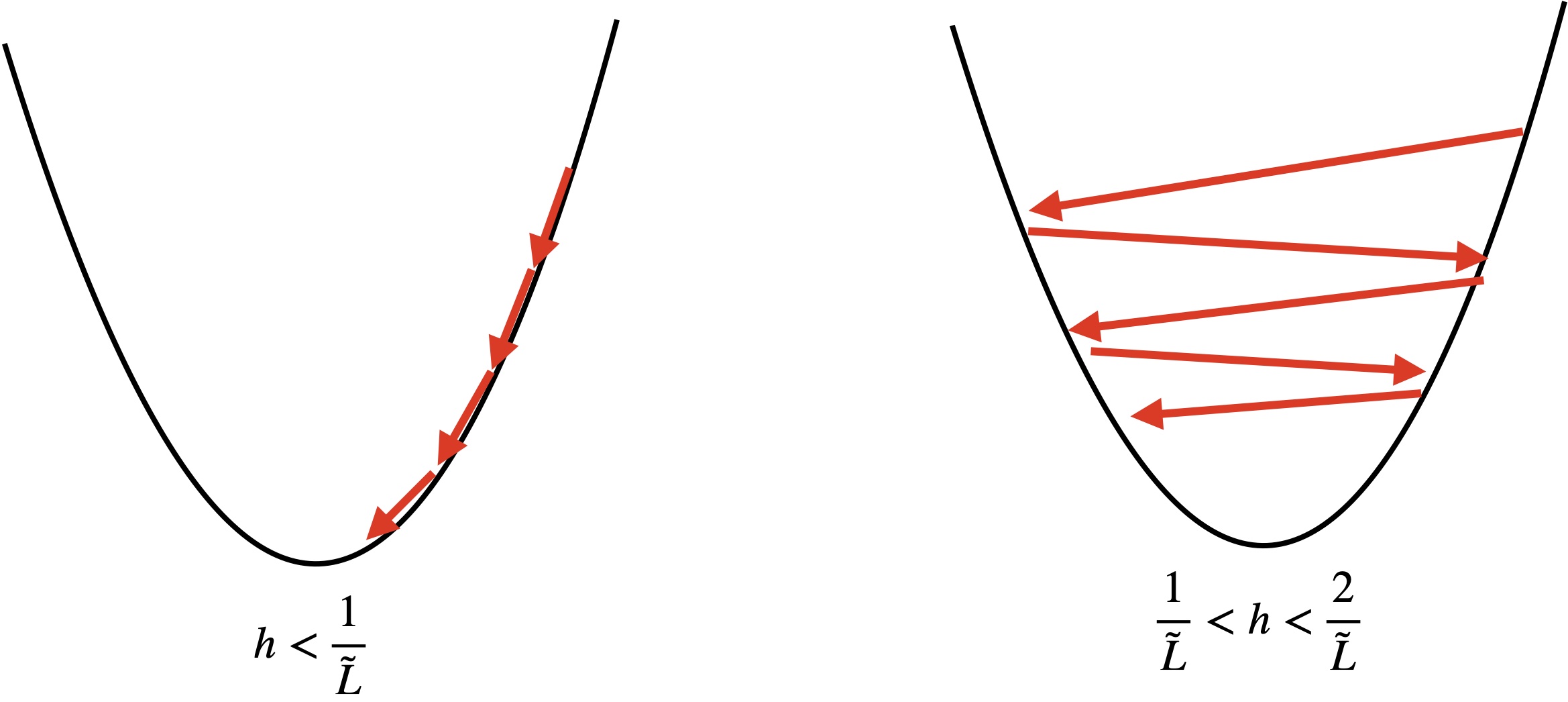}
    \caption{GD trajectory under different learning rate $h$. The black curve represents the objective function; the red arrows represent the evolution of GD iterations.}
    \label{fig:different_GD_1L_2L}
\end{figure}
Suppose we start with a large learning rate $h$ in a sharp region. The landscape along the GD trajectory will quickly become flat enough, i.e., the sharpness in {\it Phase 2} 
is small enough compared to those of the early iterations so that roughly $\tilde{L}<L/C$ for $C$ defined in Theorem~\ref{thm:non_eos_bad_regularity}. Then, the large learning rate $h$, which is $>2/L$, will eventually be $<1/\tilde{L}$, and result in GD decreasing from only one side of the continuous set of minima thereafter, which is where the name `one-sided stability' comes from. This $h<1/\tilde{L}$ gives rise to the limiting sharpness bound $1/h$.

\begin{figure}[ht]
    \centering
    \includegraphics[width=0.8\textwidth]{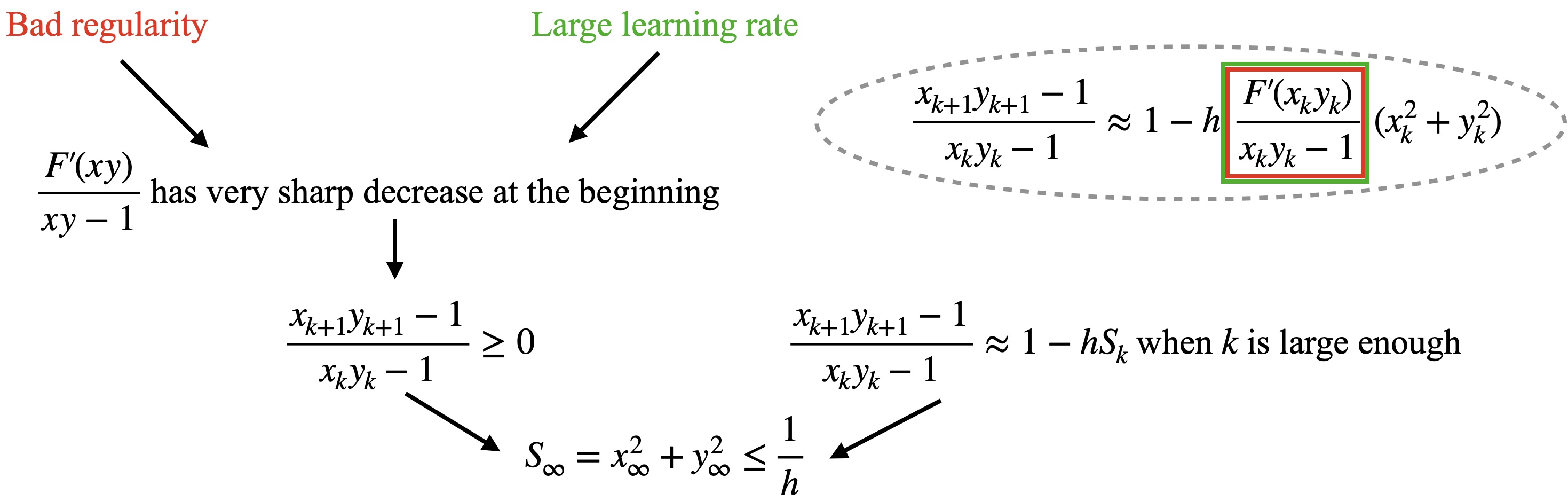}
    \caption{Proof idea of $S_\infty\le \frac{1}{h}$ for bad regularity functions. The formula inside the dashed ellipse is identical to that in Figure~\ref{fig:proof_sketch_good_regu}. Different from good regularity cases (Fig.\ref{fig:proof_sketch_good_regu}), here bad regularity (red color) and large learning rate (green color) mainly affect the same term in the dash-circled expression.
    }
    \label{fig:proof_sketch_bad_regu}
\end{figure}
More precisely, Figure~\ref{fig:proof_sketch_bad_regu} sketches the proof idea of limiting sharpness $\le \frac{1}{h}$. A complete proof is in Appendix~\ref{app:proof_bad_regularity}. For bad regularity functions, the sharpness varies a lot even if the function is just perturbed a little bit (see Figure~\ref{fig:sharpness_goodregu_badregu}). Together with large learning rate, the term $\frac{F'(xy)}{xy-1}$, which can be seen as an approximation of $F''(xy)$ near $xy=1$, exhibits a sharp decrease at the beginning of the GD iterations; after that, $\frac{F'(xy)}{xy-1}$ will be small enough throughout the rest of the trajectory such that $xy-1$ has monotone decrease, which leads to the $\frac{1}{h}$ bound of limiting sharpness.

\begin{figure}[ht]
    \centering
    \includegraphics[width=0.9\textwidth]{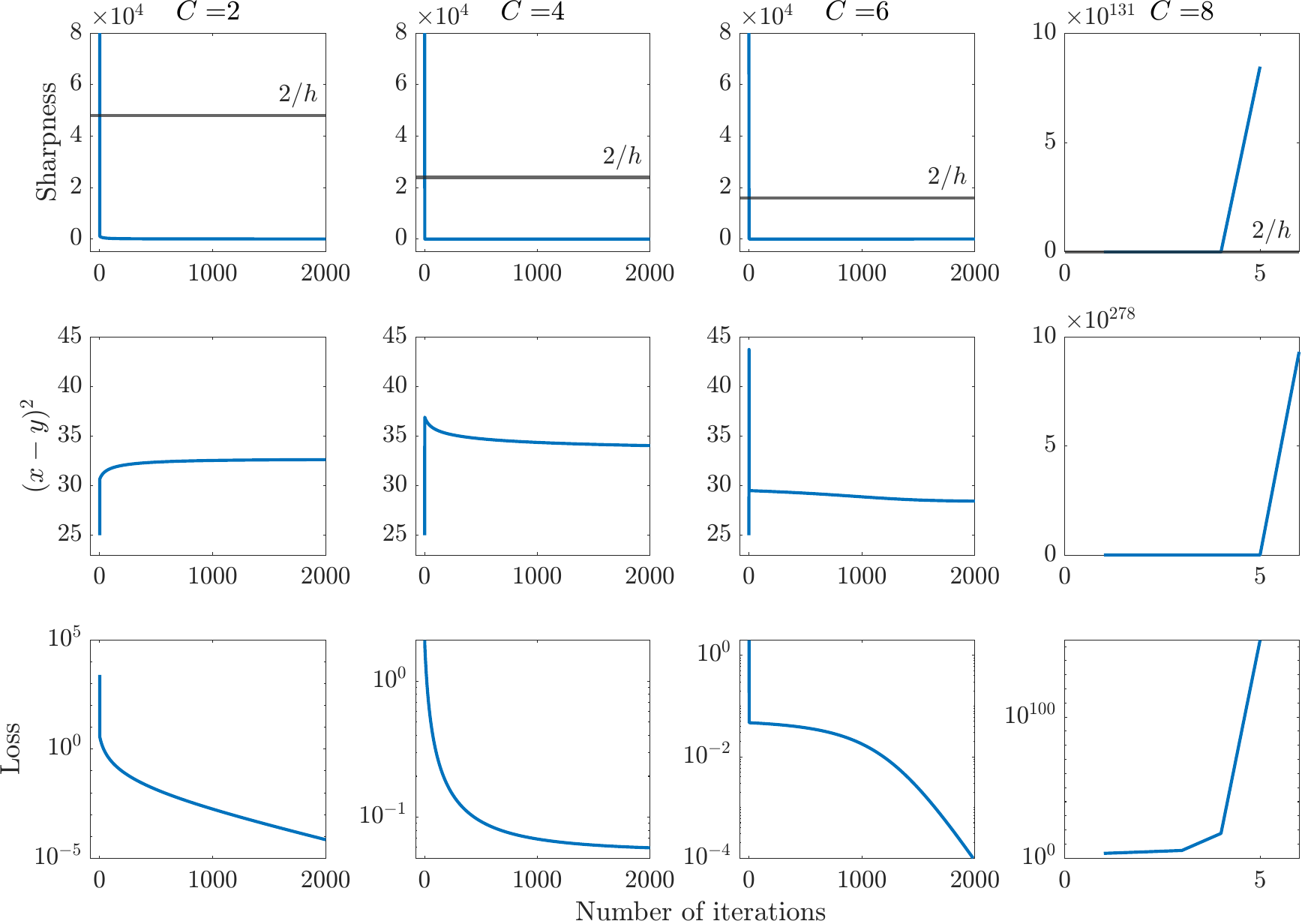}
    \caption{No EoS or balancing for bad regularity function~\eqref{eqn:functions} with $b=3$ under different learning rates. All the figures share the same initial condition $x_0=6,y_0=1$; the learning rate is chosen to be $h=\frac{C}{(x_0^2+y_0^2)(x_0y_0)^{2b-2}}$ with $b=3$, where $C=2,4,6,8$, until divergence.}
    \label{fig:bad_regularity_no_eos_not_lr_dependent}
\end{figure}

We now demonstrate that our choice of the large learning rate is not `cherry-picking', meaning the disappearance of the EoS phenomenon is not a consequence of specific choices of the learning rate, whether it is larger or smaller. Figure~\ref{fig:bad_regularity_no_eos_not_lr_dependent} exhibits the changes of sharpness (and also $(x_k-y_k)^2$) along GD trajectories under different learning rates chosen equidistantly until divergence. None of the plots shows EoS (and balancing) behavior since the sharpness stays far below $2/h$ at the limit.

Note that Theorem~\ref{thm:non_eos_bad_regularity} does not contain the $b=1$ case due to technical reasons. The convergence pattern of $b=1$ is different from the one described in the above theorem, which can be shown in, for example, the initial condition set of Theorem~\ref{thm:non_eos_bad_regularity}: if $b=1$, the lower bound $2^{\frac{1}{b-1}}$ of $xy$ is $+\infty$. Nevertheless, our main focus is to show the effect of regularity on large learning rate phenomena. The results under $b\ge 3$ already serve this purpose on the bad regularity side. For completeness, we illustrate the $b=1$ case by experiments (see Figure~\ref{fig:b=1_no_eos_experiments}) and will leave the theoretical result for future exploration.

\begin{figure}[ht]
    \centering
    \includegraphics[width=0.7\textwidth]{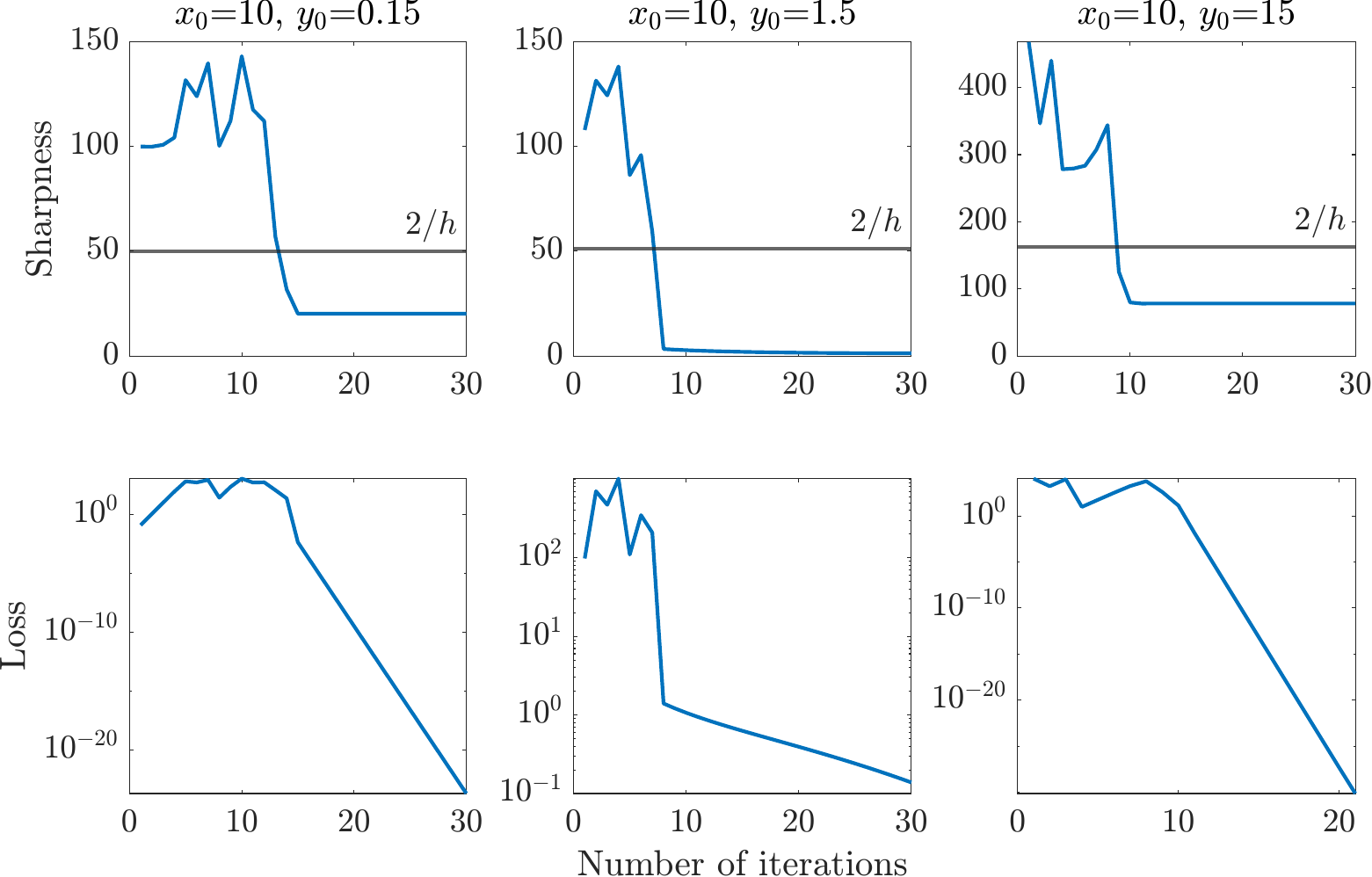}
    \caption{No EoS when $b=1$. For the initial condition, we fix $x_0=10$, and choose $y_0$ to be 0.15, 1.5, and 15; the learning rates are all taken to be $h=4/(x_0^2+y_0^2)$. It turns out that none of the three cases have the EoS phenomenon. The convergence pattern of both the sharpness and the loss is very irregular, especially in the early stage of the convergence.}
    \label{fig:b=1_no_eos_experiments}
\end{figure}

\vskip2pt
\noindent$\bigstar$ \textbf{Example implications of the general theory.}
Consider the family of functions in \eqref{eqn:functions}, i.e., with both good and bad regularity.
\begin{itemize}
    \item \emph{Bad regularity eliminates the EoS phenomenon.} If we fix the initial condition set for all functions, i.e., take the intersection of all initial condition sets \begin{align}
    \label{eqn:initial_condition_regularity_eliminates_phenomena}
    (x_0,y_0)\in \{\sqrt{2}<x_0y_0\le 3, x_0^2+y_0^2\gtrsim \max_a 4c_1(a)^{-4/3}\}\backslash \mathcal{B},\text{ where }\mathcal{B}=\bigcup_a \mathcal{B}_a\cup\bigcup_b\mathcal{B}_b,    
    \end{align}
 \begin{center}
     we have: {$\text{when } \mathrm{dor}\le 1,  \ S_\infty\approx\frac{2}{h};
       \text{when } \mathrm{dor}>2,  \ S_\infty\le \frac{1}{h}.$}
 \end{center} 
This means as the regularity increases, the same initial condition will lose the ability to lead to EoS, even if a large learning rate is used. Moreover, worse regularity (larger degree) results in a larger set of initial conditions that do not lead to EoS. When $\mathrm{dor}>2$, the initial condition set that does not show EoS expands as $\mathrm{dor}$ increases, i.e., 
\begin{align}
\label{eqn:initial_condition_expand_bad_regularity}
(x_0,y_0)\in \{(x,y):xy>4^{\frac{1}{\mathrm{dor}(F)-2}}, x^2+y^2\ge 4 \}\backslash \mathcal{B}\text{, where }\mathcal{B}=\bigcup_b\mathcal{B}_b\text{.}
\end{align}
\item \emph{Larger learning rate leads to smaller limiting sharpness, i.e., flatter minima.} Previous results, as are based on the stability theory, state that \emph{if} GD converges, then the limiting sharpness is upper bounded by $\mathcal{O}(1/h)$; this is a more precise version of the common belief that larger $h$ leads to a flatter minimum. The results above (Theorem~\ref{thm:eos_good_regularity} and~\ref{thm:non_eos_bad_regularity}), however, no longer require the assumption of convergence (i.e., the `\emph{if}' clause), have it proved and yield the same $\mathcal{O}(1/h)$ bound, for both good and bad regularity cases. 
\end{itemize}
 Although we do not characterize all the initial conditions in the space, our results are still quantitative and suggestive of the effect of increasing regularity as reducing large learning rate phenomena.

\subsection{Critical effect of large learning rate on EoS in both the good and bad regularity cases}
In this section, we will first demonstrate that the learning rates used in Theorem~\ref{thm:eos_good_regularity} and~\ref{thm:non_eos_bad_regularity} are indeed in the large learning rate regime, i.e., $h\gtrsim\frac{2}{L}$, where $L$ is the local Lipschitz constant of the gradient. Then, we will theoretically elucidate how large learning rates change the local Lipschitz constant as GD evolves, in both the good and bad regularity cases. 

For most of the initial conditions, our choices of learning rate $h$ depend on $L_0$, the local Lipschitz constant near the initial condition (see Section~\ref{subsec:notation}). To see this, for good regularity functions (Theorem~\ref{thm:eos_good_regularity}), if GD is initialized near the minima, then according to Proposition~\ref{prop:functions}, the sharpness $S_0\approx x_0^2+y_0^2$, which is also $\approx L_0$. Therefore, $h=\frac{C}{x_0^2+y_0^2}\approx \frac{C}{L_0}$.  Such $L_0$ controls the sharpness along the whole trajectory (see Figure~\ref{fig:intuition_phenomenoa_appears}(c) for example), namely, $L_0\approx L$. Consequently, $h\approx \frac{C}{L}$ and additionally $\frac{2}{L}\lesssim h\lesssim \frac{4}{L}$. For bad regularity functions (Theorem~\ref{thm:non_eos_bad_regularity}), by Lemma~\ref{lem:bad_regu_lr_2/L_4/L}, the lower bound of $h\gtrsim\frac{2}{L_0}$ and the upper bound of $h\gtrsim\frac{4}{L_0}$. Similarly, we have $h\gtrsim\frac{2}{L}$ (see for example Figure~\ref{fig:intuition_phenomenoa_disappears}(c)).

Apart from the above situations, there exist exceptions of initial conditions, where their corresponding $h$ does not depend on $L_0$. For good regularity functions (Theorem~\ref{thm:eos_good_regularity}), if the initialization is away from the minima, $L_0$ is very small and in this case, and $h$ depends on $L_\infty$, which is the local Lipschitz constant near the limiting point (see Section~\ref{subsec:notation}). Then $L$ is controlled by $L_\infty$ (see for example Figure~\ref{fig:intuition_phenomenoa_appears}(d)), i.e., $L\approx L_\infty$. By Theorem~\ref{thm:eos_good_regularity}, $L\approx L_\infty\approx S_\infty\approx \frac{2}{h}$ and thus $h\approx \frac{2}{L}$.

From a theoretical perspective, the mechanism of GD under large learning rates can be described in the following: when $h>\frac{2}{L}$, according to stability analysis (see Theorem~\ref{thm:stbility_analysis_2/L_bound} in Appendix~\ref{app:stability_analysis}),
GD cannot converge in the region where the minimum local Lipschitz constant $\approx L$.  
Instead, GD tends to search for a region with smaller local Lipschitz constant $\tilde{L}<L$ (which corresponds to \emph{Phase 1} discussed above), s.t. $\frac{2}{L}<h<\frac{2}{\tilde{L}}$, i.e., $h$ is inside the regular learning rate regime for the new flatter region (which is in \emph{Phase 2}). $h\approx \frac{2}{L}$ is on the boundary of the two regions, and therefore there is no `searching' for the flatter region.

\section{Balancing}
\label{sec:balancing}

The existence of the balancing phenomenon is originally proved in matrix factorization problem with objective function $\frac{1}{2}\|A-XY^\top\|_{\rm F}^2$ \citep{wang2022large}, for which GD optimization can still converge when learning rate $h$ exceeds $2/L$, and its limiting point $(X_\infty,Y_\infty)$ satisfies
\begin{align*}
    \left|\|X_\infty\|_{\rm F}-\|Y_\infty\|_{\rm F}\right|^2< \frac{2}{c_1 h}-c_2,\text{ for some }c_1,c_2>0.
\end{align*}
The larger the learning rate is, the smaller the gap between the magnitudes of $X$ and $Y$ will be, i.e., $X$ and $Y$ will become more balanced when compared to their initials.

One feature of balancing is that it can be used as an explicit characterization of sharpness. To see this, let us first recall that at any minimizer (satisfying $xy=1$), the sharpness of any function in our function class \eqref{eqn:functions} is $x^2+y^2$, by Proposition~\ref{prop:functions}. Then at that minimizer, we have
\begin{align*}
    (|x|-|y|)^2=(x-y)^2=x^2+y^2-2xy=x^2+y^2-2,
\end{align*}
i.e., $(x-y)^2$ is equivalent to the sharpness $x^2+y^2$ up to a constant shift. 
Therefore, if GD starts near a minimum, we can compare the limiting difference $(x_\infty-y_\infty)^2$ with its initial $ (x_0-y_0)^2$ to gain an understanding of whether GD converges to a sharper or a flatter minimum. 

In this section, we generalize the balancing phenomenon scrutinized in \citep{wang2022large} to the larger class of functions given by \eqref{eqn:functions} (when $\mathrm{dor}\le 2$), and prove that bad regularity can eliminate balancing.

 The following theorem shows the balancing phenomenon for functions with good regularity.
\begin{theorem}[Balancing]
\label{thm:balancing_good_regu}

  Consider $0<a\le 1$ in~\eqref{eqn:functions}. Assume the initial condition satisfies $(x_0,y_0)\in \{(x,y):1<xy<M_1(a), x^2+y^2\gtrsim 4c_1^{-4/3}\}\backslash \mathcal{B}_a\text{, for some }c_1=c_1(a)>0\text{ and }M_1(a)>3,$
   where $\mathcal{B}_a$ is a Lebesgue measure-0 set. Let the learning rate be $h=\frac{C}{x_0^2+y_0^2}\text{ for }2<M_2(a)\le C\le 4,\text{ where }M_2(a)\lesssim2.6,$ where the precise definitions of $M_1,M_2,c_1$ are in Theorem~\ref{thm:good_regularity_convergence}.
   Then GD converges to a global minimum  $(x_\infty,y_\infty)$, and we have
\begin{align*}
    (x_\infty-y_\infty)^2\le \frac{2}{C}(x_0-y_0)^2+2(x_0y_0-1).
\end{align*}
\end{theorem}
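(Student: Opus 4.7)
The plan is to reduce the balancing bound to a short algebraic corollary of the EoS theorem (Theorem~\ref{thm:eos_good_regularity}). The key observation is that under the stated assumptions, Theorem~\ref{thm:eos_good_regularity} already provides two pieces of information that jointly determine $(x_\infty - y_\infty)^2$: first, GD converges to a global minimum, which by Proposition~\ref{prop:functions} lies on the branch $x_\infty y_\infty = 1$; second, the limiting sharpness satisfies $S_\infty \leq 2/h$, and by Proposition~\ref{prop:functions} this sharpness equals $x_\infty^2 + y_\infty^2$ at any minimizer.

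Combining these two facts, I would write
\begin{align*}
(x_\infty - y_\infty)^2 = x_\infty^2 + y_\infty^2 - 2 x_\infty y_\infty = S_\infty - 2 \leq \frac{2}{h} - 2 = \frac{2(x_0^2 + y_0^2)}{C} - 2,
\end{align*}
where the last equality uses $h = C/(x_0^2+y_0^2)$. Now expand $x_0^2 + y_0^2 = (x_0 - y_0)^2 + 2 x_0 y_0$ to obtain
\begin{align*}
(x_\infty - y_\infty)^2 \leq \frac{2}{C}(x_0 - y_0)^2 + \frac{4 x_0 y_0}{C} - 2.
\end{align*}
Comparing with the target bound reduces the problem to showing $\tfrac{4 x_0 y_0}{C} - 2 \leq 2(x_0 y_0 - 1)$, i.e.\ $2 x_0 y_0\bigl(\tfrac{2}{C} - 1\bigr) \leq 0$. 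This holds because $C \geq M_2(a) > 2$ (so $2/C \leq 1$) and $x_0 y_0 > 1 > 0$ by the initial condition assumption.

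Essentially there is no obstacle beyond invoking Theorem~\ref{thm:eos_good_regularity}: the entire content of balancing in this regime is packaged into the limiting sharpness bound $S_\infty \leq 2/h$, and the rest is bookkeeping using the identity $x^2 + y^2 - 2xy = (x-y)^2$ together with the fact that minimizers satisfy $xy = 1$. The only subtlety is that the bound is phrased so as to remain informative even when $x_0 y_0$ is not extremely close to $1$: the extra term $2(x_0 y_0 - 1)$ is precisely the slack one gains by replacing $\tfrac{4 x_0 y_0}{C} - 2$ with the simpler quantity $2(x_0 y_0 - 1)$, and this slack is nonnegative exactly when $C \geq 2$, which is why the assumption $C > M_2(a) > 2$ (rather than merely $C > 0$) is the feature of the large learning rate that makes balancing quantitatively visible.
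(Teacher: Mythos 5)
Your proposal is correct and is essentially identical to the paper's own proof: both deduce $(x_\infty-y_\infty)^2 = S_\infty - 2 \le 2/h - 2$ from Theorem~\ref{thm:eos_good_regularity} together with $x_\infty y_\infty=1$, substitute $h=C/(x_0^2+y_0^2)$, expand $x_0^2+y_0^2=(x_0-y_0)^2+2x_0y_0$, and absorb the remainder into $2(x_0y_0-1)$ using $C\ge 2$. No differences worth noting.
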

The above theorem shows that
given a large $h$ corresponding to $C>2$,
if $x_0y_0$ is close to 1, $(x_\infty-y_\infty)^2 < (x_0-y_0)^2$ and the balancing phenomenon occurs. Additionally, a larger $h$ leads to a smaller upper bound of $(x_\infty-y_\infty)^2$, and therefore is more balanced at the limit. Especially, if $C=4$, we have 
    \begin{align}
    \label{eqn:balancing_inequality_good_regu_a_C=4}
    (x_\infty-y_\infty)^2\le \frac{1}{2}(x_0-y_0)^2+2(x_0y_0-1)\lesssim \frac{1}{2}(x_0-y_0)^2.
\end{align}
The case with $b=1$ was already studied in the literature, and it also exhibits the balancing phenomenon as reviewed below:

\begin{theorem}[Original balancing; Corollary of Theorem 3.2 in~\cite{wang2022large}]
\label{thm:balancing_b=1}
Consider $b=1$ in~\eqref{eqn:functions}. Assume the initial condition satisfies $(x_0,y_0)\in\big\{(x,y):  x^2+y^2>8\big\}\backslash\mathcal{B},$ where $\mathcal{B}$ is some  Lebesgue measure-0 set. Let the learning rate be $h=\frac{C}{x_0^2+y_0^2+4},\text{ where }2\le C\le 4.$
Then GD converges to a global minimum  $(x_\infty,y_\infty)$, and we have
\[
(x_\infty-y_\infty)^2 \le \frac{2}{C}(x_0-y_0)^2+\frac{4}{C}(x_0y_0+2)-2.
\]
\end{theorem}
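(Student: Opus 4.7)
The plan is to observe that the $b=1$ case of~\eqref{eqn:functions} is exactly a scalar instance of the matrix factorization objective analyzed in \cite{wang2022large}, so Theorem~3.2 there already supplies the dynamical content (convergence), and the stated balancing bound then follows from linear stability at the limiting minimizer plus a one-line algebraic rearrangement.

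Explicitly, for $b=1$ the coefficient is $C_b = 1/2$, so $f(x,y) = \tfrac{1}{2}(1-xy)^2 = \tfrac{1}{2}\|A - XY^\top\|_{\rm F}^2$ with $A=1$ and scalar $X=x$, $Y=y$. First I would check that the hypotheses here---the domain $\{x^2+y^2>8\}\setminus\mathcal{B}$ and the learning rate $h = C/(x_0^2+y_0^2+4)$ with $2\le C\le 4$---match the scalar specialization of the hypotheses of Theorem~3.2 in \cite{wang2022large}. The null set $\mathcal{B}$ consists of initializations whose GD trajectories converge to non-global stationary points (e.g., points on the axes with $xy=0$), and is excluded identically in both statements. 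Invoking the cited theorem yields convergence of GD to some global minimizer $(x_\infty, y_\infty)$ with $x_\infty y_\infty = 1$.

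The balancing bound then follows routinely. Linear stability at the fixed point (Appendix~\ref{app:stability_analysis}) implies $S_\infty \le 2/h$, and Proposition~\ref{prop:functions} identifies $S_\infty = x_\infty^2+y_\infty^2$ at any minimizer. Since $x_\infty y_\infty = 1$ forces $x_\infty$ and $y_\infty$ to share a sign,
\[
(x_\infty - y_\infty)^2 = x_\infty^2 + y_\infty^2 - 2 x_\infty y_\infty = S_\infty - 2 \le \tfrac{2}{h} - 2.
\]
Substituting $h = C/(x_0^2+y_0^2+4)$ and expanding via $x_0^2+y_0^2 = (x_0-y_0)^2 + 2x_0y_0$,
\[
\tfrac{2}{h} - 2 = \tfrac{2(x_0^2+y_0^2+4)}{C} - 2 = \tfrac{2}{C}(x_0-y_0)^2 + \tfrac{4}{C}(x_0y_0+2) - 2,
\]
which is exactly the claimed bound.

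The only obstacle worth flagging is bookkeeping: verifying that the normalization constants and the characterization of the exceptional set $\mathcal{B}$ in \cite{wang2022large} (stated there for general matrix factorization with natural sharpness scaling $\|X\|_{\rm F}^2 + \|Y\|_{\rm F}^2$) translate cleanly into the present scalar convention, and that the stated learning-rate window $2\le C\le 4$ is indeed what the cited theorem covers for the scalar case. No new dynamical argument is required, consistent with this result being labeled as a corollary.
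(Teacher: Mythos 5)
Your proposal is correct and matches the paper's proof in essence: the paper likewise imports the bound $(x_\infty-y_\infty)^2\le \frac{2}{h}-2$ from \cite{wang2022large} (citing their Theorem 3.1 for convergence) and then performs exactly the same substitution $h=\frac{C}{x_0^2+y_0^2+4}$ with the expansion $x_0^2+y_0^2=(x_0-y_0)^2+2x_0y_0$. The only cosmetic difference is that you rederive the intermediate inequality from the stability bound $S_\infty\le 2/h$ together with $(x_\infty-y_\infty)^2=S_\infty-2$ at a minimizer, whereas the paper cites it directly; the content is the same.
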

In the above theorem, the range of $h$ is approximately the same as Theorem~\ref{thm:balancing_good_regu} when $x_0^2+y_0^2$ is large, and consistent with Theorem~\ref{thm:no_balancing_bad_regu}. Especially, if $C=4$, we have 
    \begin{align*}
    (x_\infty-y_\infty)^2\le \frac{1}{2}(x_0-y_0)^2+x_0y_0,
\end{align*}
which has a similar upper bound as~\eqref{eqn:balancing_inequality_good_regu_a_C=4}  when $x_0y_0$ is small.

More important than proving balancing for more functions, the following theorem demonstrates that the balancing is lost when the regularity of the objective becomes bad.
\begin{theorem}[no Balancing]
\label{thm:no_balancing_bad_regu}

Consider ${b}=2n+1\text{ for }n\in\mathbb{Z}$ in~\eqref{eqn:functions}. Assume the initial condition satisfies $(x_0,y_0)\in \{(x,y):xy>2^{\frac{1}{b-1}}, x^2+y^2\ge 4 \}\backslash \mathcal{B}_b\text{, where }\mathcal{B}_b$ is a Lebesgue measure-0 set. Let the learning rate be $h= \frac{C}{(x_0^2+y_0^2+4)(x_0 y_0)^{2b-2}}\text{ for } 2\le C\le M_3(b,x_0,y_0)\text{, where }3<M_3(b,x_0,y_0)\le 4,$ and the precise definition of $M_3$ is given in Theorem~\ref{thm:bad_regularity_convergence}. 
Then GD converges to a global minimum  $(x_\infty,y_\infty)$, and we have
\begin{align*}
    (x_\infty-y_\infty)^2\ge (x_0-y_0)^2+\min\left\{2(x_0y_0-1)-\frac{2C}{b}x_0y_0,\Big(2-\frac{8b}{4b-C}\Big)(x_0y_0-1)\right\}
\end{align*}

\end{theorem}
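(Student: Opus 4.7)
The structural observation that drives the proof is that GD on $f(x,y) = \frac{1}{2b^2}(1-(xy)^b)^2$ diagonalizes in the variables $x \pm y$. Since $\nabla_x f = -\frac{1}{b}(1-s^b)s^{b-1}y$ and $\nabla_y f = -\frac{1}{b}(1-s^b)s^{b-1}x$ with $s := xy$, adding and subtracting the two GD updates yields
\begin{align*}
x_{k+1} - y_{k+1} &= (x_k - y_k)(1 - \beta_k), & x_{k+1} + y_{k+1} &= (x_k + y_k)(1 + \beta_k),
\end{align*}
where $\beta_k := \frac{h}{b}(1 - s_k^b)s_k^{b-1}$ and $s_k := x_k y_k$. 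Iterating produces the product formula $(x_\infty - y_\infty)^2 = (x_0 - y_0)^2 \prod_{k=0}^\infty (1-\beta_k)^2$ and its companion for $(x_\infty + y_\infty)^2$. The plan is to lower bound this product by leveraging Theorem~\ref{thm:bad_regularity_convergence}, which guarantees convergence to a minimizer with $x_\infty y_\infty = 1$ and provides the one-sided stability structure: for all $k \ge 1$, $s_k - 1$ maintains a fixed sign and $|s_k - 1|$ decreases monotonically to $0$, so $\beta_k$ does not change sign after the first step.

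I would first peel off the zeroth step using the explicit formula $\beta_0 = -\frac{C(s_0^b - 1)}{b(x_0^2+y_0^2+4)s_0^{b-1}}$ obtained by plugging $h = \frac{C}{(x_0^2+y_0^2+4) s_0^{2b-2}}$ into $\beta_k$; note $\beta_0 < 0$ since $s_0 > 2^{1/(b-1)} > 1$. The remaining analysis then splits according to $\mathrm{sign}(s_1 - 1)$, which determines $\mathrm{sign}(\beta_k)$ for all $k \ge 1$, and each sign case contributes one of the two candidates in the minimum. In the case $s_1 \ge 1$ (Route A, yielding $2(x_0 y_0-1) - \frac{2C}{b}x_0 y_0$), every $\beta_k \le 0$ for $k \ge 1$, so $\prod_{k \ge 1}(1 - \beta_k)^2 \ge 1$ and it suffices to expand $(1-\beta_0)^2 (x_0 - y_0)^2 = (x_0-y_0)^2 + (2|\beta_0| + \beta_0^2)(x_0-y_0)^2$; substituting the explicit $\beta_0$, then using $(x_0-y_0)^2 = x_0^2+y_0^2 - 2 s_0$, the factorization inequality $s_0^b - 1 \ge s_0^{b-1}(s_0-1)$, and $x_0^2 + y_0^2 \ge 2 s_0$ should give the target after the factor $x_0^2+y_0^2+4$ cancels between $|\beta_0|$ and $(x_0-y_0)^2$. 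In the case $s_1 < 1$ (Route B, yielding $(2-\frac{8b}{4b-C})(x_0y_0-1) = -\frac{2C}{4b-C}(x_0y_0-1)$), the tail of the $(x-y)^2$ product can drop below $1$, so I would pivot to the companion identity $(x_\infty-y_\infty)^2 = (x_\infty+y_\infty)^2 - 4$ (from $x_\infty y_\infty = 1$) and to the product formula for $(x+y)^2$. The algebraic signature $\frac{1}{4b-C}$ hints at a geometric series with ratio $1 - \frac{C}{4b}$, which I expect to arise as the asymptotic per-step contraction factor of $|s_k-1|$ in the one-sided regime; summing $\sum_{k \ge 1}|s_k - 1| \sim \frac{4b}{C}(s_1 - 1)$ then produces the $\frac{8b}{4b-C}$ coefficient after carrying the constants through. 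The min over the two candidates covers both sign scenarios uniformly.

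The main obstacle will be Route B: extracting the precise per-step contraction rate $1 - \frac{C}{4b}$ of $|s_k - 1|$ during one-sided stability, which I anticipate comes from linearizing the scalar recursion $s_{k+1} = s_k(1 + \beta_k^2) + \beta_k(x_k^2+y_k^2)$ around $s_k = 1$ (with $x_k^2+y_k^2 \approx S_\infty$ and the upper bound on $h$ supplied by Theorem~\ref{thm:bad_regularity_convergence}), and then threading the constants cleanly through the $(x+y)^2$ product without loss to Bernoulli-type inequalities. Route A is comparatively direct, being essentially a one-step computation reinforced by the tail monotonicity $\prod_{k \ge 1}(1-\beta_k)^2 \ge 1$; the delicate part is lining up its constants so that $\frac{2C}{b}$ emerges exactly.
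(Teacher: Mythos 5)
Your diagonalization $x_{k+1}\mp y_{k+1}=(x_k\mp y_k)(1\pm h\ell_k)$ is correct, and the two-case split by the sign of $s_1-1$ (equivalently of $r_0$, fixed thereafter by Theorem~\ref{thm:bad_regularity_convergence}) is the same skeleton the paper uses. In the case $s_1>1$ every $\ell_k\ge 0$, so your product formula gives $(x_\infty-y_\infty)^2\ge(x_0-y_0)^2$ outright; this already dominates $(x_0-y_0)^2+\min\{\cdot,\cdot\}$ because the candidate $\big(2-\frac{8b}{4b-C}\big)(x_0y_0-1)=-\frac{2C}{4b-C}(x_0y_0-1)$ is negative, so that case closes, and more simply than in the paper. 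But you cannot extract the candidate $2(x_0y_0-1)-\frac{2C}{b}x_0y_0$ from the $\beta_0$ term as you propose: take $x_0=y_0$, so $(2|\beta_0|+\beta_0^2)(x_0-y_0)^2=0$, while that candidate equals $2x_0y_0(1-C/b)-2$, which is large and positive when $C<b$. You have also swapped the case-to-candidate assignment relative to the paper: there, $s_1>1$ produces the $\frac{8b}{4b-C}$ bound and $s_1<1$ produces the $\frac{2C}{b}x_0y_0$ bound.

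The genuine gap is Route B ($s_1<1$). First, the $\frac{8b}{4b-C}$ coefficient does not belong to this case and does not come from a geometric series: in the paper it arises in the $s_1>1$ case from the exact telescoping $x_0y_0-1=\sum_k(x_ky_k-x_{k+1}y_{k+1})\ge\big(2-\frac{C}{2b}\big)\sum_k h\ell_kx_ky_k$ combined with $u_{k+1}^\top u_{k+1}\ge u_k^\top u_k-4h\ell_kx_ky_k$; no per-step contraction rate is needed (and the asymptotic rate is $1-h\,u_\infty^\top u_\infty$, not $1-\frac{C}{4b}$, while $\sum_{j\ge0}\big(1-\frac{C}{4b}\big)^j=\frac{4b}{C}\ne\frac{4b}{4b-C}$, so the arithmetic would not close even if it were). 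Second, your pivot to $(x_\infty+y_\infty)^2\ge(x_1+y_1)^2$ is lossy: since $(x+y)^2=x^2+y^2+2xy$ and $x_1y_1<1<x_0y_0$, the step-$0$ drop in $(x+y)^2$ exceeds the drop in $x^2+y^2$ by $2(x_0y_0-x_1y_1)$, and the $2(1-x_1y_1)=2|r_0|(x_0y_0-1)$ portion is never recovered. Concretely, with $x_0=y_0$, $b=3$, $C=4$ and $x_0y_0$ large, your route yields roughly $(x_0-y_0)^2-\frac{4}{3}x_0y_0$, strictly below the theorem's claimed $\min$, which is roughly $(x_0-y_0)^2-x_0y_0$ there. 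The repair is the paper's argument: track $u_k^\top u_k=x_k^2+y_k^2$ itself, note it is non-decreasing for $k\ge1$ in this case (because $\ell_k<0$ and $x_ky_k>0$ make $-4h\ell_kx_ky_k\ge0$), so the only loss is $u_0^\top u_0-u_1^\top u_1\le4h\ell_0x_0y_0\le\frac{2C}{b}x_0y_0$ at step $0$, and conclude via $(x_\infty-y_\infty)^2=u_\infty^\top u_\infty-2$.
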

The above theorem states that although a larger learning rate (i.e., larger $C$) can still reduce the lower bound of $(x_\infty-y_\infty)^2$,
this limiting difference may not be smaller than the initial difference $(x_0-y_0)^2$, especially when $b$ is large, 
i.e., there is no balancing in this case.
Note this does not contradict the stability theory (see Appendix~\ref{app:stability_analysis}), despite the fact that GD does not converge to a flatter minimum (see more discussions below). In addition, similar to the discussion in Section~\ref{sec:eos} on  EoS, a larger learning rate does not help recover the balancing in bad regularity cases either (see Figure~\ref{fig:bad_regularity_no_eos_not_lr_dependent}).
A more detailed version of Theorem~\ref{thm:no_balancing_bad_regu} can be found in Theorem~\ref{thm:no_balancing_bad_regu_Formal_version}.

\vskip2pt
\noindent$\bigstar$ \textbf{Example implications of the general theory.}
Consider the family of functions in \eqref{eqn:functions}, i.e., with both good and bad regularities.
\begin{itemize}
    \item \emph{Bad regularities eliminate the balancing phenomenon.} Consider the same set of initial conditions~\eqref{eqn:initial_condition_regularity_eliminates_phenomena} as in Section~\ref{sec:eos}.
     \begin{align*}
      \text{When }& \mathrm{dor}(F)\le 2,\text{ we have } (x_\infty-y_\infty)^2 \le \frac{2}{C}(x_0-y_0)^2 + \tilde{c},\\
      &\text{ where }\ 2.6\lesssim C=C(h)
      \le 4,\  \tilde{c}\text{ is a constant independent of }h,x_0,y_0,a.\\
      \text{When }& \mathrm{dor}(F)>2,\text{ we have } (x_\infty-y_\infty)^2\ge (x_0-y_0)^2-\hat{c}\frac{C}{b}\to (x_0-y_0)^2\text{ as }b\to\infty,\\
      &\text{where } 2\le C=C(h)\le M_3(b,x_0,y_0)\le 4\text,\ \hat{c}\text{ is a constant independent of }h,x_0,y_0,b.
   \end{align*}
   This shows that if the initial condition is near a sharp minimum, i.e., $(x_0-y_0)^2$ is large, then for functions with good (small) regularity, $(x_\infty-y_\infty)^2\lesssim(x_0-y_0)^2$, while for functions with bad (large) regularity, $(x_\infty-y_\infty)^2\gtrsim(x_0-y_0)^2$. 

   Similarly to Section~\ref{sec:eos}, a worse regularity (larger degree) also results in a larger set of initial conditions that do not lead to balancing.
\item \emph{When balancing appears, larger learning rate leads to more balancedness.} Note the above $C\approx h (x_0^2+y_0^2)$\footnote{Indeed, $C=h (x_0^2+y_0^2)$ for $\mathrm{dor}\le 1$; $C=h (x_0^2+y_0^2+4)$ for $\mathrm{dor}=2$}. Then, for $\mathrm{dor}\le 2$, we equivalently have 
\begin{align*}
    (x_\infty-y_\infty)^2 \lesssim \frac{2(x_0-y_0)^2/(x_0^2+y_0^2)}{h } + \bar{c},\quad\text{ where }\bar{c}\text{ is independent of }h,x_0,y_0,a.
\end{align*}
Namely, for any fixed initial condition, large $h$ reduces the gap between $x_\infty$ and $y_\infty$, i.e., brings more balancedness.
    \item \emph{Large learning rate does not necessarily help GD converge to a flatter minimum.} For bad regularity cases, $(x_\infty-y_\infty)^2$ does not decrease compared to its initial $(x_0-y_0)^2$. This implies that if starting near a sharp minimum (but not too close; see Section~\ref{sec:discussions}), GD will just converge to it instead of searching for flatter ones. Note this does not contradict our claim that large learning rate drives GD towards flatter regions. For functions with bad regularity, the minimizer large-learning-rate GD converges to is, roughly speaking, already flatter than its neighborhood (see Fig~\ref{fig:sharpness_goodregu_badregu}). As the degree of regularity increases, the difference in sharpness between this sharp minimum and its neighborhood also increases, i.e., the minimum becomes much flatter than its neighborhood.
    Thus converging to this sharp minimum is already a reflection of GD escaping to a flatter region.

\end{itemize}

\begin{remark}[EoS vs Balancing]
    Note that EoS and balancing are different phenomena despite both being large learning rate behaviors of GD. More precisely, balancing mainly stems from the \textbf{de-sharpening (pre-EoS)} stage in EoS but is almost unrelated to the \textbf{progressive sharpening} and \textbf{limiting stabilization of sharpness near $2/h$}. In contrast, the \textbf{pre-EoS} is not a prerequisite for EoS to manifest; instead, EoS relies on the latter two stages.
\end{remark}

\section{Convergence}
\label{sec:convergence}

The conditions required by theorems in both Section~\ref{sec:eos} and~\ref{sec:balancing} are to ensure the convergence of GD even with a large learning rate. The proofs of those theorems rely on a detailed understanding of this convergence. It is essential to emphasize that achieving convergence under large learning rates is a non-trivial task. Our results differ from existing milestones that demonstrate certain implicit biases based on unverified assumptions of convergence. Next, we will provide precise definitions of the settings used for the convergence results.

For the convenience of stating our convergence results, let us first begin with some additional \textbf{notations} and \textbf{definitions}:

Consider GD update for the objective function $f(x,y):=F(xy)$
\begin{align}
\label{eqn:gd}
\begin{pmatrix}x_{k+1}\\y_{k+1}\end{pmatrix}=\begin{pmatrix}x_{k}\\y_{k}\end{pmatrix}-h \ell_k \begin{pmatrix}
        0 & 1\\ 1& 0  \end{pmatrix}\begin{pmatrix}x_{k}\\y_{k}\end{pmatrix}=\begin{pmatrix}
        1 & -h\ell_k\\ -h\ell_k& 1  \end{pmatrix}\begin{pmatrix}x_{k}\\y_{k}\end{pmatrix},
\end{align}
where $\ell_k=F'(x_ky_k)$.

By GD update~\eqref{eqn:gd}, we have
\begin{align*}
    x_{k+1}y_{k+1}-1=(x_k y_k-1)\left(1-h\frac{\ell_k}{x_ky_k-1}(x_k^2+y_k^2-h\ell_k x_k y_k)\right)=r_k(x_k y_k-1).
\end{align*}
where $r_k=1-h\frac{\ell_k}{x_ky_k-1}(x_k^2+y_k^2-h\ell_k x_k y_k)$. Let $\delta:=xy-1$.  
We define 
\begin{align*}
q(\delta):=\frac{F'(xy)}{xy-1}=\frac{F'(\delta+1)}{\delta},\text{ and then }q_k:=q(x_ky_k-1)=\frac{\ell_k}{x_ky_k-1}.
\end{align*}
Moreover, for $0<a\le 1$, 
\begin{align}
\label{eqn:q(delta)_bound_c_1}
   1-c_1\delta^2\le  q(\delta)\le 1 ,\text{ where }c_1=-\frac{-3+3 a-2 \log(2)}{24 \log(2)}.
\end{align}

Next, we prove the convergence of GD under large learning rates for the functions~\eqref{eqn:functions} in the following theorems. Note GD still converges when $h$ is smaller, but that is already guaranteed by classical optimization theory and not our focus here. See more discussions in the last paragraph of this section.
\begin{theorem}
\label{thm:good_regularity_convergence}
Consider $0<a\le 1$ in~\eqref{eqn:functions}. Assume the initial condition 
    $$(x_0,y_0)\in \{(x,y):1<xy\lesssim q^{-1}\left( \frac{q(1)}{2} \right), x^2+y^2\gtrsim 4c_1^{-4/3}\}\backslash \mathcal{B}_a$$
for some Lebesgue measure-0 set $\mathcal{B}_a$. Let the learning rate be
  $$ \frac{2}{x_0^2+y_0^2}<\frac{\frac{2}{q(1)}}{x_0^2+y_0^2}+\mathcal{O}\left(\frac{1}{(x_0^2+y_0^2)^2}\right)\le h\le\frac{4}{x_0^2+y_0^2}.$$
Then GD converges to a global minimum.
\end{theorem}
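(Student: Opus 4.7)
The plan is to recast gradient descent in the coordinates $(\delta_k, S_k)$ with $\delta_k := x_ky_k-1$ and $S_k := x_k^2+y_k^2$, where direct computation from \eqref{eqn:gd} yields
\begin{align*}
\delta_{k+1} &= r_k\,\delta_k, \qquad r_k = 1 - h q_k S_k + h^2 q_k^2 \delta_k(1+\delta_k), \\
S_{k+1} &= S_k\bigl(1 + h^2 q_k^2 \delta_k^2\bigr) - 4 h q_k \delta_k(1+\delta_k),
\end{align*}
with $q_k = q(\delta_k)$ obeying the good-regularity sandwich $1 - c_1\delta_k^2 \le q_k \le 1$ from \eqref{eqn:q(delta)_bound_c_1}. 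In parallel I would keep the diagonalised updates $u_{k+1} = (1+h\ell_k)u_k$, $v_{k+1}=(1-h\ell_k)v_k$ with $u_k=x_k-y_k$, $v_k=x_k+y_k$, because once $\delta_k \to 0$ these products give convergence of $(x_k,y_k)$ itself rather than merely of $x_ky_k$. The exceptional set $\mathcal{B}_a$ collects initial data for which $x_k=0$, $y_k=0$, or $\delta_k=0$ occurs at a breaking step, and a generic-position argument shows it has Lebesgue measure zero.

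I would then follow the two-phase scheme of \citet{wang2022large}. The condition $\delta_0\in (0,\,q^{-1}(q(1)/2)-1]$, combined with the lower bound on $h$, places $h q_0 S_0$ just above the stability threshold $2$, so $r_0\approx -1$ and single-step decrease of $|\delta_k|$ fails. The crucial observation is that because $F$ is symmetric about $xy=1$, the map $q$ is even in $\delta$, hence $q_{k+1}\approx q_k$ whenever $r_k\approx -1$; a two-step expansion of the $S$-recursion then gives
\begin{align*}
S_{k+2} - S_k \;\approx\; -\,8\,h\,q_k\,\delta_k^2 \;+\; \mathcal{O}(h^2),
\end{align*}
so $S_k$ has a strictly negative net drift over consecutive pairs. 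Using the good-regularity bound on $q$, the initial restriction $\delta_0\le q^{-1}(q(1)/2)-1$, and the lower bound $S_0\gtrsim 4c_1^{-4/3}$, I would build a forward-invariant region $\{|\delta_k|\le \delta_\ast,\ S_k\le S_0\}$ together with a uniform lower bound on the per-pair descent of $S_k$, producing a finite stopping time $N$ after which $h q_N S_N<2$; this is the end of Phase 1.

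Phase 2 begins once $h q_k S_k<2$. Here the leading term of $r_k$ satisfies $1-h q_k S_k\in(-1,1)$, and the upper bound $h\le 4/(x_0^2+y_0^2)$ together with $|\delta_k|\le\delta_\ast$ makes the $h^2$ correction subordinate, yielding $|r_k|\le \rho$ for some $\rho<1$. Geometric contraction then drives $\delta_k\to 0$; feeding this back into the $S$-recursion shows $\sum_k |S_{k+1}-S_k|<\infty$, so $S_k\to S_\infty\in(0,2/h]$. Because $\sum_k h|\ell_k| = \sum_k h|q_k||\delta_k|<\infty$ from the geometric decay of $\delta_k$, the infinite products $\prod_j(1\pm h\ell_j)$ converge, so $u_k\to u_\infty$ and $v_k\to v_\infty$, and $(x_k,y_k)$ converges to a point on the global minimizer set $\{xy=1\}$.

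The hard part will be Phase 1. One must rule out that $|\delta_k|$ escapes the region $\{\delta:q(\delta)\ge q(1)/2\}$ (where the lower bound $q_k\ge q(1)/2$ is in force and hence the descent estimate is available) before $h q_k S_k$ has drifted below $2$. This requires tracking the two recursions \emph{simultaneously}: the non-monotonicity of $S_k$ on individual steps, the near-$-1$ value of $r_k$ which makes a pure Lyapunov argument in $|\delta_k|$ ineffective, and the need to keep the $\mathcal{O}(h^2)$ corrections from disguising the sign of the $-8 h q_k\delta_k^2$ two-step drift must all be controlled jointly. Making the two-step descent estimate quantitative and uniform across the Phase 1 window—using evenness of $q$, the bound $|q_{k+1}-q_k|\lesssim c_1(|\delta_{k+1}|+|\delta_k|)\cdot|\delta_{k+1}-\delta_k|$, and the assumption $S_0\gtrsim 4c_1^{-4/3}$ that makes $h$ small enough to absorb cross terms—is where essentially all of the technical work lives.
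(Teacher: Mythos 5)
Your reduction to the $(\delta_k,S_k)$ recursions, the sign-alternation of $\delta_k$ once $r_0<0$, the Phase 2 contraction, and the infinite-product argument giving convergence of $(x_k,y_k)$ itself are all sound and match the paper's machinery (its $u_k^\top u_k$ is your $S_k$, and its lemma on persistence of $r_k<0$ is your sign-alternation). The gap is in Phase 1, and it is not merely the deferred technical work: the mechanism you propose is the wrong one. Your two-step drift estimate $S_{k+2}-S_k\approx -8hq_k\delta_k^2$ keeps only the cross term $-4h\delta_k^2 q_k(1+r_k^2)$ and drops both the $h^2\ell^2 S$ contributions (which are of the same order, since $hS_k=\Theta(1)$) and, more importantly, the first-order term $-4h\delta_k q_k(1+r_k)$. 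When $r_k=-1$ exactly the latter cancels and the exact drift is $-4hq_k\delta_k^2<0$; but whenever $r_k=-(1+\epsilon)$ with $\epsilon>0$ --- which is precisely the generic Phase 1 situation $hq_kS_k>2$ --- that term contributes $+4h\epsilon q_k\delta_k$, which dominates for small $\delta_k$ and makes the two-step drift of $S_k$ \emph{positive}. This is not a pathology the paper excludes; it is Stage I of its lemma on the behavior of $u_k^\top u_k$, where $u_k^\top u_k$ provably increases and is only bounded by $4/h+\mathcal{O}(h)$. Consequently your proposed forward-invariant region $\{|\delta_k|\le \delta_\ast,\ S_k\le S_0\}$ does not exist: both coordinates leave it.

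Relatedly, the escape of $|\delta_k|$ from the small region where $q(\delta)\ge q(1)/2$ is not something to be ruled out --- it is the actual convergence mechanism. In the paper's proof, when $r_k\le -1$ and $\delta_k$ is small, $|\delta_k|$ grows geometrically (at rate roughly $3-\delta_k^2$ when $S_k$ is near $4/h$) until $|\delta_k|\ge 1$; only then does the quantity $L(\delta_k)+L(r_k\delta_k)$, with $L(\delta)=\ell(\delta)(\delta+1-\ell(\delta))$, become uniformly bounded below (by $L(1)+L(r_k)>0.14$), forcing $S_k$ to decrease by $\Theta(h)$ every two steps until $|r_k|<1$ and $\delta_k$ re-enters the monotone-decrease region $|\delta_k|\le R_2(a)$, after which a contradiction argument pins the limit at $xy=1$. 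So the proof needs the global sign and positivity properties of $L$ over the whole excursion range $|\delta|\lesssim 5$, not a local drift estimate near $\delta=0$; without an argument that tolerates (indeed exploits) the excursion of $\delta_k$ to $\mathcal{O}(1)$ and the transient growth of $S_k$ up to $4/h$, Phase 1 cannot be closed. A smaller point: $\mathcal{B}_a$ must also exclude initial data converging to periodic orbits (where products of the $r_k$ equal one), not only those hitting $\delta_k=0$ in finitely many steps.
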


\begin{theorem}[Theorem 3.1 in \cite{wang2022large}]
\label{thm:b=1_convergence}
    Consider $b=1$ in~\eqref{eqn:functions}. Assume the initial condition 
    $$(x_0,y_0)\in \{(x,y): x^2+y^2\ge 8\}\backslash \mathcal{B}$$
    for some Lebesgue measure-0 set $\mathcal{B}$.
Let the learning rate be
   $$\frac{2}{x_0^2+y_0^2+4}\le h\le\frac{4}{x_0^2+y_0^2+4}.$$
Then GD converges to a global minimum.
\end{theorem}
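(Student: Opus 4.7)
The plan is to exploit the algebraic simplicity of the $b=1$ case, where $F(s)=\tfrac{1}{2}(1-s)^2$ yields $\ell_k=F'(x_ky_k)=\delta_k$ with $\delta_k:=x_ky_k-1$, and hence $q_k\equiv 1$. In the coordinates $u_k:=(x_k+y_k)^2$ and $v_k:=(x_k-y_k)^2$, the GD update diagonalizes:
\begin{equation*}
u_{k+1}=(1-h\delta_k)^2\, u_k,\qquad v_{k+1}=(1+h\delta_k)^2\, v_k,\qquad u_{k+1}v_{k+1}=(1-h^2\delta_k^2)^2\, u_k v_k.
\end{equation*}
Using $\rho_k:=x_k^2+y_k^2=(u_k+v_k)/2$ together with $u_k-v_k=4(\delta_k+1)$, one obtains the identity $u_kv_k=\rho_k^2-4(\delta_k+1)^2$, so that $u_kv_k=(x_k^2-y_k^2)^2$ is a natural Lyapunov candidate that is non-increasing whenever $|h\delta_k|\le\sqrt{2}$. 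This tight coupling between $u_kv_k$, $\rho_k$, and $\delta_k$ is the backbone of the argument.

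I would begin with an explicit first-step computation. With $h=C/(\rho_0+4)$, $2\le C\le 4$, and $\rho_0\ge 8$, substituting into
\begin{equation*}
\delta_{k+1}=\delta_k(1-h\rho_k)+h^2\delta_k^2(\delta_k+1)
\end{equation*}
at $k=0$ should show that $|h\delta_1|$ lands strictly below $\sqrt{2}$ and that $\rho_1$ is bounded. This single step plays the role of the ``catapult'' that transports GD from a sharp neighborhood into a safe range. The Lebesgue measure-zero exceptional set $\mathcal{B}$ excludes pathological initial conditions that hit values such as $h\delta_k=\pm 1$ at some finite step, as well as the stable manifolds of the saddle at the origin and of the runaway branch $x_ky_k\to-\infty$; each such locus is locally a lower-dimensional submanifold.

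The main work is then a two-phase convergence analysis. In \emph{Phase 1}, the non-increase of $u_kv_k=\rho_k^2-4(\delta_k+1)^2$, combined with an inductive one-step bound on $(\delta_k+1)^2$ that keeps $|h\delta_k|$ under control, forces $\rho_k$ below its initial value and eventually into the flat region where $h\rho_k<2$ and $|h\delta_k|<1$. In \emph{Phase 2}, once inside this region, a local contraction around the minimizer manifold $\{xy=1\}$ yields geometric decay of $\delta_k$; the multiplicative factors $(1\pm h\delta_k)^2$ then accumulate to finite nonzero constants, so $u_k,v_k$ converge and $(x_k,y_k)$ converges to a genuine global minimum.

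The hard part will be \emph{Phase 1}, where $h\rho_0>1$ makes $\delta_k$ genuinely oscillate in sign (since $1-h\rho_0<0$) and forces either $u_k$ or $v_k$ to grow at each step. Thus no single coordinate is monotone and the classical descent lemma fails. The resolution is to never track a coordinate in isolation: instead one propagates coupled bounds on the triple $(u_kv_k,\rho_k,\delta_k)$, using the monotonicity of $u_kv_k$ to pin down $\rho_k^2-4(\delta_k+1)^2$, and using the explicit form of $h$ to keep $|h\delta_k|$ in the safe band inductively. A secondary obstacle is excluding convergence to $(0,0)$ (equivalently $\delta_k\to-1$) and escape to $x_ky_k\to-\infty$; both are handled by enlarging $\mathcal{B}$ to include the relevant stable manifolds, which are lower-dimensional and hence Lebesgue-negligible, yielding convergence to a true minimizer for Lebesgue-generic initial conditions.
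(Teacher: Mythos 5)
First, note that the paper does not reprove this statement: it is imported verbatim from \citet{wang2022large} (Theorem 3.1 there), and the closest in-paper analogues are the proof of Theorem~\ref{thm:good_regularity_convergence} and the lemmas of Appendix~\ref{app:proof_good_regularity}, which track $x_k^2+y_k^2$ and $x_ky_k$ and establish a \emph{two-step} decrease of $x_k^2+y_k^2$ in Phase 1. Your algebra is correct as far as it goes --- the diagonalization $u_{k+1}=(1-h\delta_k)^2u_k$, $v_{k+1}=(1+h\delta_k)^2v_k$, the identity $u_kv_k=(x_k^2-y_k^2)^2=\rho_k^2-4(\delta_k+1)^2$, and the recursion for $\delta_{k+1}$ all check out --- but the two load-bearing claims of your Phase 1 are not established, and one of them is false. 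The ``one-step catapult'' claim that $|h\delta_1|<\sqrt{2}$ fails on a positive-measure set of admissible initializations: take $x_0=y_0=10$ and $C=4$, so $h=4/204$, $\delta_0=99$, $h\delta_0\approx 1.94$; then $x_1=y_1\approx-9.41$, $\delta_1\approx 87.6$, and $h\delta_1\approx 1.72>\sqrt{2}$ (perturbing off the diagonal changes nothing). Hence your Lyapunov candidate $u_kv_k$ is \emph{increasing} on the first step(s) precisely in the near-balanced, large-$C$ regime, and the ``backbone'' of the argument is broken before Phase 1 begins.

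The second gap is that even on steps where $|h\delta_k|\le\sqrt{2}$, non-increase of $u_kv_k=\rho_k^2-4(x_ky_k)^2$ does not by itself push $\rho_k$ down: this quantity can stay constant while $\rho_k$ and $|x_ky_k|$ both grow in tandem. You acknowledge needing ``an inductive one-step bound on $(\delta_k+1)^2$ that keeps $|h\delta_k|$ under control,'' but that bound is exactly the hard content of the theorem and is nowhere supplied; without it neither the monotonicity of $u_kv_k$ nor the descent of $\rho_k$ is available. The mechanism that actually works (in \citealp{wang2022large}, and in this paper's Lemma~\ref{lem:u^2_decrease_every_two_step} for the $0<a\le 1$ family) is a \emph{two-step} decrease of $\rho_k$: because $1-h\rho_k<0$ in the sharp region, $\delta_k$ alternates in sign, and pairing consecutive iterates yields $\rho_{k+2}\le\rho_k-4h\left(L(\delta_k)+L(r_k\delta_k)\right)+\mathcal{O}(h^3)$ with the paired sum provably positive; this is what drives GD into the region $h\rho_k<2$ where a contraction argument (your Phase 2, which is essentially right) finishes the proof. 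Your single-step Lyapunov framing would need to be replaced by, or reduced to, such a paired-step estimate to close the argument.
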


\begin{theorem}\label{thm:bad_regularity_convergence}
Consider ${b}=2n+1\text{ for }n\in\mathbb{Z}$ in~\eqref{eqn:functions}. Assume the initial condition 
    $$(x_0,y_0)\in \{(x,y):xy>2^{\frac{1}{b-1}}, x^2+y^2\ge 4\}\backslash \mathcal{B}_b,\text{ for some Lebesgue measure-0 set }\mathcal{B}_b.$$
Let the learning rate be
   $$\frac{2}{(x_0^2+y_0^2+4)(x_0 y_0)^{2b-2}} \le h\le \frac{M_3}{(x_0^2+y_0^2+4)(x_0 y_0)^{2b-2}},$$
where
   $M_3=\min\Big\{4,\arg\max_C\Big\{\Big(1+\big(\frac{C}{(x_0^2+y_0^2+4)(x_0 y_0)^{2b-2}}\big)^2\ell_0^2\Big)x_0 y_0-\frac{C}{(x_0^2+y_0^2+4)(x_0 y_0)^{2b-2}}\ell_0 (x_0^2+y_0^2)\ge\epsilon\Big\}\Big\}>3,$
for some small $\epsilon>0$.
Then GD converges to a global minimum. Moreover, we have the rate of convergence 
\begin{align*}
    |x_ky_k-1|\le |x_1y_1-1| S^{k-1},\quad \text{for }k\ge 1,
\end{align*}
where
$S=\begin{cases}
        1-h(2-h\ell_0),&\text{ if }r_0>0\\
        1-h(2-h)q_1x_1y_1,& \text{otherwise}
    \end{cases},\text{ and }0<S<1.$

\end{theorem}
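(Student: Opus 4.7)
The plan is to view the dynamics through $\delta_k := x_k y_k - 1$, which by direct computation satisfies the one-dimensional recursion $\delta_{k+1} = r_k \delta_k$ with $r_k = 1 - h q_k (x_k^2 + y_k^2) + h^2 q_k \ell_k x_k y_k$, and then show geometric decay $|\delta_k| \le S^{k-1}|\delta_1|$ after a single preparatory step. The iterations split naturally into Phase~1 (the initial step from $k=0$ to $k=1$, during which GD may either stay on the $xy>1$ branch or jump past $xy=1$ depending on the sign of $r_0$) and Phase~2 (monotone geometric convergence for $k \ge 1$). Before any rate analysis, the first order of business is to confirm $x_k y_k$ stays uniformly bounded away from $0$, because otherwise $q_k$ blows up and the recursion breaks; this is precisely what the definition of $M_3$ enforces at $k=0$ via $x_1 y_1 \ge \epsilon$.

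For the first step I would use the explicit formula $x_1 y_1 = (1 + h^2 \ell_0^2)\, x_0 y_0 - h \ell_0 (x_0^2 + y_0^2)$ together with $x_0^2 + y_0^2 \ge 4$, the lower bound $x_0 y_0 > 2^{1/(b-1)}$, and the upper bound on $h$ given by $M_3$ to conclude $x_1 y_1 \in (\epsilon,\, x_0 y_0)$, and to read off whether $r_0 > 0$ (so $0 < \delta_1 < \delta_0$) or $r_0 < 0$ (so $\delta_1 < 0$ with $|\delta_1| < \delta_0$). In either case, after this one step the local Lipschitz constant $\tilde L$ along the rest of the trajectory is strictly smaller than $L_0$, and in fact $h \tilde L < 1$: this is the \emph{one-sided stability} regime described in Section~\ref{sec:eos}, and it is what ultimately makes Phase~2 a monotone contraction rather than an oscillation about $xy=1$.

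For Phase~2 ($k \ge 1$) the crux is to show $0 < r_k \le S < 1$ uniformly. I would argue by induction that (i)~$x_k y_k$ stays in a one-sided neighborhood of $1$ of the same sign as $\delta_1$, (ii)~$x_k^2 + y_k^2$ is monotone and bounded, and (iii)~$q_k$ inherits a corresponding monotonicity from the explicit form $q(\delta) = (1+\delta)^{b-1}\big((1+\delta)^b - 1\big)/(b\delta)$ coming from $F(s) = \frac{1}{2b^2}(1 - s^b)^2$. Combining these with the expression for $r_k$ shows that the worst case of $r_k$ across $k \ge 1$ is attained at $k = 1$, yielding $S = 1 - h(2 - h\ell_0)$ in the no-flip branch and $S = 1 - h(2-h)\, q_1 x_1 y_1$ in the flip branch; strict positivity $S > 0$ and $S < 1$ both follow from $h \le M_3/[(x_0^2+y_0^2+4)(x_0 y_0)^{2b-2}]$ together with $h\ell_0 < 1$, which again uses the $M_3$ choice. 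Iterating gives $|\delta_k| \le S^{k-1}|\delta_1|$, and since $(x_k, y_k)$ is then confined to a compact region while $x_k y_k \to 1$, a standard compactness argument produces a limit $(x_\infty, y_\infty)$ with $x_\infty y_\infty = 1$, i.e., a global minimum.

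The main technical obstacle, in my view, is closing the induction that simultaneously controls the three coupled quantities $x_k y_k$, $x_k^2 + y_k^2$, and $q_k$ throughout Phase~2 --- particularly in the sign-flip case, where $\delta_k$ approaches $0$ from the opposite side of $1$ and $q_k$ is not manifestly monotone. The Lebesgue null set $\mathcal{B}_b$ removed from the initial conditions is presumably needed to exclude degenerate orbits where $r_k = -1$ exactly or $\delta_k$ hits $0$ in finite time, so one has to verify that generic initial conditions avoid both resonances under the GD map. Matching the piecewise rate $S$ exactly (rather than a looser geometric rate) is where the hypothesis $b = 2n+1$ enters cleanly, since odd $b$ makes the sign of $\ell_k$ coincide with the sign of $\delta_k$ and thus forces the two sub-cases of $S$ to dichotomize according to the sign of $r_0$.
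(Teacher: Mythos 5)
Your outline follows the same route as the paper's proof: a dichotomy on the sign of $r_0$, one preparatory step that either keeps the iterate on the branch $xy>1$ or flips it once to $0<xy<1$, and then a one-sided monotone contraction for $k\ge 1$ with the uniform bounds $r_k\le 1-h(2-h\ell_0)$ (no-flip) and $r_k\le 1-h(2-h)q_1x_1y_1$ (flip), with the role of $M_3$ correctly identified as keeping $x_1y_1\ge\epsilon$. However, the step you flag as ``the main technical obstacle'' in the sign-flip case is not a detail to be checked later --- it is the crux of the theorem, and your sketch does not contain the idea that closes it. When $0<x_ky_k<1$ one has $\ell_k<0$, so $u_{k+1}^\top u_{k+1}=(1+h^2\ell_k^2)u_k^\top u_k-4h\ell_kx_ky_k>u_k^\top u_k$: the quantity $x_k^2+y_k^2$ is monotonically \emph{increasing} throughout Phase~2, and without an a priori upper bound on it you cannot conclude $r_k>0$, hence cannot rule out the iterate oscillating back across $xy=1$ and the whole one-sided structure collapsing. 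The paper's resolution is a telescoping argument: since $x_{k+1}y_{k+1}\ge x_ky_k+2h|\ell_k|x_ky_k$ and $x_ky_k<1$ throughout, the total drift satisfies $\sum_k 2h|\ell_k|x_ky_k<1$, so the cumulative increase of $u_k^\top u_k$ is at most a fixed constant $C_1\le 7/2$; the ``$+4$'' in the denominator of the learning-rate bound $h\le M_3/\bigl((x_0^2+y_0^2+4)(x_0y_0)^{2b-2}\bigr)$ is precisely the slack that absorbs this increase and keeps $r_n>0$ for all $n$. Your sketch never connects the $+4$ to anything, which is a sign this mechanism is missing.

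Two smaller corrections. First, in the no-flip branch $r_{k+1}\ge r_k$ (the paper proves $q_{k+1}\le q_k$ and $u_{k+1}^\top u_{k+1}-h\ell_{k+1}x_{k+1}y_{k+1}\le u_k^\top u_k-h\ell_kx_ky_k$), so the ``worst case'' of $r_k$ is approached in the limit $k\to\infty$, not attained at $k=1$; the uniform bound $S=1-h(2-h\ell_0)$ comes from the lower bounds $q_{k+1}\ge q(0)=1$, $u_{k+1}^\top u_{k+1}\ge 2x_{k+1}y_{k+1}$, $x_{k+1}y_{k+1}\ge 1$ together with $\ell_{k+1}\le\ell_0$, not from monotonicity of $r_k$ in the direction you assert. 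Second, your reading of $\mathcal{B}_b$ is essentially right but incomplete: besides finite-time convergence and periodic orbits, the excluded null set is what guarantees $r_k\ne 0$, which is needed before one can even speak of the sign of $r_0$ determining the branch.
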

Note the convergence rate quantified by Theorem~\ref{thm:bad_regularity_convergence} is a global one. This is the first characterization of the convergence speed of GD under large learning rates for non-convex functions~\eqref{eqn:functions} ($b\ge 3$) that do not even have globally Lipschitz gradients
(note the quantification of convergence speed of GD for nonconvex functions without Lipschitz gradient is already highly nontrivial, even for regular learning rates; examples of great work in that direction include \cite{zhang2019gradient, ward2023convergence, li2023convex}).

The main idea of the proof is to construct a quantitative version of the two-phase 
convergence pattern of large learning rate GD as discussed in Section~\ref{sec:eos}. That is, GD first tries to escape from the sharp region to a flat region (\emph{Phase 1}) and then converges inside the flat one (\emph{Phase 2}). For more detailed discussions of this two-phase convergence and its interpretation under large learning rates, please see Section~\ref{sec:eos}.

The convergence of GD also holds in the regular learning rate regime (we omit it here since it is not our main focus). The analysis is more standard although still non-trivial for non-convex functions. The convergence pattern for regular learning rate contains only \emph{Phase 2} and GD will converge to a nearby minimum.

\begin{table}[ht]
\caption{Degrees of regularity (dor) for different compositions of loss and activation functions.}
\label{tab:dor_ml}
\begin{center}
\begin{tabular}{l|lll}

 & tanh & ReLU & $\mathrm{ReLU}^3$
\\ \hline 
huber &0        &1 &3 \\
$\ell^2$ & 0    & 2 & 6
\end{tabular}
\end{center}
\end{table}

\begin{figure}[ht]
    \centering    \subfigure[huber+tanh $\to$ EoS]{{\includegraphics[width=0.3\textwidth]{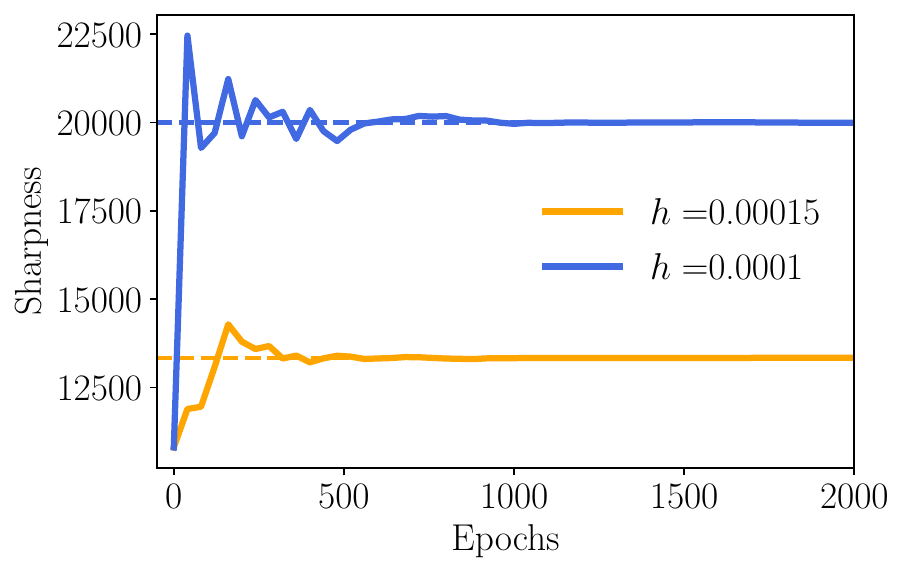} }\label{subfig:sharpness_huber_tanh}}%
    \subfigure[huber+ReLU $\to$ EoS]{{\includegraphics[width=0.3\textwidth]{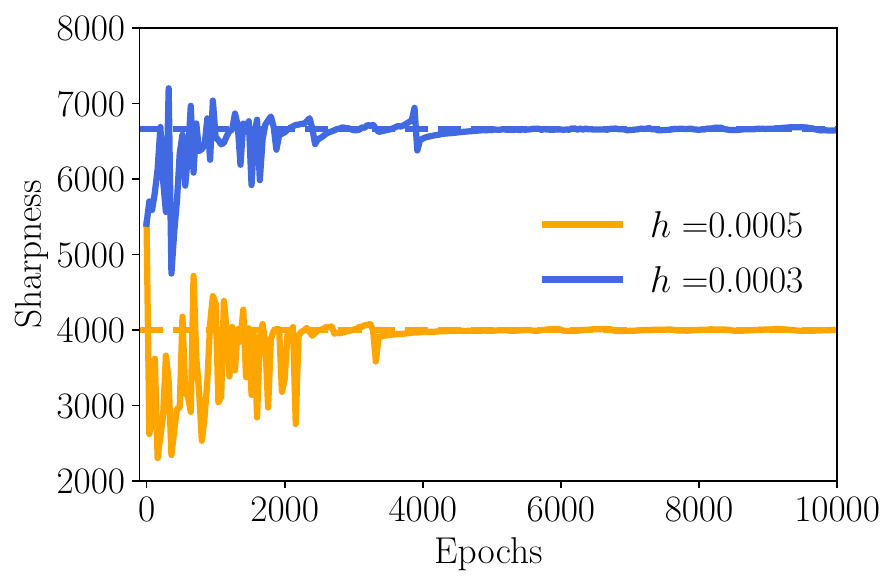} }\label{subfig:sharpness_huber_relu}}
    \subfigure[huber+$\mathrm{ReLU}^3$ $\to$ no EoS]{{\includegraphics[width=0.3\textwidth]{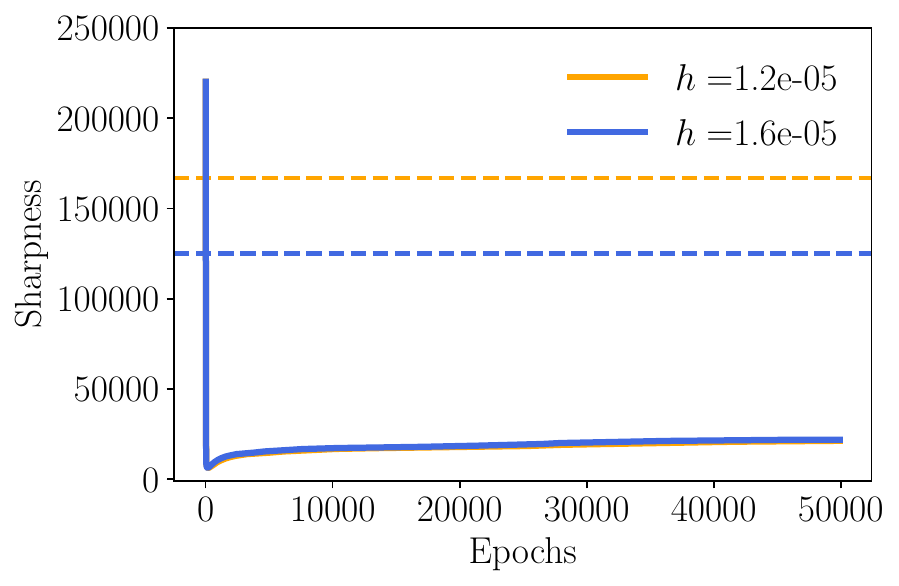} }\label{subfig:sharpness_huber_cubic_relu}}
    
    \subfigure[$\ell^2$+tanh $\to$ EoS]{{\includegraphics[width=0.3\textwidth]{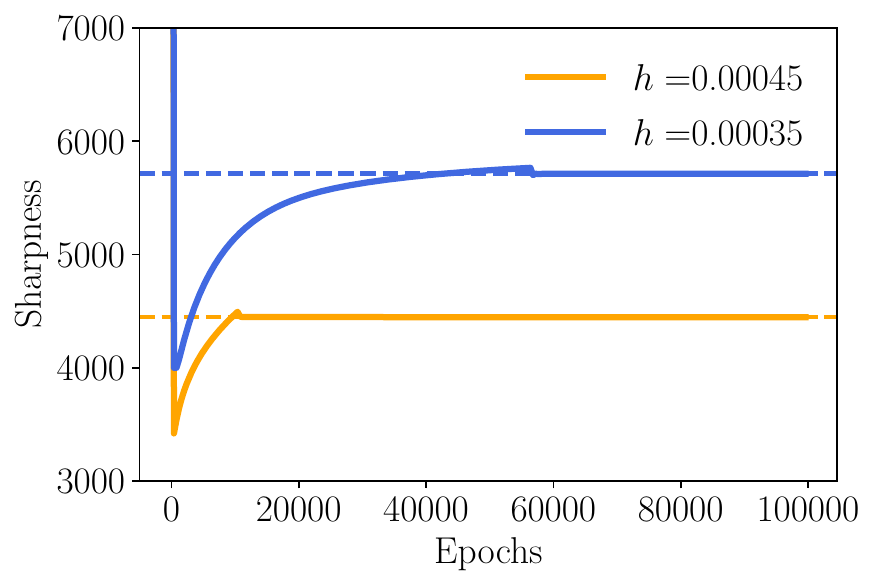} }\label{subfig:sharpness_mse_tanh}}
    \subfigure[$\ell^2$+ReLU $\to$ no EoS]{{\includegraphics[width=0.3\textwidth]{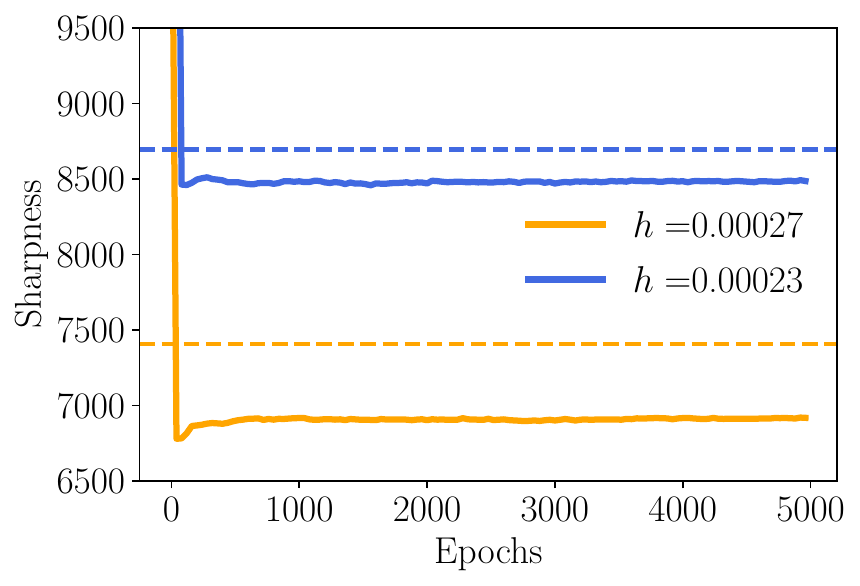} }\label{subfig:sharpness_mse_relu}}
    \subfigure[$\ell^2$+$\mathrm{ReLU}^3$ $\to$ no EoS]{{\includegraphics[width=0.3\textwidth]{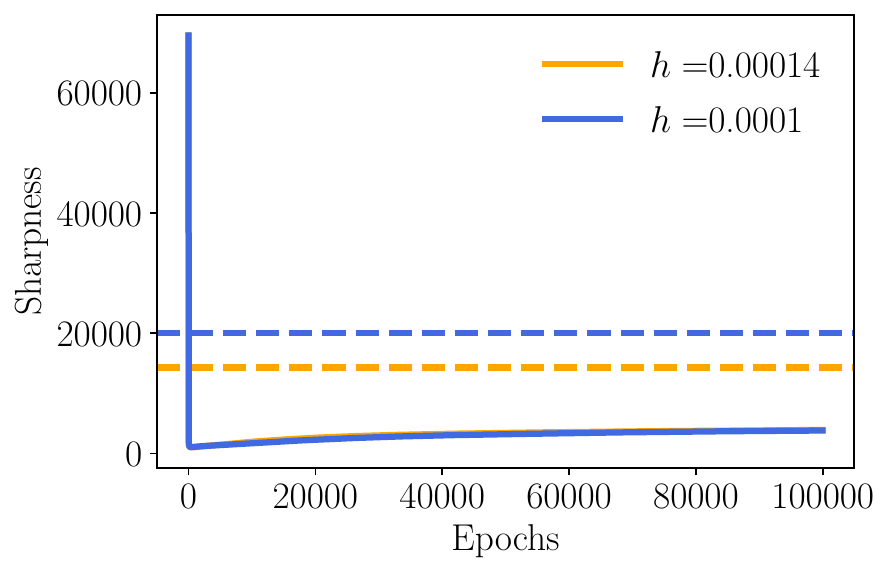} }\label{subfig:sharpness_mse_cubic_relu}}
    \caption{EoS and non-EoS phenomena. The solid lines are the evolution of sharpness in different situations; the dashed lines are $2/h$ for different learning rates $h$. 
    }
    \label{fig:sharpness_neural_networks}%
\end{figure}
\begin{figure}[ht]
    \centering    \subfigure[huber+tanh $\to$ balancing]{{\includegraphics[width=0.3\textwidth]{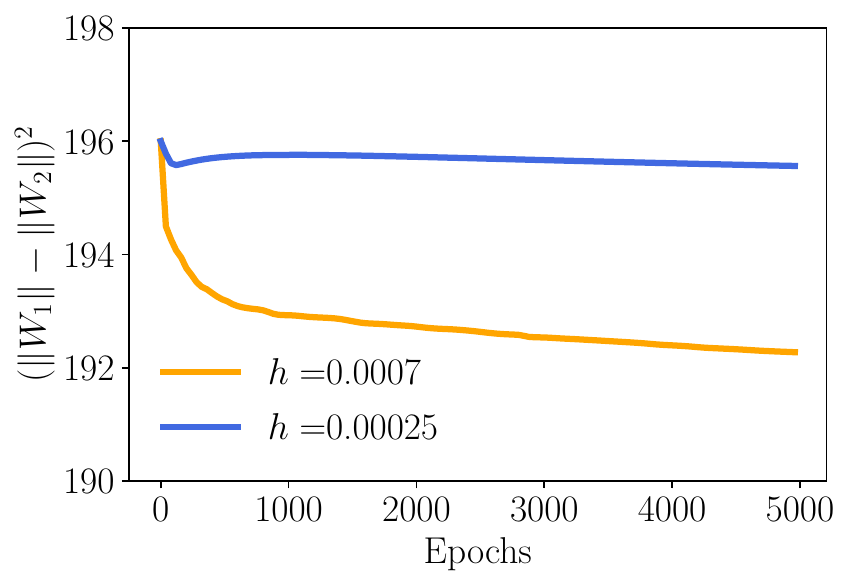} }\label{subfig:balancing_huber_tanh}}%
    \subfigure[huber+ReLU $\to$ balancing]{{\includegraphics[width=0.3\textwidth]{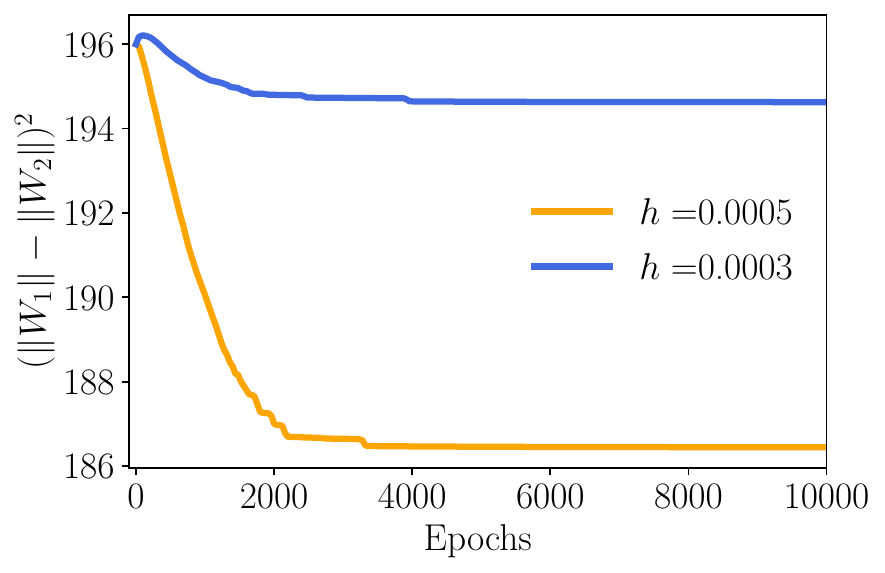} }\label{subfig:balancing_huber_relu}}
    \subfigure[huber+$\mathrm{ReLU}^3$ $\to$ no balancing]{{\includegraphics[width=0.3\textwidth]{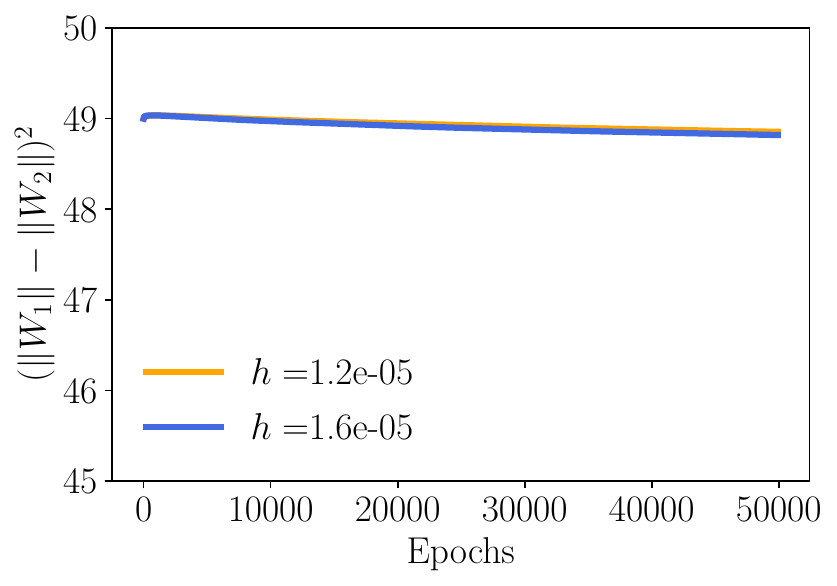} }\label{subfig:balancing_huber_cubic_relu}}
    
    \subfigure[$\ell^2$+tanh $\to$ balancing]{{\includegraphics[width=0.3\textwidth]{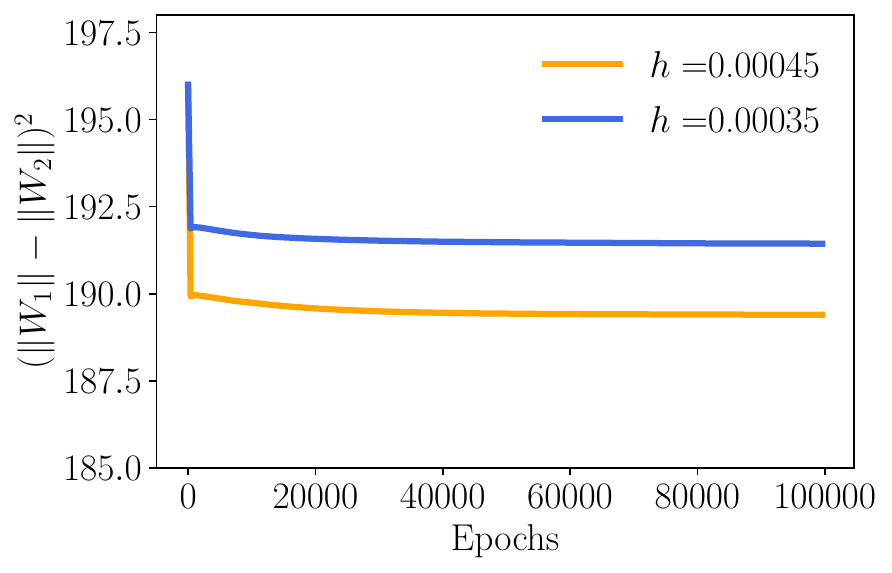} }\label{subfig:balancing_mse_tanh}}
    \subfigure[$\ell^2$+ReLU $\to$ balancing]{{\includegraphics[width=0.3\textwidth]{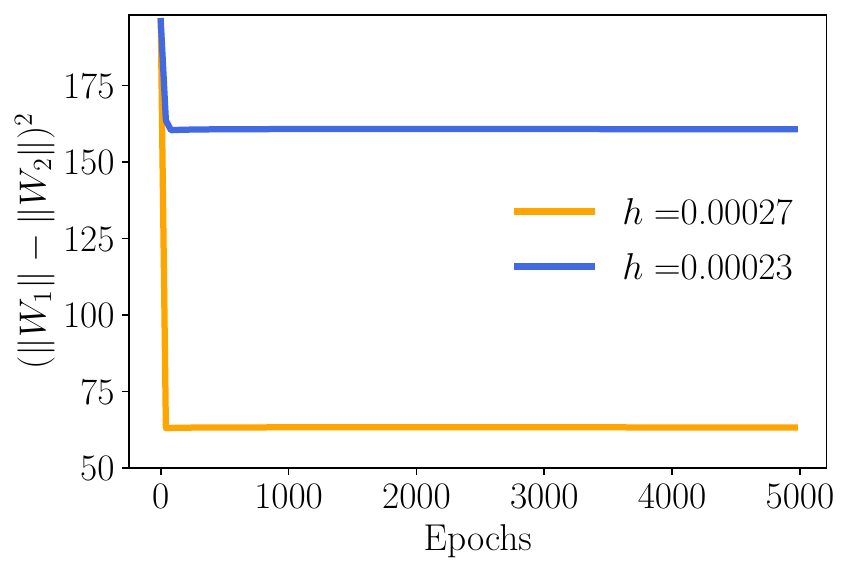} }\label{subfig:balancing_mse_relu}}
    \subfigure[$\ell^2$+$\mathrm{ReLU}^3$ $\to$ no balancing]{{\includegraphics[width=0.3\textwidth]{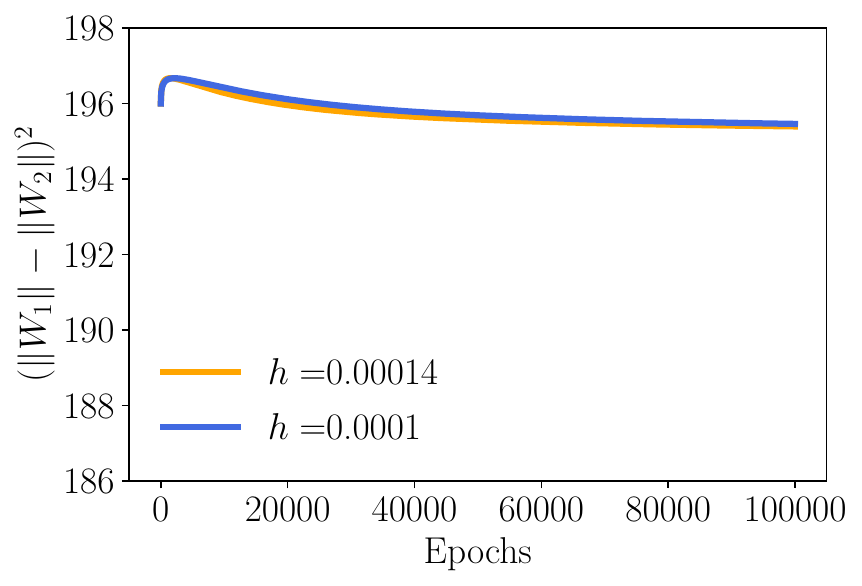} }\label{subfig:balancing_mse_cubic_relu}}
    \caption{Balancing and non-balancing. For (a) huber+tanh, we use larger learning rates than that in Figure~\ref{fig:sharpness_neural_networks} to further highlight the balancing phenomenon; all the other cases use the same learning rates as in Figure~\ref{fig:sharpness_neural_networks}.
    }
    \label{fig:balancing_neural_networks}%
\end{figure}

\section{Experiments}
\label{sec:experiments}

This section is dedicated to demonstrating that our results still hold in more complicated setups corresponding to deep learning problems, and regularity is an essential factor determining large learning rate implicit biases. We also verify that these biases only occur when learning rates are sufficiently \emph{large}. Moreover, batch normalization, a popular technique for improving training and generalization, indeed works and can turn the neural network training objective that has bad regularity into one with good regularity, creating pleasant large learning rate implicit biases. 
Detailed setup and more experiments can be found in Appendix~\ref{app:experiments}.

\vskip2pt
\noindent$\bigstar$ \textbf{Good regularity vs bad regularity.} In this section we focus on 6 different objective functions, whose degrees of regularity are computed under \eqref{eqn:dor_3layer_toy} for objective function \eqref{eqn:nn_objective_toy_3_layer} with various loss (huber and $\ell^2$)and activation (tanh, ReLU, and $\mathrm{ReLU}^3$) functions (see more details in Section~\ref{subsec:regularity_neural_network}) in Tab.~\ref{tab:dor_ml}. Although the neural network models used in the experiments are mostly 2-layer which is different from~\eqref{eqn:nn_objective_toy_3_layer} (except for huber+$\mathrm{ReLU}^3$, where we use the same structure as~\eqref{eqn:nn_objective_toy_3_layer} to highlight the effect of regularity; see details in Appendix~\ref{app:experiments}), Tab.~\ref{tab:dor_ml} still helps illustrate the insight of regularity affecting on large learning rate phenomena in different models. Figure~\ref{fig:sharpness_neural_networks} and~\ref{fig:balancing_neural_networks} shows the evolution of sharpness and $(\|W_1\|_{\rm F}-\|W_2\|_{\rm F})^2$ along GD iterations respectively under the six cases of Tab.~\ref{tab:dor_ml}. When $\mathrm{dor}>1$, there is no EoS (see Figure~\ref{subfig:sharpness_mse_relu}~\ref{subfig:sharpness_huber_cubic_relu}~\ref{subfig:sharpness_mse_cubic_relu}); when $\mathrm{dor}>2$, there is no balancing (see Figure~\ref{subfig:balancing_huber_cubic_relu}~\ref{subfig:balancing_mse_cubic_relu}). This validates our conjecture that in neural network models, bad regularity can also eliminate large learning rate phenomena.

\vskip2pt
\noindent$\bigstar$ \textbf{Restoration of EoS: batch normalization.} In Figure~\ref{fig:bn_bad_regularity_ml}, we add batch normalization to the three bad regularity cases that do not lead to EoS (see Figure~\ref{subfig:sharpness_mse_relu}~\ref{subfig:sharpness_huber_cubic_relu}~\ref{subfig:sharpness_mse_cubic_relu}). It is shown in the figures that EoS reappears for each case, which shows that batch normalization turns bad regularity into good one and consequently lead to large learning rate phenomena.

\vskip2pt
\noindent$\bigstar$ \textbf{Large learning rate vs small learning rate.} Figure~\ref{fig:ml_small_lr_vs_large_lr} shows two examples where we use both small and large learning rates for optimization. It turns out that a small learning rate can indeed vanish the large learning rate phenomena.

\vskip2pt
\noindent$\bigstar$ \textbf{Independence of datasets: CIFAR-10 and MNIST.} All the results above are performed on CIFAR-10. In Figure~\ref{fig:mnist_eos_balancing}, we also show results on MNIST, which are consistent with those on CIFAR-10. See more results in Appendix~\ref{app:experiments}.

\begin{figure}[ht]
    \centering    
    \subfigure[$\ell^2$+ReLU]{\includegraphics[width=0.3\textwidth]{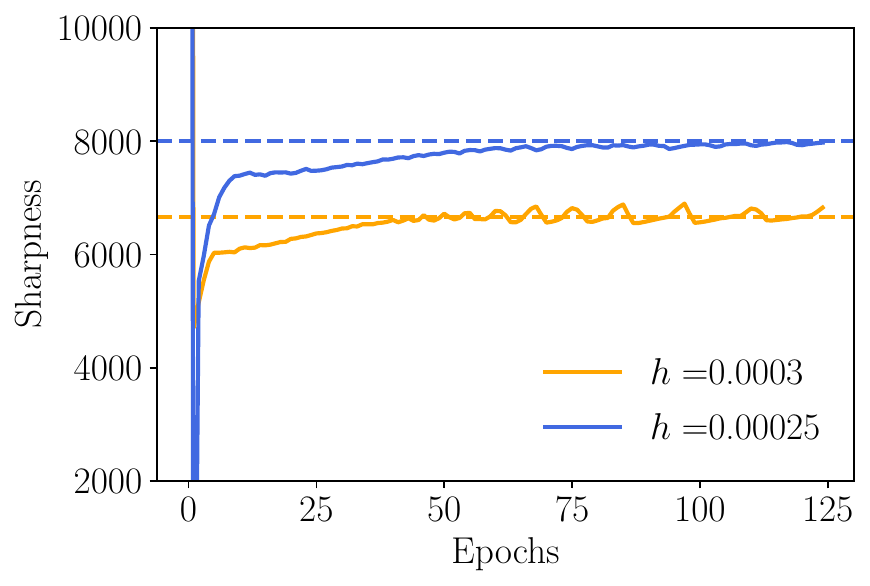}}
    \subfigure[huber+$\mathrm{ReLU}^3$]{\includegraphics[width=0.3\textwidth]{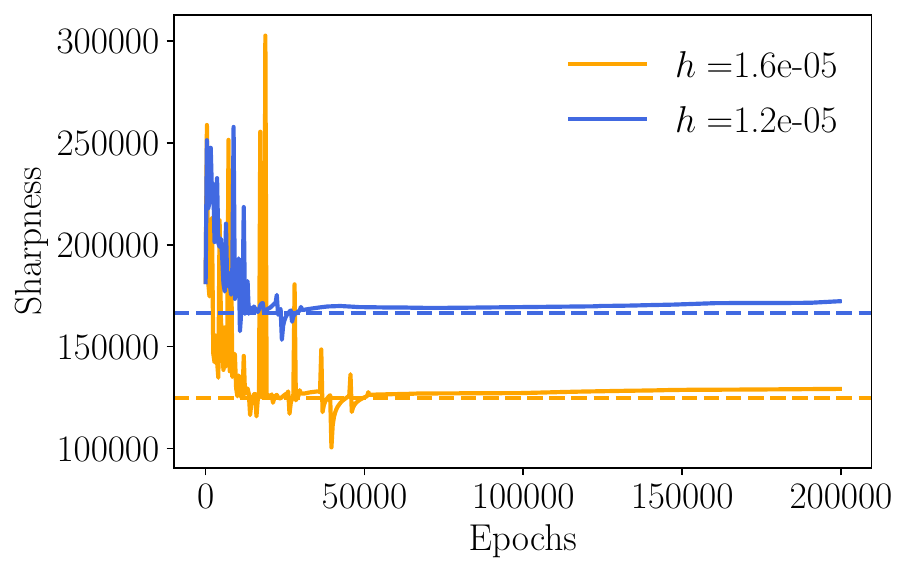}}
    \subfigure[$\ell^2$+$\mathrm{ReLU}^3$]{\includegraphics[width=0.3\textwidth]{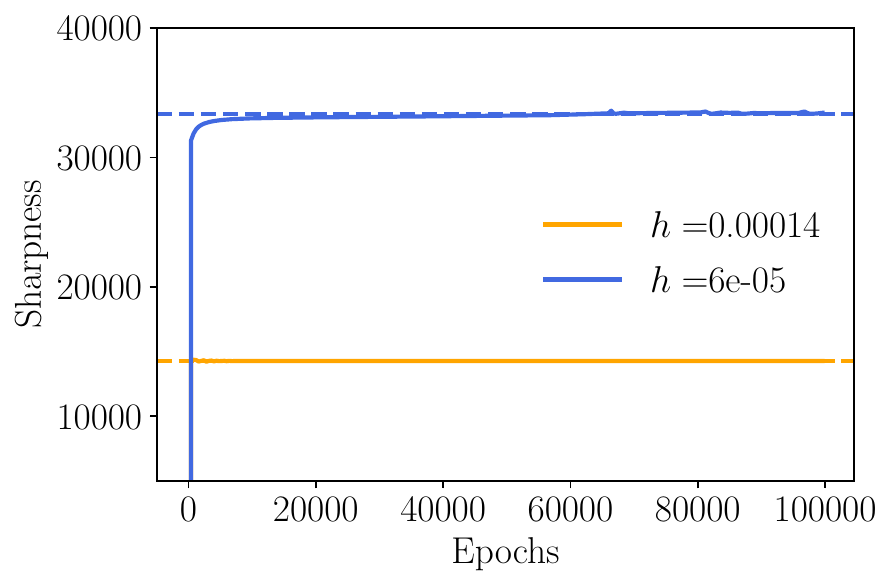}}
    
    \caption{Restoration of EoS for bad regularity cases under batch normalization.}
    \label{fig:bn_bad_regularity_ml}
\end{figure}

\begin{figure}[ht]
    \centering
    \subfigure[huber+ReLU EoS]{\includegraphics[width=0.24\textwidth]{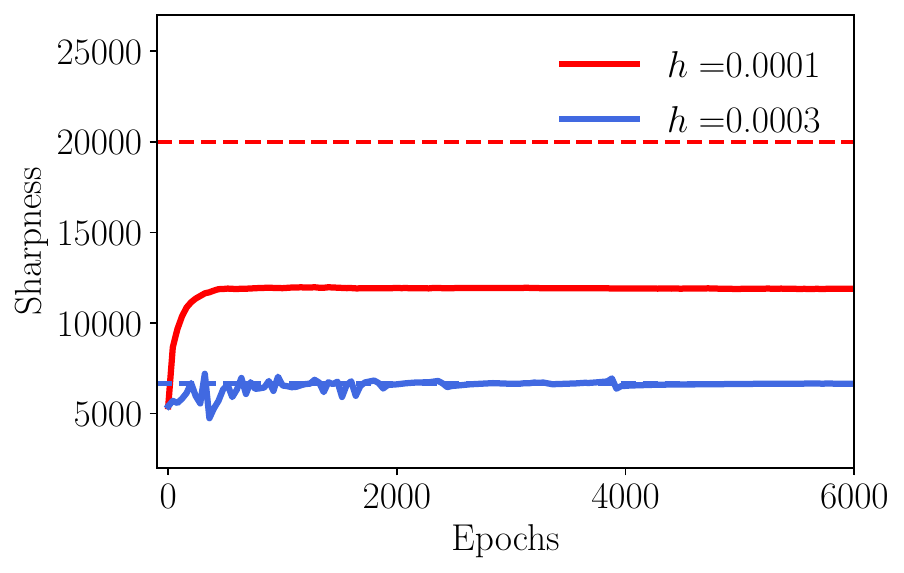}}
    \subfigure[huber+ReLU balancing]{\includegraphics[width=0.24\textwidth]{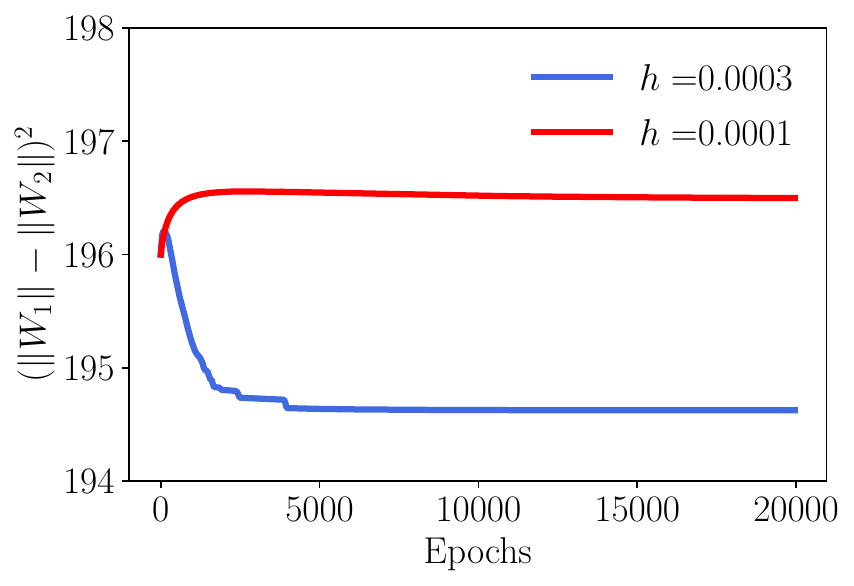}}
    \subfigure[$\ell^2$+tanh EoS]{\includegraphics[width=0.24\textwidth]{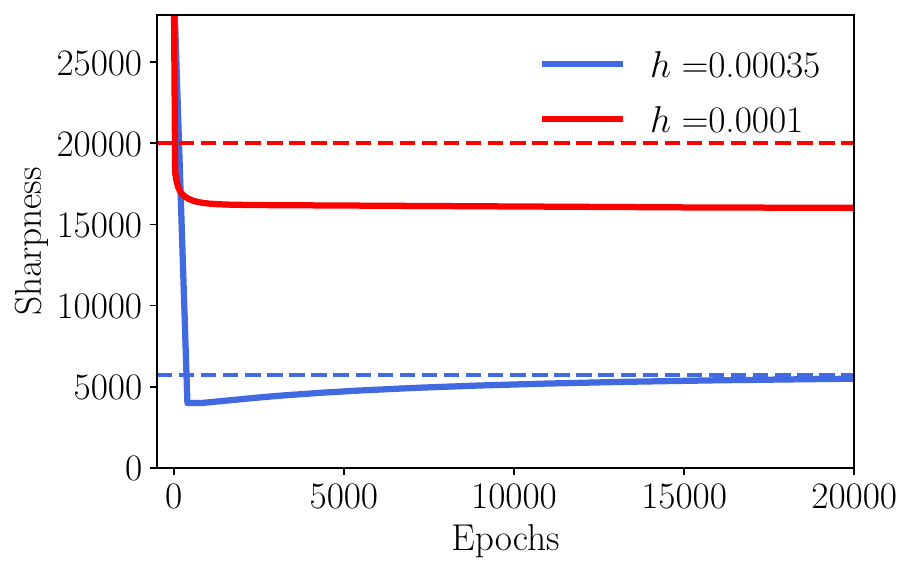}}
    \subfigure[$\ell^2$+tanh balancing]{\includegraphics[width=0.24\textwidth]{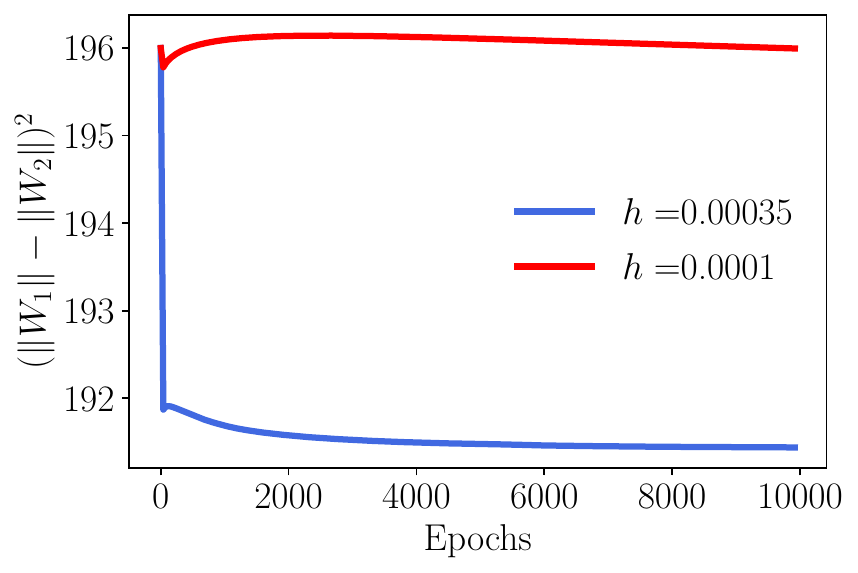}}
    \caption{Small learning rate (red) vs large learning rate (blue).}
    \label{fig:ml_small_lr_vs_large_lr}
\end{figure}

\begin{figure}[ht]
    \centering
    \subfigure[huber+tanh $\to$ EoS]{\includegraphics[width=0.24\textwidth]{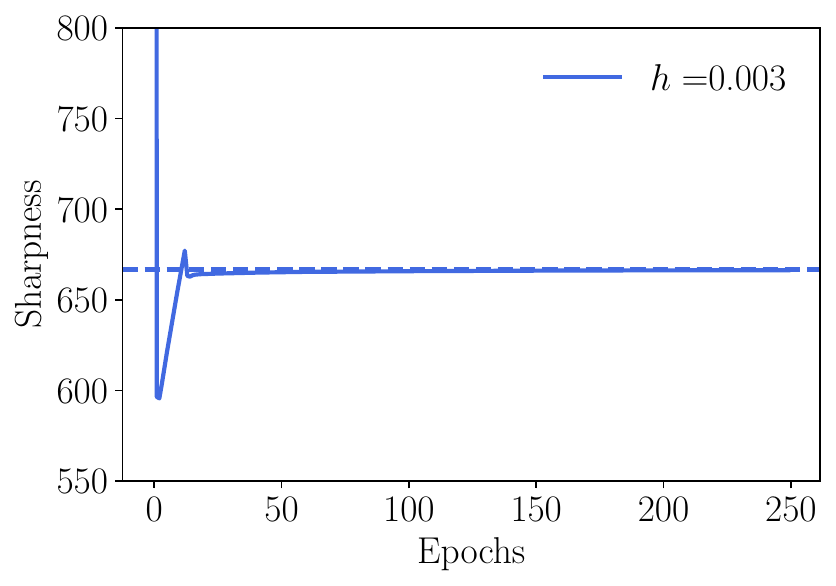}}
    \subfigure[huber+tanh $\to$ balancing]{\includegraphics[width=0.24\textwidth]{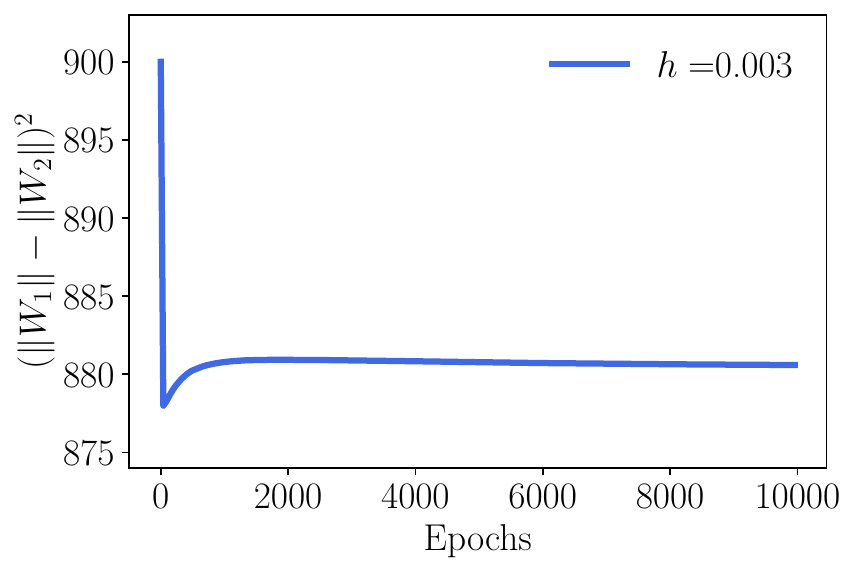}}
    \subfigure[$\ell^2$+ReLU $\to$ no EoS]{\includegraphics[width=0.24\textwidth]{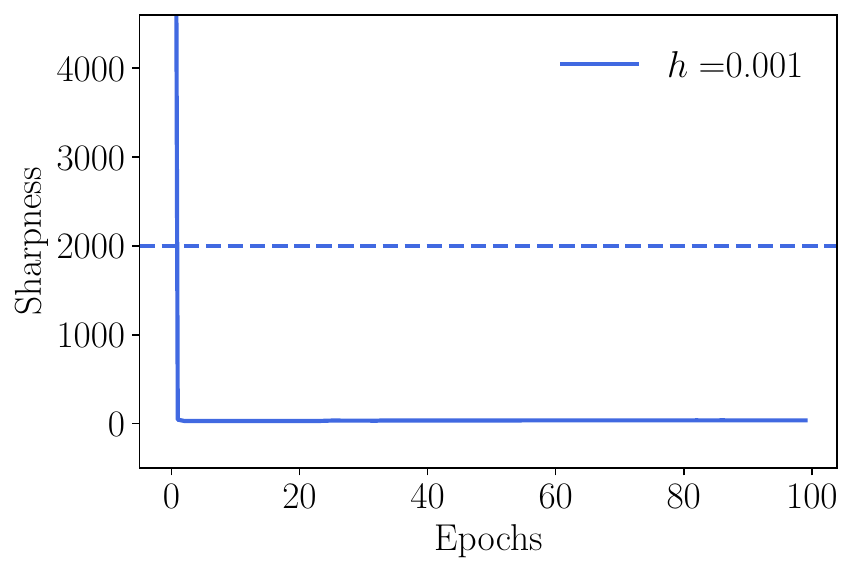}}
    \subfigure[$\ell^2$+ReLU $\to$ no balancing]{\includegraphics[width=0.24\textwidth]{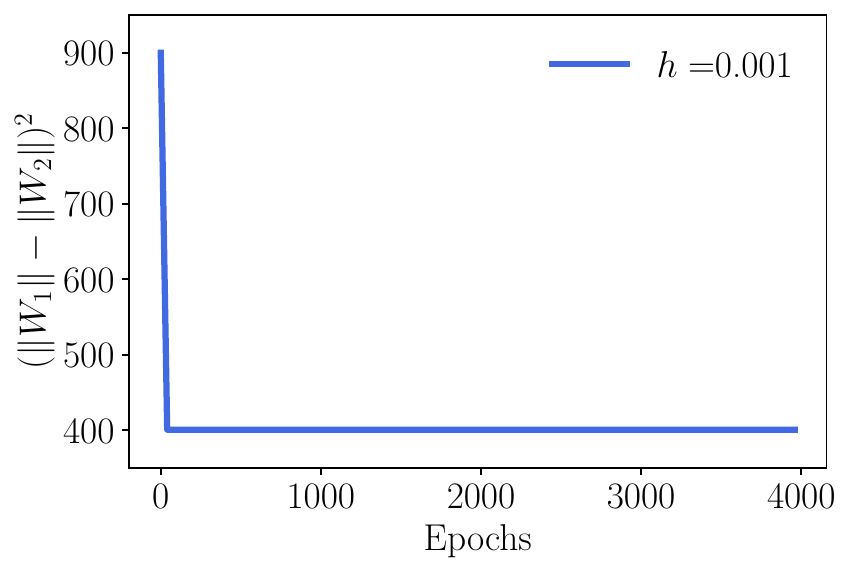}}
    \caption{Large learning rate phenomena on MNIST.}
    \label{fig:mnist_eos_balancing}
\end{figure}

\begin{figure}[h]
    \centering
    \includegraphics[width=0.8\textwidth]{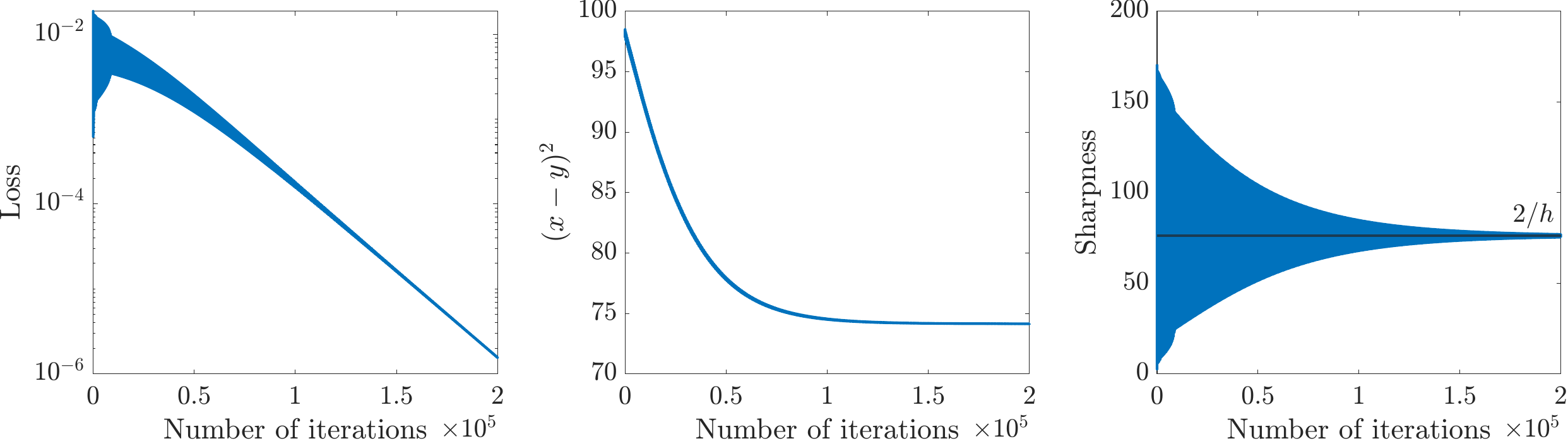}
    \caption{Appearance of large learning rate phenomena in `good' region of functions with bad regularity. The objective function is~\eqref{eqn:functions} with $b=3$; the initial condition is $x=10,y=0.11$; the learning rate is  $h=\frac{4}{(x_0^2+y_0^2+4)(x_0 y_0)^{2b-2}}$.}
    \label{fig:bad_regularity_but_large_lr_phenomena}
\end{figure}
\begin{figure}[h]
    \centering
    \includegraphics[width=0.8\textwidth]{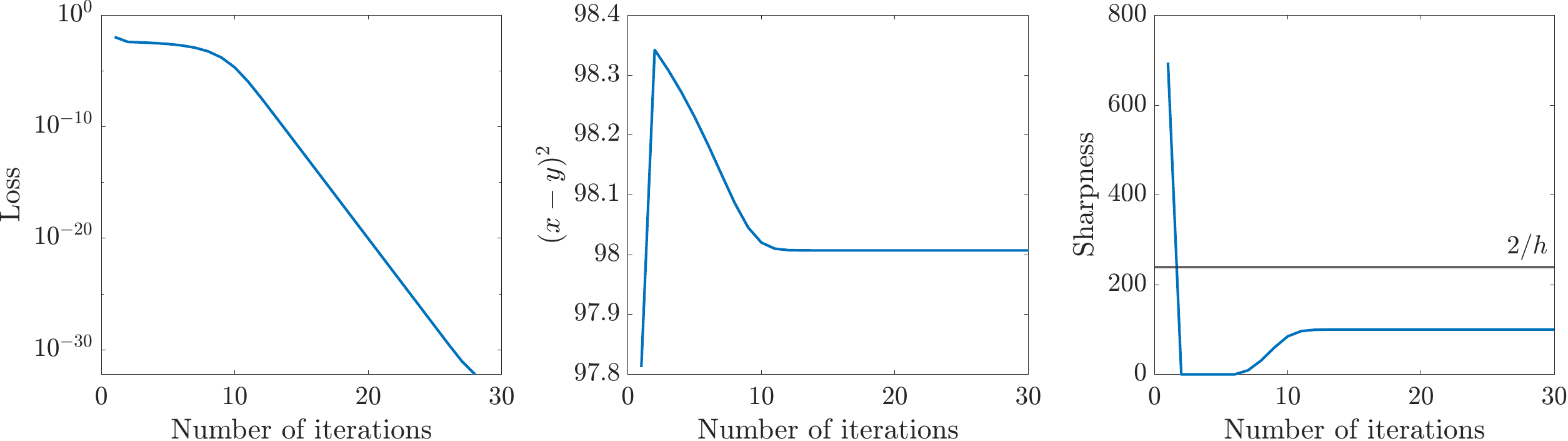}
    \caption{Disappearance of large learning rate phenomena in `good' region of functions with bad regularity. This Figure uses the same initialization and rule of choosing learning rate with $b=9$.}
    \label{fig:bad_regularity_but_large_lr_phenomena_no_eos}
\end{figure}

\section{Discussions}
\label{sec:discussions}

\noindent$\bigstar$ \textbf{Large learning rate phenomena under bad regularity.} Our findings indicate that, for a substantial set of initial conditions and within our family of functions, the occurrence of large learning rate phenomena becomes less likely 
as the regularity of objective function worsens (i.e., a larger degree of regularity). In the meantime, a small portion of `good' regions of initial conditions may still exist, for which the large learning rate phenomena (partially) persist even for functions with bad regularities. For example, when the initial condition is very close to the minima, Figure~\ref{fig:bad_regularity_but_large_lr_phenomena} shows that balancing and the limiting behavior (sharpness near $2/h$) of EoS still occur, although there is no progressive sharpening stage. A similar limiting behavior of EoS is theoretically described in~\cite{zhu2023understanding} with $b=2$. Figure~\ref{fig:bad_regularity_but_large_lr_phenomena_no_eos} shows that when we increase the degree of regularity, large learning rate phenomena will still disappear even if we use the same initialization as Figure~\ref{fig:bad_regularity_but_large_lr_phenomena}.

\vskip2pt
\noindent$\bigstar$ \textbf{Beyond our function class.}
Within our function class~\eqref{eqn:functions}, regularity is a sufficient and necessary condition for the implicit biases of large learning rate discussed here; however, beyond these functions, more elements can contribute to these phenomena.
One of them is symmetry. For example, in~\eqref{eqn:functions}, if we consider the asymmetrical version $\log(\exp(s-1)+1)$, the minimizer is at $-\infty$ with 0 Hessian. This means there is no `climb up the hill' for sharpness towards $2/h$ and thus no EoS. Additional factors may also affect these phenomena, such as the involvement of bias in the neural network, the usage of multiple data points, and more complicated network architectures. We will leave these for future explorations.

\vskip2pt
\noindent$\bigstar$ \textbf{The effect of depth.} Our framework currently lacks the analysis of the effect of depth, due to the fact that it is not a multivariate theory. Extending the current techniques and/or results to multivariate cases is highly nontrivial. 
Besides, there might be additional consequences or phenomena of increasing depth, for example, more complicated balancing between subsets of layers compared to the two-layer case, where balancing can only occur between the two layers. We will also need to leave this for future exploration.  

\vskip2pt
\noindent$\bigstar$ \textbf{Comparison between our analysis techniques and existing ones. } 
\citet{damian2022self} and~\citet{wang2022analyzing} analyzed the generic objective functions and therefore required extra assumptions involving information along the GD trajectory. These assumptions cannot be easily verified theoretically. In contrast, our work follows \citet{wang2022large}, \citet{zhu2023understanding}, \citet{ahn2022learning}, and \citet{song2023trajectory} by constructing explicit example functions (see detailed discussions in Section~\ref{subsec:regularity_Functions}). Specifically, 
inspired by~\citet{wang2022large}, we deal with more complicated objective functions that are structurally close to how neural network composes activation and loss functions. Our theory is enabled by a new analysis technique, namely scrutinizing the co-evolution of two auxiliary quantities\footnote{Note here both $x^2+y^2$ and $xy$ are generic expressions that work for various functions, as the former is the sharpness at minimizer, and the later is the minimizer; all our objective functions are designed such that these two expressions remain the same so that a comparison across different functions is fair.} $x^2+y^2$ and $xy$ through the entire GD process, which helps us provide a first characterization of the transition between the first two stages of EoS.

\section*{Acknowledgment}
YW and MT are partially supported by NSF DMS-1847802, NSF ECCS-1942523, Cullen-Peck Scholarship, and GT-Emory AI. Humanity Award. The authors thank Sinho Chewi, Yuejie Chi, Jeremy Cohen, Rong Ge, Francesco Orabona, Suvrit Sra for inspiring discussions, and John Zhang for proofreading the paper.

\bibliography{ref}

\clearpage
\appendix

\section*{Appendix}

\section{Necessary condition of convergence from stability theory}
\label{app:stability_analysis}

For an objective function $f(u)$, consider the GD update in terms of a iterative map $\psi$
\begin{align*}
    u_{k+1}=\psi (u_k):=u_k-h\nabla f(u_k).
\end{align*}
We further consider a stationary point $u^*$ of the objective function $f$, i.e., $\nabla f(u^*)=0$. Then this point $u^*$ is a fixed point of the map $\psi$ since
\begin{align*}
    u^*=u^*-h\nabla f(u^*)=\psi(u^*).
\end{align*}
If all the magnitudes of the eigenvalues of Jacobian matrix $\nabla\psi(u^*)$ are less than 1, $u^*$ is a stable fixed point (see more explanation in Section 5 of~\citet{wang2022large}). Consequently, we have the following theorem
\begin{theorem}[Necessary condition, see for example~\citet{NEURIPS2018_6651526b}]
\label{thm:stbility_analysis_2/L_bound}
    Let $u^*$ be a local minimum point of $f(u)$ and consider GD updates. If $-I\prec I-h\nabla^2f(u^*)\prec I$, i.e., $h<\frac{2}{L^*}$, where $\nabla^2f(u^*)\preceq L^*I$, we have that $u^*$ is a stable fixed point of GD map.
\end{theorem}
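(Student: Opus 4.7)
The plan is to establish $u^*$ as a stable fixed point by linearizing the GD map $\psi$ around $u^*$ and invoking a standard local contraction argument. First, since $u^*$ is a local minimum we have $\nabla f(u^*)=0$, hence $\psi(u^*)=u^*-h\nabla f(u^*)=u^*$, confirming that $u^*$ is a fixed point of the iteration map. Thus the question reduces to showing \emph{stability} of this fixed point under the iteration $u_{k+1}=\psi(u_k)$.

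Next, I would compute the Jacobian of $\psi$. Since $\psi(u)=u-h\nabla f(u)$, differentiating gives $\nabla\psi(u)=I-h\nabla^2 f(u)$, so at the fixed point $\nabla\psi(u^*)=I-h\nabla^2 f(u^*)$. The hypothesis $-I\prec I-h\nabla^2 f(u^*)\prec I$ is precisely the statement that all eigenvalues of this symmetric Jacobian lie strictly inside $(-1,1)$; equivalently, its spectral radius satisfies $\rho(\nabla\psi(u^*))=:\kappa<1$. The equivalence with $h<2/L^*$ follows from the fact that, since $u^*$ is a local minimum, the eigenvalues $\lambda_i$ of $\nabla^2 f(u^*)$ are nonnegative and bounded above by $L^*$, so the eigenvalues $1-h\lambda_i$ of $I-h\nabla^2 f(u^*)$ lie in $(1-hL^*,\,1]$, and $h<2/L^*$ gives $1-hL^*>-1$ (a strict positivity hypothesis on the Hessian of the minimum is implicit in the strict upper inequality $I-h\nabla^2f(u^*)\prec I$).

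The last step is to pass from linear stability to nonlinear stability via a contraction argument. Because $\nabla\psi(u^*)$ is symmetric, its operator $2$-norm equals its spectral radius, so $\|\nabla\psi(u^*)\|_{2}=\kappa<1$. Using continuity of $\nabla^2 f$ (which holds since $f$ is twice differentiable at $u^*$), there is an open neighborhood $U$ of $u^*$ on which $\|\nabla\psi(u)\|_{2}\le\tfrac{1+\kappa}{2}<1$. A mean-value bound then yields $\|\psi(u)-u^*\|=\|\psi(u)-\psi(u^*)\|\le\tfrac{1+\kappa}{2}\|u-u^*\|$ for $u\in U$, so a sufficiently small ball around $u^*$ is $\psi$-invariant and the iteration is a contraction on it. By the Banach fixed-point theorem the iterates converge geometrically to $u^*$, establishing stability.

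There is no real obstacle: once the hypothesis is recognized as the linearized contraction condition, the conclusion is a textbook consequence of the contraction mapping principle. The only mild subtleties are the need for $\nabla^2 f$ to be continuous at $u^*$ (used to enlarge the contraction bound to a neighborhood) and the implicit non-degeneracy of the minimum built into the strict inequality $I-h\nabla^2 f(u^*)\prec I$; if one only had $\nabla^2 f(u^*)\succeq 0$ with a zero eigenvalue, linear stability would be marginal and a higher-order analysis would be required, but this falls outside the strict hypothesis of the theorem.
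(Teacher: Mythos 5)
Your proof is correct and takes essentially the same route as the paper: identify $u^*$ as a fixed point of $\psi$, compute the Jacobian $I-h\nabla^2 f(u^*)$, and conclude stability from the condition that its eigenvalues lie in $(-1,1)$ --- the paper simply cites the standard linearization criterion at this point, whereas you spell out the underlying contraction-mapping argument. The only nitpick is that your neighborhood bound requires continuity of $\nabla^2 f$ near $u^*$ (which the paper guarantees by working with $f\in\mathcal{C}^2$), not merely twice-differentiability at the single point $u^*$.
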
 
Note the above theorem is a necessary condition of the convergence of GD to a minimizer. This is due to the fact that if $h>\frac{2}{L^*}$, there exists at least one eigendirection s.t. the magnitude of its eigenvalue is greater than 1. Namely, there will be an unstable direction of the map that prevents GD from converging towards the point $u^*$.

\section{Preparation for proofs}
\label{app:preparation_for_proofs}
Before the proofs, we first take a closer look at the GD iteration for the function $f(x,y)=F(xy)$~\eqref{eqn:functions}
\begin{align*}
\begin{pmatrix}x_{k+1}\\y_{k+1}\end{pmatrix}=\begin{pmatrix}x_{k}\\y_{k}\end{pmatrix}-h \ell_k \begin{pmatrix}
        0 & 1\\ 1& 0  \end{pmatrix}\begin{pmatrix}x_{k}\\y_{k}\end{pmatrix}=\begin{pmatrix}
        1 & -h\ell_k\\ -h\ell_k& 1  \end{pmatrix}\begin{pmatrix}x_{k}\\y_{k}\end{pmatrix},
\end{align*}
where $\ell_k=F'(x_ky_k)$. Let $u_k=\begin{pmatrix}
    x_k\\y_k
\end{pmatrix}$. Then
\begin{align}
\label{eqn:u^top_u_one_step}
    u_{k+1}^\top u_{k+1}=(1+h^2\ell_{k}^2)u_{k}^\top u_{k}-4h\ell_{k} x_{k} y_{k},
\end{align}
\begin{align}
\label{eqn:u^top_u_two_steps}
    u_{k+2}^\top u_{k+2}&=(1+h^2\ell_{k+1}^2)u_{k+1}^\top u_{k+1}-4h\ell_{k+1} x_{k+1} y_{k+1}\notag\\
    &=(1+h^2\ell_{k+1}^2)((1+h^2\ell_{k}^2)u_{k}^\top u_{k}-4h\ell_{k} x_{k} y_{k})-4h\ell_{k+1} x_{k+1} y_{k+1}.
\end{align}

\begin{lemma}
\label{lem:u_k+1_upper_bounded_by_u_k}
    Under the samewowo assumption as Theorem~\ref{thm:good_regularity_convergence}, we have
\begin{align*}
    u_{k+1}^\top u_{k+1}\lesssim u_k^\top u_k-4h\ell_k (x_k y_k-\ell_k)
\end{align*}
\end{lemma}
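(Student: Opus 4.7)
My starting point is the exact one-step identity~\eqref{eqn:u^top_u_one_step},
\begin{equation*}
u_{k+1}^\top u_{k+1}=(1+h^2\ell_k^2)\,u_k^\top u_k-4h\ell_k\,x_k y_k.
\end{equation*}
Subtracting the target right-hand side $u_k^\top u_k-4h\ell_k(x_ky_k-\ell_k)$ gives
\begin{equation*}
u_{k+1}^\top u_{k+1}-\bigl[u_k^\top u_k-4h\ell_k(x_ky_k-\ell_k)\bigr]=h^2\ell_k^2\,u_k^\top u_k-4h\ell_k^2 \;=\; h\ell_k^2\bigl(h\,u_k^\top u_k-4\bigr).
\end{equation*}
So the lemma reduces entirely to establishing the scalar inequality $h\,u_k^\top u_k\lesssim 4$ along the entire GD trajectory, where $\lesssim$ allows the universal logarithmic slack $c\log(x_0^2+y_0^2)/(x_0^2+y_0^2)$ built into the paper's notation.

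At iteration $0$ this is immediate because the theorem's hypothesis fixes $h=C/(x_0^2+y_0^2)$ with $C\le 4$, giving $h\,u_0^\top u_0=C\le 4$. The core task is to propagate this bound inductively. The plan is to use \eqref{eqn:u^top_u_one_step} directly: one has
\begin{equation*}
u_{k+1}^\top u_{k+1}-u_k^\top u_k \;=\; h\ell_k\bigl(h\ell_k\,u_k^\top u_k-4x_ky_k\bigr),
\end{equation*}
and the bound $h\,u_k^\top u_k\le 4$ together with $q_k\le 1$ (from \eqref{eqn:q(delta)_bound_c_1}) and $\ell_k=q_k(x_ky_k-1)$ yields $|h\ell_k|\le 4|x_ky_k-1|/u_k^\top u_k$, which is small since the initial condition set forces $x_ky_k$ to lie in a bounded window around $1$ and the denominator is $\gtrsim c_1^{-4/3}$. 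Thus $u_{k+1}^\top u_{k+1}$ can only exceed $u_k^\top u_k$ by a term of order $\log(x_0^2+y_0^2)/(x_0^2+y_0^2)$ per step; chaining this over the iteration count required for convergence, together with the absorbing logarithmic factor encoded in $\lesssim$, gives $h\,u_k^\top u_k\le 4$ up to the allowed slack.

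The main obstacle is this propagation step: showing that although $u_k^\top u_k$ can drift (and indeed must drift, since $S_\infty\approx 2/h$ differs from $S_0=x_0^2+y_0^2$ in general), the drift remains within the $\lesssim$ tolerance. I would carry this out by keeping the two-stage structure of \emph{Phase 1} (escaping sharp regions) and \emph{Phase 2} (contracting inside the flat region) in mind: in \emph{Phase 1}, $|x_ky_k-1|$ is small by the initial-condition hypothesis $1<x_0y_0<M_1(a)$, which forces the per-step change in $u^\top u$ to be of the admissible logarithmic order; in \emph{Phase 2}, the geometric contraction of $x_ky_k-1$ ensures only a summable cumulative contribution. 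Once $h\,u_k^\top u_k\le 4$ is in hand, substituting back into $h\ell_k^2(h\,u_k^\top u_k-4)\le 0$ concludes
\begin{equation*}
u_{k+1}^\top u_{k+1}\lesssim u_k^\top u_k-4h\ell_k(x_ky_k-\ell_k),
\end{equation*}
as desired.
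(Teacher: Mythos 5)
Your algebraic reduction coincides exactly with the paper's first step: rearranging \eqref{eqn:u^top_u_one_step} gives
\begin{equation*}
u_{k+1}^\top u_{k+1}-\bigl[u_k^\top u_k-4h\ell_k(x_ky_k-\ell_k)\bigr]=h\ell_k^2\bigl(h\,u_k^\top u_k-4\bigr),
\end{equation*}
so the lemma reduces to a uniform upper bound on $u_k^\top u_k$. The paper simply invokes Lemma~\ref{lem:u^2_bounded_by_4/h+h}, which gives the sharper statement $u_k^\top u_k\le \frac{4}{h}+\mathcal{O}(h)$ and hence the error term $\mathcal{O}(h^3\ell_k^2)$ appearing in the paper's version of the conclusion.

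The gap is in how you propose to obtain that bound. Chaining a per-step drift of order $h$ over the trajectory does not close: the oscillatory phase before contraction lasts on the order of $1/h$ (indeed up to $\mathcal{O}(-\log h/h)$) iterations, per the complexity accounting in the proof of Theorem~\ref{thm:eos_good_regularity}, so a naive sum of $\mathcal{O}(h)$ increments yields an $\mathcal{O}(1)$ cumulative drift --- far outside the $\lesssim$ tolerance, and certainly enough to destroy the sign of $h\,u_k^\top u_k-4$ at the $\mathcal{O}(h^2)$ precision needed. Your picture of the dynamics is also not the one that holds here: in the escape phase $|x_ky_k-1|$ is not kept small by the initialization; it grows roughly geometrically, $\delta_{k+1}\gtrsim\delta_k(3-\delta_k^2)$ (see the proof of Lemma~\ref{lem:u^2_bounded_by_4/h+h}), until it is of order $1$, and the iterates then oscillate across $xy=1$ with $|x_ky_k-1|=\mathcal{O}(1)$ for polynomially many steps. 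What actually controls $u_k^\top u_k$ is a cancellation your plan does not use: the one-step increase occurs precisely on the half-steps with $0<x_ky_k<1$, and Lemma~\ref{lem:u^2_decrease_every_two_step} shows that, paired with the adjacent $x_ky_k>1$ half-step, the net two-step change $-4h\,[L(\delta_k)+L(r_k\delta_k)]+\mathcal{O}(h^3\ell_{k+1}^2)$ is nonpositive except in the transient regime $r_k\le-1$, $0<x_ky_k-1<1$, which is exited after only $\mathcal{O}(-\log\delta_0)$ steps. That two-step pairing, not a one-step drift estimate, is the substance of Lemma~\ref{lem:u^2_bounded_by_4/h+h}; without it, or an equivalent device, your induction hypothesis $h\,u_k^\top u_k\lesssim 4$ cannot be propagated.
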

\begin{proof}
    By Lemma~\ref{lem:u^2_bounded_by_4/h+h}, we have $$u_k^\top u_k\le\frac{4}{h}+\mathcal{O}(h).$$ Then by~\eqref{eqn:u^top_u_one_step}, we have
    \begin{align*}
    u_{k+1}^\top u_{k+1}\le u_k^\top u_k-4h\ell_k (x_k y_k-\ell_k)+\mathcal{O}(h^3\ell_k^2)
\end{align*}
\end{proof}

For the update of $x_ky_k$, we have
\begin{align}
\label{eqn:xy_update}
    x_{k+1}y_{k+1}=(1+h^2\ell_k^2)x_k y_k-h\ell_k u_k^\top u_k,
\end{align}
\begin{align}
\label{eqn:xy-1_update}
    x_{k+1}y_{k+1}-1=(x_k y_k-1)\left(1-h\frac{\ell_k}{x_ky_k-1}(u_k^\top u_k-h\ell_k x_k y_k)\right).
\end{align}

Let $\delta:=xy-1$ and $\delta_k:=x_ky_k-1$. Then we define the following functions:
$$\ell(\delta)=F'(\delta+1),\quad\text{ and then }\ell_k=\ell(\delta_k)=F'(x_ky_k);$$
$$q(\delta)=\frac{\ell(\delta)}{\delta},\quad\text{ and then }q(\delta_k)=\frac{\ell_k}{x_ky_k-1};$$
$$r(u_k,\delta)=1-hq(\delta)(u_k^\top u_k-h\ell(\delta) (\delta+1)),\quad\text{ and then }r_k=r(u_k,\delta_k)=1-h\frac{\ell_k}{x_ky_k-1}(u_k^\top u_k-h\ell_k x_k y_k).$$ 

Let $$C_k=\frac{1-r_k}{q(\delta_k)},\quad\text{ and then }r_k=1-C_kq(\delta_k).$$

From Lemma~\ref{lem:u_k+1_upper_bounded_by_u_k}, we also define
$$L(\delta)=\ell(\delta)(\delta+1-\ell(\delta)),\text{ and then }L(\delta_k)=\ell_k (x_k y_k-\ell_k).$$
All the above functions also depend on $a$ or $b$. If not specified, all the properties for these functions used in the proofs are valid for all $0<a\le1$ or $b=2n+1$ with $n\in\mathbb{N}$.


\section{Proofs of results for functions~\eqref{eqn:functions} with $0<a\le 1$}
\label{app:proof_good_regularity}

Consider function $$f(x,y)=\frac{(\log (\exp (x y-1)+1)+\log (\exp (1-x y)+1))^a}{a 2^{a-2} \log ^{a-1}(2)}\text{ with }0<a\le 1.$$

The following propositions show the properties of the functions defined above under this objective. The proof is based on simple analysis and Taylor expansion and is thus omitted. If not specified, these properties are independent of $a$.
\begin{proposition}
\label{prop:ell}
The function $\ell(\delta)=\frac{2^{2-a} \left(e^{\delta }-1\right) \log ^{1-a}(2) \left(\log \left(e^{-\delta }+1\right)+\log \left(e^{\delta }+1\right)\right)^{a-1}}{\left(e^{\delta }+1\right) }$ has the following properties:
\begin{itemize}
    \item $\ell(\delta)=-\ell(-\delta)$
    \item $\ell(\delta)>0$ for $\delta>0$.
    \item $|\ell(\delta)|\le 3$ for all $\delta>0$.
\end{itemize}

\end{proposition}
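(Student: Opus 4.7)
The plan is to reformulate $\ell(\delta)$ in a more transparent way and then handle the three assertions in turn. Setting $g(\delta):=\log(e^{-\delta}+1)+\log(e^{\delta}+1)$ and using the identity $(e^\delta-1)/(e^\delta+1)=\tanh(\delta/2)$, I would rewrite
\[
\ell(\delta)=2^{2-a}\,\log^{1-a}(2)\,\tanh(\delta/2)\,g(\delta)^{a-1}.
\]
This factorization isolates the sign and $\delta$-dependence cleanly: the first two factors are positive constants, $\tanh(\delta/2)$ carries the sign, and $g(\delta)^{a-1}$ is strictly positive and even.

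For the odd symmetry (i), I would note that $g(-\delta)=g(\delta)$ (the two logarithm terms swap), so $g(\delta)^{a-1}$ and the constant prefactor are even, while $\tanh(\delta/2)$ is odd; hence $\ell(-\delta)=-\ell(\delta)$. For the positivity claim (ii), on $\delta>0$ each of the four factors is strictly positive: $2^{2-a}>0$, $\log^{1-a}(2)>0$, $\tanh(\delta/2)>0$, and $g(\delta)>0$ (so $g(\delta)^{a-1}>0$). Hence $\ell(\delta)>0$ on $(0,\infty)$.

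The only step requiring a small argument is the uniform bound (iii). Differentiating gives $g'(\delta)=\tanh(\delta/2)$, so $g$ is strictly increasing on $(0,\infty)$ and attains its minimum at $\delta=0$ with $g(0)=2\log 2$. Because $a-1\le 0$, the map $t\mapsto t^{a-1}$ is nonincreasing on $(0,\infty)$, and hence
\[
g(\delta)^{a-1}\le (2\log 2)^{a-1}=2^{a-1}\log^{a-1}(2) \quad\text{for all }\delta.
\]
Combining this with $\tanh(\delta/2)<1$ and checking that the exponents of $2$ and $\log 2$ in the resulting product sum to $(2-a)+(a-1)=1$ and $(1-a)+(a-1)=0$ respectively, I get
\[
\ell(\delta)\le 2^{2-a}\log^{1-a}(2)\cdot 2^{a-1}\log^{a-1}(2)=2<3.
\]
Together with the oddness in (i), this yields $|\ell(\delta)|<3$ for every $\delta$. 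The only mildly delicate step is the bookkeeping of the $a$-dependent exponents when the constants collapse; there is no genuine analytic obstacle.
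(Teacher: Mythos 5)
Your proof is correct and complete. The paper itself omits the proof of this proposition entirely (it states only that the argument is ``based on simple analysis and Taylor expansion''), so there is no written proof to compare against; your factorization $\ell(\delta)=2^{2-a}\log^{1-a}(2)\,\tanh(\delta/2)\,g(\delta)^{a-1}$ is a clean way to carry out that simple analysis, and it avoids Taylor expansion altogether. The oddness and positivity claims follow immediately from the parity and sign of the factors as you say, and the key step for the bound --- that $g'(\delta)=\tanh(\delta/2)$, so $g$ attains its minimum $2\log 2$ at $\delta=0$, combined with $t\mapsto t^{a-1}$ being nonincreasing for $0<a\le 1$ --- is verified correctly (including the edge case $a=1$, where $g^{a-1}\equiv 1$). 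The exponent bookkeeping $2^{(2-a)+(a-1)}\log^{(1-a)+(a-1)}(2)=2$ is right, so you in fact establish the sharper bound $|\ell(\delta)|\le 2$, which of course implies the stated $|\ell(\delta)|\le 3$.
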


\begin{proposition}
\label{prop:L(delta)}
    The function $L(\delta)=\ell(\delta)(\delta+1-\ell(\delta))$ has the following properties:
    \begin{itemize}
        \item $L(\delta)$ monotonically increases for $\delta\ge 0$, and $L(\delta)\ge L(0)$ for $\delta\ge 0$.
        \item $L(\delta)+L(-\delta)\ge 0$ for all $\delta$.
        \item $L(\delta_1)+L(\delta)\ge 0$ for $\delta_1\ge 1$ and all $\delta$.
        \item $L(\delta)+L(r\delta)\ge 0.8(1+r)\delta$ for $\delta>0$ and $-1<r<0$.
    \end{itemize}
\end{proposition}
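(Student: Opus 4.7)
Plan. All four claims hinge on the explicit factorization $L(\delta)=\ell(\delta)(\delta+1-\ell(\delta))$, and the main tools I would use are the oddness of $\ell$ from Proposition~\ref{prop:ell}, the uniform bound $|\ell|\le 3$, and the pinch $q(\delta)=\ell(\delta)/\delta\in[1-c_1\delta^2,1]$ from~\eqref{eqn:q(delta)_bound_c_1}. I would dispatch the bullets in order of increasing difficulty: (2), (1), (3), (4).

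The second bullet is a one-line consequence of oddness: $L(\delta)+L(-\delta)=2\ell(\delta)(\delta-\ell(\delta))=2\delta^2\,q(\delta)(1-q(\delta))\ge 0$ since $q(\delta)\in[0,1]$. For the first bullet I would differentiate to $L'(\delta)=\ell(\delta)+\ell'(\delta)(\delta+1-2\ell(\delta))$ and split into two regimes. On $\delta\in[0,1]$ the bound $\ell(\delta)\le\delta$ forces $\delta+1-2\ell(\delta)\ge 1-\delta\ge 0$, and $\ell'\ge 0$ can be read off the closed form in Proposition~\ref{prop:ell}, so $L'\ge\ell\ge 0$. For $\delta\ge 1$, $\ell$ may eventually decrease when $a<1$ (since $\ell(\delta)\to 0$ asymptotically), and the key inequality becomes $\ell(\delta)\ge -\ell'(\delta)(\delta+1-2\ell(\delta))$, which I would verify by direct substitution of the closed form for $\ell'$ together with $|\ell|\le 3$. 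Combined with $L(0)=0$, this gives $L(\delta)\ge 0$ on $[0,\infty)$.

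By monotonicity, the third bullet reduces to showing $L(1)\ge\sup_{\delta\in\mathbb{R}}(-L(\delta))$; on $\delta\ge 0$ the right side is already $\le 0$, so writing $\delta=-t$ with $t>0$, one has $-L(-t)=\ell(t)(1-t+\ell(t))$, which I would bound above using $\ell(t)\le t$ and $|\ell|\le 3$ in the two sub-regimes $t\le 1$ and $t>1$, with the maximizer sitting near $t=1$ and yielding at most $\ell(1)(2-\ell(1))=L(1)$. For the fourth bullet, set $\delta'=|r|\delta\in(0,\delta)$; oddness gives $L(r\delta)=\ell(\delta')(\delta'-1-\ell(\delta'))$, and a bookkeeping of terms yields $L(\delta)+L(r\delta)=(\ell(\delta)-\ell(\delta'))+\delta\ell(\delta)(1-q(\delta))+\delta'\ell(\delta')(1-q(\delta'))\ge\ell(\delta)-\ell(\delta')$. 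The desired $0.8(\delta-\delta')=0.8(1+r)\delta$ should then follow from $(1-c_1 s^2)s\le\ell(s)\le s$, and in the pinch regime where $c_1\delta^2$ is not small I would retain the non-negative quadratic term $\delta'\ell(\delta')(1-q(\delta'))$ to absorb the multiplicative loss.

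The anticipated main obstacles are bullets (1) and (4). Monotonicity of $L$ past $\delta=1$ for $a<1$ is not inherited from monotonicity of $\ell$ (which is itself non-monotone on $[0,\infty)$ when $a<1$), so it requires trading the dominant $\ell$ term against $\ell'(\delta+1-2\ell)$ via the closed form. The constant $0.8$ in (4) is the numerical bottleneck; depending on how tight $c_1$ from~\eqref{eqn:q(delta)_bound_c_1} is across $a\in(0,1]$, it may require a finer case split in $(\delta,r)$, a sharpened version of~\eqref{eqn:q(delta)_bound_c_1} on a restricted range of $\delta$, or folding in higher-order Taylor information of $\ell$ near the origin.
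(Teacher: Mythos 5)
First, a caveat: the paper does not actually prove this proposition — it is dismissed as "simple analysis and Taylor expansion" and omitted — so there is no argument of record to compare yours against line by line. Your bullets (1) and (2) are fine: the oddness identity $L(\delta)+L(-\delta)=2\ell(\delta)(\delta-\ell(\delta))=2\delta^2q(\delta)(1-q(\delta))\ge0$ is surely the intended one-liner, and your split of the monotonicity proof at $\delta=1$ is sound (on $[0,1]$ one checks from the closed form $\ell(\delta)=2\tanh(\delta/2)\bigl(\log 2/\log(2\cosh(\delta/2))\bigr)^{1-a}$ that $\ell'\ge0$ there, and $\delta+1-2\ell\ge1-\delta\ge0$; the regime $\delta\ge1$, where $\ell'$ changes sign for $a<1$, is the real content and remains a promissory note in your plan, but the trade-off $\ell\ge-\ell'(\delta+1-2\ell)$ does go through).

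The genuine gaps are in (3) and (4). For (3), the reduction to $L(1)\ge\sup_{t>0}\ell(t)(1-t+\ell(t))$ is correct, but the tools you propose for $t>1$ — $\ell(t)\le t$ and $|\ell|\le3$ — cannot close it: they yield bounds like $3(4-t)$, i.e. constants of order $4$ to $9$, whereas the inequality is numerically tight. At $a=1$ the supremum is $\approx0.979$, attained near $t\approx1.49$ (not near $t=1$), against $L(1)\approx0.994$ — a margin of about $1.5\%$ — so one must genuinely control the shape of $\ell$ on $(1,3)$ rather than its crude envelope. For (4), your decomposition $L(\delta)+L(r\delta)=(\ell(\delta)-\ell(\delta'))+\delta\ell(\delta)(1-q(\delta))+\delta'\ell(\delta')(1-q(\delta'))$ with $\delta'=|r|\delta$ is correct and useful, but no completion of it can succeed for all $\delta>0$, because the bullet as stated is false there: for every $a<1$ one has $\ell(\delta)=\Theta(\delta^{a-1})$ and hence $L(\delta)=\Theta(\delta^{a})=o(\delta)$ as $\delta\to\infty$, so letting $r\to0^-$ the right-hand side tends to $0.8\delta$ while the left tends to $L(\delta)$, and the inequality fails for large $\delta$ (concretely, $a=0.1$, $\delta=4$, $r=-0.01$ gives $L(\delta)+L(r\delta)\approx3.10<3.17\approx0.8(1+r)\delta$). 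The claim is only invoked in the paper (Lemma~\ref{lem:u^2_decrease_every_two_step}, case 2) with $\delta=x_ky_k-1\in(0,1]$, where it does hold; a correct proof must carry that restriction. On $(0,1]$ your plan works, but note that even at $a=1$ one has $\ell'(1)=\mathrm{sech}^2(1/2)\approx0.786<0.8$, so the difference term alone undershoots near $\delta=1$ and the quadratic terms (of size $\approx0.074$ there) must be invoked to absorb the deficit, exactly as you anticipated.
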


\begin{proposition}
\label{prop:q}
The function $q(\delta)=\frac{\ell(\delta)}{\delta}$ has the following properties:
\begin{itemize}
    \item $q(\delta)$ is symmetric with respect to $\delta=0$, i.e., $q(\delta)=q(-\delta)$. 
    \item For $\delta\ge0$, $q(\delta)$ monotonically decreases as $\delta$ increases, and then $q(\delta)\le q(0)=1$. 
    \item  From Taylor expansion, we have $$q(\delta)\ge 1-c_1 \delta^2,\text{ where }c_1=-\frac{-3+3 a-2 \log(2)}{24 \log(2)}>0.$$ Also, when $\delta\le 1$, 
    $$q(\delta)\le 1-c_2 \delta^2,\text{ where }c_2=-\frac{-3+3 a-2 \log(2)}{30 \log(2)}.$$  
    \item For $2\le C\le 4$, $$\delta=q^{-1}(\frac{1}{C})\le (1+a)C.$$
\end{itemize}
   
\end{proposition}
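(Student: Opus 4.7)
The plan is to first simplify $\ell$ using the identities $\log(e^\delta+1)+\log(e^{-\delta}+1)=2\log 2+2\log\cosh(\delta/2)$ and $(e^\delta-1)/(e^\delta+1)=\tanh(\delta/2)$, which together yield the clean factorization
\begin{equation*}
\ell(\delta)=2\tanh(\delta/2)\Bigl(1+\tfrac{\log\cosh(\delta/2)}{\log 2}\Bigr)^{a-1},\qquad q(\delta)=\frac{\tanh(\delta/2)}{\delta/2}\Bigl(1+\tfrac{\log\cosh(\delta/2)}{\log 2}\Bigr)^{a-1}.
\end{equation*}
Each of the two factors is even in $\delta$, so $q(\delta)=q(-\delta)$, proving symmetry. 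On $\delta>0$, both factors are decreasing and bounded above by their value $1$ at the origin: indeed $\tanh(u)/u$ decreases from $1$ to $0$ on $(0,\infty)$ (a one-line verification via $\tfrac{d}{du}[\tanh(u)-u\,\mathrm{sech}^2(u)]=2u\,\mathrm{sech}^2(u)\tanh(u)\ge 0$), and the second factor is decreasing because $1+\log\cosh(u)/\log 2\ge 1$ is increasing while the exponent $a-1$ is nonpositive. Hence $q$ is strictly decreasing on $[0,\infty)$ with $q(0)=1$, giving the first two bullets and the upper bound $q\le 1$.

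For the quadratic estimates, I will Taylor expand each factor at the origin: $\tanh(u)/u=1-u^2/3+2u^4/15+O(u^6)$, $\log\cosh(u)=u^2/2-u^4/12+O(u^6)$, and $(1+v)^{a-1}=1+(a-1)v+\tfrac{(a-1)(a-2)}{2}v^2+O(v^3)$. Substituting $u=\delta/2$ and collecting terms yields
\begin{equation*}
q(\delta)=1-c_1\delta^2+\alpha\,\delta^4+O(\delta^6),
\end{equation*}
where $c_1$ matches the stated formula and the quartic coefficient is $\alpha=\tfrac{1}{16}\bigl[\tfrac{2}{15}-\tfrac{a-1}{4\log 2}+\tfrac{(a-1)(a-2)}{8(\log 2)^2}\bigr]$, a sum of three nonnegative quantities for $a\in(0,1]$; in particular $\alpha>0$.

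The main obstacle is upgrading these local expansions to the claimed global/semi-global bounds. For $q(\delta)\ge 1-c_1\delta^2$, the inequality is trivial when $c_1\delta^2\ge 1$ since $q\ge 0$, so it suffices to handle the compact interval $|\delta|\le c_1^{-1/2}$. On this interval I would introduce $\phi(\delta):=1-q(\delta)-c_1\delta^2$, note that $\phi$ has a fourth-order zero at the origin with leading coefficient $-\alpha<0$, and establish $\phi\le 0$ by a direct calculus check on the explicit smooth factored form of $q$, for example by verifying the sign of $\phi''$ on $(0,c_1^{-1/2}]$. For $q(\delta)\le 1-c_2\delta^2$ on $|\delta|\le 1$, the slack $c_1-c_2=c_1/5$ can absorb the quartic correction: upper bounding $\alpha$ by a crude constant and using $\delta^4\le\delta^2$ for $|\delta|\le 1$ reduces the claim to a numerical inequality.

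Finally, for $q^{-1}(1/C)\le(1+a)C$ on $2\le C\le 4$, monotonicity of $q$ reduces the statement to the pointwise bound $q((1+a)C)\le 1/C$ over the compact region $(a,C)\in(0,1]\times[2,4]$. Substituting $u=(1+a)C/2\in[1,4]$ into the factored expression and combining elementary bounds (such as $\tanh(u)<1$ together with $1+\log\cosh(u)/\log 2\ge u/\log 2$, which follows from $\cosh(u)\ge e^u/2$) reduces the statement to a smooth two-variable inequality that can be verified explicitly; the bound is essentially tight at $a=1$, $C=4$, where it reads $\tanh(4)/4<1/4$.
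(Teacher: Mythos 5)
The paper does not actually supply a proof of this proposition (it is declared to follow from ``simple analysis and Taylor expansion'' and omitted), so there is no argument of record to compare against. Your factorization $q(\delta)=\frac{\tanh(\delta/2)}{\delta/2}\bigl(1+\log\cosh(\delta/2)/\log 2\bigr)^{a-1}$ is a clean way to organize exactly that kind of argument; the evenness, the monotonicity and $q\le q(0)=1$, and your identification of the quadratic coefficient $c_1$ and of the positive quartic coefficient $\alpha$ are all correct.

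However, two of your quantitative reductions fail as stated. For $q(\delta)\le 1-c_2\delta^2$ on $|\delta|\le 1$, absorbing the quartic term via $\delta^4\le\delta^2$ requires in effect $\alpha\le c_1-c_2=c_1/5$, and this is false for small $a$: as $a\to 0^+$ one has $\alpha\approx 0.063$ while $c_1/5\approx 0.053$. Indeed the degree-$4$ Taylor polynomial at $\delta=1$ equals $1-c_1+\alpha\approx 0.800$, which already exceeds $1-c_2\approx 0.789$; the true inequality $q(1)\approx 0.7877\le 0.7891$ holds only by about $1.3\times 10^{-3}$, thanks to higher-order negative corrections. So no crude bound on $\alpha$ can close this bullet, and one must verify the inequality directly on the compact set $(a,\delta)\in(0,1]\times[0,1]$. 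For the last bullet, the elementary bounds you name are too lossy on part of the region: at $a=0.1$, $C=2$ (so $u=1.1$), replacing $\tanh(u)$ by $1$ and $1+\log\cosh(u)/\log 2$ by $u/\log 2$ gives the estimate $\frac{2}{2.2}\bigl(2.2/(2\log 2)\bigr)^{-0.9}\approx 0.60$ for $q((1+a)C)$, which exceeds the target $1/C=0.5$ (the true value is about $0.44$). Both claims are nevertheless true --- the extremal case of the last bullet really is $a=1$, $C=4$, where $\tanh(4)/4<1/4$ by a margin under $2\times 10^{-4}$ --- so the fix in both places is a direct verification of a smooth inequality on a compact parameter set rather than the relaxations you propose.
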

From the above propositions, we will use $\delta_k=|x_ky_k-1|$ instead of $x_ky_k-1$ for the rest of the proofs.

\subsection{Proof of convergence}
\begin{proof}[Proof of Theorem~\ref{thm:good_regularity_convergence}]

By Proposition~\ref{prop:q} and Proposition~\ref{prop:ell},
$$r_k=1-hq(\delta_k)(u_k^\top u_k-h\ell_k x_k y_k)<1.$$ As is discussed in Lemma~\ref{lem:does_not_converge_outside_2/h}, the initial condition set removes a null set of converging initial conditions within finite steps, i.e., $r_k\ne 0$. By our choice of initial condition and learning rate $h$,
\begin{align*}
    r_0&\le 1-\big(\frac{2/q(1)}{u_0^\top u_0}+\mathcal{O}(h^2) \big)q\left(q^{-1}\big(\frac{q(1)}{2}\big)(u_0^\top u_0-\mathcal{O}(h))\right)\\&=1-(1-\mathcal{O}(h^2)+\mathcal{O}(h)-\mathcal{O}(h^3))=-\mathcal{O}(h)<0
\end{align*}
By Lemma~\ref{lem:r_k_negative_for_all}, for all $n\ge k$, if $x_ny_n>1$, then $x_{n+1}y_{n+1}<1$ and vice versa. Then we can consider the $k$th iteration when $x_ky_k>1$ and we have $x_{k+2n}>1$ for $n=1,\cdots,$.

By Lemma~\ref{lem:xy-1_decrease_final_phase}, $|x_ky_k-1|$ is guaranteed to decrease monotonically in $|x_ky_k-1|\le R_2(a)$, with $|r_k|<1$. Therefore, GD will converge to $xy=1$ (otherwise, if $x_ky_k$ converges to $c\ne 1$, with $|r_k|<1$, $|x_ky_k-1|$ will keep decreasing, contradiction).
\end{proof}

\subsection{Proofs of EoS}
\begin{proof}[Proof of Theorem~\ref{thm:eos_good_regularity} Part II: limiting sharpness]

By Theorem~\ref{thm:good_regularity_convergence}, GD converges to a global minimum. Throughout this proof, we will use the big $\mathcal{O}$ notation for complexity and distance. The proof can be made more rigorous by considering some specific constant scaling of these orders.

By the lower bound of $h$, $u_0^\top u_0\gtrsim \frac{2/q(1)+\mathcal{O}(h)}{h}$, which implies $|r_0(1)|\ge 1+\mathcal{O}(h)$. Then we consider the decrease of $u_k^\top u_k$ until GD enters the region $|r_k|<1$. More precisely, we consider three regions: 1) when $|\delta_k|$ is small enough s.t. $u_k^\top u_k$ does not decrease every two steps. By the following bound, 
\begin{align*}
    u_{k+2}^\top u_{k+2}&=(1+h^2\ell_{k+1}^2)u_{k+1}^\top u_{k+1}-4h\ell_{k+1} x_{k+1} y_{k+1}\\
    &=(1+h^2\ell_{k+1}^2)((1+h^2\ell_{k}^2)u_{k}^\top u_{k}-4h\ell_{k} x_{k} y_{k})-4h\ell_{k+1} x_{k+1} y_{k+1}\\
    &\ge u_k^\top u_k
-4h[\ell_k x_k y_k-\frac{1}{2}\ell_{k}^2+\ell_{k+1} x_{k+1}y_{k+1}-\frac{1}{2}\ell_{k+1}^2]-4h^3\ell_{k+1}^2\ell_k x_k y_k+h^4\ell_{k+1}^2\ell_k^2u_k^\top u_k\\
&\ge u_k^\top u_k-4h[(1+r_k)(x_ky_k-1)+\frac{1}{2}(1+r_k^2)(x_ky_k-1)^2]-4h^3\ell_{k+1}^2\ell_k x_k y_k+h^4\ell_{k+1}^2\ell_k^2u_k^\top u_k,
\end{align*}
we have that such $|\delta_k|\le\mathcal{O}(h)$. 2) Starting from $|\delta_k|=\mathcal{O}(h)$, consider $|\delta_k|$ increases to some region that is $\mathcal{O}(h)$ away from 1, i.e., the rate of increase is at least $1+\mathcal{O}(h)$. In this region, it takes GD at most $\mathcal{O}(-\log h/h)$ steps and $u_k^\top u_k$ may decrease every two steps, where the order of decrease is $\mathcal{O}(h)$. 3) Starting from the end of 2), if the rate of increase is at least $1+\mathcal{O}(h)$, then the complexity of entering the region $|r_k|<1$ is at most $\mathcal{O}(1/h)$ which follows the same derivation as 2). Otherwise, i.e., the rate of increase is less than $1+\mathcal{O}(h)$. Since the decrease of $u_k^\top u_k$ is $\mathcal{O}(h)$ and $q(\delta_k)$ keep decreasing before $|r_k|<1$, it takes GD at most $\mathcal{O}(1/h)$ steps such that $|r_k|$ is $\mathcal{O}(h)$ less than $1+\mathcal{O}(h)$, i.e., $|r_k|<1$.
Therefore, the overall decrease of $u_k^\top u_k$ is at most $\tilde{\mathcal{O}}(1)$ and we still have $u_k^\top u_k\ge \frac{2/q(1)-\tilde{\mathcal{O}(h)}}{h}$ at the end of all the above processes. Also, according to Lemma~\ref{lem:u^2_decrease_every_two_step}, $u_k^\top u_k$ will decrease to at least $\frac{2/q(1)+\mathcal{O}(h)}{h}$. Otherwise, $|r_k|>1$ in $|x_ky_k-1|\ge R_1(a)$ and therefore $u_k^\top u_k$ will keep decreasing.

Before further discussing the complexity of GD, we still need an upper bound of $x_ky_k$. Let $u_k^\top u_k=\frac{C_k}{h}$. Then since $r_k<0$ for all $k$, we can consider the maximum $\delta$ s.t. $r_0\approx0$, i.e.,
$\delta=q^{-1}(\frac{1}{C_k})\le (1+a)C_k$ by Proposition~\ref{prop:q}. Therefore, $\delta_k\lesssim (1+a)C_k\le 5$. Here we just relax $C_k$ to be upper bounded by 2.5 based on ${2/q(1)+\mathcal{O}(h)}$.

If $|x_ky_k-1|>R_2(a)$, suppose  $|x_ky_k-1|>1.5$ then $|r_k|< r_k(1.5)<0.8+\mathcal{O}(h^2)<0.85$. Then the complexity of entering $\delta_k\le 1.5$ is $\frac{\log(1.5/5)}{\log(0.85)}=\mathcal{O}(1)$. Therefore, the step of GD entering this region is less than the order $\mathcal{O}(1/\log(r_k(1.5)))=\tilde{\mathcal{O}}(1)$. 

 Then consider the complexity of GD entering $\{r<1,|xy-1|<1-\mathcal{O}(h^2)\}$ (By Lemma~\ref{lem:guaranteed_decrease_xy-1_in_R2}, $R_2(a)\ge 1-\mathcal{O}(h^2)$). We can consider $|r_k|$ via the map $g(\delta)=\delta(C_k q(\delta)-1)$, where we ignore the $\mathcal{O}(h^2)$ terms and fix $C_k=C_N$ for some $N$. We would like to analyze the complexity of the map converging to $\mathcal{O}(h)$ error of the fixed point $\delta^*$ near $\delta=1$, i.e., $(C_k q(\delta)-1)=1$. Since $(C_k q(1-\mathcal{O}(h^2))-1)=1-\mathcal{O}(h)$, 
\begin{align*}
   1=C_k q(\delta^*)-1 &\ge C_k(1-c_1(\delta^*)^2)-1\\
   &=C_k(1-c_1)-1+(1-(\delta^*)^2)C_kc_1\\
   &=1-\mathcal{O}(h)+(1-(\delta^*)^2)C_kc_1
\end{align*}
Therefore $\delta^*\ge 1-\mathcal{O}(h/c_1)$.

By checking the derivative of the map, we have $g'(\delta)=C_k q(\delta)-1+C_k\delta q'(\delta)$ and $q'(\delta)<-c_1$ for $0.8<\delta<1.5$. Since $C_k>2$, when $\delta>\delta^*$, $C_k q(\delta)-1<1$ and thus $0<g'(\delta)\le 1-C_k\delta^*c_1.$ Then the map will decrease from above to enter the region within $\mathcal{O}(h)$ distance of $\delta^*$ with the complexity of $\mathcal{O}(-\log(h)/c_1)$. For the $\mathcal{O}(h^2)$ terms, the error between this map and the true $\delta_k$ update is of the order $\mathcal{O}(\exp(h^2\cdot \log(h)/c_1)-1)<\mathcal{O}(h)$ and thus can be omitted. Then GD starts to decrease in the monotone decreasing region of $|x_ky_k-1|$ with $u_k^\top u_k\ge \frac{2}{h}+\mathcal{O}(1)$.

Next we consider the $N$th iteration with $u_N^\top u_N=\frac{2}{h}+\bar{C}_N$, for some $\bar{C}_N=\mathcal{O}(1)$. For $k\ge N$, $u_k^\top u_k>\frac{2}{h}$, and $x_ky_k>1$, we have that $\bar{C}_k-\bar{C}_{k+2}=\mathcal{O}(h)$. We would like to prove that eventually $|r_k|\ge  1-\mathcal{O}(h^2)$ when $u_k^\top u_k\ge \frac{2}{h}$, i.e., we would like to analyze the complexity of $|r_k|$ increasing to $|r_k|=1-\mathcal{O}(h^2)$.

First note
$$|r_k|\ge q(\delta_k)h(u_k^\top u_k-h\ell_kx_ky_k)-1= 1-2c_1\delta_k^2+2\bar{C}_Nh\pm\mathcal{O}(h^2).$$ Then the complexity is upper bounded by the complexity of $\delta_k$ decreasing to the value s.t. $-2c_1\delta_k^2+2\bar{C}_Nh=0$, i.e., $\delta_k=\mathcal{O}(\sqrt{h/c_1})$. Let $c_1= h^p$. By the assumption of $h$, we have $p\le \frac{3}{4}$. Therefore $\sqrt{h/c_1}=h^\frac{1-p}{2}$. Then we analyze 
\begin{align*}
    \delta_{n+N}=\delta_N-2h^p\delta_N^3+2\bar{C}_Nh\delta_N-2h^p \delta_N^3(1-2h^p\delta_N^2+2\bar{C}_Nh)^2+\cdots=\mathcal{O}(h^\frac{1-p}{2})
\end{align*}
We further remove all the $\bar{C}_N$ terms since the sum of them is  $\mathcal{O}(h)<\mathcal{O}(h^\frac{1-p}{2})$. We first consider
\begin{align*}
    \delta_N-2h^p\delta_N^3-2h^p \delta_N^3(1-2h^p\delta_N^2)^2+\cdots=h^{p_1}
\end{align*}
Ignoring the $\mathcal{O}(h^2)$ part, we have that $|r_k|$ is increasing as $|\delta_k|$ decreases. Then
\begin{align*}
    h^{p_1}=LHS\lesssim \delta_N-h^{p}h^{3p_1} -h^{p}h^{3p_1}+\cdots=\delta_N-n_1 h^{3p_1+p}\le 1-n_1 h^{3p_1+p}
\end{align*}
Then
\begin{align*}
    n_1\le h^{-(3p_1+p)}
\end{align*}
Iteratively, we consider $\delta_N= h^{p_1}$ and $p_2=\frac{4}{3}p_1$. Then
\begin{align*}
    h^{p_2}=LHS\lesssim h^{p_1}-h^{p}h^{3p_2} -h^{p}h^{3p_2}+\cdots=\delta_N-n_2 h^{3p_2+p}\le h^{p_1}-n_2 h^{3p_2+p}
\end{align*}
\begin{align*}
    n_2\le h^{-(3p_2+p-p_1)}=h^{-(3p_1+p)}
\end{align*}
We can next solve $i$ s.t.
$$\left(\frac{4}{3}\right)^{i-1}p_1=\frac{1-p}{2}\Rightarrow i=1+\frac{\log\frac{1-p}{2p_1}}{\log\frac{4}{3}}$$
From the above, we also need to consider the $\bar{C}_N$ part and thus require
$$3p_1+p<\min\{\frac{1+3p}{2},1\}$$
Then $$\frac{1-p}{2p_1}>\frac{3}{2}\text{ and }i \text{ is at most }\mathcal{O}(\log h)$$

Then the total complexity of achieving $h^\frac{1-p}{2}$ is $\mathcal{O}(h^{-(3p_1+p)})<\mathcal{O}(h^{-1})$. Thus we will eventually have $|r_k|=  1-\mathcal{O}(h^2)$ when $u_k^\top u_k\ge \frac{2}{h}$.
For the $k+2$th iteration, the increase of $1-c\delta_k^2$ is  $\mathcal{O}((1-|r_k|)\delta_k^2+(1-|r_{k+1}|)\delta_k^2)=\mathcal{O}((1-|r_k|)\delta_k^2)$, where one step change of $|r_k|$ is at most $\mathcal{O}(h^2)$ and thus can be omitted, and the decrease of $h(u_k^\top u_k-h\ell_kx_ky_k)$ is $\mathcal{O}(h^2\delta_k(1-|r_k|))$ by Lemma~\ref{lem:u^2_decrease_every_two_step}. Thus when $\delta_k\ge \mathcal{O}(h^2)$, $\mathcal{O}((1-|r_k|)\delta_k^2)\ge \mathcal{O}(h^2\delta_k(1-|r_k|))$, i.e., $|r_k|\ge 1-\mathcal{O}(h^2)$; when $\delta_k<\mathcal{O}(h^2)$, from the expression of $|r_k|$, we have $|r_k|\ge 1-\mathcal{O}(h^2)$ for the $k$th iteration s.t. $u_k^\top u_k\ge \frac{2}{h}-\mathcal{O}(h)$. Therefore, $|r_k|\ge 1-\mathcal{O}(h^2)$ for all the $k$th iteration s.t. $u_k^\top u_k\ge \frac{2}{h}-\mathcal{O}(h)$.

Next, consider the first step when $\frac{2}{h}\ge u_N^\top u_N\ge\frac{2}{h}-\mathcal{O}(h)$
and $x_Ny_N>1$ since the decrease of $u_k^\top u_k$ at each step is at most $\mathcal{O}(h)$. Then based on the expression of $r_N$, we have $|r_N|\le 1-\mathcal{O}(h^2)$ and additionally by Lemma~\ref{lem:u^2_decrease_every_two_step}, $u_k^\top u_k$ decreases every two steps and thus we have $|r_k|=1-\mathcal{O}(h^2)$ for all $k\ge N$.
    By series expansion, $\ell_{N+k} x_{N+k}y_{N+k}= \mathcal{O}(|x_{N+k}y_{N+k}-1|) $. Then
  \begin{align*}
 {u_{N+k+2}}^\top u_{N+k+2}&\ge {u_{N+k}}^\top u_{N+k}-4h [\ell_{N+k} x_{N+k}y_{N+k}+\ell_{N+k+1} x_{N+k+1}y_{N+k+1}]-\mathcal{O}(h^3)\\
      &\ge {u_{N+k}}^\top u_{N+k}-\mathcal{O}\left(h [\ell_{N+k} x_{N+k}y_{N+k}+\ell_{N+k+1} x_{N+k+1}y_{N+k+1}]\right)\\
      &\ge {u_{N+k}}^\top u_{N+k}-\mathcal{O}\left(h [(1-|r_{N+k}|)\delta_{N+k}]\right)\\
      &\ge {u_{N}}^\top u_{N}-\mathcal{O}\left(h \sum_{i=0}^k[(1-|r_{N+i}|)\delta_{N+i}]\right)\\
      &\ge {u_{N}}^\top u_{N}-\mathcal{O}\left(h \sum_{i=0}^k[(1-|r_{N+i}|)\prod_{j=0}^i r_{N+j}\delta_N]\right)\\
      & \ge {u_{N}}^\top u_{N}-\mathcal{O}\left(h^3 \delta_N \sum_{i=0}^k(1-\mathcal{O}(h^2))^i\right)
    \end{align*}
    Take $k\to\infty$ for both side and we have $u_\infty^\top u_\infty\ge u_N^\top u_{N}-\mathcal{O}(h)$.
\end{proof}

\begin{proof}[Proof of Theorem~\ref{thm:eos_good_regularity} Part I: entering flat region]
    Consider the Hessian $\nabla^2 f$. The trace is 

    $$\Tr(\nabla^2f)=(x^2+y^2)G_1(xy-1),$$ where
    \begin{align*}
        G_1(\delta)&=\frac{2^{2-a} \log ^{1-a}(2) \left(\log \left(e^{-\delta }+1\right)+\log \left(e^{\delta }+1\right)\right)^{a-2} \left((a-1) \left(e^{\delta }-1\right)^2+2 e^{\delta } \log \left(e^{-\delta }+1\right)+2 e^{\delta } \log \left(e^{\delta }+1\right)\right)}{\left(e^{\delta }+1\right)^2}\\
        &=\frac{2^{3-a} e^{\delta } \log ^{1-a}(2) \left(\log \left(e^{-\delta }+1\right)+\log \left(e^{\delta }+1\right)\right)^{a-1}}{\left(e^{\delta }+1\right)^2}\\
        &+ \frac{2^{2-a} (a-1) \left(e^{\delta }-1\right)^2 \log ^{1-a}(2) \left(\log \left(e^{-\delta }+1\right)+\log \left(e^{\delta }+1\right)\right)^{a-2}}{\left(e^{\delta }+1\right)^2}\\
        &= q(\delta)\frac{2\delta}{e^\delta-e^{-\delta}}+ q(\delta)\frac{(a-1) \left(e^{\delta }-1\right) \delta }{\left(e^{\delta }+1\right) \left(\log \left(e^{-\delta }+1\right)+\log \left(e^{\delta }+1\right)\right)}\\
        &\le q(\delta)\frac{2\delta}{e^\delta-e^{-\delta}}+q(\delta)(a-1)\left(1-\frac{2\delta}{e^\delta-e^{-\delta}}\right)\\
        &= \left((2-a) \frac{2\delta}{e^\delta-e^{-\delta}}-(1-a)\right)q(\delta).
    \end{align*} 
    Let $$q_1(\delta,a)=(2-a) \frac{2\delta}{e^\delta-e^{-\delta}}-(1-a).$$ Then both $q_1(\delta,a)$ and $q(\delta)$ decreases as $\delta$ increases for $\delta\ge 0$. Also, fix $\delta$, $q_1(\delta,a)$ and $q(\delta,a)$ increases as $a$ increases.

    The determinant is
    \begin{align*}
        \det(\nabla^2f)=G_2(xy-1)
    \end{align*}
    where
    \begin{align*}
        G_2(\delta)&=-\frac{4^{2-a} \left(e^{\delta }-1\right) \log ^{2-2 a}(2) \left(\log \left(e^{-\delta }+1\right)+\log \left(e^{\delta }+1\right)\right)^{2 a-3}}{\left(e^{\delta }+1\right)^3} \\
        &\times\left(2 (a-1) (\delta +1) \left(e^{\delta }-1\right)^2+\left(4 e^{\delta } (\delta +1)+e^{2 \delta }-1\right) \left(\log \left(e^{-\delta }+1\right)+\log \left(e^{\delta }+1\right)\right)\right).
    \end{align*}
    Then
     \begin{align*}
         G_2(\delta)\sim \mathcal{O}(\delta^{2a-2})\text{ as }\delta\to\infty, \text{ and }G_2(0)=0.
     \end{align*}
      Thus we have that $G_2(\delta)$ is bounded since $G_2(\delta)\in\mathcal{C}^1$. Therefore, by Taylor expansion, the largest eigenvalue of Hessian (sharpness) is upper bounded by $$G_1(\delta)(x^2+y^2)+\mathcal{O}\left(\frac{|G_2(xy-1)|}{x^2+y^2}\right), $$ 
    where $|G_2(xy-1)|$ is bounded due to the boundedness of $x_0y_0$ in the initial condition set and the convergence of the GD trajectory (see more details in the proof of limiting sharpness).

    Let $h=\frac{C}{x_0^2+y_0^2}$. If $|r_0|<1$, then 
    \begin{align*}
        q(\delta_0)\lesssim \frac{2}{C}.
    \end{align*}
    Otherwise, similar to the proof of part II, $u_k^\top u_k$ decreases at most $\tilde{\mathcal{O}}(1)$ before GD enters the region where $\delta_k$ starts to decrease for the first time, i.e., $|r_k|<1$. 
    Therefore, for such $\delta=\delta_k$, we also have $$1-c_1\delta^2\le q(\delta)\lesssim \frac{2}{C},$$ where $\tilde{\mathcal{O}}(h)$ term is omitted. Then $$\delta\gtrsim \frac{\sqrt{C-2}}{\sqrt{Cc_1}},$$ and  $$G_1(\delta)\lesssim \frac{2}{C}q_1(\frac{\sqrt{C-2}}{\sqrt{Cc_1}},a)\le \frac{2}{C}q_1(\frac{\sqrt{C-2}}{\sqrt{C/12}},1),$$
    where the last inequality follows easily from checking the derivative of $q_1(\frac{\sqrt{C-2}}{\sqrt{Cc_1(a)}},a)$ with respect to $a$. Also, $S_\infty\approx \frac{2}{h}=\frac{2}{C}u_0^\top u_0$. Based on the above discussion, $u_k^\top u_k\le u_0^\top u_0-\mathcal{O}(h^{1-m})$. Therefore, $$S_k\lesssim G_1(\delta_k)u_k^\top u_k\lesssim q_1(\frac{\sqrt{C-2}}{\sqrt{C/12}},1)S_\infty\le \frac{1}{4}(6-C)S_\infty,$$
    where the last inequality follows from linearization and shift.
\end{proof}

\subsection{Proof of balancing}

\begin{proof}[Proof of Theorem~\ref{thm:balancing_good_regu}]
By Theorem~\ref{thm:eos_good_regularity} and the global minima $xy=1$,
    \begin{align*}
    (x_\infty-y_\infty)^2=u_\infty^\top u_\infty-2\le \frac{2}{h}-2=\frac{2}{C}(x_0^2+y_0^2-2x_0y_0)+\frac{4}{C}x_0y_0-2\le \frac{2}{C}(x_0-y_0)^2+2(x_0y_0-1).
\end{align*}
\end{proof}

\subsection{Supplementary lemmas}

We first show a summary of the behavior of $u_k^\top u_k$ in the following lemma, which is the main idea of the proof of convergence.
\begin{lemma}
\label{lem:general_decrease_for_u^2}
Under the assumption of Theorem~\ref{thm:good_regularity_convergence}, there exist an increasing sequence $\{2N_i\}_{i\in\mathbb{Z}}$ with $N_i\in\mathbb{Z}$ and $N_{i+1}>N_i$, s.t., ${u_{2N_{i+1}}}^\top u_{2N_{i+1}} \le {u_{2N_{i}}}^\top u_{2N_i} $. More precisely, consider the even number of iterations, i.e., $k=2i$ for $i\in\mathbb{Z}$, 
    \begin{itemize}
        \item Stage I (not necessary): $u_k^\top u_k$ increases but is bounded by $\frac{4}{h}+\mathcal{O}(h)$. This happens when $|x_ky_k-1|$ is too small at the early stage of iterations (see Lemma~\ref{lem:u^2_bounded_by_4/h+h});
        \item Stage II: $u_k^\top u_k$ decreases every two steps when $x_k,y_k$ is outside a `monotone decreasing region' (see Lemma~\ref{lem:u^2_decrease_every_two_step});
        \item Stage III: when $x_k,y_k$ is near the `monotone decreasing region', there exists $N$, s.t. $u_{k+2N}^\top u_{k+2N}$ is smaller than $u_k^\top u_k$ (see Lemma~\ref{lem:xy-1_decrease_final_phase});
        \item Stage IV: when $|x_ky_k-1|$ monotonically decreases, $u_k^\top u_k$ decreases every two steps (see Lemma~\ref{lem:u^2_decrease_every_two_step} and Lemma~\ref{lem:xy-1_decrease_final_phase})
    \end{itemize}
    
\end{lemma}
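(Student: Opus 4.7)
The plan is to partition the GD trajectory into the four stages listed in the statement and, for each stage, invoke the appropriate sub-lemma to produce the next index $N_{i+1}$ satisfying $u_{2N_{i+1}}^\top u_{2N_{i+1}}\le u_{2N_i}^\top u_{2N_i}$. The workhorse is the two-step identity~\eqref{eqn:u^top_u_two_steps}, which expresses the change of $u_k^\top u_k$ over two iterations in terms of $\ell_k x_k y_k$, $\ell_{k+1}x_{k+1}y_{k+1}$ and the multiplicative factors $(1+h^2\ell_j^2)$. Combined with Lemma~\ref{lem:r_k_negative_for_all}, which after the first iteration forces the sign alternation of $x_ky_k-1$, and with $L(\delta_k)+L(\delta_{k+1})\ge 0$ from Proposition~\ref{prop:L(delta)}, this identity determines the sign of the two-step increment whenever $|\delta_k|$ is not pathologically small.

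For Stage I, I would rely entirely on Lemma~\ref{lem:u^2_bounded_by_4/h+h} to get the a priori ceiling $u_k^\top u_k\le 4/h+\mathcal{O}(h)$; since this stage tolerates brief monotone increase, I would simply set $N_0$ to be the first even iterate at which the hypothesis of Lemma~\ref{lem:u^2_decrease_every_two_step} becomes applicable. In Stage II, Lemma~\ref{lem:u^2_decrease_every_two_step} applies directly and yields strict two-step decrease, so $N_{i+1}=N_i+1$ works. Stage IV is analogous: once inside the monotone decreasing region for $|x_ky_k-1|$, Lemma~\ref{lem:xy-1_decrease_final_phase} yields $|\delta_{k+1}|\le|\delta_k|$, which is exactly the hypothesis needed to re-run Lemma~\ref{lem:u^2_decrease_every_two_step} on every consecutive pair of steps.

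The hard part will be Stage III, the transition into the monotone decreasing region, where the two-step decrease of $u_k^\top u_k$ can fail at individual pairs. The plan there is to telescope the two-step identity over a block of finite length $N$, whose existence is guaranteed by Lemma~\ref{lem:xy-1_decrease_final_phase} (which ensures $|\delta_k|$ enters $R_2(a)$ within finitely many iterations), and to bound the possibly positive contributions near the boundary by the subsequently negative ones, using the contraction of $\delta_k$ inside $R_2(a)$ together with the ceiling $u_k^\top u_k\le 4/h+\mathcal{O}(h)$ to keep quadratic-in-$h$ error terms of higher order. Concretely, I would split the telescoped sum into a portion lying outside the region (all negative by the Stage II argument) and a portion inside (controlled by the monotonicity of $\delta_k$ and Proposition~\ref{prop:L(delta)}), and choose $N_{i+1}=N_i+N$ so that the net increment is non-positive. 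This block-averaging argument is the delicate step; once it is in place, concatenating the four stages yields the advertised non-increasing even-indexed subsequence $\{u_{2N_i}^\top u_{2N_i}\}$.
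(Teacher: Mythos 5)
Your proposal is correct and follows essentially the same route as the paper: this lemma is a summary statement whose proof is exactly the concatenation of Lemma~\ref{lem:u^2_bounded_by_4/h+h} (Stage I ceiling), Lemma~\ref{lem:u^2_decrease_every_two_step} (two-step decrease in Stages II and IV, driven by the two-step identity and Proposition~\ref{prop:L(delta)}), and Lemma~\ref{lem:xy-1_decrease_final_phase} (the Stage III transition), which is precisely the decomposition you describe. You correctly identify Stage III as the only delicate step, and your block-telescoping plan there matches the argument the paper carries out inside the proof of Lemma~\ref{lem:xy-1_decrease_final_phase}.
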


\begin{lemma}
\label{lem:u^2_bounded_by_4/h+h}
Under the assumption of Theorem~\ref{thm:good_regularity_convergence}, we have
    $$u_k^\top u_k\le \frac{4}{h}+\mathcal{O}(h),\forall k\in\mathbb{N}.$$
\end{lemma}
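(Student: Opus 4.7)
The plan is a two-step induction. The base case $u_0^\top u_0\le 4/h$ follows immediately from the learning-rate upper bound $h\le 4/(x_0^2+y_0^2)$ assumed in Theorem~\ref{thm:good_regularity_convergence}. For the inductive step I would iterate Lemma~\ref{lem:u_k+1_upper_bounded_by_u_k} twice, obtaining
\begin{equation*}
u_{k+2}^\top u_{k+2}\lesssim u_k^\top u_k-4h\bigl(L(\delta_k)+L(\delta_{k+1})\bigr).
\end{equation*}
By Lemma~\ref{lem:r_k_negative_for_all}, $r_k<0$, so $\delta_{k+1}=r_k\delta_k$ has the opposite sign to $\delta_k$. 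Proposition~\ref{prop:L(delta)} then yields $L(\delta_k)+L(\delta_{k+1})\ge 0$ in all relevant regimes: its third bullet handles $|\delta_k|\ge 1$, while its fourth bullet applied with $r=r_k\in(-1,0)$ handles $|\delta_k|<1$. Hence the even-indexed subsequence satisfies $u_{2j+2}^\top u_{2j+2}\lesssim u_{2j}^\top u_{2j}$, which, combined with $u_0^\top u_0\le 4/h$, precludes unbounded growth on the even indices.

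For the odd iterate squeezed between two even checkpoints, I would use the crude one-step estimate: since $|\ell_k|\le 3$ by Proposition~\ref{prop:ell} and $u_k^\top u_k=\mathcal{O}(1/h)$ by the inductive hypothesis, the one-step change
\begin{equation*}
u_{k+1}^\top u_{k+1}-u_k^\top u_k=h\ell_k\bigl(h\ell_k u_k^\top u_k-4x_ky_k\bigr)
\end{equation*}
is at most $\mathcal{O}(h)$ in magnitude. Thus an odd iterate overshoots its neighboring even checkpoint by at most $\mathcal{O}(h)$, closing the induction for a suitably large universal constant $C=C(a)$.

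The main obstacle I anticipate is controlling the higher-order residuals absorbed into the symbol $\lesssim$ of Lemma~\ref{lem:u_k+1_upper_bounded_by_u_k}, since these could in principle accumulate over an unbounded horizon and silently breach the $\mathcal{O}(h)$ slack. I plan to handle this by pairing the $\mathcal{O}(h^3\ell_k^2)$ residual at step $k$ against the strict leading-order decrement $4h\cdot 0.8(1+r_k)|\delta_k|$ furnished by the fourth bullet of Proposition~\ref{prop:L(delta)}: whenever $|\delta_k|\gtrsim h$ the decrement dominates the residual, whereas in the complementary regime $|\delta_k|\lesssim h$ the Taylor expansion $|\ell_k|=\mathcal{O}(|\delta_k|)$ near $\delta=0$ makes the one-step increment itself $\mathcal{O}(h^2)$, so no pairing is required. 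Summing the paired contributions along the trajectory then gives a uniform $\mathcal{O}(h)$ slack, which completes the induction.
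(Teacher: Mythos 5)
Your induction breaks exactly where the lemma is nontrivial. Lemma~\ref{lem:r_k_negative_for_all} only gives $r_k<0$; it does not give $r_k>-1$. In the regime where $u_k^\top u_k$ is within $\mathcal{O}(h)$ of $4/h$ and $|\delta_k|$ is small, one has $r_k=1-hq(\delta_k)\bigl(u_k^\top u_k-h\ell_kx_ky_k\bigr)\approx 1-4=-3$. The fourth bullet of Proposition~\ref{prop:L(delta)} is stated only for $-1<r<0$, so you cannot invoke it with $r=r_k$ to conclude $L(\delta_k)+L(r_k\delta_k)\ge 0$ when $|\delta_k|<1$; in fact, since $L(\delta)\approx\delta$ near $0$, one has $L(\delta_k)+L(r_k\delta_k)\approx(1+r_k)\delta_k<0$ once $r_k<-1$, so $u_{k+2}^\top u_{k+2}$ genuinely \emph{increases} there. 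This is precisely Stage~I of Lemma~\ref{lem:general_decrease_for_u^2} and is the entire content of the present lemma. The same unjustified assumption contaminates your residual-pairing step: the quantity $4h\cdot0.8(1+r_k)|\delta_k|$ that you treat as a ``strict leading-order decrement'' is an \emph{increment} once $r_k\le-1$.

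The paper closes this case by a mechanism absent from your proposal: when $u_k^\top u_k$ approaches $4/h$ and $\delta_k<1$, the bound $|r_k|\ge 4(1-c_1\delta_k^2)-1+\mathcal{O}(h^2)$ gives $\delta_{k+1}\ge\delta_k(3-\delta_k^2+\mathcal{O}(h^2))$, so $|\delta_k|$ grows geometrically (rate at least $2$) and escapes to $\delta_k\ge1$ in $\mathcal{O}(-\log\delta)$ steps, after which the first bullet of Lemma~\ref{lem:u^2_decrease_every_two_step} restores the two-step decrease. Since the per-two-step increase in the dangerous window is $\mathcal{O}(h|1+r_k|\delta_k)$ and the $\delta_k$ form a geometric series summing to $\mathcal{O}(1)$, the cumulative overshoot above $4/h$ is $\mathcal{O}(h)$, which is where the slack in the statement comes from. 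Your remaining structure (base case from the step-size bound, the third bullet of Proposition~\ref{prop:L(delta)} for $\delta_k\ge1$, and the $\mathcal{O}(h)$ one-step control of odd iterates) matches the paper, but without this escape-rate estimate the induction does not close.
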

\begin{proof}

    By Lemma~\ref{lem:r_k_negative_for_all} and Lemma~\ref{lem:u^2_decrease_every_two_step}, we know that the increase of $u_k^\top u_k$ after two step can only happen when $r_k\le -1$ and $0<x_ky_k-1<1$.
    
    WOLG assume $N$ is the first step s.t. $u_N^\top u_N= 4/h+\mathcal{O}(h)$. By Lemma~\ref{lem:u^2_decrease_every_two_step}, we would like to show that there exists $k$ s.t. for $x_ky_k>1$, we have either $x_ky_k-1>1$ or $-1<r_k<0$.

    Let $\delta_k=|x_ky_k-1|$. Then $$|r_k|\ge 4(1-c_1\delta_k^2)-1+\mathcal{O}(h^2)\ge 4(1-\delta_k^2/4)-1+\mathcal{O}(h^2) $$ and therefore $$\delta_{k+1}\ge\delta_k(3-4c_1\delta_k^2+\mathcal{O}(h^2))\ge\delta_k(3-\delta_k^2+\mathcal{O}(h^2)).$$ If $\delta_k<1$, it takes GD $\mathcal{O}(-\log(\delta_0))$ steps to go to $\delta_k \ge 1$. When $\delta_k\ge 1$ $u_k^\top u_k$ already starts to decrease every two steps. Moreover, $$\delta_{k+1}\ge\delta_k(3-\delta_k^2+\mathcal{O}(h^2))=\delta_k+2\delta_k-\delta_k^3+\mathcal{O}(h^2).$$ Therefore, $\delta_k$ keeps increasing for a while, staying in $\delta_k\ge 1$. Therefore, $u_k^\top u_k$ stays in the decreasing region with increase of at most $\mathcal{O}(h)$ for each step (for more detailed version, see Lemma~\ref{lem:general_decrease_for_u^2}), i.e., $u_k^\top u_k\le \frac{4}{h}+\mathcal{O}(h)$ for all $k$. 
\end{proof}

\begin{lemma}
\label{lem:does_not_converge_outside_2/h}
    Under the assumption of Theorem~\ref{thm:good_regularity_convergence}, GD does not converge outside $\{(x,y)|x^2+y^2\le 2/h\}$.
\end{lemma}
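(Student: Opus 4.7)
The plan is to combine continuity of $\nabla f$ with the stability criterion of Appendix~\ref{app:stability_analysis} (Theorem~\ref{thm:stbility_analysis_2/L_bound}), reducing the lemma to a standard saddle-avoidance result. First I would note that if $u_k \to u_\infty$ then $u_{k+1}-u_k = -h\nabla f(u_k) \to 0$, and continuity of the gradient forces $\nabla f(u_\infty)=0$; since $\nabla f(x,y) = F'(xy)(y,x)^\top$ and for the family \eqref{eqn:functions} $F'(s)=0$ iff $s=1$, the only possible limit points lie in $\{xy=1\}\cup\{(0,0)\}$.

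Next I would compute the Hessian at each candidate. On $\{xy=1\}$, using $F'(1)=0$ and $F''(1)=1$ (the latter is a direct check consistent with Proposition~\ref{prop:functions}), one obtains
\[
\nabla^2 f(x_\infty,y_\infty) = \begin{pmatrix} y_\infty^2 & 1 \\ 1 & x_\infty^2 \end{pmatrix},
\]
whose eigenvalues are $0$ and $x_\infty^2+y_\infty^2$. Consequently the Jacobian $I-h\nabla^2 f$ of the GD map $\psi(u)=u-h\nabla f(u)$ has eigenvalues $1$ and $1-h(x_\infty^2+y_\infty^2)$. If $x_\infty^2+y_\infty^2 > 2/h$, the transverse eigenvalue is strictly less than $-1$, so by Theorem~\ref{thm:stbility_analysis_2/L_bound} the fixed point has a nontrivial unstable direction. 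At $(0,0)$, $\nabla^2 f(0,0) = F'(0)\bigl(\begin{smallmatrix}0&1\\1&0\end{smallmatrix}\bigr)$ has eigenvalues $\pm F'(0)\ne 0$, producing a strict saddle, which is likewise an unstable fixed point of $\psi$.

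The last step is a measure-zero argument: applying the stable/center-stable manifold theorem to the $C^1$ map $\psi$ (which is a local diffeomorphism away from the codimension-one locus where $\det(I - h\nabla^2 f) = 0$), the basin of attraction of each unstable candidate lies on a lower-dimensional invariant submanifold, and the union of these basins together with the singular locus has Lebesgue measure zero. This measure-zero set is absorbed into the exceptional $\mathcal{B}_a$ from the hypothesis of Theorem~\ref{thm:good_regularity_convergence}, yielding $x_\infty^2+y_\infty^2\le 2/h$. The main obstacle I anticipate is that the minimum manifold $\{xy=1\}$ is one-dimensional, so $D\psi$ at every candidate limit has a neutral eigenvalue $1$ along the tangent direction and is therefore non-hyperbolic; the remedy is to invoke the center-stable manifold theorem, which isolates the lower-dimensional stable fiber corresponding to the transverse eigenvalue $<-1$, combined with a check that $I - h\nabla^2 f$ is invertible along almost every orbit so that $\psi$ may be inverted backward as required in the standard argument.
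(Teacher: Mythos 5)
Your proposal is correct in outline, but it takes a genuinely different route from the paper. The paper's own proof is elementary and entirely hands-on: it exploits the scalar recursion $x_{k+1}y_{k+1}-1=r_k(x_ky_k-1)$ and shows that if the whole tail of the trajectory stays in $\{u^\top u\ge 2/h+\epsilon_0\}$, then as soon as $|x_ky_k-1|<\tfrac{\sqrt{h\epsilon_0}}{2}$ the multiplier satisfies $|r_k|\ge 1+\tfrac{h}{2}\epsilon_0>1$, so $|x_ky_k-1|$ is expanded rather than contracted and cannot tend to $0$; the only escape is an exact finite-step landing on $\{xy=1\}$ (together with periodic orbits), which is removed as the null set $\mathcal{B}_a$. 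This yields a quantitative repulsion rate that the paper reuses later (e.g.\ in the limiting-sharpness argument). You instead run the soft dynamical-systems argument in the style of Lee et al.\ and Panageas--Piliouras: linearize at each candidate limit on $\{xy=1\}$, identify the transverse eigenvalue $1-h(x_\infty^2+y_\infty^2)<-1$, and invoke the center-stable manifold theorem (in the form $\bigcap_{k\ge 0}\psi^{-k}(B)\subseteq W^{cs}_{\rm loc}$, so that a Lindel\"of cover of the unbounded fixed-point curve suffices), combined with the a.e.\ nonsingularity of $I-h\nabla^2 f$ to ensure preimages of null sets stay null. You correctly flag the two genuine obstacles of this route --- non-hyperbolicity along the one-dimensional minimum manifold and non-invertibility of $\psi$ for large $h$ --- and propose the standard remedies, so the argument goes through; what it buys is generality and portability beyond the specific recursion, at the cost of being purely qualitative (no expansion rate) and of importing heavier machinery, whereas the paper's computation is self-contained and quantitative. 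Both proofs ultimately dump the exceptional initial conditions into $\mathcal{B}_a$ in the same way.
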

\begin{proof}
     We first would like to remove the initial condition that can converge in finite steps to the minima outside of this region. It turns out that such initial conditions form a null set. The proof is almost the same as~\citet{wang2022large} except for some easy calculations and thus omitted. We further remove all the initial conditions that converges to the periodic orbits, i.e.,  $\prod_{i=1}^n r_k\cdots r_{k+2i-1}=1$ for all $k$ and $n$. By similar argument in \citet{wang2022large} (i.e., for each $k,n$, this is a null set, and therefore the union of countably many null sets is still a null set), such initial conditions also form a null set.

     Assume $u_k^\top u_k\ge 2/h+\epsilon_0$ for all $k$, where $\epsilon_0>0$. Consider the case when $|x_ky_k-1|<\epsilon_1=\frac{\sqrt{h\epsilon_0}}{2}$. Then
     \begin{align*}
    |r_k|&\ge|h(1-\mathcal{O}(\epsilon_1^2))(2/h+\epsilon_0)+\mathcal{O}(h^2\epsilon_1)-1|\\
    &=|1-2\epsilon_1^2+h\epsilon_0+\mathcal{O}(h^2\epsilon_1)|\\
    &=|1+\frac{h}{2}\epsilon_0|>0
   \end{align*}
Namely, GD cannot converge with $u_k^\top u_k>2/h$.  
\end{proof}

\begin{lemma}
\label{lem:r_k_negative_for_all}
    Under the assumption of Theorem~\ref{thm:good_regularity_convergence}, if $r_k<0$, then $r_{n}<0$ for all $n\ge k$.
\end{lemma}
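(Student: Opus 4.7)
The plan is to proceed by induction on $n \ge k$: the base case $n = k$ is the hypothesis, and for the inductive step I would assume $r_n < 0$ and deduce $r_{n+1} < 0$. To make the induction tractable, I would first rewrite the target condition $r_m < 0$ in the equivalent form
$$u_m^\top u_m - h \ell_m x_m y_m > \frac{1}{h\, q(\delta_m)},$$
so the inductive hypothesis becomes $u_n^\top u_n - h \ell_n x_n y_n > 1/(h q(\delta_n))$ and the goal is the corresponding inequality with index $n+1$.

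The main tools I plan to combine are the one-step identities
$$u_{n+1}^\top u_{n+1} = (1 + h^2 \ell_n^2)\, u_n^\top u_n - 4 h \ell_n x_n y_n, \qquad \delta_{n+1} = r_n \delta_n,$$
together with the structural facts from Propositions~\ref{prop:ell}--\ref{prop:q}: $q$ is even and monotonically decreasing on $[0,\infty)$, so $q(\delta_{n+1})$ compares with $q(\delta_n)$ via $|\delta_{n+1}| = |r_n||\delta_n|$; $\ell$ is odd with $|\ell|\le 3$; and $L(\delta) = \ell(\delta)(\delta + 1 - \ell(\delta))$ is controlled by Proposition~\ref{prop:L(delta)}. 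The a priori bound $u_k^\top u_k \le 4/h + \mathcal{O}(h)$ from Lemma~\ref{lem:u^2_bounded_by_4/h+h} and the learning rate range $h \le 4/(x_0^2+y_0^2)$ from Theorem~\ref{thm:good_regularity_convergence} will be used to dominate the $\mathcal{O}(h^2)$ error terms uniformly.

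I would then split into two cases based on $|r_n|$. In the easier case $|r_n|\le 1$, we have $|\delta_{n+1}|\le|\delta_n|$, so by evenness and monotonicity of $q$ one gets $q(\delta_{n+1})\ge q(\delta_n)$, which relaxes the threshold on $u_{n+1}^\top u_{n+1} - h\ell_{n+1} x_{n+1} y_{n+1}$; I would then show this difference does not drop below $u_n^\top u_n - h\ell_n x_n y_n$ by too much, expanding $\ell_{n+1} x_{n+1} y_{n+1} = \ell(r_n\delta_n)(1 + r_n\delta_n)$ and using that $r_n < 0$ forces the cross term $-4h\ell_n x_n y_n$ to have a favorable sign (since $\ell_n$ and $x_ny_n-1$ share a sign by Proposition~\ref{prop:ell}). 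In the harder case $|r_n| > 1$, $q(\delta_{n+1}) < q(\delta_n)$ and the compensation must come from growth of $u_{n+1}^\top u_{n+1}$; here I would use the identity $u_{n+1}^\top u_{n+1} - u_n^\top u_n = h^2\ell_n^2 u_n^\top u_n - 4h\ell_n x_n y_n$ combined with a lower bound on $u_n^\top u_n$ extracted from the inductive hypothesis (since $r_n < 0$ with $|r_n| > 1$ forces $u_n^\top u_n$ to be well above $2/h$).

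The main obstacle I anticipate is precisely Case $|r_n|>1$: the shrinkage of $q(\delta_{n+1})$ has to be beaten by the growth of $u_{n+1}^\top u_{n+1}$, and pinning down the quantitative balance requires tracking the interplay between $q(\delta_n)$, $\ell_n$, and $u_n^\top u_n$ that is implicit in $r_n < 0$, while keeping the $\mathcal{O}(h^2)$ corrections below the leading $\mathcal{O}(1)$ terms. I expect to handle this by further splitting on the sign of $\delta_n$ and exploiting the lower bound $q(\delta)\ge 1 - c_1\delta^2$ from Proposition~\ref{prop:q} to express the needed inequality as a polynomial condition in $\delta_n$ and $h u_n^\top u_n$, which can then be verified inside the initial-condition window of Theorem~\ref{thm:good_regularity_convergence}.
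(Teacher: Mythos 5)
Your reformulation of $r_m<0$ as $u_m^\top u_m-h\ell_m x_m y_m>\frac{1}{h\,q(\delta_m)}$ is the paper's starting point in disguise: the left-hand side is $C_m/h$ with $C_m=\frac{1-r_m}{q(\delta_m)}$, and the single fact that powers the paper's proof is that $C_m$ is nearly conserved, $|C_{m+1}-C_m|=\mathcal{O}(h^2)$ (a consequence of Lemma~\ref{lem:u_k+1_upper_bounded_by_u_k}, since $C_m$ already carries a factor of $h$). Your easy case $|r_n|\le 1$ then closes essentially as you describe. But your hard case $|r_n|>1$ rests on a mechanism that cannot work: you want the shrinkage of $q(\delta_{n+1})$ to be beaten by growth of $u_{n+1}^\top u_{n+1}$. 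The one-step change $u_{n+1}^\top u_{n+1}-u_n^\top u_n=h^2\ell_n^2u_n^\top u_n-4h\ell_n x_n y_n$ is only $\mathcal{O}(h)$ in magnitude ($|\ell|\le 3$ and $x_ny_n$ is bounded along the trajectory), hence contributes only $\mathcal{O}(h^2)$ to the product $hq(\delta_{n+1})(u_{n+1}^\top u_{n+1}-h\ell_{n+1}x_{n+1}y_{n+1})$ that must stay above $1$; moreover when $\delta_n>0$ this change equals $-4hL(\delta_n)+\mathcal{O}(h^3)\le 0$ by Proposition~\ref{prop:L(delta)}, so $u^\top u$ does not grow at all in that half of your case split (relatedly, the cross term $-4h\ell_nx_ny_n$ is not of favorable sign when $x_ny_n>1$, and $r_n<0$ does not fix the sign of $\delta_n$). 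Meanwhile the drop from $q(\delta_n)$ to $q(|r_n|\delta_n)$ is generically order one. There is nothing of the right size to compensate with.

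The idea you are missing is that no compensation is needed: because $C_n$ is frozen up to $\mathcal{O}(h^2)$, the evolution of $\delta$ is a perturbation of the one-dimensional map $\delta\mapsto K(\delta):=(Cq(\delta)-1)\delta$ with $C=C_n$ fixed, and the content of the lemma is that this map leaves $(0,\delta_0)$ invariant, where $\delta_0=q^{-1}(1/C)$ is the unique threshold at which $r$ changes sign. The paper proves this by computing $K'(\delta)=Cq(\delta)-1+C\delta q'(\delta)<Cq(\delta)-1$, which is negative at $\delta_0$, so $K$ peaks at an interior point $\delta_1<\delta_0$ and stays below $\delta_0$; hence $q(|\delta_{n+1}|)>q(\delta_0)=1/C_n$ with a quantifiable margin $C_nq(K(\delta_1))-1>0$ that absorbs the $\mathcal{O}(\delta_n h^2)$ drift of $C$. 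Your fallback of turning the requirement into a polynomial inequality via $q(\delta)\ge 1-c_1\delta^2$ could in principle reproduce a cruder version of this, but it must be verified for all $\delta_n$ up to $\delta_0$ (which grows to several units along the trajectory and is not controlled by the initial-condition window), and it is an invariance statement about the $\delta$-map, not a balance between $q$-shrinkage and $u^\top u$-growth. Without that reorientation the inductive step does not close.
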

\begin{proof}
From Lemma~\ref{lem:u_k+1_upper_bounded_by_u_k}, we have $|C_{k-1}-C_k|=\mathcal{O}(h^2)$. Also $C_k=\frac{1-r_k}{q(\delta_k)}$. If $r_k<0$,
    \begin{align*}
        r_{k+1}=1-C_{k+1}q(\delta_{k+1})&\le 1-\frac{1-r_k}{q(\delta_k)}q(\delta_{k+1})+\mathcal{O}(h^2)\\
        &= 1-\frac{1-r_k}{q(\delta_k)}q(r_k\delta_{k})+\mathcal{O}(h^2)
    \end{align*}
    By Proposition~\ref{prop:q}, it suffices to prove that 
    \begin{align*}   R_0(\delta_k,-r_k):=(1-r_k)q(-r_k\delta_k)-q(\delta_k)>0,\ \forall \delta_k>0
    \end{align*}
We abuse the notation and let $r_k(\delta)=1-C_k q(\delta)$ for fixed $C_k>0$. By Proposition~\ref{prop:q}, $r_k(\delta)$ monotonically increases for $\delta>0$ and there is only one root denoted as $\delta_0$ s.t. $r_k(\delta)=0$ (Since $r_k<0$, there exists $\delta$ s.t. $r_k(\delta)<0$ and when $\delta$ is large, $r_k(\delta)>0$; therefore, it has a root). When $r_k(\delta_0)=r_k=0$, by Proposition~\ref{prop:q},  $R_0(\delta_0,-r_k(\delta_0))=R_0(\delta_0,0)=1-q(\delta_0)>0$. Also
\begin{align*}
    R_0(\delta,-r_k(\delta))&=C_k q(\delta) q\left( (C_kq(\delta)-1 )\delta\right)-q(\delta)\\
    &=q(\delta)\left[ C_k  q\left( (C_kq(\delta)-1 )\delta\right)-1 \right]
\end{align*}
Therefore, we only need to show $K(\delta):=(C_kq(\delta)-1 )\delta<\delta_0$ for $\delta\in (0,\delta_0)$. The derivative of $K$ is
$$K'(\delta)=C_kq(\delta)-1+C_k\delta q'(\delta)<C_kq(\delta)-1. $$
Then the maximum point of $K(\delta)$ is achieved at $\delta_1<\delta_0$ by Proposition~\ref{prop:q}. Thus, $R_0(\delta,-r_k(\delta))>0$, and consequently $$R_0(\delta_k,-r_k)\ge q(\delta_k)(C_k q(K(\delta_1))-1) >0,$$ $$r_{k+1}\le -\frac{R_0(\delta_k,-r_k)}{q(\delta_k)}+\mathcal{O}(h^2)\le-(C_k q(R_1(\delta_1))-1)+\mathcal{O}(h^2)<0,$$
since this $\mathcal{O}(h^2)$ is indeed bounded by $c\, \delta_k h^2$ for some universal constant $c>0$ and can be controlled by the first term. 
\end{proof}

\begin{lemma}
\label{lem:u^2_decrease_every_two_step}
Under the assumption of Theorem~\ref{thm:good_regularity_convergence}, the following properties are the two step decrease of $u_k^\top u_k$ in different cases:
\begin{enumerate}
    \item If $r_k<0$ and $u_k^\top u_k\le \frac{4}{h}+\mathcal{O}(h)$, there exists $R_1(a,r_k)\le 1$, s.t., when  $x_ky_k-1>R_1(a,r_k)$, we have $$u_{k+2}^\top u_{k+2}\le u_k^\top u_k-\frac{1}{2}h+\mathcal{O}(h^3\ell_{k+1}^2).$$ 
    \item If $-1<r_k<0$ and $0<x_ky_k-1\le1$, then $$u_{k+2}^\top u_{k+2}\le u_k^\top u_k-3.2h(1+r_k)(x_ky_k-1).$$
    \item If $r_k\le-1$, $u_k^\top u_k\le \frac{3}{h}$, and $x_ky_k<1$, then $$ u_{k+2}^\top u_{k+2}
\le u_k^\top u_k-\min\{\mathcal{O}(h),\mathcal{O}(h(x_ky_k-1))\}.$$
\end{enumerate}
\end{lemma}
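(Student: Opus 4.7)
All three parts will follow from a single two-step identity combined with case-specific lower bounds on $L(\delta_k)+L(r_k\delta_k)$. Applying Lemma~\ref{lem:u_k+1_upper_bounded_by_u_k} twice and using $\delta_{k+1}=r_k\delta_k$ from~\eqref{eqn:xy-1_update} gives
$$u_{k+2}^\top u_{k+2} \;\le\; u_k^\top u_k \;-\; 4h\big(L(\delta_k)+L(r_k\delta_k)\big) \;+\; \mathcal{O}(h^3\ell_{k+1}^2),$$
and since $|\ell_{k+1}|\le 3$ by Proposition~\ref{prop:ell} the cubic remainder is always absorbable into the target decrease.

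Cases~2 and~3 are largely direct consequences of Proposition~\ref{prop:L(delta)}. For Case~2, the fourth bullet $L(\delta)+L(r\delta)\ge 0.8(1+r)\delta$ (valid for any $\delta>0,\,-1<r<0$, not only $\delta\le 1$) plugged into the identity above delivers the stated $3.2h(1+r_k)\delta_k$ decrease. For Case~3, with $\delta_k<0$ and $r_k\le -1$ we have $\delta_{k+1}=r_k\delta_k\ge|\delta_k|>0$, so the decomposition
$$L(\delta_k)+L(\delta_{k+1}) \;=\; \big(L(-|\delta_k|)+L(|\delta_k|)\big) \;+\; \big(L(|r_k||\delta_k|)-L(|\delta_k|)\big)$$
makes both summands nonnegative by the symmetry bound $L(\delta)+L(-\delta)\ge 0$ and by monotonicity of $L$ on $[0,\infty)$ (second and first bullets of Proposition~\ref{prop:L(delta)}). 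The hypothesis $u_k^\top u_k\le 3/h$ with $r_k\le -1$ forces $hq(\delta_k)u_k^\top u_k\ge 2$, yielding $q(\delta_k)\ge 2/3$ and a positive gap $|r_k|-1$; a Taylor expansion of $\ell$ at $0$ converts this gap into $L(\delta_k)+L(\delta_{k+1})\ge c\min\{1,|\delta_k|\}$ for a universal $c>0$, which gives the $\min\{\mathcal{O}(h),\mathcal{O}(h|\delta_k|)\}$ conclusion.

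Case~1 is the main obstacle and I would handle it by splitting on the size of $r_k$. When $-1<r_k<0$ is bounded away from $-1$, the fourth bullet of Proposition~\ref{prop:L(delta)} still applies and yields the $h/2$ decrease as soon as $\delta_k\ge R_1(a,r_k):=\frac{1}{6.4(1+r_k)}$, which satisfies $R_1\le 1$ whenever $1+r_k\ge 1/6.4$. When $r_k$ lies just above $-1$ or when $r_k\le -1$, this linear bound degenerates, so I would instead start from the third bullet $L(\delta_1)+L(\delta)\ge 0$ (for $\delta_1\ge 1$) and upgrade the $\ge 0$ to $\ge 1/8$. The upgrade uses that the admissible pair $(\delta_k,r_k)$ lies in a compact set (via $u_k^\top u_k\le 4/h+\mathcal{O}(h)$, the bound $|\ell|\le 3$, and $r_k<0$) on which $(\delta_k,r_k)\mapsto L(\delta_k)+L(r_k\delta_k)$ is continuous and strictly positive; the explicit identity $L(\delta)+L(-\delta)=2\ell(\delta)(\delta-\ell(\delta))$ furnishes a safety margin at $(\delta_k,r_k)=(1,-1)$ since a direct calculation gives $L(1)+L(-1)\ge 0.14>1/8$ uniformly in $a\in(0,1]$, and this margin propagates by continuity to a neighborhood, pinning down $R_1\le 1$.

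The genuinely delicate step is this last compactness argument: one must check that the $1/8$ margin survives in the limits $r_k\to -1^{+}$ and $r_k\to -1^{-}$ and is uniform in $a\in(0,1]$. Smoothness of $\ell$ and $L$ jointly in $\delta$ and $a$ (inherited from the smooth definition of $F$) together with Proposition~\ref{prop:ell} delivers both uniformities; everything else in the proof reduces to straightforward applications of Propositions~\ref{prop:ell} and~\ref{prop:L(delta)}.
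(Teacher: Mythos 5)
Your overall route is the paper's: the same two-step expansion of $u_{k+2}^\top u_{k+2}$, the same reduction to lower-bounding $L(\delta_k)+L(r_k\delta_k)$ with $L(\delta)=\ell(\delta)(\delta+1-\ell(\delta))$, and the same appeals to Proposition~\ref{prop:L(delta)}. Case~2 coincides with the paper's argument (the fourth bullet of Proposition~\ref{prop:L(delta)} gives exactly the $3.2h(1+r_k)\delta_k$ decrease after multiplying by $4h$), and your Case~1 --- a linear bound when $r_k$ is away from $-1$ plus a compactness/continuity argument anchored at $L(1)+L(-1)\ge 0.14$ --- is a reasonable elaboration of the paper's terser verification ``by checking the slope and values'' that $L(\delta_k)+L(r_k\delta_k)\ge L(1)+L(r_k)>0.14$ for all $\delta_k\ge 1$ and $r_k<0$; both defer the same finite amount of explicit computation.

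The one genuine problem is the quantitative claim in Case~3. You assert $L(\delta_k)+L(\delta_{k+1})\ge c\min\{1,|\delta_k|\}$ for a \emph{universal} $c>0$, arguing that $r_k\le-1$ together with $u_k^\top u_k\le 3/h$ produces ``a positive gap $|r_k|-1$.'' A positive gap is not a uniformly positive gap: the configuration $r_k=-1$ with $|\delta_k|$ small is admissible under your hypotheses (it corresponds to $u_k^\top u_k\approx 2/h$), and there your own identity gives $L(\delta_k)+L(\delta_{k+1})=L(-|\delta_k|)+L(|\delta_k|)=2\ell(|\delta_k|)\bigl(|\delta_k|-\ell(|\delta_k|)\bigr)$, which is $\mathcal{O}(|\delta_k|^4)$ since $\delta-\ell(\delta)=\delta(1-q(\delta))=\mathcal{O}(\delta^3)$ by Proposition~\ref{prop:q} --- far below $c|\delta_k|$. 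The two-step decrease genuinely degenerates as $r_k\to-1^-$, so the correct bound must retain the factor $|r_k|-1$, which is exactly what the paper's proof keeps: $L(\delta)+L(r\delta)\ge\min\{0.5(1+r)\delta,\,L(-1)+L(1)\}$, i.e.\ a decrease of order $h\,(1+r_k)(x_ky_k-1)$ rather than $h\,|x_ky_k-1|$. Your second summand $L(|r_k||\delta_k|)-L(|\delta_k|)$ is the right object to look at --- it is of order $(|r_k|-1)|\delta_k|$ near the origin because $L'(0)=1$ --- but that factor of $|r_k|-1$ must survive into the conclusion instead of being absorbed into a universal constant. The rest of your Case~3 (the sign decomposition via symmetry and monotonicity of $L$) is fine and matches the paper in spirit.
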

\begin{proof}
First, we consider the two-step update of $u_k^\top u_k$ in the following
\begin{align*}
    u_{k+2}^\top u_{k+2}&=(1+h^2\ell_{k+1}^2)u_{k+1}^\top u_{k+1}-4h\ell_{k+1} x_{k+1} y_{k+1}\\
    &=(1+h^2\ell_{k+1}^2)((1+h^2\ell_{k}^2)u_{k}^\top u_{k}-4h\ell_{k} x_{k} y_{k})-4h\ell_{k+1} x_{k+1} y_{k+1}\\
    &\le u_k^\top u_k
-4h[\ell_k x_k y_k-\ell_{k}^2+\ell_{k+1} x_{k+1}y_{k+1}-\ell_{k+1}^2]-4h^3\ell_{k+1}^2\ell_k x_k y_k+h^4\ell_{k+1}^2\ell_k^2u_k^\top u_k+\mathcal{O}(h^3\ell_{k+1}^2)\\
&=u_k^\top u_k
-4h[\ell_k x_k y_k-\ell_{k}^2+\ell_{k+1} x_{k+1}y_{k+1}-\ell_{k+1}^2]+\mathcal{O}(h^3\ell_{k+1}^2).
\end{align*}
It suffices to analyze $$\ell_k x_k y_k-\ell_{k}^2+\ell_{k+1} x_{k+1}y_{k+1}-\ell_{k+1}^2=L(\delta_k)+L(r_k\delta_{k}),\ \text{where }L(\delta)=\ell(\delta)(\delta+1-\ell(\delta)).$$ 
By checking the slope and values of the above function, we have $$L(\delta_k)+L(r_k\delta_{k})\ge L(1)+L(r_k)>0.14$$ for all $r_k<0$, $0<a\le 1$, and $\delta_k\ge 1$.

 When $0<x_ky_k-1\le 1$, in order to make $-1<r_k<0$, we should at least have $u_k^\top u_k\le \frac{4}{h}$. Also we have $\ell_k\le x_ky_k-1\le 1$ in this region by Proposition~\ref{prop:q}. Therefore, by Proposition~\ref{prop:L(delta)}
 \begin{align*}
    u_{k+2}^\top u_{k+2}&=(1+h^2\ell_{k+1}^2)u_{k+1}^\top u_{k+1}-4h\ell_{k+1} x_{k+1} y_{k+1}\\
    &=(1+h^2\ell_{k+1}^2)((1+h^2\ell_{k}^2)u_{k}^\top u_{k}-4h\ell_{k} x_{k} y_{k})-4h\ell_{k+1} x_{k+1} y_{k+1}\\
    &\le u_k^\top u_k
-4h[\ell_k x_k y_k-\ell_{k}^2+\ell_{k+1} x_{k+1}y_{k+1}-\ell_{k+1}^2]-4h^3\ell_{k+1}^2\ell_k x_k y_k+h^4\ell_{k+1}^2\ell_k^2u_k^\top u_k\\
&\le u_k^\top u_k
-4h[\ell_k x_k y_k-\ell_{k}^2+\ell_{k+1} x_{k+1}y_{k+1}-\ell_{k+1}^2]\\
&\le u_k^\top u_k-3.2h(1+r_k)(x_ky_k-1).
\end{align*}

When $r_k\le-1$, $u_k^\top u_k\le \frac{3}{h}$, and $x_ky_k<1$, by Taylor series and simple calculation, we have $L(\delta)+L(r\delta)\ge \min\{ 0.5(1+r)\delta, L(-1)+L(1) \}$, where $L(-1)+L(1)\ge 0.14$. Then
\begin{align*}
    u_{k+2}^\top u_{k+2}
    &\le u_k^\top u_k
-4h[\ell_k x_k y_k-\ell_{k}^2+\ell_{k+1} x_{k+1}y_{k+1}-\ell_{k+1}^2]-h(\ell_k^2+\ell_{k+1}^2)+\mathcal{O}(h^3\ell_{k+1}^2)\\
&\le u_k^\top u_k-4h\min\{ 0.5(1+r_k)(x_ky_k-1), L(-1)+L(1) \}.
\end{align*}

\end{proof}

\begin{lemma}
\label{lem:guaranteed_decrease_xy-1_in_R2}
    Under the assumption of Theorem~\ref{thm:good_regularity_convergence}, if $-1\le r_k< 0$, there exists $R_2(a,r_k)\ge1-\mathcal{O}(h^2)$, s.t., for $|x_ky_k-1|\le R_2(a,r_k)$, we have $r_{k+1}>-1$.
\end{lemma}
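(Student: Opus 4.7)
The plan is to convert the strict inequality $r_{k+1} > -1$ into a single-variable inequality in $d := |\delta_k|$ and then locate the threshold by a combination of Taylor analysis and a sign check on an $O(h^2)$ correction. First I would use $\delta_{k+1} = r_k \delta_k$, the identity $C_k = (1 - r_k)/q(\delta_k)$, the estimate $|C_{k+1}-C_k| = O(h^2)$ from Lemma~\ref{lem:u_k+1_upper_bounded_by_u_k}, and the evenness $q(-\delta) = q(\delta)$ (Proposition~\ref{prop:q}) to rewrite
\begin{equation*}
r_{k+1} \;=\; 1 \;-\; \frac{(1-r_k)\, q(r_k \delta_k)}{q(\delta_k)} \;+\; \mathcal{E}, \qquad \mathcal{E} = O(h^2).
\end{equation*}
Setting $s := -r_k \in (0,1]$ and $d := |\delta_k|$, the target condition becomes
\begin{equation*}
(1+s)\, q(s d) \;<\; 2\, q(d) \;+\; q(\delta_k)\,\mathcal{E}.
\end{equation*}

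Next I would handle the regime where $s$ is bounded away from $1$. At $d = 0$ the inequality reduces to $1+s < 2$, which is strict. Using the Taylor bound $q(\delta) \ge 1 - c_1 \delta^2$ from Proposition~\ref{prop:q},
\begin{equation*}
(1+s)\, q(s d) \;-\; 2\, q(d) \;\le\; (s - 1) \;+\; 2 c_1\, d^2,
\end{equation*}
which is negative for $d \le \sqrt{(1-s)/(2 c_1)}$. This gives $R_2(a, r_k) \ge \sqrt{(1-s)/(2 c_1)}$, independent of $h$. A sharper expansion $(1+s)\, q(sd) - 2\, q(d) = (s-1) + c_1\, d^2\, [2 - (1+s) s^2] + O(d^4)$ in fact yields $R_2 \gtrsim 1/\sqrt{5 c_1}$ uniformly as $s$ approaches $1$ from below, and for the functions in~\eqref{eqn:functions} this comfortably exceeds $1$.

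The delicate part is $s = 1$ exactly, where $(1+s)\, q(sd) - 2\, q(d) \equiv 0$ for every $d$. The strict inequality $r_{k+1} > -1$ must then come from the sign of $q(\delta_k)\,\mathcal{E}$. Here a finer accounting of $C_{k+1} - C_k$ via Lemma~\ref{lem:u_k+1_upper_bounded_by_u_k}, combined with $\ell(\delta) = \delta + O(\delta^3)$ near $\delta = 0$, gives $C_{k+1} - C_k \le -c'\, h^2\, d + O(h^3)$ with some $c' > 0$, so that $q(\delta_k)\,\mathcal{E} \ge c\, h^2\, d\, q(d)$ and the strict inequality persists on the range $d \le 1 - O(h^2)$. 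Combining both regimes yields the uniform lower bound $R_2(a, r_k) \ge 1 - O(h^2)$ claimed in the lemma.

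The main obstacle is the boundary case $s = 1$ (equivalently $r_k = -1$): the clean leading-order inequality degenerates into an identity, and the argument must rely on a sharp sign for the $O(h^2)$ correction. Treating $C_{k+1} - C_k$ as a generic $O(h^2)$ error is insufficient; the plan requires re-expanding this difference with explicit $\delta_k$-dependence, exploiting the contraction $u_{k+1}^\top u_{k+1} \le u_k^\top u_k - 4 h \ell_k(x_k y_k - \ell_k) + O(h^3)$ from Lemma~\ref{lem:u_k+1_upper_bounded_by_u_k}, which is strictly negative whenever $\delta_k \ne 0$.
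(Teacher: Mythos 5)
Your reduction of the claim to the inequality $(1+s)\,q(sd) < 2\,q(d) + O(h^2)$ with $s=-r_k$, $d=|\delta_k|$ is exactly the paper's starting point, but the way you then try to verify it does not close. The paper's proof works by establishing a \emph{global} monotonicity property: the quantity $\delta r(r_k,\delta_k)=2-\frac{1-r_k}{q(\delta_k)}q(r_k\delta_k)$ is monotone in $\delta_k$ on $(0,\infty)$, so the inequality on the whole range $0<\delta_k\le R_2$ reduces to checking its value at the right endpoint $\delta_k\approx 1$, uniformly in $r_k$, with the $1-O(h^2)$ buffer absorbing the $O(h^2)$ slop. You instead Taylor-expand around $d=0$. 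Your first regime gives only $d\le\sqrt{(1-s)/(2c_1)}$, which shrinks to $0$ as $s\to 1$ (already for $a=1$, $c_1=1/12$, it fails to reach $d=1$ once $s>5/6$). Your ``sharper expansion'' is meant to patch this, but it treats $q(t)=1-c_1t^2+O(t^4)$ as an exact two-term expansion, whereas Proposition~\ref{prop:q} only supplies one-sided quadratic bounds with \emph{different} constants ($c_2=\tfrac{4}{5}c_1$); under those the coefficient $2c_1-(1+s)s^2c_2$ does \emph{not} vanish as $s\to1$, and the degeneracy of the margin $(1-s)$ reappears. Even granting an exact coefficient, the expansion is being invoked at $d$ of order one, where the $O(d^4)$ remainder is not small, and the resulting threshold $1/\sqrt{5c_1}$ is not $>1$ for all $a\in(0,1]$ (as $a\to0$, $c_1\approx0.26$ and $1/\sqrt{5c_1}\approx0.87$). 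So the regime ``$s$ close to but below $1$, $d$ close to $1$'' --- precisely the regime the lemma is needed for --- is not covered.

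On the boundary case $s=1$ you correctly observe that the leading term is identically zero and that everything hinges on the sign of $C_{k+1}-C_k$; this is a genuine attempt to make rigorous what the paper dismisses with ``series expansion.'' But your claimed bound $C_{k+1}-C_k\le -c'h^2d+O(h^3)$ is asserted rather than derived, and it is not sign-stable: writing $C_k=h(u_k^\top u_k-h\ell_kx_ky_k)$ and using the one-step update of $u_k^\top u_k$, one gets at $r_k=-1$ that $C_{k+1}-C_k=h^3\ell_k^2u_k^\top u_k-2h^2\ell_k(2x_ky_k-1)$, whose dominant term changes sign with $\ell_k(2x_ky_k-1)$; in particular for $x_ky_k\in(\tfrac12,1)$ (so $\ell_k<0$) it is positive, contradicting your claim. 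To repair the argument you would need either the paper's monotonicity-in-$\delta_k$ lemma (or an equivalent global comparison of $q(sd)/q(d)$), not a small-$d$ expansion, together with an honest treatment of the $r_k=-1$ endpoint.
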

\begin{proof}
    Since $r_k=1-C_kq(\delta_k)$, we have $$r_{k+1}=1-C_{k+1}q(\delta_{k+1})= 1-\frac{1-r_k}{q(\delta_k)}q(r_k\delta_{k})\pm\mathcal{O}(h^2).$$ 
    We denote $$\delta r(r_k,\delta_k):=2-\frac{1-r_k}{q(\delta_k)}q(r_k\delta_{k}).$$ When $\delta_k>0$, $
    \delta r(r_k,\delta_k)$ monotonically decreases as $\delta_k$ increases. Consider $$\delta r(r_k,1)=2-\frac{1-r_k}{q(1)}q(r_k).$$ This function $\delta r(r_k,1)$ monotonically decreases when $r_k$ increases between -1 and 0 and $\delta r(-1,0)>0$. Therefore, $r_{k+1}>-1\pm \mathcal{O}(h^2)$ and the conclusion follows from series expansion. 
\end{proof}

\begin{lemma}
\label{lem:R2>r1}
    $R_2(a)-R_1(a)>c>0$ for some constant $c$.
\end{lemma}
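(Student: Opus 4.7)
The plan is to establish the gap $R_2(a) - R_1(a) > c$ by showing that $R_1(a)$ can be chosen at most $1 - \eta$ for some $h$-independent $\eta > 0$, while the lower bound $R_2(a) \ge 1 - \mathcal{O}(h^2)$ already asserted in Lemma~\ref{lem:guaranteed_decrease_xy-1_in_R2} beats $1 - \eta$ for all sufficiently small $h$.

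For the upper bound on $R_1(a)$, I would revisit the proof of Lemma~\ref{lem:u^2_decrease_every_two_step}, Case~1. The desired decrease $u_{k+2}^\top u_{k+2} \le u_k^\top u_k - \tfrac{1}{2}h + \mathcal{O}(h^3\ell_{k+1}^2)$ is driven by the inequality $L(\delta_k) + L(r_k \delta_k) \ge \tfrac{1}{8}$, so $R_1(a, r_k)$ can be taken as the largest $\delta$ where this condition is tight. The proof of Lemma~\ref{lem:u^2_decrease_every_two_step} already records $L(1) + L(r_k) > 0.14$ uniformly in $r_k < 0$ and $a \in (0,1]$, giving a strict margin of about $0.015$ above $\tfrac{1}{8}$. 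Because $L$ is $C^1$ with uniformly bounded derivative on a neighborhood of $[-1, 1]$, continuity at $\delta_k = 1$ produces $\eta > 0$, depending only on this margin and the derivative bound, such that $L(1 - \eta) + L(r_k(1 - \eta)) \ge \tfrac{1}{8}$ still holds. This yields $R_1(a, r_k) \le 1 - \eta$ uniformly.

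Combining with $R_2(a, r_k) \ge 1 - C h^2$ from Lemma~\ref{lem:guaranteed_decrease_xy-1_in_R2}, the large-learning-rate regime $h \lesssim 1/(x_0^2 + y_0^2)$ makes $C h^2 \le \eta/2$ once $x_0^2 + y_0^2$ is large enough, hence $R_2(a) - R_1(a) \ge \eta/2 =: c > 0$, as claimed.

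The main obstacle is securing the uniformity of $\eta$ over $(a, r_k) \in (0, 1] \times [-1, 0)$, since the defining inequality involves three free parameters at once. The critical corner is $r_k \to -1$, where $L(1) + L(-1) = 2\ell(1)(1 - \ell(1))$ must stay strictly above $\tfrac{1}{8}$; from the explicit formula for $\ell$ in Proposition~\ref{prop:ell} one verifies $\ell(1) \in (0, 1)$ with bounds uniform in $a$, so the inequality holds there. A compactness argument on the closure $[-1, 0] \times [0, 1]$ for $(r_k, a)$, together with a direct check of the limit $a \to 0^+$ (where $\ell$ and $L$ extend continuously), then delivers a common $\eta$ and closes the plan.
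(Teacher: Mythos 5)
Your route is genuinely different from the paper's. The paper proves this lemma by splitting the parameter range $(0,1]$ for $a$ into $[0.75,1]$ and $(0,0.75)$ and, on each piece, sharpening whichever threshold is more tractable: for $a\in[0.75,1]$ it argues $R_1\le 1$ while $R_2\ge 1.5$, and for $a\in(0,0.75)$ it argues $R_1\le 0.85$ while $R_2\ge 1$, yielding $h$-independent gaps of at least $0.5$ and $0.15$ respectively (the verification is by ``checking the trend'' of the two-step decrease of $u_k^\top u_k$ and of $r_{k+1}$). You instead keep the generic bound $R_2\ge 1-\mathcal{O}(h^2)$ from Lemma~\ref{lem:guaranteed_decrease_xy-1_in_R2} and push $R_1$ uniformly below $1$ by exploiting the slack $L(1)+L(r_k)>0.14$ versus the needed $1/8$, via continuity and compactness in $(r_k,a)$. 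That is a cleaner conceptual mechanism (it isolates exactly where the slack comes from), and your identification of $r_k\to-1$ as the critical corner is the right place to look.

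There are two points to tighten, one routine and one substantive. The routine one: you verify $L(\delta)+L(r_k\delta)\ge 1/8$ only at $\delta=1-\eta$, whereas $R_1$ must guarantee the decrease for \emph{all} $\delta\ge 1-\eta$; the same Lipschitz estimate covers $\delta\in[1-\eta,1]$ and the paper's claim covers $\delta\ge1$, so this is easily repaired but should be said. The substantive one: your final gap is $\eta-\mathcal{O}(h^2)$, and positivity requires $h^2$ small compared to $\eta$. The margin you start from is only $0.14-0.125=0.015$, so after dividing by a Lipschitz bound for $L$ on a neighborhood of $[-1,1]$ the resulting $\eta$ is on the order of $10^{-2}$ to $10^{-3}$. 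Meanwhile the admissible learning rates satisfy only $h\lesssim c_1(a)^{4/3}$ (the initial condition $x_0^2+y_0^2\gtrsim 4c_1^{-4/3}$ does not force $x_0^2+y_0^2$ to be arbitrarily large), so $h^2$ with its hidden constant need not be smaller than $\eta$ across the whole admissible range. Your argument therefore proves the lemma only for sufficiently small $h$, i.e., for a restricted set of initial conditions, which is weaker than the statement as used in Lemma~\ref{lem:xy-1_decrease_final_phase}. To close this you would need either to track the constants in $R_2\ge 1-\mathcal{O}(h^2)$ and in the Lipschitz bound explicitly, or to improve the lower bound on $R_2$ away from $1$ — which is essentially what the paper's case split accomplishes.
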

\begin{proof}
    Instead of analyzing the expressions of $R_1$ and $R_2$, we will just check this property by dividing the interval of $a$, i.e., $(0,1]$, into two subintervals and analyze a relaxation of $R_1$ and $R_2$. Consider $a\in[0.75,1]$ and $a\in (0,0.75)$. For the former one, $a\in[0.75,1]$, $R_1\le 1$ and $R_2\ge 1.5$; for the latter one, $a\in (0,0.75)$, $R_1\le 0.85$ and $R_2\ge1$. The detailed analysis is by checking the trend of the following two quantities in the intervals bounded by the above-mentioned values, the decrease of $u_k^\top u_k$ every two steps and the value of $r_{k+1}$ once $-1<r_k<0$ in previous lemmas.
\end{proof}

\begin{lemma}
\label{lem:xy-1_decrease_final_phase}
Under the assumption of Theorem~\ref{thm:good_regularity_convergence}, if $r_k<0$ for all $k\ge n$ given some $n\ge 0$, $|x_ky_k-1|$ will eventually start to decrease in the region $|x_ky_k-1|<R_2(a)$.

\end{lemma}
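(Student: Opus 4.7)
My plan is to split the argument into two steps: first show that the iterate must enter the region $\{|\delta_k| < R_2(a)\}$ at some time $k_0 \ge n$, where $\delta_k := x_k y_k - 1$; then show that from $k_0$ onward the sign-flipping induced by $r_k<0$ combines with Lemma~\ref{lem:guaranteed_decrease_xy-1_in_R2} to produce the claimed monotone decrease. For the entry step, I would argue by contradiction. If $|\delta_k| \ge R_2(a)$ for all $k \ge n$, then by Lemma~\ref{lem:R2>r1} we have $|\delta_k| > R_1(a)$ as well, so Case 1 of Lemma~\ref{lem:u^2_decrease_every_two_step} applies; combined with the a priori cap $u_k^\top u_k \le 4/h + \mathcal{O}(h)$ from Lemma~\ref{lem:u^2_bounded_by_4/h+h}, this forces $u_{k+2}^\top u_{k+2} \le u_k^\top u_k - \tfrac{1}{2}h + \mathcal{O}(h^3)$ for all large $k$. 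Summing yields $u_k^\top u_k \to -\infty$, contradicting nonnegativity. Hence there is a first $k_0 \ge n$ with $|\delta_{k_0}| < R_2(a)$.

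Now I would analyze what happens at and after $k_0$. The easy case is $r_{k_0} \in (-1,0)$: since $|\delta_{k_0+1}| = |r_{k_0}||\delta_{k_0}| < R_2(a)$, Lemma~\ref{lem:guaranteed_decrease_xy-1_in_R2} together with the standing hypothesis $r_{k_0+1} < 0$ yields $r_{k_0+1} \in (-1,0)$. Iterating, $|r_k| < 1$ and $|\delta_{k+1}| < |\delta_k| < R_2(a)$ for all $k \ge k_0$, so $|\delta_k|$ decreases monotonically, which is exactly the conclusion.

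The main obstacle is the subcase $r_{k_0} \le -1$, in which $|\delta_{k_0+1}| \ge |\delta_{k_0}|$ and one has to prevent $|\delta_k|$ from escaping $R_2(a)$ before $r_k$ returns to $(-1,0)$. Using the expression $r_k = 1 - hq(\delta_k)(u_k^\top u_k - h\ell_k x_k y_k)$ together with $q(\delta_k)\le 1$ and the smallness of $\delta_{k_0}$, the condition $r_{k_0}\le -1$ forces $u_{k_0}^\top u_{k_0} \gtrsim 2/h$. Since $r_k < 0$ makes $\delta_k$ alternate sign, every other iterate satisfies $x_k y_k < 1$, so Case 3 of Lemma~\ref{lem:u^2_decrease_every_two_step} delivers a two-step decrease of $u_k^\top u_k$ of order $\min\{\mathcal{O}(h), \mathcal{O}(h|\delta_k|)\}$; whenever instead $|\delta_k|$ momentarily exceeds $R_1(a)$, Case 1 provides an even stronger $\tfrac12 h$ decrease. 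Either way $u_k^\top u_k$ strictly decreases every two steps and, after finitely many steps, drops below the threshold that reduces $r_k$ into $(-1,0)$, returning us to the easy case handled above.

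The technical care I expect to spend the most time on is the bookkeeping showing that $|\delta_k|$ cannot grow past $R_2(a)$ during this transient decrease of $u_k^\top u_k$: one bounds $|\delta_{k+1}| = |r_k||\delta_k|$ by estimating $|r_k|$ in terms of the current $u_k^\top u_k$, and uses Lemma~\ref{lem:u_k+1_upper_bounded_by_u_k} to show that while $u_k^\top u_k$ strictly decreases, $|r_k|$ cannot remain much larger than $1$ for many steps in a row without triggering the Case 1 decrease. Combining these bounds should confine $|\delta_k|$ within a controlled multiple of $R_2(a)$ until $r_k \in (-1,0)$ is reached, at which point the easy case completes the argument.
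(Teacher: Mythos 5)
Your overall skeleton (enter the region via the two-step decrease of $u_k^\top u_k$, then split on whether $r_{k_0}\in(-1,0)$ or $r_{k_0}\le -1$, and use Lemma~\ref{lem:guaranteed_decrease_xy-1_in_R2} to lock in monotone decrease once $|r_k|<1$) matches the paper's strategy, and your entry step and easy case are fine. But there is a genuine gap in the hard case. You claim that when $r_k\le -1$, either Case 3 of Lemma~\ref{lem:u^2_decrease_every_two_step} (on the iterates with $x_ky_k<1$) or Case 1 (when $|\delta_k|$ exceeds $R_1(a)$) applies, so that ``either way $u_k^\top u_k$ strictly decreases every two steps.'' This dichotomy does not cover all situations: Case 3 carries the hypothesis $u_k^\top u_k\le \frac{3}{h}$, whereas $r_k\le -1$ with $|\delta_k|$ small only forces $u_k^\top u_k\gtrsim \frac{2}{h}$ and is compatible with values up to $\frac{4}{h}+\mathcal{O}(h)$; and Case 1 requires $x_ky_k-1>R_1(a,r_k)$ on the positive side. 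In the regime $r_k\le -1$, $0<x_ky_k-1\le R_1(a)$, $u_k^\top u_k>\frac{3}{h}$, none of the three cases applies, and the proof of Lemma~\ref{lem:u^2_bounded_by_4/h+h} explicitly identifies exactly this regime as the one where $u_k^\top u_k$ \emph{increases} over two steps. So the assertion that $u_k^\top u_k$ drops below the threshold ``after finitely many steps'' does not follow from the cited lemmas, and your finite-time return to $r_k\in(-1,0)$ is unproven.

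The paper closes this hole by two additional mechanisms that your proposal does not supply: (i) when $|r_k|$ is large (near $3-\mathcal{O}(\delta_k^2)$), the recursion $\delta_{k+1}=r_k\delta_k$ makes $|\delta_k|$ grow geometrically and quickly exceed $R_1(a)$, at which point Case 1 resumes and recaps $u_k^\top u_k$; and (ii) the iterate is allowed to exit and re-enter the region $|\delta|\le R_2(a)$ repeatedly, and the paper rules out this recurring forever by excluding periodic orbits (cf.\ the null-set removal in Lemma~\ref{lem:does_not_converge_outside_2/h}) and showing that the crossing value $|r_{k+2n}(R_2(a))|$ changes monotonically across successive excursions until either monotone decrease of $|\delta_k|$ begins inside the region or the two-step-decrease regime of $u_k^\top u_k$ is entered there. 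Relatedly, your plan to ``confine $|\delta_k|$ within a controlled multiple of $R_2(a)$'' is not what is needed and is likely not available: the trajectory genuinely can leave the region, and the argument must instead show that the excursions terminate.
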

\begin{proof}

By Lemma~\ref{lem:u^2_decrease_every_two_step}, when $|x_ky_k-1|\ge R_1(a,r_k)$, $u_k^\top u_k$ keeps decreasing every two step with certain amount away from 0. Eventually, by the expression of $|r_k|$, $u_k^\top u_k$ will decrease until GD enters $|x_ky_k-1|\le R_2(a,r_k)$. Note by Lemma~\ref{lem:R2>r1}, $R_2(a,r_k)>R_1(a,r_k)$. However, $|x_ky_k-1|$ may not keep decreasing inside this region. WOLG, consider a lower bound of $R_2(a,r_k)$ to be $R_2(a)\ge 1-\mathcal{O}(h^2)$ and we will use $R_2(a)$ independent of iterations. From the expression of $|r_k|$, we have at least $u_k^\top u_k\le \frac{3}{h}$ in this case.

First consider $|x_ky_k-1|>R_2(a)$. According to Lemma~\ref{lem:u^2_decrease_every_two_step} and Proposition~\ref{prop:q}, $u_k^\top u_k$ decreases every two steps and the function $q$ decreases when $|x_ky_k-1|$ increases. Therefore there exists $k$ s.t. $|r_k|<1$. Moreover, for $|r_k|<1$, there exists $n$, s.t. $x_{k+2n}y_{k+2n}<x_k y_k$ and $|r_{k+2n}|<1$ if $x_{k+2n}y_{k+2n}-1\ge R_1(a)$.

Next, assume $k$ is such that $|x_ky_k-1|>R_2(a)$ but $|x_{k+2}y_{k+2}-1|\le R_2(a)$. According to the initial condition, there is no periodic orbit in the trajectory (see details in the proof of Lemma~\ref{lem:does_not_converge_outside_2/h}). We claim that there exists $n$, s.t., $|r_{k+2n}|<1$ and $|x_{k+2n}y_{k+2n}-1|\le R_2(a)$. Otherwise, if $u_{k+2}^\top u_{k+2}$ will still decrease every two steps, then it returns to the above cases when $|x_{k}y_{k}-1|> R_2(a)$. Then we analyze the case where $u_{k+2}^\top u_{k+2}$ will increase after two steps.
For the first $n$ s.t. $x_{k+2n}y_{k+2n}>R_2(a)$, if we have $|r_{k+2n}(R_2(a))|<|r_{k}(R_2(a))|$, this implies $u_{k+2n}^\top u_{k+2n}<u_k^\top u_k$ which can be either absorbed in the above cases, or eventually fall into the following case. For the first $n$ s.t. $x_{k+2n}y_{k+2n}>R_2(a)$, consider $|r_{k+2n}(R_2(a))|>|r_{k}(R_2(a))|$, with $|r_{k+2i}|\ge 1$ for all $i<n$ and $u_{k+2n}^\top u_{k+2n}>u_k^\top u_k$. Then if such process repeats, $|r_{k+2n}(R_2(a))|$ will be larger and larger until GD either starts to decrease in $|xy-1|\le R_2(a)$ or enters the two-step decreasing region of $u_k^\top u_k$ in $|xy-1|\le R_2(a)$ (we can use $|xy-1|\ge R_1(a)$ to represent this region; however, $R_1(a)$ is just a bound, meaning the actual region is larger), which will lead to $|r_{k+2n}|<1$ inside this region due to the same reasoning as the previous paragraph. Detailed characterization of this stage can be seen in the proof of limiting sharpness.

\end{proof}

\section{Proofs of results for functions~\eqref{eqn:functions} with $b=1$}

\begin{proof}[Proof of Theorem~\ref{thm:balancing_b=1}]
By Theorem 3.1 in \citet{wang2022large},
$$(x-y)^2\le \frac{2}{h}-2.$$
Then since $h=\frac{C}{x_0^2+y_0^2+4}$, we have
    $$(x-y)^2\le \frac{2}{h}-2=\frac{2}{C}(x_0^2+y_0^2-2x_0y_0+2x_0y_0+4)-2=\frac{2}{C}(x_0-y_0)^2+\frac{4}{C}(x_0y_0+2)-2.$$
\end{proof}

\section{Proofs of results for functions~\eqref{eqn:functions} with $b\ge 3$}
\label{app:proof_bad_regularity}

Consider function $$f(x,y)=(1-(xy)^b)^2/(2b^2),\text{ with }b=2n+1\text{ for }n\in\mathbb{Z}.$$ Let $p(s)=\sum_{i=0}^{b-1}s^i$. Then $$1-(xy)^b=(1-xy)p(xy).$$ Apart from the equations in Appendix~\ref{app:preparation_for_proofs}, we will also use the following two equations in our proofs
\begin{align*}
    x_{k+1}y_{k+1}-1=(x_ky_k-1)\left(1-\frac{h}{b}(x_ky_k)^{b-1}p(x_ky_k)u_k^\top u_k+\frac{h^2}{b^2}(x_ky_k-1)(x_ky_k)^{2b-2}p(x_ky_k)^2x_ky_k \right),
\end{align*}
\begin{align*}
    u_{k+1}^\top u_{k+1}&=u_k^\top u_k-\frac{h}{b}(x_ky_k-1)\\
    &\times\left( (x_ky_k)^{b-1}p(x_ky_k)x_ky_k\left(4-\frac{h}{b}(x_ky_k)^{b-1}p(x_ky_k)u_k^\top u_k\right)+\frac{h}{b}(x_ky_k)^{2b-2}p(x_ky_k)^2 u_k^\top u_k\right).
\end{align*}

\subsection{Proof of convergence}

\begin{proof}[Proof of Theorem~\ref{thm:bad_regularity_convergence}]

By Lemma~\ref{lem:does_not_converge_outside_2/h} (with a different choice of null set based on the functions), GD can only converge to the point s.t. $x^2+y^2\le 2/h$. Also we remove all the points (which is in a null set) s.t. they converge in finite step. Therefore, $r_k\ne 0$ for all $k\ge 0$.

If $0<r_0<1$, we have $x_{1}y_1>1$ and for any $x_ky_k>1$,
\begin{align*}
    u_{k+1}^\top u_{k+1}&=u_k^\top u_k-\frac{h}{b}(x_ky_k-1)\bigg( (x_ky_k)^{b-1}p(x_ky_k)x_ky_k\left(4-\frac{h}{b}(x_ky_k)^{b-1}p(x_ky_k)u_k^\top u_k\right)\\&\qquad+\frac{h}{b}(x_ky_k)^{2b-2}p(x_ky_k)^2 u_k^\top u_k\bigg)\\
    &\le u_k^\top u_k.
\end{align*}
Assume $r_i>0$ for $i=0,\cdots,k$. 
Consider
\begin{align*}
    r_{k+1}=\left(1-h\frac{\ell_{k+1}}{x_{k+1}y_{k+1}-1}(u_{k+1}^\top u_{k+1}-h\ell_{k+1} x_{k+1} y_{k+1})\right)
\end{align*}
where $\ell_k=\frac{(x_k y_k)^{b-1} \left((x_k y_k)^b-1\right)}{b}$ and therefore $q_k=\frac{\ell_k}{x_ky_k-1}=\frac{1}{b}(x_ky_k)^{b-1}p(x_ky_k)$. Moreover, $W(s)=\frac{1}{b}s^{b-1}p(s)$ monotonically increases when $s>1$, which implies $q_{k+1}\le q_k$. Also,
\begin{align*}
    0<u_{k+1}^\top u_{k+1}-h\ell_{k+1}x_{k+1}y_{k+1}&=(1+h^2\ell_{k}^2)u_{k}^\top u_{k}-4h\ell_{k} x_{k} y_{k}-h\ell_{k+1}x_{k+1}y_{k+1}\\
    &\le u_{k}^\top u_{k}-h\ell_{k} x_{k} y_{k}+h\frac{C}{b}\ell_k x_ky_k-3h\ell_{k} x_{k} y_{k}-h\ell_{k+1}x_{k+1}y_{k+1}\\
    &\le u_{k}^\top u_{k}-h\ell_{k} x_{k} y_{k}.
\end{align*}
Therefore, $r_{k+1}\ge r_k>0$. Moreover, we have
\begin{align*}
    r_{k+1}&=1-h\frac{\ell_{k+1}}{x_{k+1}y_{k+1}-1}(u_{k+1}^\top u_{k+1}-h\ell_{k+1} x_{k+1} y_{k+1})\\
    & \le 1-h\frac{\ell_{k+1}}{x_{k+1}y_{k+1}-1}(2x_{k+1} y_{k+1}-h\ell_{k+1} x_{k+1} y_{k+1})\\
    &\le 1-h(2-h\ell_{0})<1
\end{align*}
where the second inequality follows from the value at $x_{k+1}y_{k+1}=1$. Then $x_ky_k-1$ exponentially decreases until it converges.

If $r_0<0$, from the upper bound of $h$, we have $0<r_1<1$, $0<x_1y_1<1$, and $u_1^\top u_1\le u_0^\top u_0$ (according to the above discussion for $x_0y_0>1$). Then if $0<x_ky_k<1$,
\begin{align*}
    u_{k+1}^\top u_{k+1}&=(1+h^2\ell_{k}^2)u_{k}^\top u_{k}-4h\ell_{k} x_{k} y_{k}\\
    &=u_k^\top u_k +h|\ell_k|(4x_ky_k+h|\ell_k|u_k^\top u_k)\\
    &\le u_k^\top u_k +(4+Cu_k^\top u_k/(b u_0^\top u_0))h|\ell_k|x_ky_k
\end{align*}
where the inequality follows from $x_ky_k<1$ and $x_0y_0>1$.

Also, when $0<x_ky_k<1$,
\begin{align*}
    x_{k+1}y_{k+1}&=x_ky_k+(1-r_k)(1-x_ky_k)\\
    &=x_ky_k-h{\ell_{k}}(u_{k}^\top u_{k}-h\ell_{k} x_{k} y_{k})\\
    &\ge x_ky_k-h{\ell_{k}}x_{k} y_{k}(2-h\ell_{k} )\\
    &=x_ky_k+h{|\ell_{k}|}x_{k} y_{k}(2+h|\ell_{k}| )\\
    &\ge x_ky_k+2h|{\ell_{k}}|x_{k} y_{k}.
\end{align*}
We claim that when $u_k^\top u_k\le u_0^\top u_0$ and $0<x_ky_k<1$, we have $u_n^\top u_n\le u_0^\top u_0+C_1$ for all $n\ge k$ and for constant $\frac{10}{3}<C_1\le \frac{7}{2}$. Otherwise, consider $N$ s.t. $u_n^\top u_n\le u_0^\top u_0+C_1$ for $k\le n\le N$ and $u_{N+1}^\top u_{N+1}\ge u_0^\top u_0+C_1$. If $0<x_ny_n<1$, 
\begin{align*}
    r_n&=1-h\frac{\ell_n}{x_ny_n-1}(u_n^\top u_n-h\ell_n x_n y_n)\\
    &\ge 1-h(u_n^\top u_n-h\ell_n x_n y_n)\\
    &\ge 1-\frac{C}{4(u_0^\top u_0+4)}(u_n^\top u_n-h\ell_n x_n y_n)\\
    &\ge 1-\frac{u_0^\top u_0+C_1+h|\ell_n| x_n y_n}{u_0^\top u_0+4}\\
    &\ge 1-\frac{u_0^\top u_0+15/4}{u_0^\top u_0+4}>0
\end{align*}
where the first inequality follows from $q_n\le 1$ for $0<x_ny_n<1$; the second inequality follows from the initial condition and the requirement of $h$; the last inequality follows from $h|\ell_n|x_ny_n< \frac{1}{4}$ for $0<x_ny_n<1$ (can be easily checked from the initial condition that $h\le \frac{1}{8}$, and the rest follows from analyzing the expression of the function).
Therefore, $0<x_{n+1}y_{n+1}<1$ and consequently, $0<r_n<1$ for $n=k,\cdots,N$. Especially, consider $n=N$. Then iteratively from the lower bound of $x_{N+1}y_{N+1}$ above, we have
$$\sum_{i=k}^N2h|{\ell_{i}}|x_{i} y_{i}<1$$ and thus $$\sum_{i=k}^N(4+Cu_i^\top u_i/(b u_0^\top u_0))h|\ell_i|x_iy_i\le (4+C(u_0^\top u_0+C_1)/(b u_0^\top u_0))\sum_{i=k}^N h|\ell_i|x_iy_i<\frac{20}{3}\sum_{i=k}^N h|\ell_i|x_iy_i<\frac{10}{3}< C_1.$$ Contradiction.

Then, we have $0<r_n<1$ for all $n\ge k$. Take $k=1$, and then we have the monotone decreasing of $1-x_ny_n$. Also,
\begin{align*}
    r_n&=1-h\frac{\ell_n}{x_ny_n-1}(u_n^\top u_n-h\ell_n x_n y_n)\\
    &\le 1-h q_1 (2x_ny_n-h\ell_n x_ny_n) \\
    &\le 1-h q_1 x_1y_1 (2-h\ell_n )\\
    &\le 1-h(2-h) q_1 x_1y_1 <1.
\end{align*}
Thus, GD will converge to $xy=1$.

\end{proof}

\subsection{Proof of non-EoS}

\begin{proof}[Proof of Theorem~\ref{thm:non_eos_bad_regularity}]

From the proof of convergence, we know $r_k>0$ for $k\ge 1$. Thus when $x_ky_k$ is very close to 1, $r_k>0$. Take the limit and we have
\begin{align*}
    \lim_{k\to\infty} r_k=1-hu_\infty^\top u_\infty\ge0.
\end{align*}
Therefore,
\begin{align*}
    S_\infty=u_\infty^\top u_\infty\le \frac{1}{h}
\end{align*}

\end{proof}

\subsection{Proof of non-balancing}

Below is a more detailed version of Theorem~\ref{thm:no_balancing_bad_regu}.
\begin{theorem}[no Balancing, formal version]
\label{thm:no_balancing_bad_regu_Formal_version}

Consider ${b}=2n+1\text{ for }n\in\mathbb{Z}$. Assume the initial condition satisfies $$(x_0,y_0)\in \{(x,y):xy>4^{\frac{1}{2b-2}}, x^2+y^2\ge 4 \}\backslash \mathcal{B}_b\text{, where }\mathcal{B}_b\text{ is a Lebesgue measure-0 set.}$$ Let the learning rate be $$h= \frac{C}{(x_0^2+y_0^2+4)(x_0 y_0)^{2b-2}}\text{ for } 2\le C\le M_3(b,x_0,y_0)\text{, where }3<M_3(b,x_0,y_0)\le 4,$$ and the precise definition of $M_3$ is given in Theorem~\ref{thm:bad_regularity_convergence}. 
Then GD converges to a global minimum  $(x_\infty,y_\infty)$, and we have
\begin{align*}
    (x_\infty-y_\infty)^2\ge
    \begin{cases}
       (x_0-y_0)^2-\frac{2C}{4b-C}(x_0y_0-1),& \text{ if }r_0>0,\\
       (x_0-y_0)^2+2(x_0y_0-1)-\frac{2C}{b}x_0y_0,& \text{otherwise}
    \end{cases}
\end{align*}

\end{theorem}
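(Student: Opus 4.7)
The plan is to rewrite the target inequality as a lower bound on $u_\infty^\top u_\infty := x_\infty^2 + y_\infty^2$. Since GD converges to a global minimum by Theorem~\ref{thm:bad_regularity_convergence}, we have $x_\infty y_\infty = 1$, so $(x_\infty-y_\infty)^2 = u_\infty^\top u_\infty - 2$ while $(x_0-y_0)^2 = u_0^\top u_0 - 2 x_0 y_0$. The first branch of the statement thus becomes $u_0^\top u_0 - u_\infty^\top u_\infty \le \frac{8b}{4b-C}(x_0 y_0 - 1)$ and the second becomes $u_0^\top u_0 - u_\infty^\top u_\infty \le \frac{2C}{b}\, x_0 y_0$. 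I would then split the proof according to the sign of $r_0$, in each branch reusing the sign and monotonicity information already established in the proof of Theorem~\ref{thm:bad_regularity_convergence}.

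For the case $r_0 > 0$, the proof of Theorem~\ref{thm:bad_regularity_convergence} shows that $r_k > 0$ for all $k$, hence $x_k y_k > 1$ and
\[
\ell_k \;=\; \frac{(x_k y_k)^{b-1}\bigl((x_k y_k)^b - 1\bigr)}{b} \;>\; 0.
\]
Combined with the elementary identity $x_{k+1} - y_{k+1} = (1 + h\ell_k)(x_k - y_k)$ coming from the GD update~\eqref{eqn:gd}, this gives $(x_k-y_k)^2 = (x_0-y_0)^2 \prod_{i=0}^{k-1}(1 + h\ell_i)^2 \ge (x_0-y_0)^2$, so letting $k \to \infty$ yields $(x_\infty-y_\infty)^2 \ge (x_0-y_0)^2$. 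Because $\frac{2C}{4b-C}(x_0 y_0 - 1) > 0$ on the admissible initial set, this is already strictly stronger than the claim.

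The case $r_0 < 0$ carries the real content. From the proof of Theorem~\ref{thm:bad_regularity_convergence} I can import that $0 < x_k y_k < 1$ and $0 < r_k < 1$ for every $k \ge 1$, which forces $\ell_k < 0$ on that tail. Plugging this sign into
\[
u_{k+1}^\top u_{k+1} - u_k^\top u_k \;=\; -4 h \ell_k\, x_k y_k + h^2 \ell_k^2\, u_k^\top u_k
\]
shows the increment is non-negative for $k \ge 1$, so $u_k^\top u_k$ is non-decreasing on that tail and in particular $u_\infty^\top u_\infty \ge u_1^\top u_1$. Hence the only loss is the very first step:
\[
u_0^\top u_0 - u_1^\top u_1 \;=\; 4 h \ell_0\, x_0 y_0 - h^2 \ell_0^2\, u_0^\top u_0 \;\le\; 4 h \ell_0\, x_0 y_0.
\]
Substituting $h = C / [(u_0^\top u_0 + 4)(x_0 y_0)^{2b-2}]$ and the closed form of $\ell_0$ yields $h \ell_0 \le \frac{C\, x_0 y_0}{b(u_0^\top u_0 + 4)}$, and the AM-GM inequality $u_0^\top u_0 \ge 2 x_0 y_0$ then gives $h \ell_0 \le \frac{C}{2b}$, producing $u_0^\top u_0 - u_1^\top u_1 \le \frac{2C}{b} x_0 y_0$ as required.

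The main obstacle I anticipate is \emph{importing} the qualitative trajectory facts from Theorem~\ref{thm:bad_regularity_convergence} cleanly: the somewhat intricate upper limit $M_3$ on the learning rate is precisely what guarantees $x_1 y_1 > 0$ in the $r_0 < 0$ branch, and this positivity in turn fixes the sign of every $\ell_k$ used above. Once this sign-tracking is packaged, neither branch requires delicate telescoping --- one is zero-step trivial and the other is a single-step estimate --- and the only genuinely quantitative input is the AM-GM pinch point $\frac{x_0 y_0}{u_0^\top u_0 + 4} < \frac{1}{2}$, which holds on the admissible initial set.
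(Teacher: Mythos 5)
Your proposal is correct, and the two branches split cleanly against the paper's own proof. The $r_0<0$ branch is essentially identical to the paper's: both arguments observe that for $k\ge 1$ one has $0<x_ky_k<1$, hence $\ell_k<0$ and the one-step identity $u_{k+1}^\top u_{k+1}=(1+h^2\ell_k^2)u_k^\top u_k-4h\ell_k x_ky_k$ makes $u_k^\top u_k$ non-decreasing on the tail, so the entire loss is the first step, bounded by $4h\ell_0 x_0y_0\le\frac{2C}{b}x_0y_0$ via exactly the same estimate $h\ell_0\le\frac{Cx_0y_0}{b(u_0^\top u_0+4)}\le\frac{C}{2b}$ that the paper records at the start of its proof. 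The $r_0>0$ branch is where you genuinely diverge. The paper telescopes the inequality $u_{k+1}^\top u_{k+1}\ge u_k^\top u_k-4h\ell_k x_ky_k$ and controls $\sum_k h\ell_k x_ky_k$ by the total decrease of $x_ky_k$, using $x_{k+1}y_{k+1}\le x_ky_k-h\ell_k x_ky_k(2-\tfrac{C}{2b})$ to get $u_0^\top u_0-u_\infty^\top u_\infty\le\frac{8b}{4b-C}(x_0y_0-1)$, which after the algebraic rewriting yields precisely the stated slack term $-\frac{2C}{4b-C}(x_0y_0-1)$. You instead diagonalize the update \eqref{eqn:gd} along the antisymmetric direction, $x_{k+1}-y_{k+1}=(1+h\ell_k)(x_k-y_k)$, and since $\ell_k\ge 0$ throughout this branch conclude $(x_\infty-y_\infty)^2\ge(x_0-y_0)^2$ directly; as the subtracted term in the statement is positive on the admissible set, this is strictly stronger than the claimed bound and avoids the telescoping entirely. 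What your route buys is brevity and a sharper conclusion in that branch; what the paper's route buys is a uniform bookkeeping of $u_k^\top u_k$ across both branches and an explicit quantitative dependence on $C$ and $b$ that it chooses to display in the theorem. Your identified dependency on Theorem~\ref{thm:bad_regularity_convergence} for the sign and positivity facts (in particular that $M_3$ guarantees $x_1y_1>0$ when $r_0<0$) is exactly how the paper structures the import as well.
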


\begin{proof}[Proof of Theorem~\ref{thm:no_balancing_bad_regu_Formal_version} (and Theorem~\ref{thm:no_balancing_bad_regu})]

Let $h=\frac{C}{(x_0^2+y_0^2+4)(x_0y_0)^{2b-2}}$. Before the proof, let first consider
\begin{align*}
    h\ell_0=\frac{C}{(x_0^2+y_0^2+4)(x_0y_0)^{2b-2}}\frac{(x_0 y_0)^{b-1} \left((x_0 y_0)^b-1\right)}{b}\le \frac{Cx_0y_0}{b(x_0^2+y_0^2+4)}\le \frac{C}{2b}.
\end{align*}

If $r_0>0$, from the proof of convergence, we know: $r_k>0$ for all $k$, and $\ell_k\ge 0$ for all $k$. Moreover, we have $x_ky_k\ge x_{k+1}y_{k+1}$, and consequently $\ell_k\le \ell_0$ for all $k$ (by the monotone decreasing of this function; see details in the proof of convergence). Then
\begin{align*}
    x_{k+1}y_{k+1}
    &=x_ky_k-h{\ell_{k}}(u_{k}^\top u_{k}-h\ell_{k} x_{k} y_{k})\le x_ky_k-h{\ell_{k}}x_{k} y_{k}(2-h\ell_{k} )\\
    &\le x_ky_k-h{\ell_{k}}x_{k} y_{k}(2-\frac{C}{2b} )
\end{align*}
Therefore
\begin{align*}
    x_0y_0-1=\sum_{k=0}^\infty x_ky_k-x_{k+1}y_{k+1}\ge \sum_{k=0}^\infty h{\ell_{k}}x_{k} y_{k}(2-\frac{C}{2b}).
\end{align*}

Also, we have
\begin{align*}
\label{appeqn:bad_regu_balancing_u_onestep_decrease}
    u_{k+1}^\top u_{k+1}&=(1+h^2\ell_{k}^2)u_{k}^\top u_{k}-4h\ell_{k} x_{k} y_{k}\\
    &\ge u_k^\top u_k-4h\ell_k x_ky_k
\end{align*}
Then 
\begin{align*}
    u_0^\top u_0-u_\infty^\top u_\infty=\sum_{k=0}^\infty u_{k}^\top u_k-u_{k+1}^\top u_{k+1}\le \sum_{k=0}^\infty4h\ell_k x_ky_k\le \frac{8b}{4b-C}(x_0y_0-1)\le \frac{2b}{b-1}(x_0y_0-1)
\end{align*}
Then
\begin{align*}
    (x_\infty-y_\infty)^2=u_\infty^\top u_\infty-2x_\infty y_\infty
    &\ge u_0^\top u_0-2x_\infty y_\infty-\frac{8b}{4b-C}(x_0y_0-1)\\
    &=u_0^\top u_0-2x_0y_0+2x_0y_0-2-\frac{8b}{4b-C}(x_0y_0-1)\\
    &= (x_0-y_0)^2-\frac{2C}{4b-C}(x_0y_0-1).
\end{align*}

If $r_0<0$, from the proof of convergence, we have: $r_k>0$ for $k\ge 1$, and $u_{k+1}^\top u_{k+1}\ge u_k^\top u_k$ for $k\ge 1$. Thus 
\begin{align*}
    u_0^\top u_0-u_\infty^\top u_\infty\le u_0^\top u_0-u_1^\top u_1\le 4h\ell_0x_0y_0\le \frac{2C}{b}x_0y_0
\end{align*}
Then
\begin{align*}
    (x_\infty-y_\infty)^2=u_\infty^\top u_\infty-2x_\infty y_\infty
    &\ge u_0^\top u_0-2x_\infty y_\infty-\frac{2C}{b}x_0y_0\\
    &=u_0^\top u_0-2x_0y_0+2x_0y_0-2-\frac{2C}{b}x_0y_0\\
    &= (x_0-y_0)^2+2(x_0y_0-1)-\frac{2C}{b}x_0y_0.
\end{align*}

\end{proof}

\subsection{Supplementary lemmas}

\begin{lemma}
\label{lem:M3>3}
Under the assumption of Theorem~\ref{thm:bad_regularity_convergence}, $M_3>3$.
\end{lemma}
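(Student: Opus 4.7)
The plan is to unpack the definition of $M_3$. After substituting $\ell_0=\frac{(x_0y_0)^{b-1}((x_0y_0)^b-1)}{b}$ and $h=\frac{C}{(x_0^2+y_0^2+4)(x_0y_0)^{2b-2}}$, one recognizes that the quantity inside the braces defining $M_3$ is exactly the GD update
\[
P(C) := x_1 y_1 \;=\; (1+h^2\ell_0^2)\,x_0y_0 - h\ell_0\,(x_0^2+y_0^2),
\]
viewed as a function of $C$. Since $P(C)$ is a polynomial in $C$ (with $h$ linear in $C$), it is continuous in $C$. Thus it suffices to show $P(3)>0$ strictly for every $(x_0,y_0)$ in the admissible initial-condition set, as continuity then yields a neighborhood to the right of $C=3$ on which $P(C) \ge \epsilon$ for some small $\epsilon>0$, giving $M_3>3$.

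The key step is the positivity $P(3)>0$. Write $s=x_0y_0$ and $t=x_0^2+y_0^2$, so $s>2^{1/(b-1)}$, $t\ge 4$, and $b\ge 3$. A direct computation gives
\[
h\ell_0\Big|_{C=3} \;=\; \frac{3\,(s^b-1)}{b\,(t+4)\,s^{b-1}} \;=\; \frac{3\,(s - s^{1-b})}{b\,(t+4)} \;<\; \frac{3s}{b\,(t+4)}.
\]
Dropping the nonnegative term $h^2\ell_0^2\,x_0y_0$ and using the bound above,
\[
P(3) \;\ge\; s - h\ell_0\, t \;\ge\; s\left(1 - \frac{3t}{b(t+4)}\right) \;=\; s\cdot\frac{(b-3)t + 4b}{b(t+4)} \;\ge\; \frac{4s}{t+4} \;>\; 0,
\]
where the last inequality uses $b\ge 3$. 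This strict positivity is the content of the lemma.

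The main (and only) obstacle is bookkeeping: one must verify that the initial-condition constraint $x_0y_0 > 2^{1/(b-1)}$ is what rules out the $s^{1-b}$ term from spoiling the bound, and that the bound $\frac{4s}{t+4}$ is strictly positive uniformly on any compact subset of the initial-condition set (but only pointwise positivity is needed, since $\epsilon$ in the definition of $M_3$ depends on $(x_0,y_0,b)$). Finally, since $P$ is continuous in $C$ and $P(3)>0$, pick any $0<\epsilon < P(3)$; then there is an interval $(3,3+\delta)$ on which $P(C)\ge \epsilon$, so the argmax in the definition of $M_3$ exceeds $3$. Combined with the clipping at $4$, we conclude $3<M_3\le 4$.
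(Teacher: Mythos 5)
Your proposal is correct and follows essentially the same route as the paper: both drop the nonnegative term $h^2\ell_0^2\,x_0y_0$, bound $(x_0y_0)^b-1$ by $(x_0y_0)^b$ so that $h\ell_0 < \frac{Cx_0y_0}{b(x_0^2+y_0^2+4)}$, and conclude positivity at $C=3$ from $b\ge 3$ together with $\frac{x_0^2+y_0^2}{x_0^2+y_0^2+4}<1$. You additionally spell out the continuity-in-$C$ step that upgrades $P(3)>0$ to $M_3>3$, which the paper leaves implicit.
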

\begin{proof}
    By the assumption, we have
\begin{align*}
    &\left(1+\left(\frac{C}{(x_0^2+y_0^2+4)(x_0 y_0)^{2b-2}}\right)^2\ell_0^2\right)x_0 y_0-\frac{C}{(x_0^2+y_0^2+4)(x_0 y_0)^{2b-2}}\ell_0 (x_0^2+y_0^2)\\
    &=\left(1+\left(\frac{C \left((x_0 y_0)^b-1\right)}{b(x_0^2+y_0^2+4)(x_0 y_0)^{b-1}}\right)^2\right)x_0 y_0-\frac{C \left((x_0 y_0)^b-1\right)}{b(x_0^2+y_0^2+4)(x_0 y_0)^{b-1}} (x_0^2+y_0^2)\\
    &\ge x_0y_0-\frac{C(x_0^2+y_0^2) \left((x_0 y_0)^b-1\right)}{b(x_0^2+y_0^2+4)(x_0 y_0)^{b-1}} 
\end{align*}
Since $b\ge 3$, when $C=3$, we have
    $$x_0y_0-\frac{C(x_0^2+y_0^2) \left((x_0 y_0)^b-1\right)}{b(x_0^2+y_0^2+4)(x_0 y_0)^{b-1}} >0.$$
\end{proof}

\begin{lemma}[stepsize]
\label{lem:bad_regu_lr_2/L_4/L}
    Let $C_1$=4. When $x_0^2+y_0^2\ge4 \left(\sqrt{2}+2\right)C_1$, the learning rate bound in Theorem~\ref{thm:bad_regularity_convergence}
    $$\frac{2}{(u_0^\top u_0+C_1)(x_0 y_0)^{2b-2}} \le h\le \frac{M_3(x_0,y_0)}{(u_0^\top u_0+C_1)(x_0 y_0)^{2b-2}}$$
     satisfies 
    $$\frac{2}{S_0}<\frac{2}{(u_0^\top u_0+C_1)(x_0 y_0)^{2b-2}},\text{ and }  \frac{4}{S_0}\le\frac{M_3(x_0,y_0)}{(u_0^\top u_0+C_1)(x_0 y_0)^{2b-2}}.$$

    \end{lemma}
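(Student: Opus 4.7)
The plan is to compute the initial sharpness $S_0$ explicitly from the Hessian eigenvalues of $f(x,y)=F(xy)=(1-(xy)^b)^2/(2b^2)$ at $(x_0,y_0)$, and then verify the two asserted inequalities by directly comparing $S_0$ with $D := (u_0^\top u_0 + C_1)(x_0 y_0)^{2b-2}$. Writing $s_0 := x_0 y_0$, one has
\[
\nabla^2 f(x_0,y_0) \;=\; F''(s_0)\begin{pmatrix} y_0^2 & x_0 y_0 \\ x_0 y_0 & x_0^2 \end{pmatrix} + F'(s_0)\begin{pmatrix} 0 & 1 \\ 1 & 0 \end{pmatrix},
\]
with $F'(s_0)=s_0^{b-1}(s_0^b-1)/b>0$ and $F''(s_0)=s_0^{2b-2}+(b-1)s_0^{b-2}(s_0^b-1)/b>0$, since the assumption forces $s_0 > 2^{1/(b-1)} > 1$. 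The determinant $-2F''(s_0)F'(s_0)s_0 - (F'(s_0))^2$ is strictly negative, so the two eigenvalues have opposite signs; the trace $F''(s_0)(x_0^2+y_0^2)$ being positive, the positive eigenvalue dominates in magnitude, giving
\[
S_0 \;=\; \tfrac{1}{2}\bigl(F''(s_0)(x_0^2+y_0^2) + \sqrt{(F''(s_0))^2(x_0^2+y_0^2)^2 + 8F''(s_0)F'(s_0)x_0 y_0 + 4(F'(s_0))^2}\bigr) \;>\; F''(s_0)(x_0^2+y_0^2).
\]

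Next, I would lower-bound the ratio $F''(s_0)/s_0^{2b-2}$ using the initial condition. Since $s_0 \geq 2^{1/(b-1)}$ implies $s_0^b \geq 2 s_0 \geq 2^{b/(b-1)}$, we have $(s_0^b-1)/s_0^b \geq 1 - 2^{-b/(b-1)}$; combined with $(b-1)/b \geq 2/3$ for $b \geq 3$, this yields
\[
\frac{F''(s_0)}{s_0^{2b-2}} \;=\; 1 + \frac{b-1}{b}\cdot\frac{s_0^b-1}{s_0^b} \;\geq\; 1 + \frac{2}{3}\bigl(1 - 2^{-b/(b-1)}\bigr) \;\geq\; \frac{5}{3} - \frac{\sqrt{2}}{6} \;=\; \frac{10-\sqrt{2}}{6},
\]
the worst case being $b=3$, $s_0 = \sqrt{2}$. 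Combining with $S_0 > F''(s_0)(x_0^2+y_0^2)$ yields the clean bound $S_0 > \tfrac{10-\sqrt{2}}{6}(x_0^2+y_0^2)\,s_0^{2b-2}$.

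To conclude, Lemma~\ref{lem:M3>3} gives $M_3 > 3$, so for the second claim $4/S_0 \leq M_3/D$ it suffices to establish $S_0 \geq 4D/3$; the first claim $2/S_0 < 2/D$, i.e.\ $S_0 > D$, then follows a fortiori. The inequality $S_0 \geq 4D/3$ reduces, after writing $D = (x_0^2+y_0^2+4)s_0^{2b-2}$ and using the lower bound on $S_0$, to
\[
\frac{10-\sqrt{2}}{6}(x_0^2+y_0^2) \;\geq\; \frac{4}{3}(x_0^2+y_0^2) + \frac{16}{3}, \qquad \text{i.e.,} \qquad \frac{2-\sqrt{2}}{6}(x_0^2+y_0^2) \;\geq\; \frac{16}{3}.
\]
Rationalizing gives exactly $x_0^2+y_0^2 \geq 32/(2-\sqrt{2}) = 16(\sqrt{2}+2) = 4(\sqrt{2}+2)C_1$, which is precisely the hypothesis. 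The main (and essentially only) obstacle is the tight constant bookkeeping: the threshold $4(\sqrt{2}+2)C_1$ is not a generic number but is exactly saturated by the above chain in the worst case $b=3$, so one must use the sharp lower bound on $F''(s_0)/s_0^{2b-2}$ and not lose slack anywhere — in particular, discarding the $(F'(s_0))^2$ term under the square root is safe only because the surplus from $F''(s_0)(x_0^2+y_0^2)$ already matches the threshold.
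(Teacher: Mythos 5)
Your proof is correct and follows essentially the same route as the paper's: bound $S_0$ from below by the trace $F''(s_0)(x_0^2+y_0^2)$ using the negativity of the Hessian determinant for $x_0y_0>1$, lower-bound $F''(s_0)/s_0^{2b-2}$ by $1+\tfrac{2}{3}\bigl(1-\tfrac{1}{2\sqrt 2}\bigr)=\tfrac{10-\sqrt 2}{6}$ at the worst case $b=3$, and use the hypothesis on $x_0^2+y_0^2$ to absorb the $+C_1$ term so that $S_0\ge\tfrac{4}{3}(u_0^\top u_0+C_1)(x_0y_0)^{2b-2}$, concluding via $M_3>3$. The only cosmetic difference is that you write out the eigenvalue formula explicitly where the paper simply invokes $\det\nabla^2 f<0$.
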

\begin{proof}
    Consider the Hessian $\nabla^2f(x,y)$. The trace is
\begin{align*}
\Tr\nabla^2 f(x,y)&= (1 + (1 - 1/b) (1 - 1/(x y)^b)) (x^2 + y^2) (x y)^{2 b - 2}\\
&>(1 + (1 - 1/b) (1 - \frac{1}{4^{\frac{b}{2b-2}}}))(x^2 + y^2) (x y)^{2 b - 2}\\
&\ge\left(1+\frac{2}{3} \left(1-\frac{1}{2 \sqrt{2}}\right)\right)(x^2 + y^2) (x y)^{2 b - 2}\\
&> \frac{4}{3}(x^2 + y^2) (x y)^{2 b - 2},
\end{align*}
where the second inequality is via lower bounding of the expression at $b=3$.
    
   The determinant is 
    $$\det \nabla^2f(x,y)=-\frac{(x y)^{2 b} \left((x y)^b-1\right) \left(-(x y)^b+b \left(4 (x y)^b-2\right)+1\right)}{b^2 x^2 y^2}.$$ When $xy>1$, $\det\nabla^2f(x,y)<0$. Therefore, initial sharpness
    $$S_0>\Tr\nabla^2f(x_0,y_0).$$

   Then when $x_0^2+y_0^2\ge 4 \left(\sqrt{2}+2\right)C_1=16 \left(\sqrt{2}+2\right)$, we have
   $$  S_0> \left(1+\frac{2}{3} \left(1-\frac{1}{2 \sqrt{2}}\right)\right)(x_0^2 + y_0^2) (x_0 y_0)^{2 b - 2}\ge \frac{4}{3}(x_0^2 + y_0^2+C_1) (x_0 y_0)^{2 b - 2},$$
   and thus the lower bound of $h$ is greater than $\frac{2}{S_0}$, actually $\frac{8}{3S_0}$, and the upper bound of $h$ is greater than $\frac{4M_3}{3S_0}> \frac{4}{S_0}$ since $M_3>3$.
\end{proof}

\section{Additional experiments}
\label{app:experiments}

In this section, we provide the experiment details. 
We perform experiments on the CIFAR-10 and MNIST datasets with models of different regularities. Throughout the training process, we apply the full batch gradient descent with a group of constant learning rates to illustrate the presence/absence of large learning rate phenomena in different regularity settings. The algorithm does not use any weight decay or momentum.

\subsection{CIFAR-10}

To shorten the training time, we train our model on CIFAR-10-1k, which consists of $1000$ randomly chosen training samples from CIFAR-10. The data dimension is $N_0=3\times 32\times 32=3072$. The labels are transformed into their one-hot encoding, and the output dimension is $N_2=10$. 

We consider 2-layer neural networks in \eqref{eqn:neural_network_model}, i.e., $l=2$, with $N_1=200$ hidden neurons. Correspondingly, the weight matrices for the two layers are $W_1\in\RR^{N_0\times N_1}$ and $W_2\in\RR^{N_1\times N_2}$. We exclude any bias terms in our models. The loss function $\cL$ is chosen from $\ell^2$ (MSE) loss and the huber loss. The activation function $\sigma_1$ is chosen from $\mathrm{ReLU}$, $\tanh$ and $\mathrm{ReLU}^3$. 

For the weight initialization, we first use the default uniform initialization in PyTorch. Each weight in the $i$-th layer ($i=1,2$) follows the uniform distribution on $[-\frac{1}{\sqrt{N_{i-1}}}, \frac{1}{\sqrt{N_{i-1}}}]$. We then scale the weight matrices so that $\left\|{W_1}\right\|_{\rm F}=6$ and $\left\|{W_2}\right\|_{\rm F}=20$. 

We adopt the same Lanczos algorithm as in \cite{cohen2021gradient} to compute the leading eigenvalue of Hessian, i.e., the sharpness. For the settings with total epochs no greater than $20,000$, we compute the sharpness after every $40$ epochs. For those settings with total epochs greater than $20,000$, we compute the sharpness after every $400$ epochs.

The results of the above mentioned six cases are shown in Figure~\ref{figapp:huber_tanh}\ref{figapp:huber_relu}\ref{figapp:mse_tanh}\ref{figapp:mse_relu}\ref{figapp:mse_cubic_relu}\ref{figapp:huber_cubic_relu_2layer}. All the cases are consistent with our theory that bad regularity vanishes large learning rate phenomena, except for huber loss with $\mathrm{ReLU}^3$. 
The reason for this inconsistency potentially lies in the fact that the $\mathrm{ReLU}^3$ is only added to $W_1$, while $W_2$ is still linear under the corresponding objective function (and linear function, from our theory, has good regularity). The disagreement of the `order' of weights is beyond the scope of our theory and we will leave it for future exploration.

Nevertheless, to provide further support for our regularity theory, we consider the 3-layer model \eqref{eqn:dor_3layer_toy} with a fixed last layer for the combination of $\mathrm{ReLU}^3$ activation and huber loss. In this case, we add a fixed last layer $W_3\in\RR^{N_2\times N_2}$ whose entries are uniformly sampled from $\{+1, -1\}$. We set $\sigma_1$ be the linear activation, $\sigma_2=\mathrm{ReLU}^3$, and $\cL=\mathrm{Huber}$. To prevent the initial sharpness from being too large, we use a slightly smaller initialization where $\left\|{W_1}\right\|_{\rm F}=3$ and $\left\|{W_2}\right\|_{\rm F}=10$. The result is shown in Figure~\ref{figapp:huber_cubic_relu_3layer}, which is consistent with our claim.

For bad regularity models (degree greater than or equal to $2$ in Tab.~\ref{tab:dor_ml}), we also implement experiments with batch normalization to see if the large learning rate phenomena can be recovered. For the cases using 2-layer network (see Figure~\ref{figapp:mse_relu_batch_normalization}\ref{figapp:mse_cubic_relu_batch_normalization}), we add a standard \texttt{BatchNorm1d} layer in PyTorch between the first layer $W_1$ and the activation $\sigma_1$. We set \texttt{affine=False} so that the normalization layer would not contain any learnable parameters. 
For 3-layer network (see Figure~\ref{figapp:huber_cubic_relu_batch_normalization}), it is added between $W_2$ and $\sigma_2$.

\begin{figure}[ht]
    \centering
    \includegraphics[width=\textwidth]{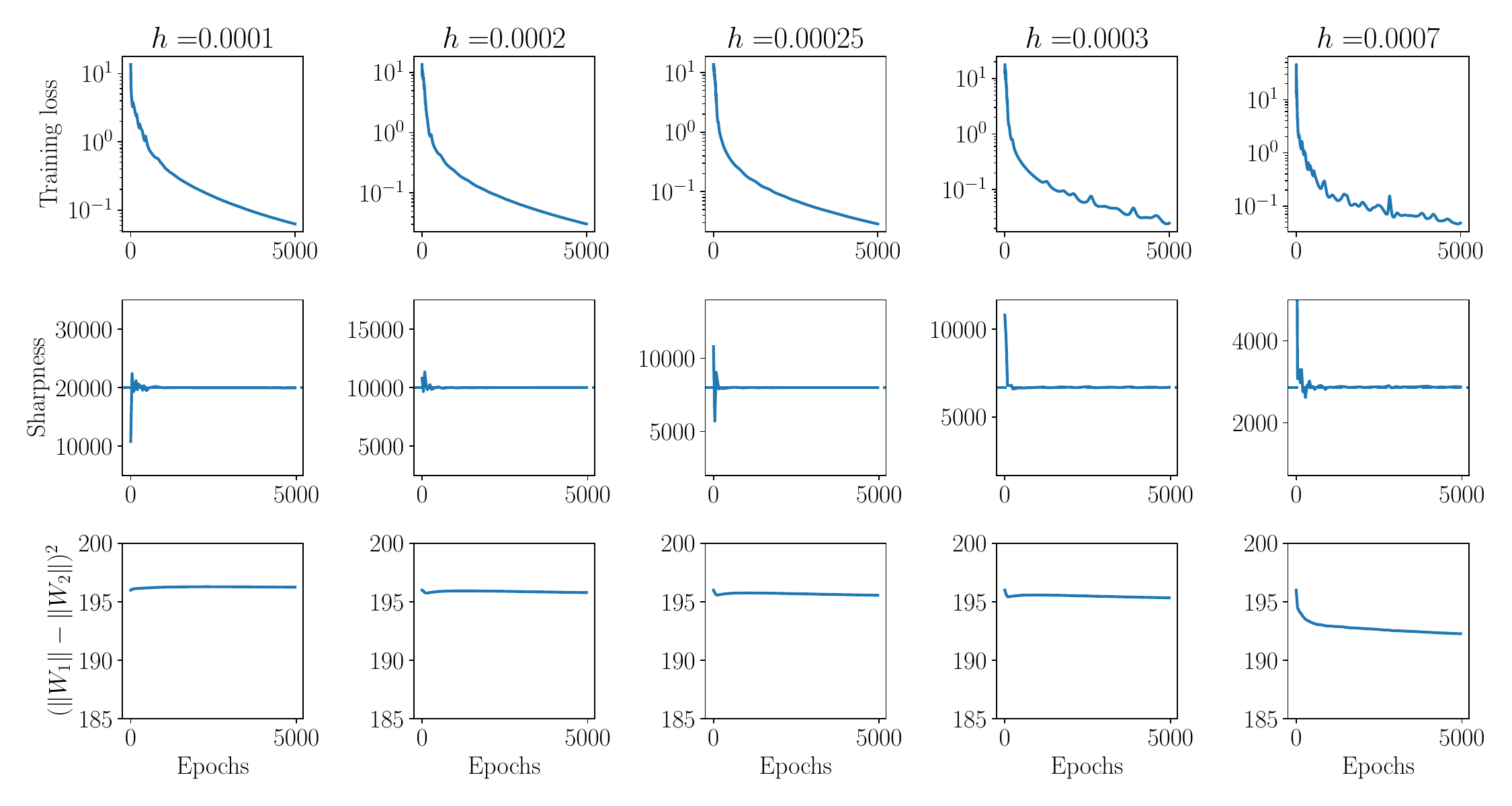}
    \caption{CIFAR-10: huber+tanh}
    \label{figapp:huber_tanh}
\end{figure}
\begin{figure}[ht]
    \centering
    \includegraphics[width=\textwidth]{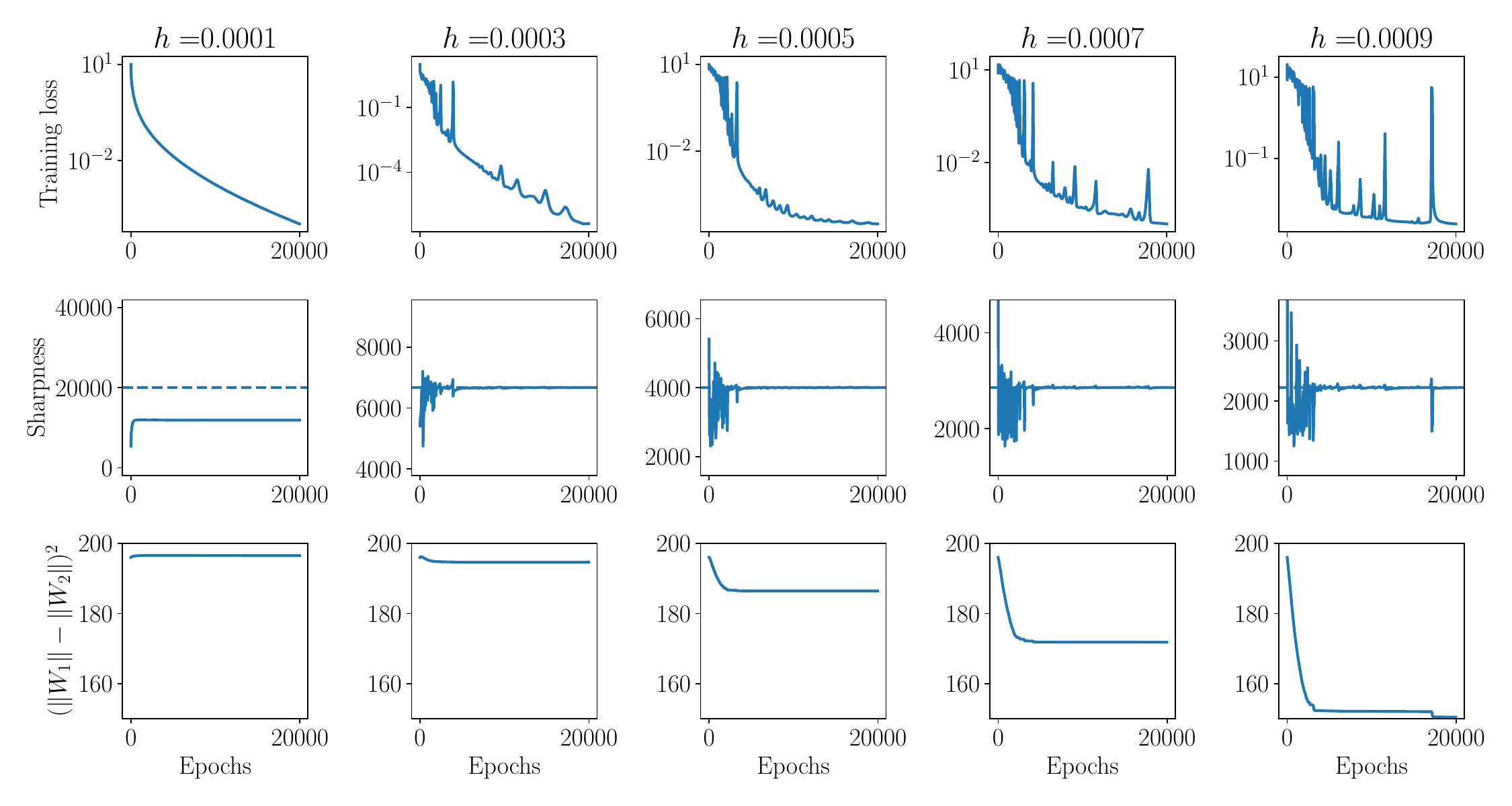}
    \caption{CIFAR-10: huber+ReLU}
    \label{figapp:huber_relu}
\end{figure}

\begin{figure}[ht]
    \centering
    \includegraphics[width=\textwidth]{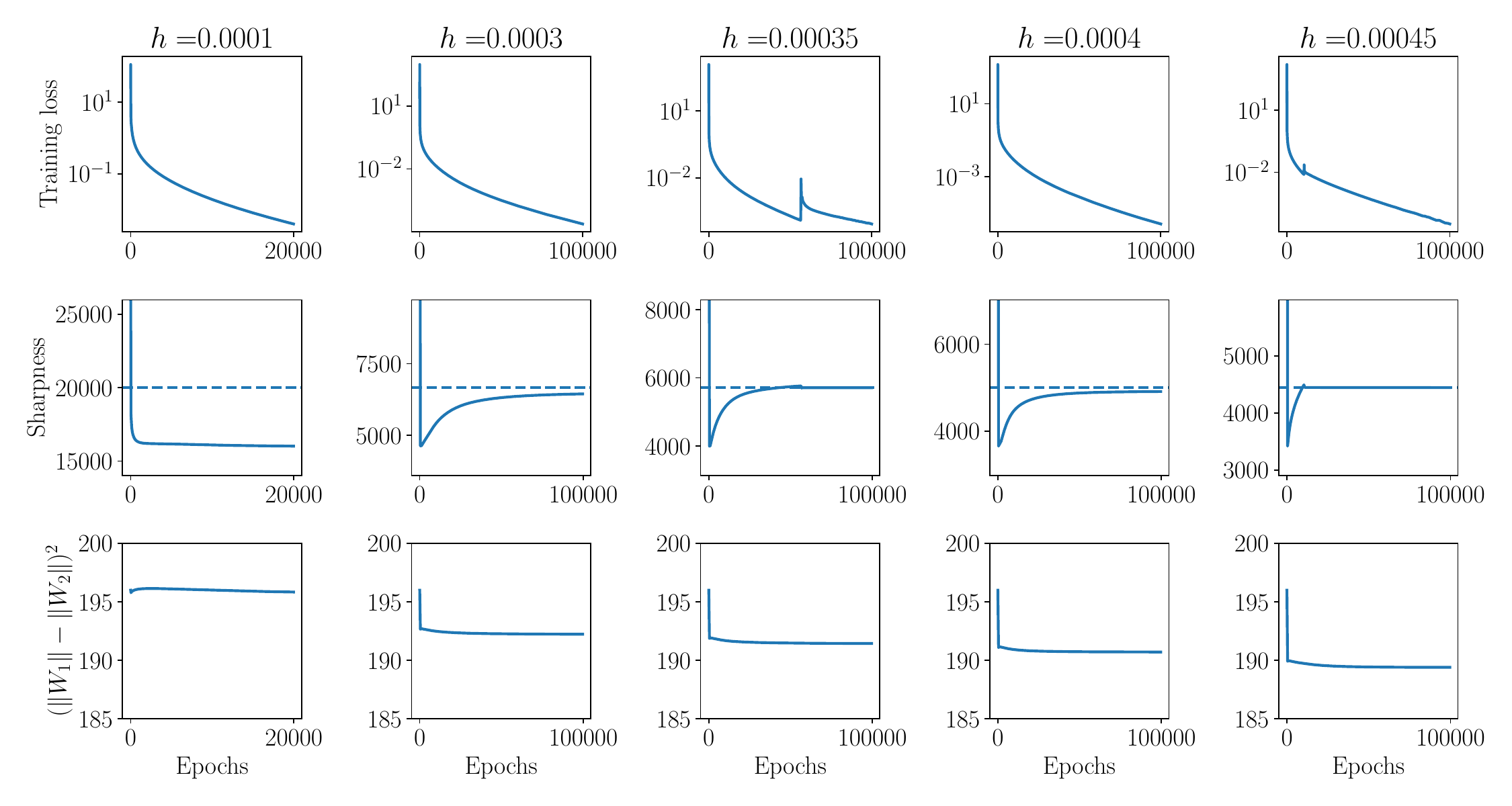}
    \caption{CIFAR-10: $\ell^2$+tanh}
    \label{figapp:mse_tanh}
\end{figure}
\begin{figure}[ht]
    \centering
    \includegraphics[width=\textwidth]{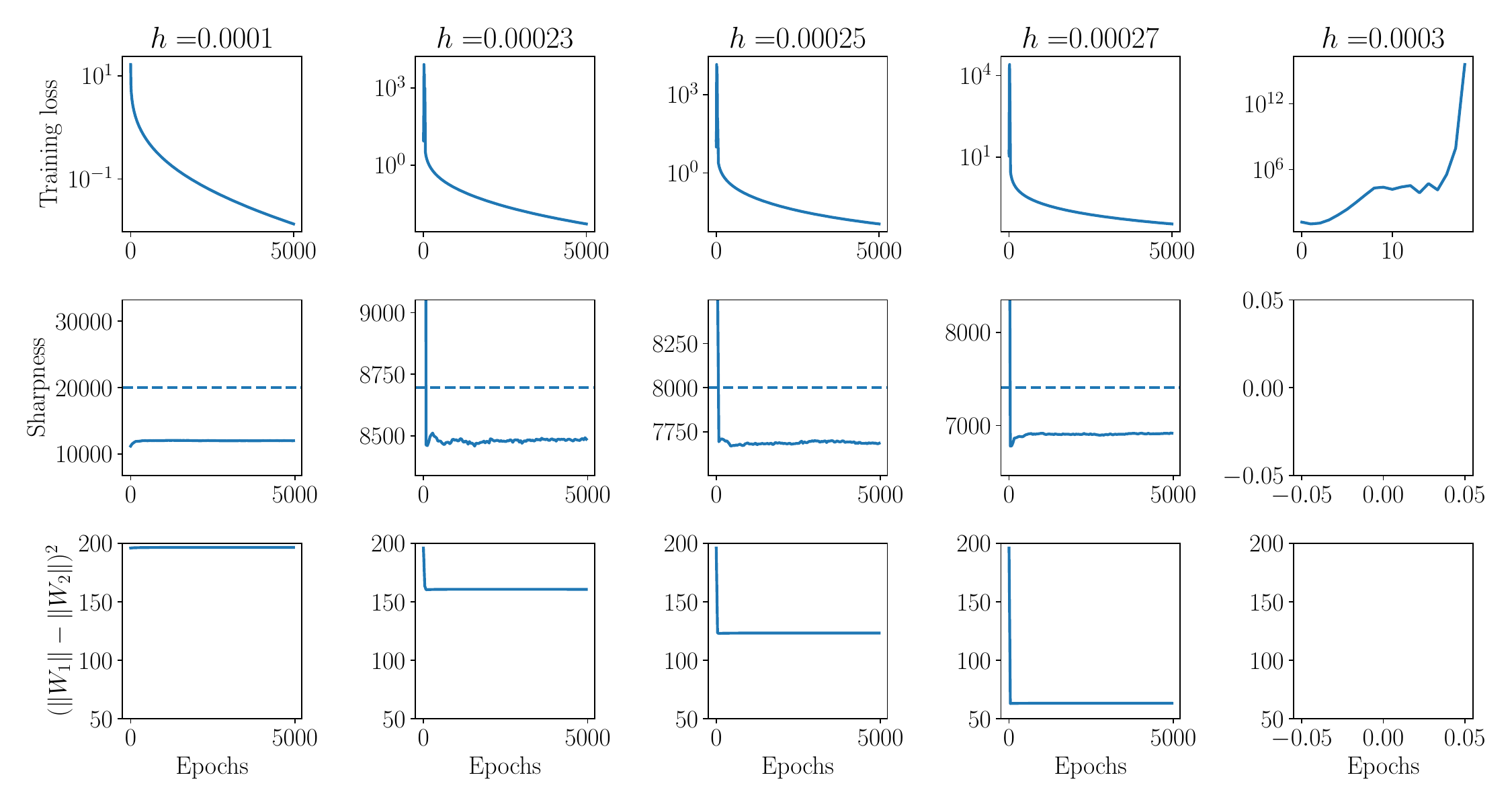}
    \caption{CIFAR-10: $\ell^2$+ReLU}
    \label{figapp:mse_relu}
\end{figure}
\begin{figure}[ht]
    \centering
    \includegraphics[width=\textwidth]{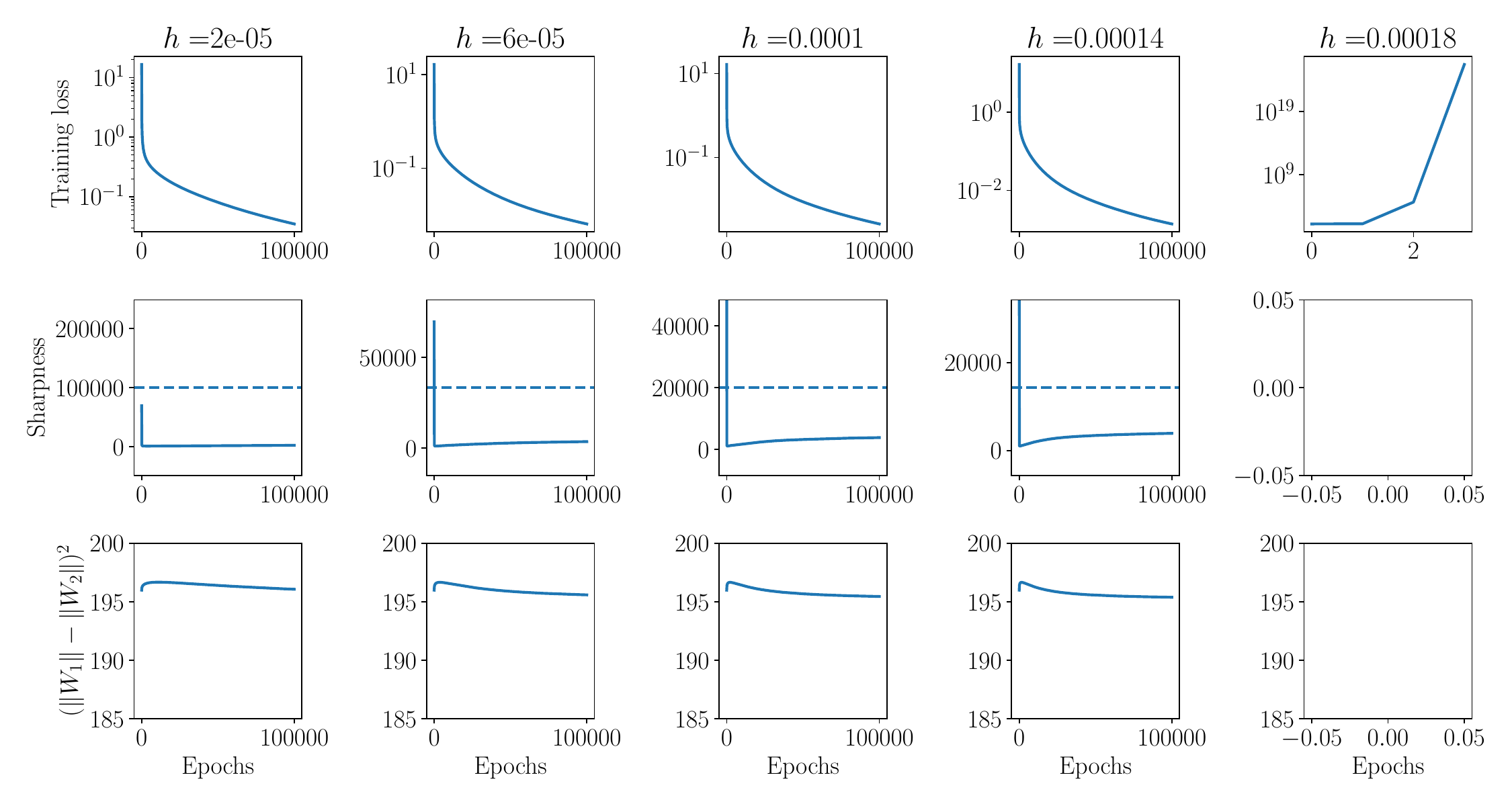}
    \caption{CIFAR-10: $\ell^2$+$\mathrm{ReLU}^3$}
    \label{figapp:mse_cubic_relu}
\end{figure}

\begin{figure}[ht]
    \centering
    \includegraphics[width=\textwidth]{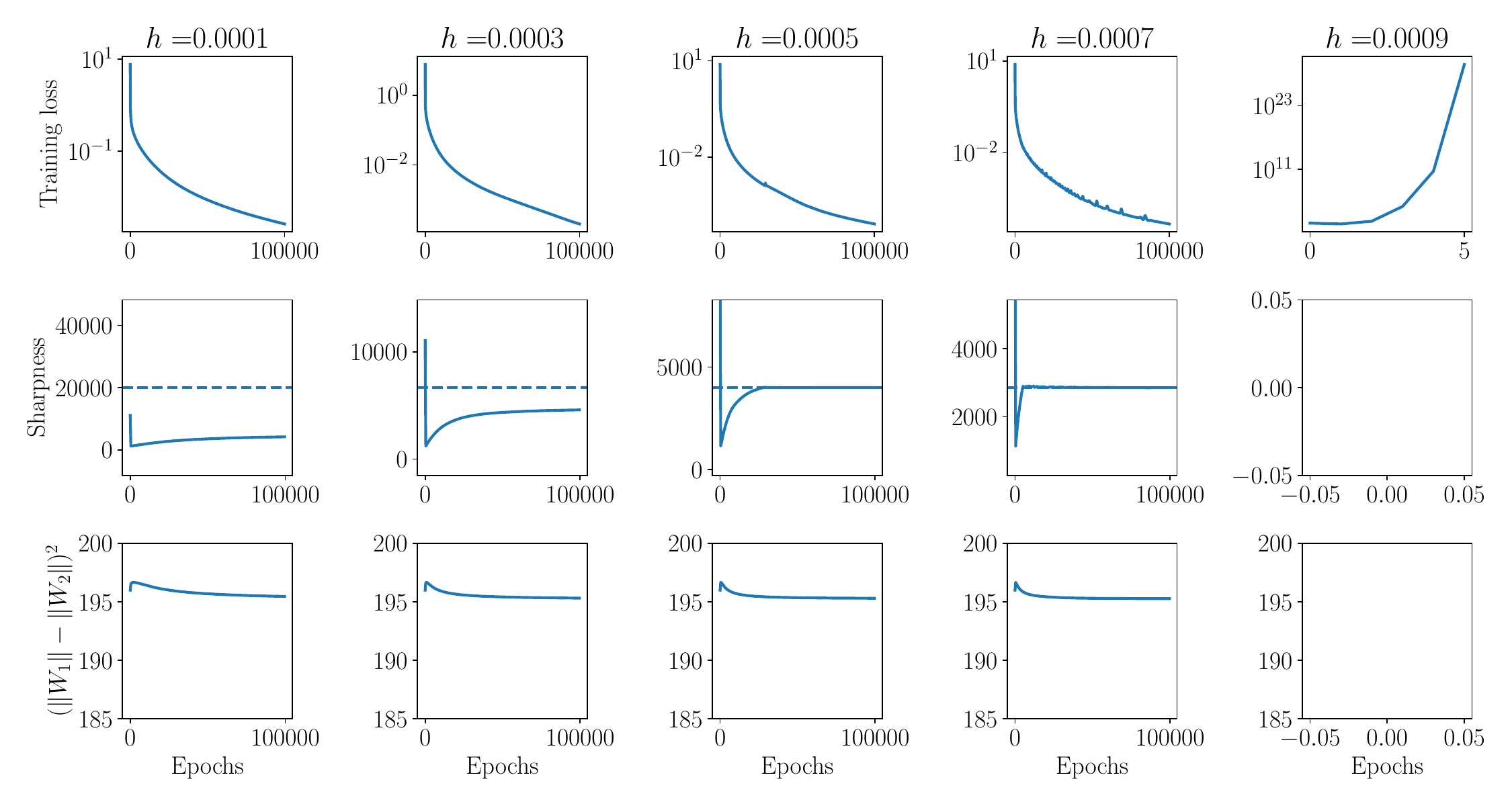}
    \caption{CIFAR-10: $\ell^2$+$\mathrm{ReLU}^3$, 2-layer network}
    \label{figapp:huber_cubic_relu_2layer}
\end{figure}
\begin{figure}[ht]
    \centering  \includegraphics[width=\textwidth]{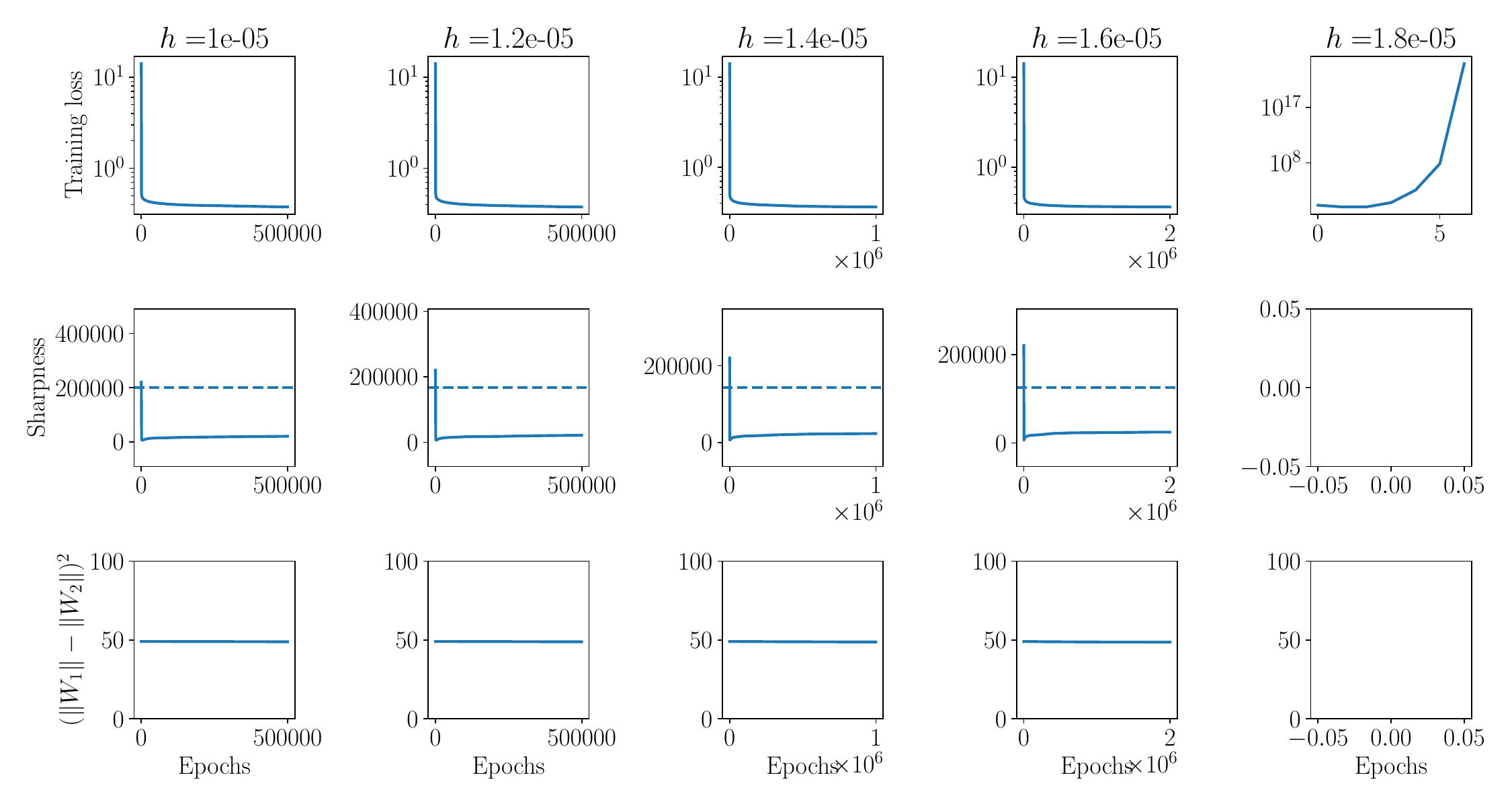}
    \caption{CIFAR-10: $\ell^2$+$\mathrm{ReLU}^3$, 3-layer network}
    \label{figapp:huber_cubic_relu_3layer}
\end{figure}

\begin{figure}[ht]
    \centering
    \includegraphics[width=\textwidth]{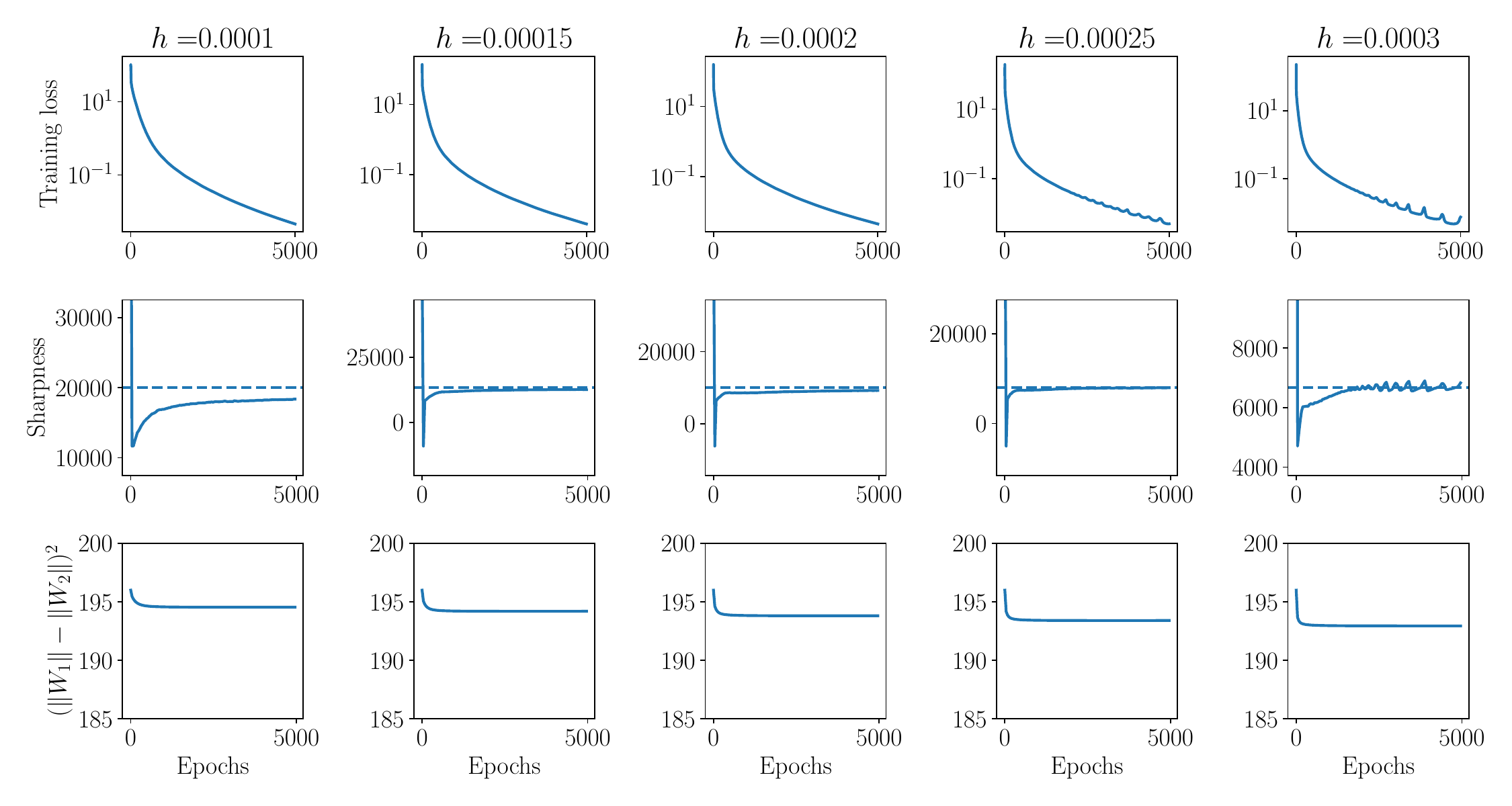}
    \caption{CIFAR-10: $\ell^2$+ReLU with batch normalization}
    \label{figapp:mse_relu_batch_normalization}
\end{figure}

\begin{figure}[ht]
    \centering
    \includegraphics[width=\textwidth]{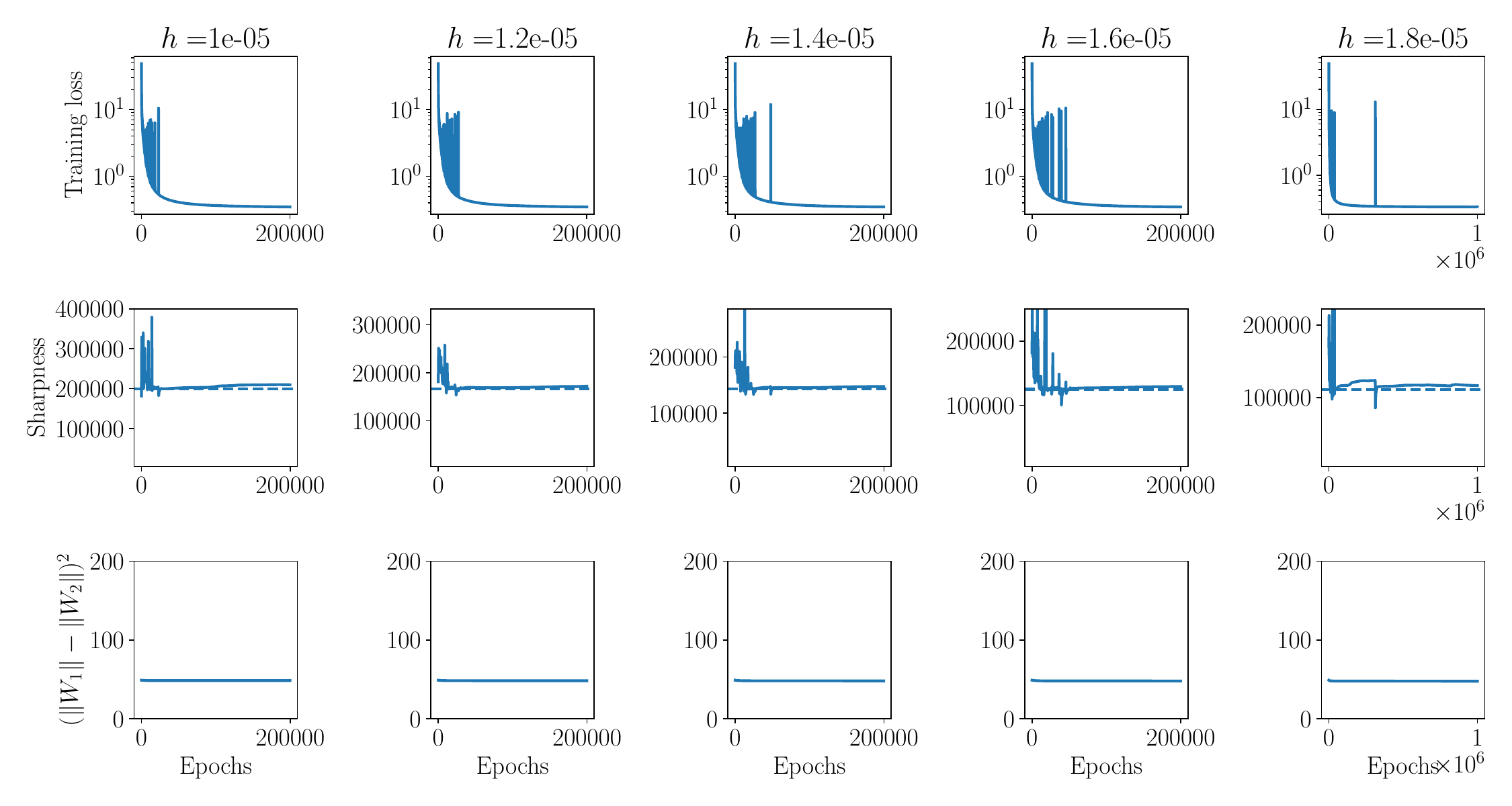}
    \caption{CIFAR-10: huber+$\mathrm{ReLU}^3$ (3-layer) with batch normalization}
    \label{figapp:huber_cubic_relu_batch_normalization}
\end{figure}

\begin{figure}[ht]
    \centering
    \includegraphics[width=\textwidth]{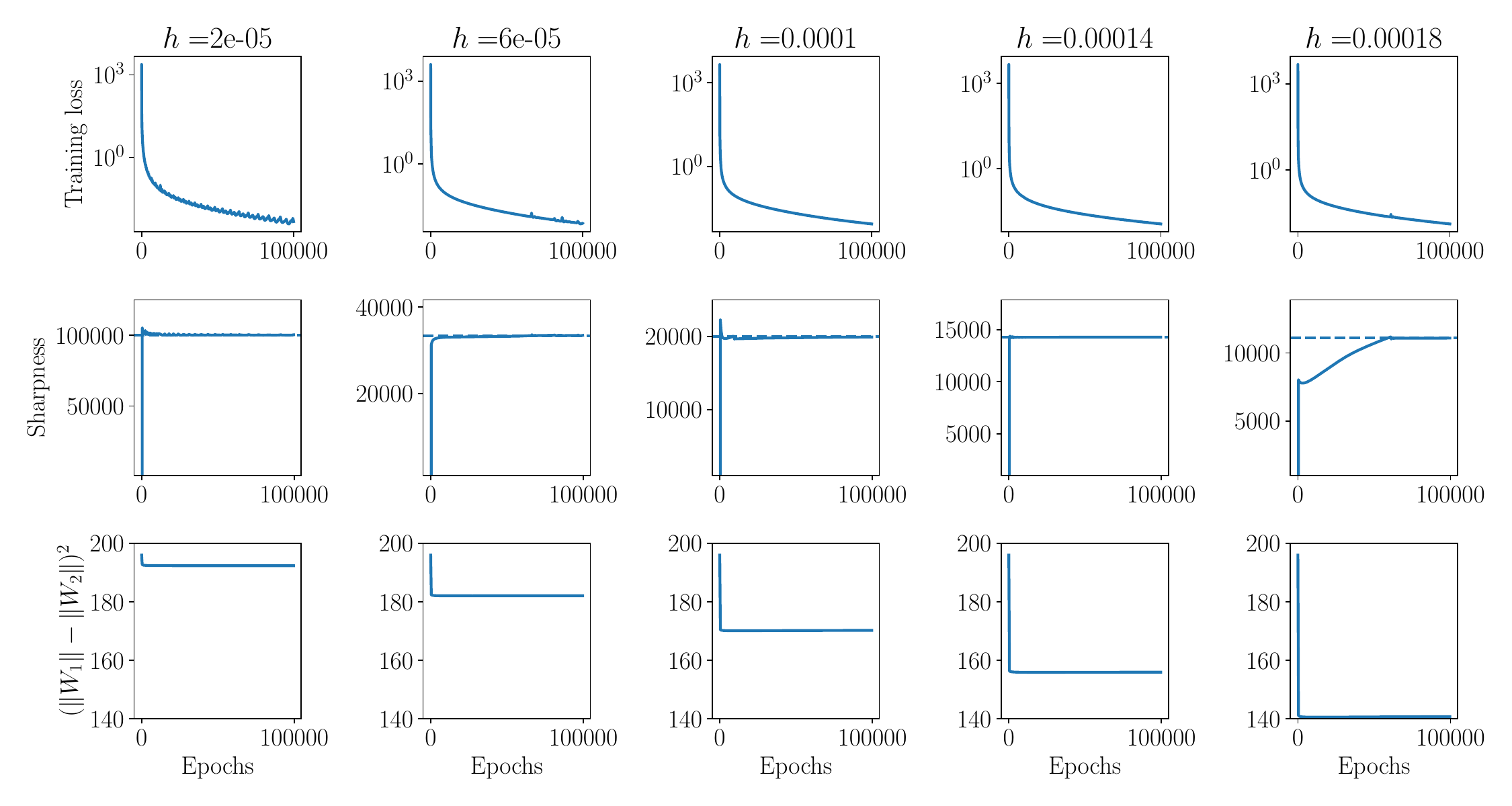}
    \caption{CIFAR-10: $\ell^2$+$\mathrm{ReLU}^3$ with batch normalization}
    \label{figapp:mse_cubic_relu_batch_normalization}
\end{figure}

\subsection{MNIST}

The setting in MNIST is similar to CIFAR-10. We again perform experiments on a $1000$ sample subset of MNIST. The input dimension now is $N_0=1\times 28\times 28=784$, and the output dimension is still $N_2=10$. 
For the weight initialization, we use $\left\|{W_1}\right\|_{\rm F}=7$ and $\left\|{W_2}\right\|_{\rm F}=40$ for huber+ReLU, $\left\|{W_1}\right\|_{\rm F}=10$ and $\left\|{W_2}\right\|_{\rm F}=40$ for all the other 2-layer models, and $\left\|{W_1}\right\|_{\rm F}=6$ and $\left\|{W_2}\right\|_{\rm F}=15$ for the 3-layer model with last layer fixed. 
All the cases (6 cases under 2-layer network, and huber+$\mathrm{ReLU}^3$ under 3-layer network) exhibit the same results on MNIST (see Figure~\ref{fig:mnist_huber_tanh}\ref{fig:mnist_huber_relu}\ref{fig:mnist_mse_tanh}\ref{fig:mnist_mse_relu}\ref{fig:mnist_huber_cubic_relu_2layer}\ref{fig:mnist_huber_cubic_relu_3layer}\ref{fig:mnist_mse_cubic_relu}). We also test batch normalization on an example of bad regularity, huber+$\mathrm{ReLU}^3$ (3-layer), and observe the restoration of large learning rate phenomena (see Figure~\ref{fig:mnist_huber_cubic_relu_BN}).

\begin{figure}
    \centering
    \includegraphics[width=0.8\textwidth]{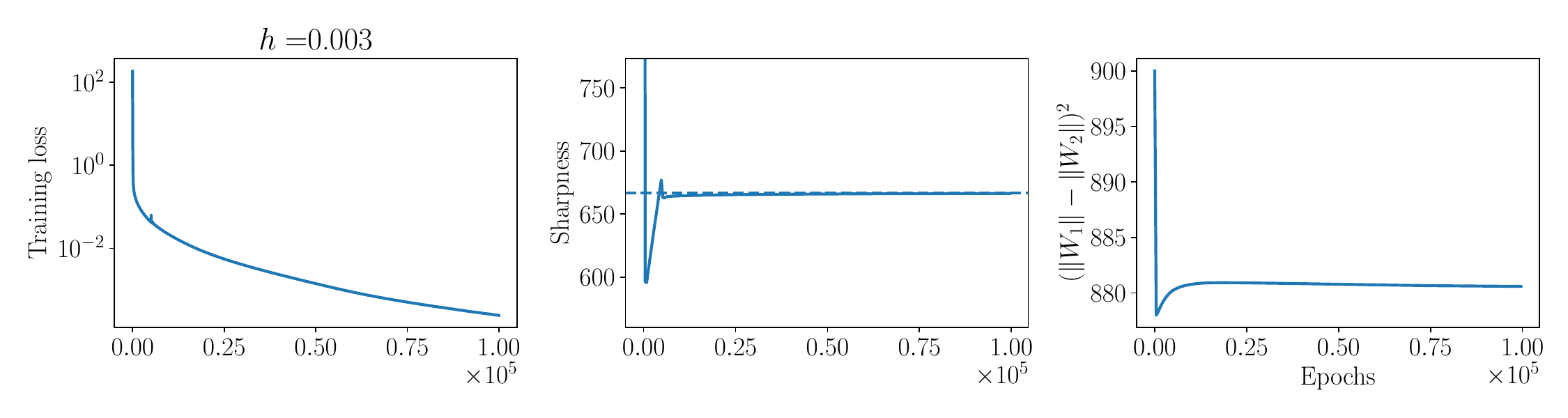}
    \caption{MNIST: huber+tanh}
    \label{fig:mnist_huber_tanh}
\end{figure}

\begin{figure}
    \centering
    \includegraphics[width=0.8\textwidth]{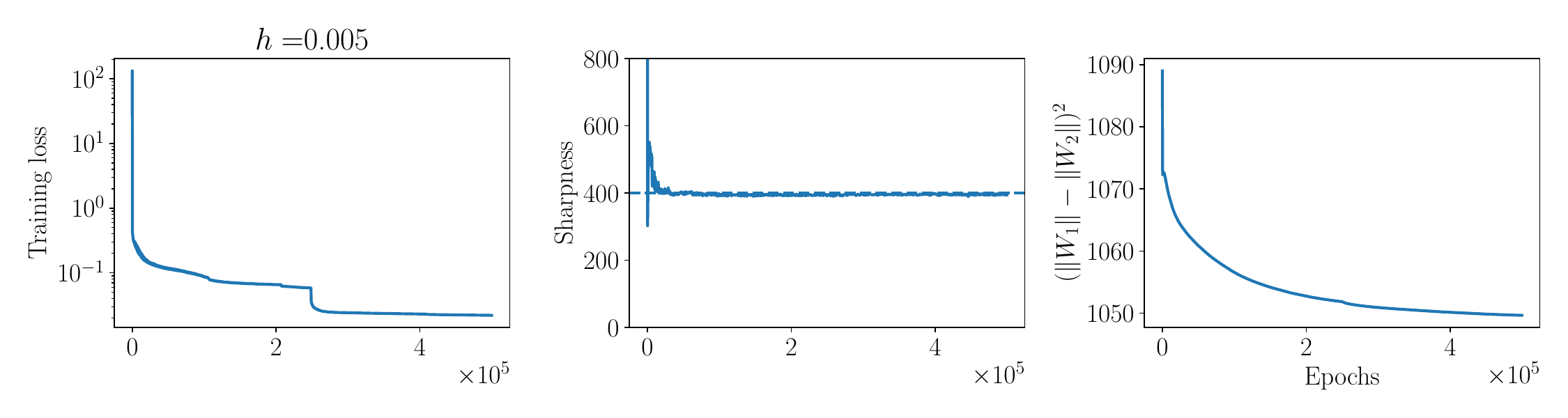}
    \caption{MNIST: huber+ReLU}
    \label{fig:mnist_huber_relu}
\end{figure}

\begin{figure}
    \centering
    \includegraphics[width=0.8\textwidth]{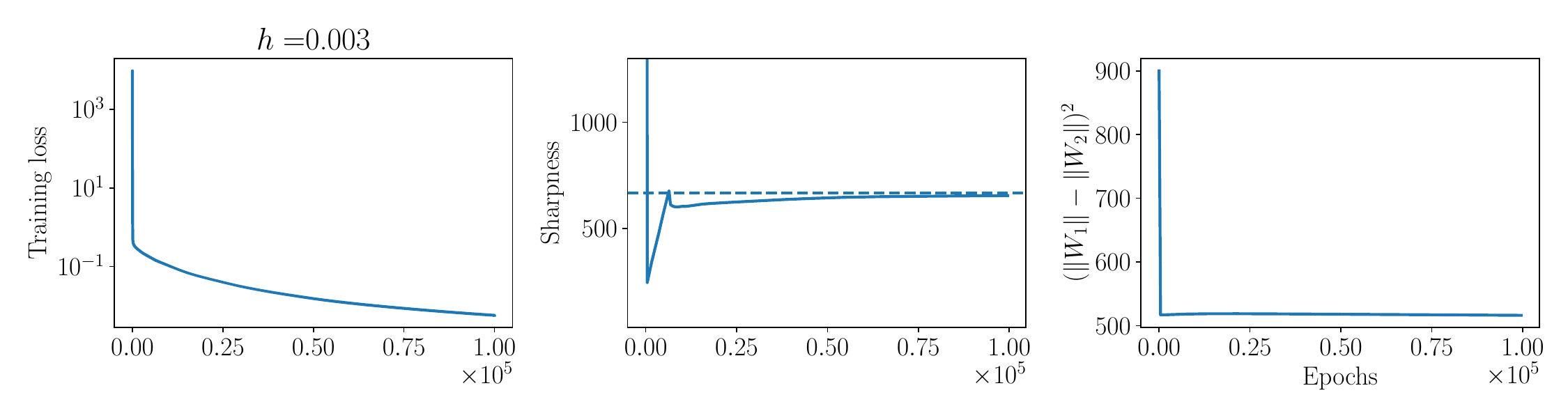}
     \caption{MNIST: $\ell^2$+tanh}
    \label{fig:mnist_mse_tanh}
\end{figure}

\begin{figure}
    \centering
    \includegraphics[width=0.8\textwidth]{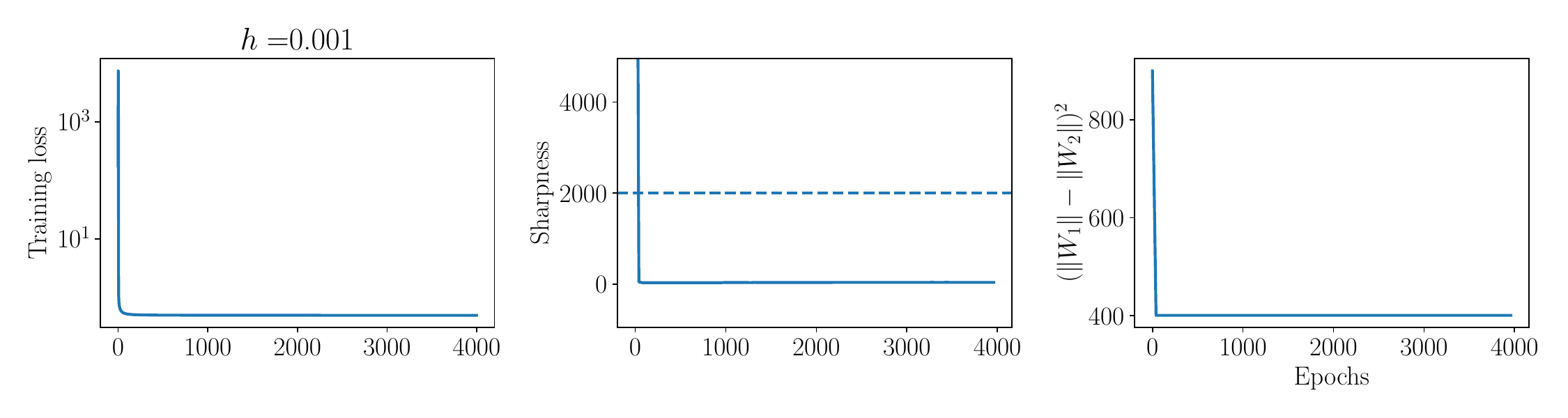}
    \caption{MNIST: $\ell^2$+ReLU}
    \label{fig:mnist_mse_relu}
\end{figure}

\begin{figure}
    \centering
    \includegraphics[width=0.8\textwidth]{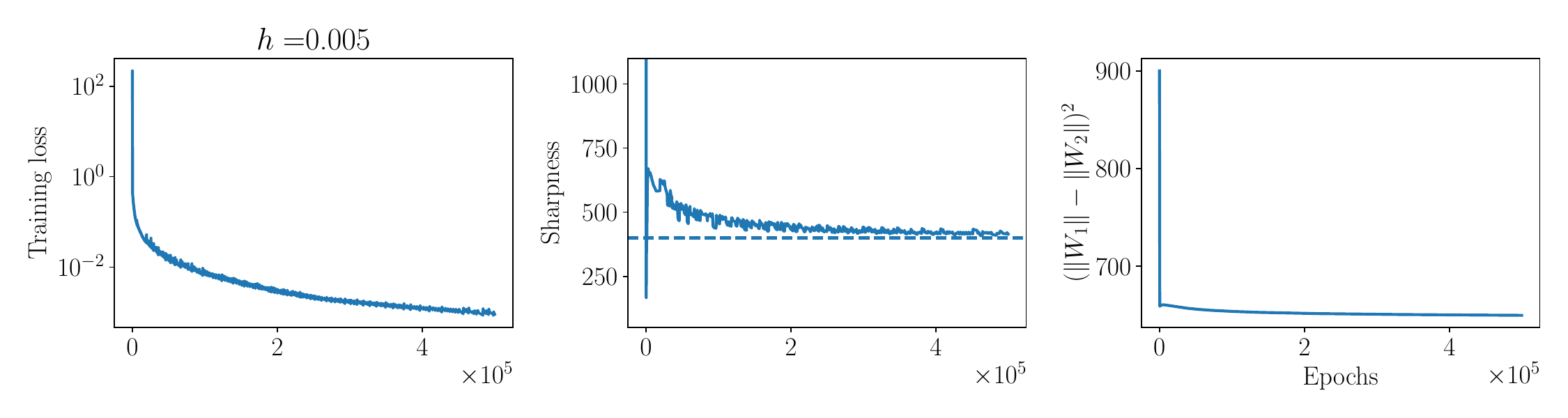}
    \caption{MNIST: huber+$\mathrm{ReLU}^3$ (2-layer)}
    \label{fig:mnist_huber_cubic_relu_2layer}
\end{figure}

\begin{figure}
    \centering
    \includegraphics[width=0.8\textwidth]{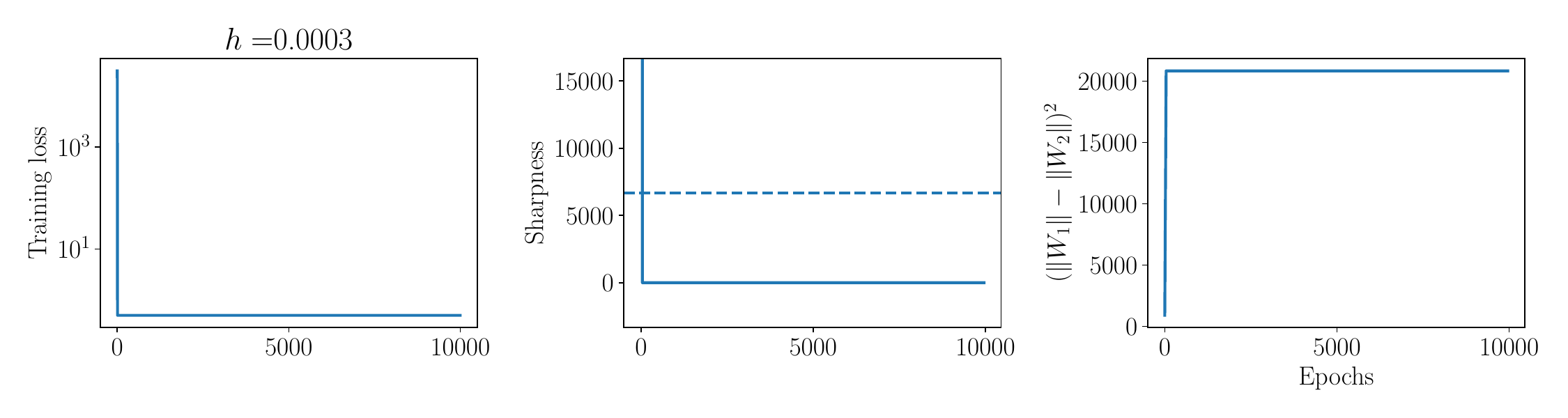}
    \caption{MNIST: huber+$\mathrm{ReLU}^3$ (3-layer)}
    \label{fig:mnist_huber_cubic_relu_3layer}
\end{figure}

\begin{figure}
    \centering
    \includegraphics[width=0.8\textwidth]{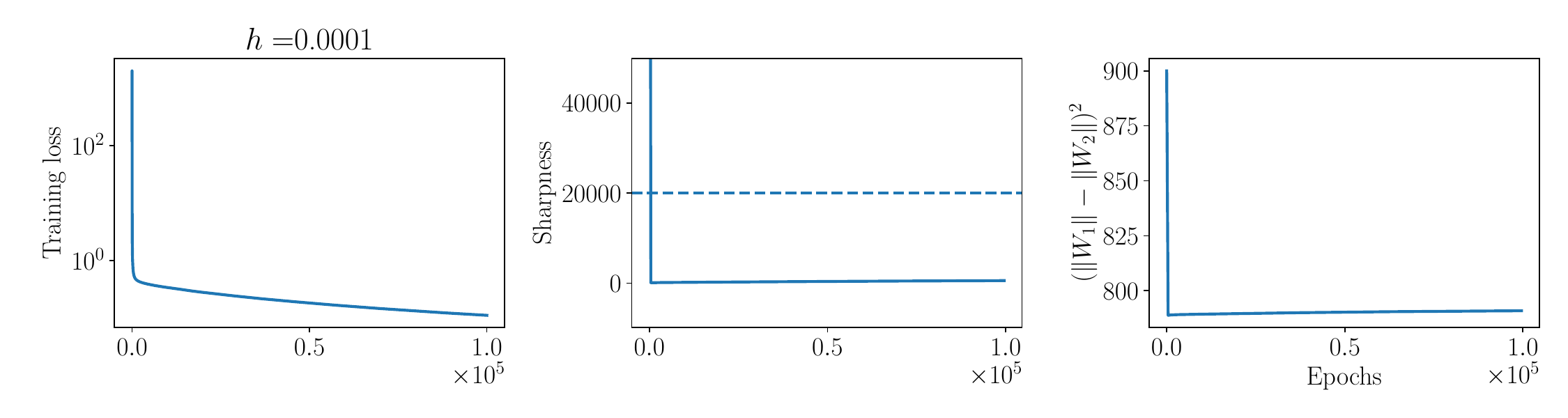}
    \caption{MNIST: $\ell^2$+$\mathrm{ReLU}^3$}
    \label{fig:mnist_mse_cubic_relu}
\end{figure}

\begin{figure}
    \centering
    \includegraphics[width=0.8\textwidth]{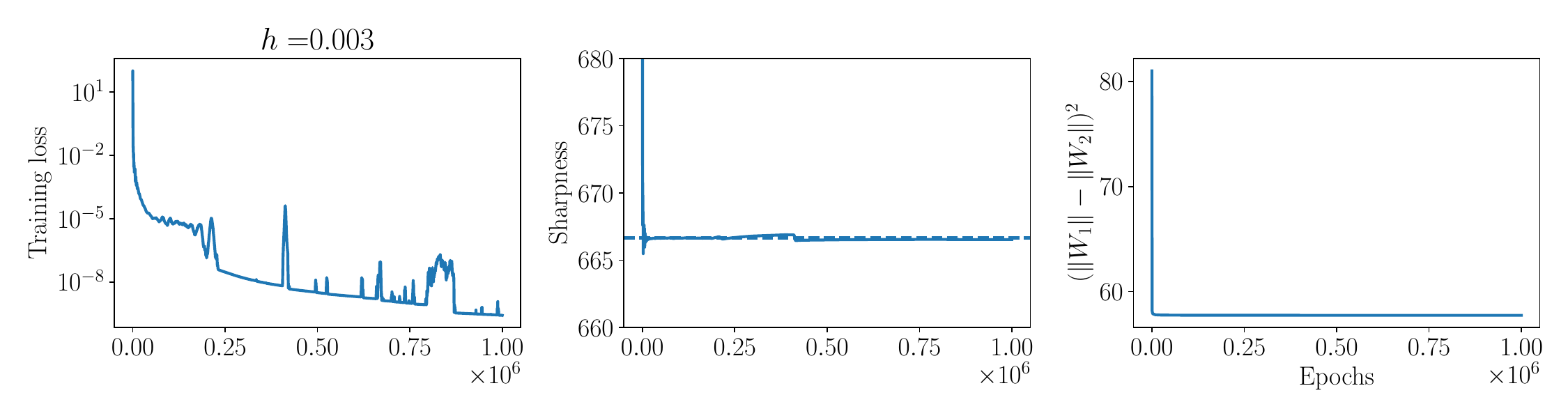}
    \caption{MNIST: huber+$\mathrm{ReLU}^3$ (3-layer) with batch normalization}
    \label{fig:mnist_huber_cubic_relu_BN}
\end{figure}

\end{document}